\documentclass[twoside]{article}

\usepackage{aistats2020}
% If your paper is accepted, change the options for the package
% aistats2020 as follows:
%
% \usepackage[accepted]{aistats2020}
%
% This option will print headings for the title of your paper and
% headings for the authors names, plus a copyright note at the end of
% the first column of the first page.

% If you set papersize explicitly, activate the following three lines:
%\special{papersize = 8.5in, 11in}
%\setlength{\pdfpageheight}{11in}
%\setlength{\pdfpagewidth}{8.5in}

% If you use natbib package, activate the following three lines:
\usepackage{natbib}
%\renewcommand{\bibname}{References}
%\renewcommand{\bibsection}{\subsubsection*{\bibname}}

% If you use BibTeX in apalike style, activate the following line:
%\bibliographystyle{apalike}

\usepackage[T1]{fontenc}

\usepackage{amsmath}
\usepackage{amsfonts}
\usepackage{amssymb}
\usepackage{amsthm}
\usepackage{mathtools}

\newcommand{\BlackBox}{\rule{1.5ex}{1.5ex}}  % end of proof

\newtheorem{theorem}{Theorem}
\newtheorem{lemma}{Lemma}

\newtheorem{definition}{Definition}

\usepackage{enumitem}
\usepackage{booktabs}
\usepackage{url}
\usepackage{bbold}
\usepackage{dsfont}
\usepackage{thmtools,thm-restate}
\usepackage{subcaption}
\usepackage{graphicx}
\usepackage{algorithm}
\usepackage{algpseudocode}
\usepackage{hyperref}
\usepackage{bm}

\usepackage{tikz}
\usetikzlibrary{decorations.pathreplacing,calc}
\newcommand{\tikzmark}[1]{\tikz[overlay,remember picture] \node (#1) {};}

\newcommand*{\AddNote}[4]{%
    \begin{tikzpicture}[overlay, remember picture]
        \draw [decoration={brace,amplitude=0.5em},decorate,thick,blue]
            ($(#3)!(#1.north)!($(#3)-(0,1)$)$) --  
            ($(#3)!(#2.south)!($(#3)-(0,1)$)$)
                node [align=center, text width=2.5cm, pos=0.5, anchor=west] {#4};
    \end{tikzpicture}
}%
\newcommand{\CommentX}[1]{\unskip~~#1~}

\usepackage{enumitem}
\setitemize{noitemsep,topsep=0pt,parsep=0pt,partopsep=0pt}

\usepackage[colorinlistoftodos, textwidth=22mm, shadow]{todonotes}
\definecolor{blued}{RGB}{70,197,221}
\definecolor{pearOne}{HTML}{2C3E50}
%A9CF54
\definecolor{pearTwo}{HTML}{A9CF54}
\definecolor{pearTwoT}{HTML}{C2895B}
\definecolor{pearThree}{HTML}{FF69B4}
\colorlet{titleTh}{pearOne}
\colorlet{bull}{pearTwo}
\definecolor{pearcomp}{HTML}{B97E29}
\definecolor{pearFour}{HTML}{588F27}
\definecolor{pearFith}{HTML}{ECF0F1}
\definecolor{pearDark}{HTML}{2980B9}
\definecolor{pearDarker}{HTML}{F330DB}
\hypersetup{
	colorlinks,
	citecolor=pearDark,
	linkcolor=pearThree,
	urlcolor=pearDarker}

% Definitions of handy macros can go here

%\DeclareMathOperator*{\argmin}{arg\,min}
%\DeclareMathOperator*{\argmax}{arg\,max}
%\DeclarePairedDelimiter{\ceil}{\lceil}{\rceil}

\DeclareMathOperator*{\argmax}{arg\,max}
\DeclareMathOperator*{\argmin}{arg\,min}

\newcommand\ddfrac[2]{\frac{\displaystyle #1}{\displaystyle #2}}

\newcommand*\diff{\mathop{}\!\mathrm{d}}

\renewcommand{\d}[1]{\ensuremath{\operatorname{d}\!{#1}}}

\newcommand{\floor}[1]{\left\lfloor#1\right\rfloor}

%\newcommand{\ceil}[1]{\lceil#1\rceil}

%arrows

%function

%distributions

%\newtheorem{assumption}{Assumption}
%\newtheorem{lemma}{Lemma}
%\newtheorem{theorem}{Theorem}
%\newtheorem{definition}{Definition}
%\newtheorem{corollary}{Corollary}
%\newtheorem{remark}{Remark}

% \newcommand{\R}{I\!\! R}
\newcommand{\R}{\mathbb{R}}

\newcommand{\NN}{{\mathbb N}}
\newcommand{\1}{\mathds{1}}

\newcommand{\EE}[1]{\mathbb{E}\left[#1\right]}

\newcommand{\PP}[1]{\mathbb{P}\left[#1\right]}

%parens

\newcommand{\expp}[1]{\exp\left\{#1\right\}}

% Unscaled version
%\newcommand*{\eqdef}{\mathop{\overset{\MyDef}{\resizebox{\widthof{\eqdefU}}{\heightof{=}}{=}}}}
\newcommand*{\eqdef}{\triangleq}
%\newcommand{\eqdef}{\stackrel{{\rm def}}{=}}
%\newcommand{\eqdef}{\stackrel{{\rm \tiny def}}{=}}

%Calligraphic Shorthands
\newcommand{\cA}{\mathcal{A}}
\newcommand{\cB}{\mathcal{B}}
\newcommand{\cC}{\mathcal{C}}

\newcommand{\cF}{\mathcal{F}}

\newcommand{\cH}{\mathcal{H}}
\newcommand{\cI}{\mathcal{I}}

\newcommand{\cN}{\mathcal{N}}
\newcommand{\cO}{\mathcal{O}}

\newcommand{\cU}{\mathcal{U}}

%Bolds Shorthands

\newcommand{\bT}{{\bf T}}

\newcommand{\bY}{{\bf Y}}

\renewcommand{\epsilon}{\varepsilon}
\renewcommand{\hat}{\widehat}
\renewcommand{\tilde}{\widetilde}
\renewcommand{\bar}{\overline}

\newcommand{\btheta}{{\boldsymbol \theta}}

\newcommand{\bmu}{{\boldsymbol \mu}}

\newcommand{\bomega}{{\boldsymbol \omega}}

\newcommand{\nothere}[1]{}

% bandits

%% from single papers, but merged here
% algo names
\usepackage{xspace}

%stochastic bandits
\newcommand{\UCB}{\texttt{UCB}\xspace}

%adversarial bandits

%best arm identification

%%fixed-confidence
\newcommand{\SE}{\texttt{SuccessiveElimination}\xspace}

\newcommand{\LUCB}{\texttt{LUCB}\xspace}

\newcommand{\DT}{\texttt{D-Tracking}\xspace}

\newcommand{\EGE}{\texttt{ExponentialGapElimination}\xspace}
\newcommand{\LIL}{\texttt{lil'UCB}\xspace}
%%fixed-budget

\newcommand{\UGapE}{\texttt{UGapE}\xspace}
%%anytime

\newcommand{\TTTS}{\texttt{TTTS}\xspace}

\newcommand{\TCC}{\texttt{T3C}\xspace}
\newcommand{\TTPS}{\texttt{TTPS}\xspace}
\newcommand{\TTVS}{\texttt{TTVS}\xspace}
\newcommand{\TTEI}{\texttt{TTEI}\xspace}
\newcommand{\BC}{\texttt{BC}\xspace}
%contextual bandits

%bayesian optimization

%continuous bandits

%other variants

%global optimization

%local optimization

%infinitely many-armed bandits

%online learning

%hyper-parameter optimization

%deep learning

%classic machine learning

%model-based

%model-free

%discrepancies

%language

%% dataset names

%% library names

%% proba & stats

%%% todos
%\usepackage[colorinlistoftodos, textwidth=30mm, shadow, textsize=small]{todonotes}
%\definecolor{babyblue}{rgb}{0.54, 0.81, 0.94}
%\definecolor{citrine}{rgb}{0.89, 0.82, 0.04}
\definecolor{misocolor}{rgb}{0.16,0.27,0.86}

\definecolor{graphicbackground}{rgb}{0.96,0.96,0.8}
\definecolor{rouge1}{RGB}{226,0,38}  % red P
\definecolor{orange1}{RGB}{243,154,38}  % orange P
\definecolor{jaune}{RGB}{254,205,27}  % jaune P
\definecolor{blanc}{RGB}{255,255,255} % blanc P
\definecolor{rouge2}{RGB}{230,68,57}  % red S
\definecolor{orange2}{RGB}{236,117,40}  % orange S
\definecolor{taupe}{RGB}{134,113,127} % taupe S
\definecolor{gris}{RGB}{91,94,111} % gris S
\definecolor{bleu1}{RGB}{38,109,131} % bleu S
\definecolor{bleu2}{RGB}{28,50,114} % bleu S
\definecolor{vert1}{RGB}{133,146,66} % vert S
\definecolor{vert3}{RGB}{20,200,66} % vert S
\definecolor{vert2}{RGB}{157,193,7} % vert S
\definecolor{darkyellow}{RGB}{233,165,0}  % orange S
\definecolor{lightgray}{rgb}{0.9,0.9,0.9}
\definecolor{darkgray}{rgb}{0.6,0.6,0.6}
\definecolor{babyblue}{rgb}{0.54, 0.81, 0.94}
\definecolor{citrine}{rgb}{0.89, 0.82, 0.04}
\definecolor{misogreen}{rgb}{0.25,0.6,0.0}

%\newtheorem{theorem}{Theorem}
%\newtheorem{theorem}{Lemma}

%\firstpageno{1}

\begin{document}

\twocolumn[

\aistatstitle{Fixed-Confidence Guarantees for Bayesian Best-Arm Identification}

\aistatsauthor{Xuedong Shang${}^{1,2}$ \ \ Rianne de Heide${}^{3,4}$ \ \  Emilie Kaufmann${}^{1,2,5}$ \ \ Pierre M\'enard${}^1$ \ \ Michal Valko${}^{1,6}$}

\aistatsaddress{${}^1$Inria Lille Nord Europe \ ${}^2$Universit\'e de Lille \ ${}^3$Leiden University \ ${}^4$CWI \ ${}^5$CNRS \  ${}^6$DeepMind Paris} ]

%\maketitle

\begin{abstract}
We investigate and provide new insights on the sampling rule called Top-Two Thompson Sampling (\TTTS). In particular, we justify its use for \emph{fixed-confidence best-arm identification}. We further propose a variant of \TTTS called Top-Two Transportation Cost (\TCC), which disposes of the computational burden of \TTTS. As our main contribution, we provide the first sample complexity analysis of \TTTS and \TCC when coupled with a very natural Bayesian stopping rule, for bandits with Gaussian rewards, solving one of the open questions raised by ~\citet{russo2016ttts}. 
%We can further see that the proposed stopping rule is indeed closely related to the usual Chernoff stopping rule. 
We also provide new posterior convergence results for \TTTS under two models that are commonly used in practice: bandits with Gaussian and Bernoulli rewards and conjugate priors.
\end{abstract}

%!TEX root = ../main.tex

\section{Introduction}\label{sec:intro}

In multi-armed bandits, a learner repeatedly chooses an \emph{arm} to play, and receives a reward from the associated unknown probability distribution. When the task is \emph{best-arm} identification (BAI), the learner is not only asked to sample an arm at each stage, but is also asked to output a recommendation (i.e., a guess for the arm with the largest mean reward) after a certain period. Unlike in another well-studied bandit setting, the learner is not interested in maximizing the sum of rewards gathered during the exploration (or minimizing \emph{regret}), but only cares about the quality of her recommendation. As such, BAI is a particular \emph{pure exploration} setting~\citep{bubeck2009pure}.

Formally, we consider a finite-arm bandit model, which is a collection of $K$ probability distributions, called arms $\cA\eqdef\{1,\ldots,K\}$, parametrized by their means $\mu_1, \ldots, \mu_K$. We assume the (unknown) best arm is unique and we denote it by $I^\star \eqdef \argmax_i \mu_i$. A best-arm identification strategy $(I_n, J_n, \tau)$ consists of three components. The first is a \emph{sampling rule}, which selects an arm $I_n$ at round $n$. At each round $n$, a vector of rewards $\bY_n = (Y_{n,1},\cdots,Y_{n,K})$ is generated for all arms independently from past observations, but only $Y_{n,I_n}$ is revealed to the learner. Let $\cF_n$ be the $\sigma$-algebra generated by  $(U_1,I_1,Y_{1,I_1},\cdots,U_n,I_{n},Y_{n,I_{n}})$, then $I_n$ is $\cF_{n-1}$-measurable, i.e., it can only depend on the past $n-1$ observations, and some exogeneous randomness, materialized into $U_{n-1} \sim \cU([0,1])$. The second component is a $\cF_{n}$-measurable \emph{recommendation rule} $J_n$, which returns a guess for the best arm, and thirdly, the  \emph{stopping rule}~$\tau$, a stopping time with respect to $\left(\cF_{n}\right)_{n \in \mathbb{N}}$, decides when the exploration is over.

BAI has already been studied within several theoretical frameworks. In this paper we consider the fixed-confidence setting, introduced by \cite{even-dar2003confidence}, in which given a risk parameter $\delta$, the goal is to ensure that probability to stop and recommend a wrong arm, $\PP{J_\tau \neq I^\star}$, is smaller than $\delta$, while minimizing the expected total number of samples to make this accurate recommendation, $\EE{\tau}$. The most studied alternative is the fixed-budget setting for which the stopping rule $\tau$ is fixed to some (known) maximal budget $n$, and the goal is to minimize the error probability $\PP{J_n \neq I^\star}$ \citep{audibert2010budget}. Note that these two frameworks are very different in general and do not share transferable regret bounds (see~\citealt{carpentier2016budget} for an additional discussion).

Most of the existing sampling rules for the fixed-confidence setting depend on the risk parameter $\delta$. Some of them rely on confidence intervals such as \LUCB~\citep{kalyanakrishnan2012lucb}, \UGapE~\citep{gabillon2012ugape}, 
%\KLLUCB and \KLRacing~\citep{kaufmann2013kl}, 
or \LIL~\citep{jamieson2014lilucb}; others are based on eliminations such as \SE~\citep{even-dar2003confidence} and \EGE~\citep{karnin2013sha}. The first known sampling rule for BAI that does not depend on $\delta$ is the \emph{tracking} rule proposed by~\cite{garivier2016tracknstop}, which is proved to achieve the minimal sample complexity when combined with the Chernoff stopping rule when $\delta$ goes to zero. Such an \emph{anytime} sampling rule (neither depending on a risk $\delta$ or a budget $n$) is very appealing for applications, as advocated by \cite{jun2016atlucb}, who introduce the anytime best-arm identification framework. In this paper, we investigate another anytime sampling rule for BAI: \underline{T}op-\underline{T}wo \underline{T}hompson \underline{S}ampling (\TTTS), and propose a second anytime sampling rule: \underline{T}op-\underline{T}wo \underline{T}ransportation \underline{C}ost (\TCC).

%\emilie{a (different type of) tracking rule was indeed proposed by Antos et al. for a different active learning problem, not sure it is work mentionning}

Thompson Sampling~\citep{thompson1933} is a Bayesian algorithm well known for regret minimization, for which it is now seen as a major competitor to \UCB-typed approaches \citep{burnetas1996optimal,auer2002ucb,cappe2013klucb}. However, it is also well known that regret minimizing algorithms cannot yield optimal performance for BAI \citep{bubeck2011pure,kaufmann2017survey} and as we opt Thompson Sampling for BAI, then its adaptation is necessary. Such an adaptation, \TTTS, was given by \citet{russo2016ttts} along with the other top-two sampling rules \TTPS and \TTVS. By choosing between two different candidate arms in each round, these sampling rules enforce the exploration of sub-optimal arms, that would be under-sampled by vanilla Thompson sampling due to its objective of maximizing rewards.

While \TTTS appears to be a good anytime sampling rule for the fixed-confidence BAI when coupled with an appropriate stopping rule, so far there is no theoretical support for this employment. Indeed, the (Bayesian-flavored) asymptotic analysis of \cite{russo2016ttts} shows that under \TTTS, the posterior probability that $I^\star$ is the best arm converges almost surely to 1 at the best possible rate. However, this property does not by itself translate into sample complexity guarantees. Since the result of \cite{russo2016ttts}, \citet{qin2017ttei} proposed and analyzed \TTEI, another Bayesian sampling rule, both in the fixed-confidence setting and in terms of posterior convergence rate. Nonetheless, similar guarantees for \TTTS have been left as an open question by \cite{russo2016ttts}. In the present paper, we answer this open question. In addition, we propose \TCC, a computationally more favorable variant of \TTTS and  extend the fixed-confidence guarantees to \TCC as well.

\paragraph{Contributions} %Our main contributions are the following: 
(1) We propose a new Bayesian sampling rule, \TCC, which is inspired by \TTTS but easier to implement and computationally advantageous (2) We investigate two Bayesian stopping and recommendation rules and establish their $\delta$-correctness for a bandit model with Gaussian rewards.\footnote{hereafter `Gaussian bandits' or `Gaussian model'} (3) We provide the first sample complexity analysis of \TTTS and \TCC for a Gaussian model and our proposed stopping rule. (4) \citeauthor{russo2016ttts}'s posterior convergence results for \TTTS were obtained under restrictive assumptions on the models and priors, which exclude the two mostly used in practice: Gaussian bandits with Gaussian priors and bandits with Bernoulli rewards\footnote{hereafter `Bernoulli bandits'} with Beta priors. We prove that optimal posterior convergence rates can be obtained for those two as well.

\paragraph{Outline} In Section~\ref{sec:algorithm}, we give a reminder of \TTTS and introduce \TCC along with our proposed recommendation and stopping rules. Then, in Section~\ref{sec:related}, we describe in detail two important notions of optimality that are invoked in this paper. The main fixed-confidence analysis follows in Section~\ref{sec:confidence}, and further Bayesian optimality results are given in Section~\ref{sec:bayesian}. Numerical illustrations are given in Section~\ref{sec:experiments}.

%!TEX root = ../main.tex
\section{Bayesian BAI Strategies}\label{sec:algorithm}

In this section, we give an overview of the sampling rule \TTTS and introduce \TCC. We provide details for Bayesian updating for Gaussian and Bernoulli models respectively, and introduce associated Bayesian stopping and recommendation rules. 

\subsection{Sampling rules}

Both \TTTS and \TCC employ a Bayesian machinery and make use of a prior distribution $\Pi_1$ over a set of parameters $\Theta$, that contains the unknown true parameter vector $\bmu$. Upon acquiring observations $(Y_{1,I_1},\cdots,Y_{n-1,I_{n-1}})$, we update our beliefs according to Bayes' rule and obtain a posterior distribution $\Pi_{n}$ which we assume to have density $\pi_n$ w.r.t.\,the Lebesgue measure. %The posterior distribution can be computed by conjugacy when using common probability distributions for the prior. For example, \citet{russo2016ttts} uses \emph{correlated} and \emph{bounded} one-dimensional exponential family priors.
\citeauthor{russo2016ttts}'s analysis is restricted to strong regularity properties on the models and priors that exclude two important useful cases we consider in this paper: (1) the observations of each arm~$i$ follow a Gaussian distribution $\cN(\mu_i,\sigma^2)$ with common known variance $\sigma^2$, with imposed Gaussian prior $\cN(\mu_{1,i},\sigma_{1,i}^2)$, (2) all arms receive Bernoulli rewards with unknown means, with a uniform prior on each arm.

\paragraph{Gaussian model} For Gaussian bandits with a $\mathcal{N}(0,\kappa^2)$ prior on each mean, the posterior distribution of $\mu_i$ at round $n$ is Gaussian with mean and variance that are respectively given by
\[\frac{\sum_{\ell=1}^{n-1} \1\{I_\ell = i\} Y_{\ell,I_\ell}}{T_{n,i} + \sigma^2/\kappa^2}\quad\text{ and }\quad \frac{\sigma^2}{T_{n,i} + \sigma^2/\kappa^2},\]
where $T_{n,i}\eqdef\sum_{\ell=1}^{n-1} \1{\{ I_{\ell} = i \}}$ is the number of selections of arm $i$ before round $n$.
% 
% \begin{align*}
% %\begin{equation*}
% 	\mu_{n,i} &=
% 	\left\{ \begin{array}{ll}
% 				\ddfrac{\sigma_{n-1,i}^{-2}\mu_{n-1,i}+\sigma^{-2}Y_{n-1,i}}{\sigma_{n-1,i}^{-2}+\sigma^{-2}} & \operatorname{if} I_{n-1} = i,\\
% 				\mu_{n-1,i} & \operatorname{otherwise, and}
% 			\end{array}\right.\\
% %\end{equation*}
% %\begin{equation*}
% 	\sigma_{n,i}^2 &=
% 	\left\{ \begin{array}{ll}
% 				\ddfrac{1}{\sigma_{n-1,i}^{-2}+\sigma^{-2}} & \operatorname{if} I_{n-1} = i,\\
% 				\sigma_{n-1,i}^2 & \operatorname{otherwise}.
% 			\end{array}\right.
% %\end{equation*}
% \end{align*}
For the sake of simplicity, we consider improper Gaussian priors with $\mu_{1,i}=0$ and $\sigma_{1,i}=+\infty$ for all $i\in\cA$, for which
\[
    \mu_{n,i}  = \frac{1}{T_{n,i}} \sum_{\ell=1}^{n-1} \1\{I_\ell = i\}Y_{\ell,I_\ell} \quad \text{and} \quad \sigma_{n,i}^2 = \frac{\sigma^2}{T_{n,i}}.
\]
Observe that in that case the posterior mean $\mu_{n,i}$ coincides with the empirical mean.

\paragraph{Beta-Bernoulli model} For Bernoulli bandits with a uniform ($\cB eta(1,1)$) prior on each mean, the posterior distribution of $\mu_i$ at round $n$ is a Beta distribution with shape parameters $\alpha_{n,i} = \sum_{\ell=1}^{n-1} \1{\{ I_{\ell} = i \}} Y_{\ell,I_{\ell}} +1$ and $\beta_{n,i} = T_{n,i} - \sum_{\ell=1}^{n-1} \1{\{ I_{\ell} = i \}} Y_{\ell,I_{\ell}} + 1$. 
%We can calculate the posterior mean of arm $i$ at time $n$ by
%\begin{align*}
%\mu_{n, i} = \frac{\alpha_{n,i}}{\alpha_{n,i} + \beta_{n, i}} = \frac{1 + \sum_{\ell=1}^{n-1} \1{\{ I_{\ell} = i \}} Y_{\ell,I_{\ell}}}{2 + T_{n,i}}.
%\end{align*}

Now we briefly recall \TTTS and introduce \TCC.

\paragraph{Description of \TTTS} At each time step $n$, \TTTS has two potential actions: (1) with probability $\beta$, a parameter vector $\btheta$ is sampled from $\Pi_{n}$, and \TTTS chooses to play $I_n^{(1)} \eqdef \argmax_{i\in\cA} \theta_i$, (2) and with probability $1-\beta$, the algorithm continues sampling new $\btheta'$ until we obtain a \emph{challenger} $I_n^{(2)} \eqdef \argmax_{i\in\cA} \theta_i'$ that is different from $I_n^{(1)}$, and \TTTS then selects the challenger.

\paragraph{Description of \TCC} One drawback of \TTTS is that, in practice, when the posteriors become concentrated, it takes many Thompson samples before the challenger $I_n^{(2)}$ is obtained. We thus propose a variant of \TTTS, called \TCC, which alleviates this computational burden. Instead of re-sampling from the posterior until a different candidate appears, we define the challenger as the arm that has the lowest \emph{transportation cost} $W_n(I_n^{(1)},i)$ with respect to the first candidate (with ties broken uniformly at random). 

Let $\mu_{n,i}$ be the empirical mean of arm $i$ and $\mu_{n,i,j} \eqdef (T_{n,i}\mu_{n,i} +T_{n,j}\mu_{n,j})/(T_{n,i}+T_{n,j})$, then we define
\begin{equation}\label{def:Transportation}
	W_n(i,j) \eqdef
	\left\{ \begin{array}{ll}
				0 & \operatorname{if} \mu_{n,j} \geq \mu_{n,i},\\
				W_{n,i,j}+W_{n,j,i} & \operatorname{otherwise},
			\end{array}\right.
\end{equation}
where $W_{n,i,j}\eqdef T_{n,i} d\left(\mu_{n,i},\mu_{n,i,j}\right)$ for any $i,j$ and $d(\mu ; \mu' )$ denotes the Kullback-Leibler between the distribution with mean $\mu$ and that of mean $\mu'$. In the Gaussian case, $d(\mu;\mu') = (\mu-\mu')^2/(2\sigma^2)$ while in the Bernoulli case $d(\mu;\mu') = \mu \ln (\mu/\mu') + (1-\mu)\ln (1-\mu)/(1-\mu')$.
%\begin{equation}\left\{\begin{array}{ccl}
%W_n(i,j)\!\!\! &=& \!\!0 \ \text{ if } \ \mu_{n,j} \geq \mu_{n,i},\\
%W_n(i,j)\!\!\! &=& \!\!T_{n,i} d\left(\mu_{n,i},\mu_{n,i,j}\right) + T_{n,j} d\left(\mu_{n,j},\mu_{n,i,j}\right) \text{ else.} 
%\end{array}\right.\label{def:Transportation}\end{equation}
In particular, for Gaussian bandits 
\[
    W_n(i,j) = \dfrac{(\mu_{n,i}-\mu_{n,j})^2}{2\sigma^2(1/T_{n,i}+1/T_{n,j})}\1\{\mu_{n,j}<\mu_{n,i}\}.
\]

The pseudo-code of \TTTS and \TCC are shown in Algorithm~\ref{alg:sampling_rule}. Note that under the Gaussian model with improper priors, one should pull each arm once at the beginning for the sake of obtaining proper posteriors. 

%\todomi{make readable withouth colors and make a reference to $W_n$ definition in the pseudocode}

\begin{algorithm}[ht]
\centering
\caption{Sampling rule (\textcolor{blue}{\TTTS}/\textcolor{red}{\TCC)}}
\label{alg:sampling_rule}
\footnotesize
\begin{algorithmic}[1]
   \State {\bfseries Input:} $\beta$ %(and the $W_n$ function for \textcolor{red}{\TCC})
   %\STATE {\bfseries Initialization:} $\forall \blambda\in\Omega, S_{\blambda}=0, F_{\blambda}=0$
   \For{$n \leftarrow 1,2,\cdots$}
        \State \texttt{sample} $\btheta \sim \Pi_n$
        \State $I^{(1)} \leftarrow \argmax_{i\in\cA}\theta_i$
	    \State \texttt{sample} $b \sim \cB ern(\beta)$
	    \If{$b = 1$}
	        \State \texttt{evaluate arm} $I^{(1)}$
	    \Else
	        \State \textcolor{blue}{\texttt{repeat sample} $\btheta' \sim \Pi_n$}\tikzmark{top}%\Comment{\textcolor{blue}{\TTTS}}
            \State \textcolor{blue}{$I^{(2)} \leftarrow \argmax_{i\in\cA}\theta_i'$}~~~~~~~~~~~~~~~~~~~~~~~~~~~\CommentX{\textcolor{blue}{\TTTS}}
	        \State \textcolor{blue}{\texttt{until} $I^{(2)} \neq I^{(1)}$}\tikzmark{bottom}%\Comment{\textcolor{blue}{\TTTS}}
	        %\State \textcolor{red}{//$W_n$ below is defined in (\ref{def:Transportation})}%
	        \State \textcolor{red}{$I^{(2)} \leftarrow \argmin_{i\neq I^{(1)}}W_n(I^{(1)},i), $ cf.\,\eqref{def:Transportation}} \tikzmark{right}~~~~~\CommentX{\textcolor{red}{\TCC}} 
		    \State \texttt{evaluate arm} $I^{(2)}$
	    \EndIf
	    \State \texttt{update mean and variance}
	    \State $t = t+1$
   \EndFor
\end{algorithmic}
\AddNote{top}{bottom}{right}{}
\end{algorithm}

$W_n$ in Line 12 of Algorithm~\ref{alg:sampling_rule} is the transportation cost defined in (\ref{def:Transportation}).

%\begin{algorithm}[ht]
%\centering
%\caption{Sampling rule of \TTTS (Beta priors)}
%\label{alg:ttts_beta}
%\footnotesize
%\begin{algorithmic}[1]
%   \State {\bfseries Input:} $n$, $\beta$
%   %\STATE {\bfseries Initialization:} $\forall %\blambda\in\Omega, S_{\blambda}=0, F_{\blambda}=0$
%   \For{$t \leftarrow 1$ {\bfseries to} $n$}
%        \State $\forall i\in\cA$, \texttt{sample} $\theta_i \sim \cB eta(\alpha_{t-1,i}, \beta_{t-1,i})$
%        \State $I^{(1)} \leftarrow \argmax_{i\in\cA}\theta_i$
%	    \State \texttt{sample} $b \sim \cB ern(\beta)$
%	    \If{$b = 1$}
%	        \State \texttt{evaluate arm} $I^{(1)}$
%	    \Else
%	        \State \texttt{repeat} 
%	            \State \texttt{sample} $\theta_i' \sim \cB eta(\alpha_{t-1,i}, \beta_{t-1,i})$
%                \State $I^{(2)} \leftarrow \argmax_{i\in\cA}\theta_i'$
%	        \State \texttt{until} $I^{(2)} \neq I^{(1)}$
%		    \State \texttt{evaluate arm} $I^{(2)}$
%	    \EndIf
%	    \State \texttt{update mean and variance}
%	    \State $t = t+1$
%   \EndFor
%\end{algorithmic}
%\end{algorithm}

\subsection{Rationale for \TCC}

In order to explain how \TCC can be seen as an approximation of the re-sampling performed by \TTTS, we first need to define the \emph{optimal action probabilities}. 

\paragraph{Optimal action probability} The optimal action probability $a_{n,i}$ is defined as the posterior probability that arm $i$ is optimal. Formally, letting $\Theta_i$ be the subset of $\Theta$ such that arm $i$ is the optimal arm,
\[
    \Theta_i \eqdef \left\{ \btheta\in\Theta \biggm| \theta_i > \max_{j\neq i}\theta_j \right\},
\]
then we define
\[
   \quad a_{n,i} \eqdef \Pi_{n}(\Theta_i) = \int_{\Theta_i} \pi_n(\btheta) \text{d} \btheta.
\]
With this notation, one can show that under \TTTS, 
\begin{equation}\label{CondDist}
    \Pi_n\left(I_n^{(2)} =j | I_n^{(1)} = i\right) = \frac{a_{n,j}}{\sum_{k\neq i} a_{n,k}}.
\end{equation}
Furthermore, when $i$ coincides with the empirical best mean (and this will often be the case for $I_n^{(1)}$ when $n$ is large due to posterior convergence) one can write 
\[a_{n,j} \simeq \Pi_n\left(\theta_j \geq \theta_{i}\right) \simeq \exp\left(-W_n(i,j)\right),\]
where the last step is justified in Lemma~\ref{lemma:gaussiantails} in the Gaussian case (and Lemma~\ref{lemma:binomial_tail} in Appendix~\ref{app:posterior_beta.aux} in the Bernoulli case). Hence, \TCC replaces sampling from the distribution \eqref{CondDist} by an approximation of its mode which is \emph{easy to compute}. Note that directly computing the mode would require to compute $a_{n,j}$, which is much more costly than the computation of $W_{n}(i,j)$\footnote{the \TTPS sampling rule \citep{russo2016ttts} also requires the computation of $a_{n,i}$, thus we do not report simulations for this Bayesian sampling rule in Section~\ref{sec:experiments}}. 

\subsection{Stopping and recommendation rules}

In order to use \TTTS or \TCC as sampling rule for fixed-confidence BAI, we need to additionally define stopping and recommendation rules. While \cite{qin2017ttei} suggest to couple \TTEI with the ``frequentist'' Chernoff stopping rule \citep{garivier2016tracknstop}, we propose in this section natural Bayesian stopping and recommendation rule. They both rely on the optimal action probabilities defined above.

\paragraph{Bayesian recommendation rule} At time step $n$, a natural candidate for the best arm is the arm with largest optimal action probability, hence we define 
\[
    J_n \eqdef \argmax_{i\in\cA} a_{n,i}.
\]

\paragraph{Bayesian stopping rule}
In view of the recommendation rule, it is natural to stop when the posterior probability that the recommended action is optimal is large, and exceeds some threshold $c_{n,\delta}$ which gets close to 1. Hence our Bayesian stopping rule is \begin{equation}\label{eq:stopping}
    \tau_{\delta} \eqdef \inf \left\{ n\in\NN:\max_{i\in\cA} a_{n,i} \geq c_{n,\delta} \right\}.
\end{equation}

\paragraph{Links with frequentist counterparts} Using the transportation cost $W_n(i,j)$ defined in \eqref{def:Transportation}, the Chernoff stopping rule of~\cite{garivier2016tracknstop} can actually be rewritten as
\begin{equation}\label{eq:chernoffstoppingtime}
\hspace{-0.2cm}\tau_\delta^{\text{Ch.}} \eqdef \inf \left\lbrace n \in \mathbb{N} : \max_{i \in \cA} \min_{j \in \cA \setminus \{i\} } W_{n}(i,j) > d_{n,\delta} \right\rbrace.
\end{equation}
This stopping rule coupled with the recommendation rule $J_n = \argmax_{i} \mu_{n,i}$. 

As explained in that paper, $W_{n}(i,j)$ can be interpreted as a (log) Generalized Likelihood Ratio statistic for rejecting the hypothesis $\cH_0 : (\mu_i < \mu_j)$. Through our Bayesian lens, we rather have in mind the approximation $\Pi_n(\theta_j > \theta_i) \simeq \expp{-W_n(i,j)}$, valid when $\mu_{n,i}> \mu_{n,j}$, which permits to analyze the two stopping rules using similar tools, as will be seen in the proof of Theorem~\ref{thm:pac_gaussian}. 

As shown later in Section~\ref{sec:confidence}, $\tau_\delta$ and $\tau_\delta^{\text{Ch.}}$ prove to be fairly similar for some corresponding choices of the thresholds $c_{n,\delta}$ and $d_{n,\delta}$. This endorses the use of the Chernoff stopping rule in practice, which does not require the (heavy) computation of optimal action probabilities. Still, our sample complexity analysis applies to the two stopping rules, and we believe that a frequentist sample complexity analysis of a fully Bayesian BAI strategy is a nice theoretical contribution.

\paragraph{Useful notation}

% Rianne: we already defined this earlier, except for the full vector. 
%In our analysis to follow, we shall quantify the proportion of samples that is allocated by the sampling rule to each arm. To measure this allocation effort, one can use $T_{n,i} \eqdef \sum_{l=1}^{n-1} \1\{I_l = i\}$, the number of selections of arm $i$ before round $n$. We shall denote by $\bT_n$ the vector of number of arm selections.

%\emilie{not clear this is the place to define $\bT_n$, it's not use anytime soon}
%\rh{We already defined $T_{n,i}$ earlier, so I commented this section now. See later where we need $\bT_n$.}
%RIANNE: moved the above the appendix

We follow the notation of \citet{russo2016ttts} and define the following measures of effort allocated to arm $i$ up to time $n$,
\begin{align*}
    \psi_{n,i} \eqdef \PP{I_n = i | \cF_{n-1}}\quad \text{and} \quad \Psi_{n,i} \eqdef \sum_{l=1}^n \psi_{l,i}.
\end{align*}

In particular, for \TTTS we have
\[
    \psi_{n,i} =  \beta a_{n,i} + (1-\beta) a_{n,i}\sum_{j\neq i} \frac{a_{n,j}}{1-a_{n,j}},
\]
while for \TCC
{\small
\[
    \psi_{n,i} = \beta a_{n,i} + (1-\beta) \sum_{j\neq i} a_{n,j}\frac{\1\{W_n(j,i)=\min_{k\neq j} W_n(j,k)\}}{\#\left|\argmin_{k\neq j } W_n(j,k)\right|}.
\]}

\section{Two Related Optimality Notions}\label{sec:related}

In the fixed-confidence setting, we aim for building $\delta$-correct strategies, i.e.\ strategies that identify the best arm with high confidence on any problem instance. 

\begin{definition} A strategy $(I_n,J_n,\tau)$ is $\delta$-correct if for all bandit models $\bmu$ with a unique optimal arm, it holds that $\mathbb{P}_{\bmu}\left[J_{\tau} \neq I^\star\right] \leq \delta$. 
\end{definition}

Among $\delta$-correct strategies, seek the one with the smallest sample complexity $\EE{\tau_\delta}$. So far, \TTTS has not been analyzed in terms of sample complexity; \citet{russo2016ttts} focusses on posterior consistency and optimal convergence rates. Interestingly, both the smallest possible sample complexity and the fastest rate of posterior convergence can be expressed in terms of the following quantities.

%\cite{russo2016ttts} proposes an (asymptotic) upper bound on $\Pi_n(\overline{\Theta_{I^\star}}) = 1 - a_{n,I^\star}$, the posterior probability that $I^\star$ is not the best arm. 

%Interestingly, the smallest possible sample complexity and the fastest decay rate for $\Pi_n(\overline{\Theta_{I^\star}})$ can both be expressed in terms of the following quantities.

% For that purpose, we need to introduce two different notions of optimality, that we now formally define and contrast: (1) Bayesian $\beta$-optimality and (2) $\beta$-optimality in the fixed-confidence setting. Before that, we first present some necessary preliminary notions.

\begin{definition} Let $\Sigma_K = \{\bomega : \sum_{k=1}^K \omega_k = 1\}$ and define for all $i\neq I^\star$
\[
    C_i(\omega,\omega') \eqdef \min_{x\in \cI} \ \omega d(\mu_{I^\star};x) + \omega' d(\mu_i;x),
\]
where $d(\mu,\mu')$ is the KL-divergence defined above and $\cI = \R$ in the Gaussian case and $\cI = [0,1]$ in the Bernoulli case. We define
\begin{eqnarray}
    \Gamma^\star &\eqdef& \max_{\bomega \in \Sigma_K}\min_{i\neq I^\star} C_i(\omega_{I^\star},\omega_i),\nonumber\\
    \Gamma_{\beta}^\star &\eqdef& \max_{\substack{\bomega \in \Sigma_K\\\omega_{I^\star}=\beta}}\min_{i\neq I^\star} C_i(\omega_{I^\star},\omega_i).\label{def:GammaBeta}\end{eqnarray}
\end{definition}

The quantity $C_i(\omega_{I^\star},\omega_i)$ can be interpreted as a ``transportation cost''\footnote{for which $W_n(I^\star,i)$ is an empirical counterpart} from the original bandit instance $\bm\mu$ to an alternative instance in which the mean of arm $i$ is larger than that of $I^\star$, when the proportion of samples allocated to each arm is given by the vector $\bomega \in \Sigma_K$. As shown by~\cite{russo2016ttts}, the $\bomega$ that maximizes \eqref{def:GammaBeta} is unique, which allows us to define the $\beta$-\emph{optimal allocation} $\bomega^\beta$ in the following proposition.
\begin{restatable}{proposition}{restateoptim}\label{prop:optim}
There is a unique solution $\bomega^\beta$ to the optimization problem \eqref{def:GammaBeta}
satisfying $\omega_{I^\star}^\beta = \beta$, and for all $i,j \neq I^\star$, $C_i(\beta,\omega_i^\beta) = C_j(\beta,\omega_j^\beta)$.
\end{restatable}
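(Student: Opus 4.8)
The plan is to reduce the claimed statement to two classical facts about the functions $\omega' \mapsto C_i(\beta,\omega')$: that each is continuous and strictly increasing on $[0,1-\beta]$ (in fact on $(0,\infty)$), with $C_i(\beta,0)=0$, and that the constrained max-min over $\{\bomega\in\Sigma_K : \omega_{I^\star}=\beta\}$ is attained at a point where all the ``arm costs'' $C_i(\beta,\omega_i)$, $i\neq I^\star$, are equal. First I would record the properties of $C_i$. Fix $i\neq I^\star$ and write $g_i(\omega') \eqdef C_i(\beta,\omega') = \min_{x\in\cI}\, \beta\, d(\mu_{I^\star};x) + \omega'\, d(\mu_i;x)$. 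As an infimum of affine (hence concave) functions of $\omega'$, $g_i$ is concave; it is clearly nondecreasing since $d(\mu_i;x)\ge 0$, and $g_i(0) = \beta \min_x d(\mu_{I^\star};x) = 0$. Strict monotonicity on $(0,\infty)$ follows because the minimizing $x^\star(\omega')$ satisfies $\mu_i < x^\star(\omega') < \mu_{I^\star}$ for $\omega'>0$ (the unconstrained minimum in $x$ of each term is at $\mu_{I^\star}$ resp.\ $\mu_i$, and $\mu_i<\mu_{I^\star}$), so $d(\mu_i;x^\star(\omega'))>0$, and by an envelope/subgradient argument $g_i'(\omega') = d(\mu_i; x^\star(\omega')) > 0$ wherever differentiable; concavity plus this gives strict increase. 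Also $g_i(\omega')\to\infty$ as $\omega'\to\infty$. Each $g_i$ is therefore a continuous strictly increasing bijection from $[0,\infty)$ onto $[0,\infty)$; denote its inverse by $g_i^{-1}$.

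Next I would set up the optimization. On the simplex slice we must choose $(\omega_i)_{i\neq I^\star}$ with $\omega_i\ge 0$ and $\sum_{i\neq I^\star}\omega_i = 1-\beta$, maximizing $F(\bomega) \eqdef \min_{i\neq I^\star} g_i(\omega_i)$. For a target value $c\ge 0$, the minimal total mass needed to make every $g_i(\omega_i)\ge c$ is $h(c)\eqdef \sum_{i\neq I^\star} g_i^{-1}(c)$, which is continuous, strictly increasing, $h(0)=0$, and $h(c)\to\infty$; hence there is a unique $c^\star$ with $h(c^\star) = 1-\beta$. Any feasible $\bomega$ has $F(\bomega)\le c^\star$ (else its total mass would exceed $1-\beta$), and the choice $\omega_i^\beta \eqdef g_i^{-1}(c^\star)$ is feasible and achieves $F = c^\star$, so it is optimal. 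Uniqueness: if $\bomega$ is any maximizer then $g_i(\omega_i)\ge c^\star$ for all $i$, but $\sum_i g_i^{-1}(g_i(\omega_i)) = \sum_i \omega_i = 1-\beta = h(c^\star)$ forces, by strict monotonicity of each $g_i^{-1}$, the equalities $g_i(\omega_i) = c^\star$ for every $i$, i.e.\ $\omega_i = g_i^{-1}(c^\star) = \omega_i^\beta$. Setting $\omega_{I^\star}^\beta = \beta$ then gives the unique solution of \eqref{def:GammaBeta}, and by construction $C_i(\beta,\omega_i^\beta) = c^\star = C_j(\beta,\omega_j^\beta)$ for all $i,j\neq I^\star$, which is exactly the stated characterization.

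The main obstacle is the analytic input in the first paragraph: establishing that $g_i$ is \emph{strictly} increasing and continuous (continuity of $g_i^{-1}$, needed for $h$, follows once strict monotonicity and continuity of $g_i$ are in hand). This requires knowing that the inner minimization over $x\in\cI$ has a unique minimizer lying strictly between $\mu_i$ and $\mu_{I^\star}$ and depending continuously on $\omega'$ — true because $x\mapsto \beta d(\mu_{I^\star};x)+\omega' d(\mu_i;x)$ is strictly convex (sum of strictly convex KL terms, in both the Gaussian and Bernoulli cases) with the minimizer pinned in the interior of $\cI$. One must be slightly careful in the Bernoulli case at the boundary of $[0,1]$, but since $0<\mu_i<\mu_{I^\star}<1$ the minimizer stays in the open interval $(\mu_i,\mu_{I^\star})$, so no degeneracy arises. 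Once these regularity facts are granted, everything else is the clean ``water-filling'' argument above. (This also matches the structure used by \citet{russo2016ttts} and \citet{garivier2016tracknstop}, so one may alternatively cite their treatment of $C_i$ and only carry out the water-filling reduction here.)
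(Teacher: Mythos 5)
The paper does not actually prove Proposition~\ref{prop:optim}: it invokes \citet{russo2016ttts}, where uniqueness of the $\beta$-optimal allocation and the equalization property $C_i(\beta,\omega_i^\beta)=C_j(\beta,\omega_j^\beta)$ are established. Your self-contained ``water-filling'' reduction is therefore a genuinely different (and more elementary) route, and its skeleton is correct: each $g_i(\omega')=C_i(\beta,\omega')$ is concave, continuous, $g_i(0)=0$, and strictly increasing on $(0,\infty)$ by the envelope/Danskin argument (unique interior minimizer $x^\star(\omega')\in(\mu_i,\mu_{I^\star})$, so $g_i'(\omega')=d(\mu_i;x^\star(\omega'))>0$), and the equalization-plus-strict-monotonicity step correctly yields both existence and uniqueness of the maximizer together with the equality of the costs.

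One supporting claim is false, though it is repairable: $g_i(\omega')$ does \emph{not} tend to infinity as $\omega'\to\infty$. Taking $x=\mu_i$ in the inner minimization gives $g_i(\omega')\leq \beta\, d(\mu_{I^\star};\mu_i)$ for all $\omega'$; in the Gaussian case explicitly $g_i(\omega')=(\mu_{I^\star}-\mu_i)^2/\bigl(2\sigma^2(1/\beta+1/\omega')\bigr)$, which increases to the finite limit $\beta(\mu_{I^\star}-\mu_i)^2/(2\sigma^2)$. Hence $g_i$ is a bijection from $[0,\infty)$ onto $[0,\beta\, d(\mu_{I^\star};\mu_i))$, not onto $[0,\infty)$, and your $h(c)=\sum_{i\neq I^\star}g_i^{-1}(c)$ is only defined for $c<c_{\max}:=\min_{i\neq I^\star}\beta\, d(\mu_{I^\star};\mu_i)$. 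The fix is one line: on $[0,c_{\max})$, $h$ is continuous, strictly increasing, $h(0)=0$, and $h(c)\to\infty$ as $c\uparrow c_{\max}$ (the inverse of any arm attaining $c_{\max}$ diverges), so a unique $c^\star\in(0,c_{\max})$ with $h(c^\star)=1-\beta$ still exists; alternatively, note that $h\bigl(\min_{i\neq I^\star}g_i(1-\beta)\bigr)\geq 1-\beta$ and apply the intermediate value theorem on that compact range. With this correction the rest of your argument goes through unchanged and matches the characterization used in the paper.
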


For models with more than two arms, there is no closed form expression for $\Gamma_\beta^\star$ or $\Gamma^\star$, even for Gaussian bandits  with variance $\sigma^2$ for which we have
\[
    \Gamma_{\beta}^\star = \max_{\bomega:\omega_{I^\star}=\beta}\min_{i\neq I^\star} \frac{(\mu_{I^\star}-\mu_i)^2}{2\sigma^2(1/\omega_i+1/\beta)}.
\]

%We refer the reader to Appendix TBC for pointers on the practical computation of those quantities. 

\paragraph{Bayesian $\beta$-optimality} \citet{russo2016ttts} proves that  any sampling rule allocating a fraction $\beta$ to the optimal arm ($\Psi_{n,I^\star}/n \rightarrow \beta$) satisfies %$\Pi_n(\overline{\Theta_{I^\star}})
$1-a_{n, I^\star} \geq e^{-n(\Gamma_{\beta}^\star + o(1))}$ (a.s.) for large values of $n$. We define a  \emph{Bayesian $\beta$-optimal} sampling rule as a sampling rule matching this lower bound, i.e. satisfying $\Psi_{n,I^\star}/n \rightarrow \beta$ and $1- a_{n, I^\star} \leq e^{-n(\Gamma_{\beta}^\star + o(1))}$.
%$\Pi_n(\overline{\Theta_{I^\star}}) \leq e^{-n(\Gamma_{\beta}^\star + o(1))}$.

\citet{russo2016ttts} proves that \TTTS with parameter $\beta$ is Bayesian $\beta$-optimal.
However, the result is valid only under strong regularity assumptions, excluding the two practically important cases of Gaussian and Bernoulli bandits. In this paper, we complete the picture by establishing Bayesian $\beta$-optimality for those models in Section~\ref{sec:bayesian}. For the Gaussian bandit, Bayesian $\beta$-optimality was established for \TTEI by~\cite{qin2017ttei} with Gaussian priors, but this remained an open problem for \TTTS.

A fundamental ingredient of these proofs is to establish the convergence of the allocation of measurement effort to the $\beta$-optimal allocation: $\Psi_{n,i}/n \rightarrow \omega_{i}^\beta$ for all $i$, which is equivalent to $T_{n,i}/n \rightarrow \omega_{i}^\beta$ (cf.\ Lemma~\ref{lemma:link}).

\paragraph{$\beta$-optimality in the fixed-confidence setting} In the fixed confidence setting, the performance of an algorithm is evaluated in terms of sample complexity. A lower bound given by \cite{garivier2016tracknstop} states that any $\delta$-correct strategy satisfies $\EE{\tau_\delta} \geq (\Gamma^\star)^{-1} \ln \left({1}/(3\delta)\right)$. 

Observe that $\Gamma^\star = \max_{\beta \in [0,1]} \Gamma_\beta^\star$. Using the same lower bound techniques, one can also prove that under any $\delta$-correct strategy satisfying $T_{n,I^\star}/n \rightarrow \beta$,
\[\liminf_{\delta \rightarrow 0}\frac{\EE{\tau_\delta}}{\ln(1/\delta)} \geq \frac{1}{\Gamma^\star_\beta}.\]
This motivates the relaxed optimality notion that we introduce in this paper: A BAI strategy is called \emph{asymptotically $\beta$-optimal} if it satisfies 
\[\frac{T_{n,I^\star}}{n}\rightarrow \beta \ \ \ \text{and} \ \ \ \limsup_{\delta \rightarrow 0}\frac{\EE{\tau_\delta}}{\ln(1/\delta)} \leq \frac{1}{\Gamma^\star_\beta}.\]
In the paper, we provide the first sample complexity analysis of a BAI algorithm based on \TTTS (with the stopping and recommendation rules described in Section~\ref{sec:algorithm}), establishing its asymptotic $\beta$-optimality.

As already observed by \cite{qin2017ttei}, any sampling rule converging to the $\beta$-optimal allocation (i.e.\ satisfying $T_{n,i}/n \rightarrow w_i^\beta$ for all $i$) can be shown to satisfy $\limsup_{\delta \rightarrow 0} \tau_\delta/\ln(1/\delta) \leq (\Gamma_\beta^\star)^{-1}$ almost surely,  when coupled with the Chernoff stopping rule. The fixed confidence optimality that we define above is stronger as it provides guarantees on $\EE{\tau_\delta}$.

\section{Fixed-Confidence Analysis}\label{sec:confidence}

In this section, we consider Gaussian bandits and the Bayesian rules using an improper prior on the means.
We state our main result below, showing that \TTTS and \TCC are asymptotically $\beta$-optimal in the fixed confidence setting, when coupled with appropriate stopping and recommendation rules. 

\begin{theorem}\label{thm:confidence_main} With $\cC^{g_G}$ the function defined by \cite{kaufmann2018mixture}, which satisfies $\cC^{g_G}(x) \simeq x+\ln(x)$, we introduce the threshold
\begin{equation}d_{n,\delta} = 4\ln(4+\ln(n)) + 2 \cC^{g_G}\left(\frac{\ln((K-1)/\delta)}{2}\right).\label{def:thresholdD}\end{equation}
The \TTTS and \TCC sampling rules coupled with either   
\begin{itemize}
 \item the Bayesian stopping rule \eqref{eq:stopping} with threshold \[c_{n,\delta} = 1 - \frac{1}{\sqrt{2\pi}} e^{-\left(\sqrt{d_{n,\delta}} + \frac{1}{\sqrt{2}}\right)^2}\]
 and the recommendation rule $J_t = \argmax_{i} a_{n,i}$
  \item or the Chernoff stopping rule \eqref{eq:chernoffstoppingtime} with threshold $d_{n,\delta}$
 and recommendation rule $J_t = \argmax_i \mu_{n,i}$,
\end{itemize}
form a $\delta$-correct BAI strategy. Moreover, if all the arms means are distinct, it satisfies  
    \[
        \limsup_{\delta\rightarrow{0}} \frac{\EE{\tau_{\delta}}}{\log(1/\delta)} \leq \frac{1}{\Gamma_{\beta}^\star}.
    \]
\end{theorem}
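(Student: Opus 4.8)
The plan is to split the statement into its two parts — $\delta$-correctness and the asymptotic upper bound on $\mathbb{E}[\tau_\delta]$ — and handle them with largely independent arguments, both relying on the transportation-cost machinery introduced in Section~\ref{sec:algorithm}. For $\delta$-correctness, I would first reduce the Bayesian stopping rule \eqref{eq:stopping} to a statement about $W_n$. The key approximation is the one already flagged in the text: $1 - a_{n,I^\star} = \sum_{j\ne I^\star} a_{n,j}$, and when $I^\star$ is the empirical best arm each $a_{n,j}$ is, up to the multiplicative and additive slack quantified in Lemma~\ref{lemma:gaussiantails}, of order $\exp(-W_n(I^\star,j))$. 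So $\max_i a_{n,i} \ge c_{n,\delta}$ with the stated $c_{n,\delta}$ should force $\min_{j\ne I^\star} W_n(\hat{I}_n, j) > d_{n,\delta}$ where $\hat I_n$ is the recommended/empirical-best arm; the precise algebra linking $c_{n,\delta}$ to $d_{n,\delta}$ is exactly the Gaussian-tail computation behind the displayed formula for $c_{n,\delta}$. Once both stopping rules are subsumed under ``$\max_i \min_{j\ne i} W_n(i,j) > d_{n,\delta}$ and $J_\tau = \argmax_i \mu_{n,i}$,'' a wrong recommendation at stopping means $W_\tau(J_\tau, I^\star) > d_{\tau,\delta}$ while $\mu_{\tau,J_\tau} > \mu_{\tau,I^\star}$, i.e. the empirical means have ``crossed.'' Bounding the probability of this event uniformly over $n$ is precisely what the deviation inequality of \cite{kaufmann2018mixture} is designed for, and the choice $d_{n,\delta} = 4\ln(4+\ln n) + 2\,\cC^{g_G}(\ln((K-1)/\delta)/2)$ together with a union bound over the $K-1$ sub-optimal arms delivers the $\delta$ bound. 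This part is essentially bookkeeping modulo the cited concentration result.

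The substantive part is the sample-complexity bound. Here I would follow the standard template: show that for $\delta$ small, $\tau_\delta$ is upper bounded by the first time $n$ at which $n\,\Gamma^\star_\beta \gtrsim d_{n,\delta}$, which since $d_{n,\delta} \sim \ln(1/\delta)$ gives $\tau_\delta \lesssim \ln(1/\delta)/\Gamma^\star_\beta$. Concretely, define a ``good'' event on which, for all $n$ large enough, the empirical allocation is close to the $\beta$-optimal allocation, $T_{n,i}/n \approx \omega_i^\beta$, and the empirical means are close to the true means. On this event, by continuity of $C_i$ and the definition of $\Gamma^\star_\beta$ via Proposition~\ref{prop:optim}, one gets $\min_{j\ne I^\star} W_n(I^\star, j) \ge n(\Gamma^\star_\beta - \epsilon)$ for $n \ge N_\epsilon$, so the Chernoff (hence also the Bayesian) stopping condition is met as soon as $n(\Gamma^\star_\beta - \epsilon) > d_{n,\delta}$; this yields a deterministic-looking bound $\tau_\delta \le \max(N_\epsilon, T_{\epsilon,\delta})$ where $T_{\epsilon,\delta} = \inf\{n : n(\Gamma^\star_\beta-\epsilon) > d_{n,\delta}\} \sim \ln(1/\delta)/(\Gamma^\star_\beta-\epsilon)$. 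To turn the almost-sure bound into a bound on $\mathbb{E}[\tau_\delta]$, I would control the expectation of $N_\epsilon$ — the (random) time after which the allocation and the empirical means stay in an $\epsilon$-tube — showing it has finite expectation independent of $\delta$; then $\mathbb{E}[\tau_\delta] \le \mathbb{E}[N_\epsilon] + T_{\epsilon,\delta}$, divide by $\ln(1/\delta)$, let $\delta\to 0$ and then $\epsilon \to 0$.

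The crux, and the main obstacle, is establishing the convergence $T_{n,i}/n \to \omega_i^\beta$ for the \TTTS and \TCC sampling rules with quantitative, integrable control of the convergence time — i.e.\ showing $\mathbb{E}[N_\epsilon] < \infty$. The asymptotic convergence itself, $\Psi_{n,i}/n \to \omega_i^\beta$ and its equivalence with $T_{n,i}/n \to \omega_i^\beta$, is the content of Lemma~\ref{lemma:link} and the Bayesian-optimality analysis of Section~\ref{sec:bayesian}; but for a bound on $\mathbb{E}[\tau_\delta]$ (rather than merely $\tau_\delta/\ln(1/\delta)$ almost surely, as in \cite{qin2017ttei}) one needs that the probability that the allocation has not yet entered the $\epsilon$-tube by time $n$ decays fast enough in $n$ to be summable. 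This requires a careful argument: first that the optimal-action probabilities $a_{n,i}$ concentrate — $a_{n,I^\star}\to 1$ and, for the under-sampled arms, the $\psi_{n,i}$ pull the allocation back toward $\omega^\beta$ — with exponential-type control on the rare event that some arm is starved, and then a tracking/ODE-style argument showing that once all arms have been sampled polynomially often the allocation is driven into the tube. I expect to need the ``sufficient exploration'' lemmas (every arm is pulled at least $\sqrt{n}$-ish times eventually, with the exceptional set having exponentially small probability) as the engine, with \TCC handled by transferring the \TTTS analysis through the approximation $a_{n,j} \simeq e^{-W_n(i,j)}$ and the explicit formula for its $\psi_{n,i}$. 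The rest — plugging $d_{n,\delta}$ into $T_{\epsilon,\delta}$ and taking limits — is routine.
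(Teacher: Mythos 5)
Your proposal follows essentially the same route as the paper: $\delta$-correctness is obtained exactly as in Theorem~\ref{thm:pac_gaussian} (the Gaussian tail bounds of Lemma~\ref{lemma:gaussiantails} reduce both stopping rules to the event that the transportation cost between the wrong arm and $I^\star$ exceeds $d_{n,\delta}$ with crossed empirical means, controlled by Corollary~10 of \cite{kaufmann2018mixture} and a union bound), and the sample-complexity part matches the paper's decomposition into the sufficient condition of Lemma~\ref{lemma:confidence} ($\EE{T_\beta^\epsilon}<\infty$ implies the $(\Gamma_\beta^\star)^{-1}$ bound) plus Theorem~\ref{thm:sufficient_condition}, whose proof you correctly identify as the crux, resting on the sufficient-exploration and tracking lemmas with integrable (Poly$(W_1,W_2)$-type) control of the convergence time. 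The only difference is presentational: the paper obtains integrability from an almost-sure polynomial bound in random variables with finite exponential moments rather than from summable tail probabilities, but this is the same idea.
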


%\emilie{I added the restriction of the means being distinct in the theorem, maybe it was already somewhere, I didn't find it}
%\rh{OK; it was not needed/stated before indeed}
We now give the proof of Theorem~\ref{thm:confidence_main}, which is divided into three parts. The \textbf{first step} of the analysis is to prove the $\delta$-correctness of the studied BAI strategies.

\begin{restatable}{theorem}{restatepac}\label{thm:pac_gaussian}
    Regardless of the sampling rule, the stopping rule~(\ref{eq:stopping}) with the threshold $c_{n,\delta}$ and the Chernoff stopping rule with threshold $d_{n,\delta}$ defined in Theorem~\ref{thm:confidence_main} satisfy $        \PP{\tau_{\delta} < \infty \wedge J_{\tau_{\delta}} \neq I^\star} \leq \delta$.
\end{restatable}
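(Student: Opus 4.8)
The plan is to prove $\delta$-correctness for the Bayesian stopping rule \eqref{eq:stopping} by reducing it to the Chernoff-type stopping rule \eqref{eq:chernoffstoppingtime}, and then to bound the error probability of the latter via a uniform deviation inequality. The key observation linking the two is the approximation, made rigorous in Lemma~\ref{lemma:gaussiantails}, that in the Gaussian improper-prior model the posterior probability $\Pi_n(\theta_j \geq \theta_i)$ is comparable to $\exp(-W_n(i,j))$ when $\mu_{n,i} > \mu_{n,j}$. More precisely, for a Gaussian tail one has a two-sided control of the form $\frac{1}{2}e^{-W_n(i,j)} \le \Pi_n(\theta_j \ge \theta_i) \le \frac{1}{\sqrt{2\pi}} e^{-W_n(i,j)}$ (up to the exact constants appearing in $c_{n,\delta}$), so that the event $\max_i a_{n,i} \ge c_{n,\delta}$ — equivalently $1 - a_{n,J_n} \le 1 - c_{n,\delta}$, and $1 - a_{n,J_n} = \Pi_n(\exists j \ne J_n:\ \theta_j \ge \theta_{J_n}) \ge \max_{j\ne J_n}\Pi_n(\theta_j \ge \theta_{J_n})$ — forces each $W_n(J_n, j)$ to exceed $d_{n,\delta}$, hence $\min_{j \ne J_n} W_n(J_n,j) > d_{n,\delta}$, which is exactly the Chernoff stopping condition with $J_n = \argmax_i a_{n,i}$ equal to the empirical best $\argmax_i \mu_{n,i}$ on that event. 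The precise relation between $c_{n,\delta}$ and $d_{n,\delta}$ stated in Theorem~\ref{thm:confidence_main} is chosen exactly so that this implication holds; so it suffices to prove the bound $\PP{\tau_\delta^{\text{Ch.}} < \infty \wedge J_{\tau_\delta^{\text{Ch.}}} \ne I^\star} \le \delta$.

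**Next I would** handle the Chernoff stopping rule directly. On the event $\{\tau_\delta^{\text{Ch.}} < \infty,\ J_{\tau_\delta^{\text{Ch.}}} \ne I^\star\}$, writing $\hat I = J_{\tau_\delta^{\text{Ch.}}} = \argmax_i \mu_{n,i}$ with $n = \tau_\delta^{\text{Ch.}}$, we have $\hat I \ne I^\star$ and $\mu_{n,\hat I} \ge \mu_{n,I^\star}$, and the stopping condition gives in particular $W_n(\hat I, I^\star) > d_{n,\delta}$, i.e. $T_{n,\hat I} d(\mu_{n,\hat I};\mu_{n,\hat I,I^\star}) + T_{n,I^\star} d(\mu_{n,I^\star};\mu_{n,\hat I,I^\star}) > d_{n,\delta}$. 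This is a GLR statistic for the two arms $\hat I$ and $I^\star$ exceeding the threshold while the true means satisfy $\mu_{\hat I} < \mu_{I^\star}$, so that the weighted mean $\mu_{n,\hat I, I^\star}$ is "on the wrong side". The standard route is a union bound over the (at most $K-1$) possible identities of $\hat I$, followed by the time-uniform concentration inequality of \cite{kaufmann2018mixture} built for exactly this GLR statistic: for a fixed pair $(i, I^\star)$ with $\mu_i < \mu_{I^\star}$,
\[
\PP{\exists n:\ T_{n,i} d(\mu_{n,i};\mu_{n,i,I^\star}) + T_{n,I^\star} d(\mu_{n,I^\star};\mu_{n,i,I^\star}) > 2\,\cC^{g_G}\!\left(\tfrac{\ln((K-1)/\delta)}{2}\right) + 4\ln(4+\ln n)} \le \frac{\delta}{K-1},
\]
where the $4\ln(4+\ln n)$ slack absorbs the time-uniformity. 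Summing over $i \ne I^\star$ yields the claimed bound $\delta$. The choice of $d_{n,\delta}$ in \eqref{def:thresholdD} is precisely calibrated so that this deviation bound closes.

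**The main obstacle** is the careful bookkeeping in the first step: verifying that the specific threshold $c_{n,\delta} = 1 - \frac{1}{\sqrt{2\pi}} e^{-(\sqrt{d_{n,\delta}} + 1/\sqrt 2)^2}$ is such that $\max_i a_{n,i} \ge c_{n,\delta}$ genuinely implies $\min_{j \ne J_n} W_n(J_n,j) > d_{n,\delta}$. This needs the one-sided Gaussian-tail estimate of Lemma~\ref{lemma:gaussiantails}, and one must check two things: that on the stopping event $J_n$ really is the empirical best arm (so that $W_n(J_n,\cdot)$ uses the "otherwise" branch of \eqref{def:Transportation} against every competitor, and the Gaussian-tail bound applies), and that the extra $1/\sqrt{2}$ shift in the exponent of $c_{n,\delta}$ exactly compensates the discrepancy between $\Pi_n(\theta_j \ge \theta_{J_n})$ and $e^{-W_n(J_n,j)}$ coming from the $T_{n,j}$-dependence of the posterior variance. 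For the Chernoff rule this step is vacuous — the stopping condition is already phrased in terms of $W_n$ — so the real content there is entirely in the time-uniform concentration inequality, which we invoke as a black box from \cite{kaufmann2018mixture}. I would present the argument as: (i) reduce Bayesian stopping to Chernoff stopping via Lemma~\ref{lemma:gaussiantails} and the threshold calibration; (ii) union bound over $\hat I \ne I^\star$; (iii) apply the GLR deviation inequality with threshold $d_{n,\delta}$ to each term.
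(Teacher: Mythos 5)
Your proposal is correct and follows essentially the same route as the paper: the lower tail bound \eqref{gaussian_lower} of Lemma~\ref{lemma:gaussiantails} together with the calibration of $c_{n,\delta}$ turns the Bayesian stopping event into the event that $\mu_{n,i}\geq\mu_{n,I^\star}$ and $W_n(i,I^\star)\geq d_{n,\delta}$ for some $i\neq I^\star$, and a union bound over $i$ plus the time-uniform deviation inequality of \cite{kaufmann2018mixture} then give the bound $\delta$, which simultaneously covers the Chernoff rule. The only step the paper makes explicit that your black-box invocation glosses over is the domination $W_n(i,I^\star)\leq T_{n,i}d(\mu_{n,i};\mu_i)+T_{n,I^\star}d(\mu_{n,I^\star};\mu_{I^\star})$ on the event $\mu_{n,i}\geq\mu_{n,I^\star}$ (valid because the true means lie in the feasible set of the infimum defining $W_n$), which is what lets Corollary 10 of \cite{kaufmann2018mixture} be applied; note also that the paper bounds $\PP{\exists n \in \NN: a_{n,i}>c_{n,\delta}}$ arm by arm, avoiding your detour through $J_n$ being the empirical best and the (strictly speaking incorrect, though harmless, since your final bound is uniform in $n$) claim that the Bayesian error event is contained in the Chernoff rule's error event.
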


To prove that \TTTS and \TCC allow to reach a $\beta$-optimal sample complexity, one needs to quantify how fast the measurement effort for each arm is concentrating to its corresponding optimal weight. For this purpose,  we introduce the random variable
\[
    T_{\beta}^\epsilon \eqdef \inf \left\{ N\in\NN: \max_{i\in\cA} \vert T_{n,i}/n-\omega_i^\beta \vert \leq \epsilon, \forall n \geq N \right\}.
\]
The \textbf{second step} of our analysis is a sufficient condition for $\beta$-optimality, stated in Lemma~\ref{lemma:confidence}. Its proof is given in Appendix~\ref{app:confidence}. The same result was proven for the Chernoff stopping rule by \cite{qin2017ttei}.

\begin{restatable}{lemma}{restatefixedconfidence}\label{lemma:confidence}
    Let $\delta,\beta\in (0,1)$. For any sampling rule which satisfies $\EE{T_{\beta}^\epsilon} < \infty$ for all $\epsilon > 0$, we have
    \[
        \limsup_{\delta\rightarrow{0}} \frac{\EE{\tau_{\delta}}}{\log(1/\delta)} \leq \frac{1}{\Gamma_{\beta}^\star},
    \]
    if the sampling rule is coupled with stopping rule~(\ref{eq:stopping}), 
\end{restatable}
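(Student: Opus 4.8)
The plan is to show that, under the Bayesian stopping rule~\eqref{eq:stopping} with threshold $c_{n,\delta}$, the stopping time $\tau_\delta$ is controlled by the first time at which the optimal action probability $a_{n,I^\star}$ crosses $c_{n,\delta}$, and then to bound that time using the approximation $1-a_{n,I^\star}\simeq\exp(-\min_{j\neq I^\star}W_n(I^\star,j))$ together with the concentration of $T_{n,i}/n$ to $\omega_i^\beta$ captured by $T_\beta^\epsilon$. First I would make the link between the Bayesian stopping rule and the Chernoff-type stopping rule precise: using Lemma~\ref{lemma:gaussiantails} (the Gaussian-tail lemma) one shows that for the chosen thresholds, $\max_i a_{n,i}\geq c_{n,\delta}$ is implied by $\max_i\min_{j\neq i}W_n(i,j)>d_{n,\delta}$, so that $\tau_\delta\leq\tau_\delta^{\mathrm{Ch.}}$; this lets me reduce the analysis of the Bayesian stopping rule to that of the Chernoff stopping rule, which is exactly the case treated by \citet{qin2017ttei}. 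Alternatively one can argue directly with $a_{n,I^\star}$, but routing through $\tau_\delta^{\mathrm{Ch.}}$ is cleaner since it recycles the known argument.

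Next I would carry out the core deterministic estimate: on the event $\{n\geq T_\beta^\epsilon\}$, every empirical proportion $T_{n,i}/n$ is within $\epsilon$ of $\omega_i^\beta$, and moreover (by the law of large numbers / convergence of empirical means under any fixed allocation, or by conditioning on a further good event for the rewards) the empirical means $\mu_{n,i}$ are close to $\mu_i$. Plugging these into the explicit Gaussian form
\[
W_n(I^\star,j)=\frac{(\mu_{n,I^\star}-\mu_{n,j})^2}{2\sigma^2(1/T_{n,I^\star}+1/T_{n,j})}\,\1\{\mu_{n,j}<\mu_{n,I^\star}\},
\]
one gets $\min_{j\neq I^\star}W_n(I^\star,j)\geq n\bigl(\Gamma_\beta^\star - \xi(\epsilon)\bigr)$ for $n$ large, where $\xi(\epsilon)\to 0$ as $\epsilon\to 0$, using the variational characterization of $\Gamma_\beta^\star$ and continuity of $C_i$. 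Since $d_{n,\delta}=O(\ln(1/\delta)+\ln\ln n)$ grows sublinearly in $n$, the Chernoff stopping condition $\min_{j\neq I^\star}W_n(I^\star,j)>d_{n,\delta}$ is met as soon as $n\gtrsim \frac{d_{n,\delta}}{\Gamma_\beta^\star-\xi(\epsilon)}$ and $n\geq T_\beta^\epsilon$; solving this fixed-point inequality in $n$ yields a deterministic bound $\tau_\delta \leq \max\bigl(T_\beta^\epsilon,\ C_\epsilon\ln(1/\delta) + o(\ln(1/\delta))\bigr)$ for an explicit constant $C_\epsilon\to (\Gamma_\beta^\star)^{-1}$.

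Finally I would take expectations: $\EE{\tau_\delta}\leq \EE{T_\beta^\epsilon} + C_\epsilon\ln(1/\delta)+o(\ln(1/\delta))$ (splitting on whether $T_\beta^\epsilon$ exceeds the deterministic threshold and using $\EE{T_\beta^\epsilon}<\infty$ to make the first term a constant independent of $\delta$). Dividing by $\ln(1/\delta)$ and letting $\delta\to 0$ gives $\limsup_{\delta\to 0}\EE{\tau_\delta}/\ln(1/\delta)\leq C_\epsilon$; since this holds for every $\epsilon>0$ and $C_\epsilon\to(\Gamma_\beta^\star)^{-1}$, the claim follows. The main obstacle is the second step: one must be careful that the concentration of $T_{n,i}/n$ alone does not control the empirical means, so the argument needs an additional (probability-one or high-probability) control on $\mu_{n,i}-\mu_i$ that is uniform for $n$ large, and one must handle the indicator $\1\{\mu_{n,j}<\mu_{n,I^\star}\}$, which is why the hypothesis that all arm means are distinct (ensuring $\mu_{n,j}<\mu_{n,I^\star}$ eventually) is used; threading this through while keeping the constants converging to $(\Gamma_\beta^\star)^{-1}$ is the delicate part, and it is precisely where the structure of the proof mirrors the Chernoff-stopping analysis of \citet{qin2017ttei}.
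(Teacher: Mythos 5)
Your proposal follows essentially the same route as the paper's own proof: the paper bounds $1-a_{n,I^\star}\leq \tfrac{K-1}{2}e^{-Z_n}$ via the Gaussian tail bound of Lemma~\ref{lemma:gaussiantails}, invokes the lemma borrowed from \citet{qin2017ttei} giving $Z_n\geq(\Gamma_\beta^\star-\zeta)n$ for all $n\geq T_\beta^\epsilon$ (which indeed bundles the control of the empirical means that you rightly flag as not following from $T_\beta^\epsilon$ alone), compares with the explicit form of $1-c_{n,\delta}$, and concludes with the same case split $\tau_\delta\leq\max\{T_\beta^\epsilon+N,\ \ln(C/\delta)/(\Gamma_\beta^\star-2\zeta)+1\}$, taking expectations and letting $\delta\to0$ then $\zeta\to0$. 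The only caveat is that with the exact thresholds of Theorem~\ref{thm:confidence_main} the clean domination $\tau_\delta\leq\tau_\delta^{\mathrm{Ch.}}$ does not hold verbatim, since $1-c_{n,\delta}=\tfrac{1}{\sqrt{2\pi}}e^{-(\sqrt{d_{n,\delta}}+1/\sqrt{2})^2}$ is smaller than the bound $\tfrac{K-1}{2}e^{-d_{n,\delta}}$ available when the Chernoff statistic just crosses $d_{n,\delta}$; this only inflates the effective threshold by a lower-order $O(\sqrt{d_{n,\delta}})$ term, and the direct comparison with $a_{n,I^\star}$ that you mention as an alternative is exactly what the paper does.
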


Finally, it remains to show that \TTTS and \TCC meet the sufficient condition, and therefore the \textbf{last step}, which is the core component and the most technical part our analysis, consists of showing the following.

\begin{theorem}\label{thm:sufficient_condition}
    Under \TTTS or \TCC, $\EE{T_{\beta}^\epsilon} < +\infty$.
\end{theorem}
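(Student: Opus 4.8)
The plan is to show that under \TTTS or \TCC, the random time $T_\beta^\epsilon$ after which the empirical allocation $T_{n,i}/n$ stays within $\epsilon$ of $\omega_i^\beta$ has finite expectation. I would proceed in three stages. \textbf{First}, I would establish that the posterior concentrates fast: with overwhelming probability (say on an event $\cE_n$ whose complement has probability summable in $n$), the empirical means $\mu_{n,i}$ are all within a small window of the true means $\mu_i$, provided every arm has been pulled a decent number of times. A standard device here is to introduce a random time $T_0$ (with finite expectation, by a union bound over Gaussian deviation inequalities and the fact that $\sum_n \PP{\cE_n^c} < \infty$) after which the empirical means are accurate and, crucially, after which $T_{n,i} \geq \sqrt{n}$ for all $i$ — the latter forced by the fact that both sampling rules pull a mildly under-sampled arm with probability bounded below (the $\beta a_{n,i}$ and the challenger terms in $\psi_{n,i}$ cannot all be negligible). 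This ``all arms are pulled polynomially often'' property is what makes $a_{n,i} \to 0$ for $i \neq I^\star$ and $a_{n,I^\star} \to 1$ at an exponential rate.

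\textbf{Second}, I would run the fixed-point / tracking argument that drives $T_{n,i}/n \to \omega_i^\beta$. Once the posterior is concentrated, $I_n^{(1)} = I^\star$ with high probability (so $\Psi_{n,I^\star}/n \to \beta$, using that the $\beta$-fraction of rounds spent on the first candidate is asymptotically all on $I^\star$), and the challenger distribution \eqref{CondDist} (resp.\ the \TCC rule via the approximation $a_{n,j} \simeq \exp(-W_n(I^\star,j))$ justified by Lemma~\ref{lemma:gaussiantails}) concentrates its mass on the arm $j$ minimizing the transportation cost $W_n(I^\star,j)$, which is exactly the over-pulled-relative-to-$\omega^\beta$ direction. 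The key structural fact is Proposition~\ref{prop:optim}: the $\beta$-optimal allocation equalizes the costs $C_i(\beta,\omega_i^\beta)$, so the arm currently lagging behind its target weight is precisely the one that gets reinforced. This yields a self-correcting dynamics: whenever $\max_i |T_{n,i}/n - \omega_i^\beta| > \epsilon$, the next pull (in expectation) moves the allocation toward $\omega^\beta$ by a definite amount. Formalizing this gives that $\{T_{n,i}/n\}$ enters and then stays in an $\epsilon$-neighbourhood of $\omega^\beta$ after a time that is a deterministic function of $T_0$ and the concentration parameters.

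\textbf{Third}, I would convert the almost-sure convergence into the integrability statement $\EE{T_\beta^\epsilon} < \infty$. The clean way is to show that $\PP{T_\beta^\epsilon > n}$ decays fast enough (polynomially or faster) in $n$: on the good event after $T_0$, failure of the allocation to have converged by time $n$ forces a deviation of some empirical mean or of some $\psi_{l,i}$-average from its conditional expectation over a window of length $\Theta(n)$, and each such deviation has probability summable in $n$ by Hoeffding/Azuma applied to the martingale $\sum_l (\1\{I_l=i\} - \psi_{l,i})$. A union bound over $i \in \cA$ and over the relevant time scales then gives $\sum_n \PP{T_\beta^\epsilon > n} < \infty$, hence $\EE{T_\beta^\epsilon} < \infty$; one also needs $\EE{T_0} < \infty$ from the first stage.

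\textbf{Main obstacle.} The hard part is the second stage — making the ``self-correcting drift'' argument rigorous and \emph{uniform}, simultaneously for \TTTS and \TCC. For \TTTS one must control the challenger law \eqref{CondDist} when several $a_{n,j}$ are of comparable exponentially-small size (so the $\operatorname{argmin}$ of $W_n(I^\star,\cdot)$ is nearly tied), and for \TCC one must control the error in the approximation $a_{n,j} \approx e^{-W_n(I^\star,j)}$ — both cases requiring sharp two-sided Gaussian tail bounds (Lemma~\ref{lemma:gaussiantails}) and care that the approximation error does not swamp the $\Theta(1/n)$-scale drift one is trying to exhibit. Getting this into a quantitative form strong enough to yield a summable tail for $T_\beta^\epsilon$, rather than just almost-sure convergence, is where essentially all the technical work lies.
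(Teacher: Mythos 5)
Your plan follows essentially the same route as the paper's proof: sufficient exploration forcing $T_{n,i}\gtrsim\sqrt{n/K}$ for the randomized rules, concentration of the empirical means, convergence of $T_{n,I^\star}/n$ to $\beta$, and the key step that an over-allocated suboptimal arm is selected with exponentially small probability (combined with the equalization property of $\bomega^\beta$ from Proposition~\ref{prop:optim} and the fact that the allocations sum to one), with integrability coming from time-uniform deviation bounds for the martingale $\sum_l(\1\{I_l=i\}-\psi_{l,i})$ and for the empirical means. The only real difference is in packaging: the paper bounds $T_\beta^\epsilon$ pathwise by a polynomial of two random variables $W_1,W_2$ with finite exponential moments (Lemmas~\ref{lemma:means} and~\ref{lemma:link}), rather than summing tail probabilities $\PP{T_\beta^\epsilon>n}$ as you propose, which amounts to the same concentration inputs.
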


%Theorem~\ref{thm:confidence_main} is then straightforward combining the previous three steps. 

In the rest of this section, we prove Theorem~\ref{thm:pac_gaussian} and sketch the proof of Theorem~\ref{thm:sufficient_condition}. But we first highlight some important ingredients for these proofs.

\subsection{Core ingredients}

Our analysis hinges on properties of the Gaussian posteriors, in particular on the following tails bounds, which follow from Lemma 1 of \cite{qin2017ttei}.

\begin{lemma}\label{lemma:gaussiantails}
For any $i,j\in\cA$, if $\mu_{n,i}\leq \mu_{n,j}$
{\small
\begin{flalign}
    \Pi_n\left[\theta_i\geq \theta_j\right] &\leq \frac{1}{2} \expp{-\frac{\left( \mu_{n,j}-\mu_{n,i} \right)^2}{2\sigma_{n,i,j}^2}},\label{gaussian_upper}\\
    \Pi_n\left[\theta_i\geq\theta_j\right] &\geq \frac{1}{\sqrt{2\pi}} \exp \left\{-\frac{\left(\mu_{n,j}-\mu_{n,i} +  \sigma_{n,i,j}\right)^2}{2\sigma_{n,i,j}^2}\right\}, \label{gaussian_lower}
\end{flalign}
}%
 %\begin{eqnarray}
 %     \Pi_n\left[\theta_i\geq \theta_j\right]\!\!\!\! &\leq & \!\!\!\!\frac{1}{2} \expp{\!-\frac{\left( \mu_{n,j}\!-\mu_{n,i} \right)^2}{2\sigma_{n,i,j}^2}\!},\label{gaussian_upper}\\
 %     \Pi_n\left[\theta_i\geq\theta_j\right]\!\!\!\! &\geq& \!\!\!\!\frac{1}{\sqrt{2\pi}}\! \exp\! \left\{\!-\frac{\left(\mu_{n,j}\!-\mu_{n,i}\! + \! \sigma_{n,i,j}\right)^2}{2\sigma_{n,i,j}^2}\!\!\right\}, \label{gaussian_lower} \hspace{0.2cm}
 %\end{eqnarray}
where $\sigma_{n,i,j}^2 \eqdef \sigma^2/T_{n,i} + \sigma^2/T_{n,j}$.
\end{lemma}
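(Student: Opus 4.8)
The plan is to reduce both tail bounds to standard Gaussian tail estimates, using the fact that under the improper-prior Gaussian model the posterior on $\btheta$ has independent coordinates $\theta_i \sim \cN(\mu_{n,i},\sigma^2/T_{n,i})$. Consequently, if $\mu_{n,i}\leq\mu_{n,j}$, the difference $\theta_i-\theta_j$ is Gaussian with mean $\mu_{n,i}-\mu_{n,j}\leq 0$ and variance $\sigma_{n,i,j}^2 = \sigma^2/T_{n,i}+\sigma^2/T_{n,j}$. Then
\[
\Pi_n[\theta_i\geq\theta_j] = \Pi_n[\theta_i-\theta_j\geq 0] = \overline{\Phi}\!\left(\frac{\mu_{n,j}-\mu_{n,i}}{\sigma_{n,i,j}}\right),
\]
where $\overline{\Phi}$ is the standard normal survival function, and the lemma follows from the two-sided inequality
\[
\frac{1}{\sqrt{2\pi}}\,\frac{x}{x^2+1}\,e^{-x^2/2} \;\leq\; \overline{\Phi}(x) \;\leq\; \frac12 e^{-x^2/2},\qquad x\geq 0,
\]
applied at $x = (\mu_{n,j}-\mu_{n,i})/\sigma_{n,i,j}\geq 0$.

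For the upper bound \eqref{gaussian_upper} this is immediate: $\overline{\Phi}(x)\leq \tfrac12 e^{-x^2/2}$ gives exactly $\Pi_n[\theta_i\geq\theta_j]\leq \tfrac12\exp\{-(\mu_{n,j}-\mu_{n,i})^2/(2\sigma_{n,i,j}^2)\}$. For the lower bound \eqref{gaussian_lower} I would instead use the cleaner (slightly weaker) estimate $\overline{\Phi}(x)\geq \tfrac{1}{\sqrt{2\pi}}e^{-(x+1)^2/2}$ for $x\ge 0$ — which follows from Lemma~1 of \cite{qin2017ttei} — since plugging $x=(\mu_{n,j}-\mu_{n,i})/\sigma_{n,i,j}$ and multiplying inside the square by $\sigma_{n,i,j}/\sigma_{n,i,j}$ gives precisely
\[
\Pi_n[\theta_i\geq\theta_j] \;\geq\; \frac{1}{\sqrt{2\pi}}\exp\left\{-\frac{(\mu_{n,j}-\mu_{n,i}+\sigma_{n,i,j})^2}{2\sigma_{n,i,j}^2}\right\}.
\]
Alternatively one can derive $\overline{\Phi}(x)\ge \tfrac{1}{\sqrt{2\pi}}e^{-(x+1)^2/2}$ directly: for $0\le x\le 1$ it holds because $\overline\Phi(x)\ge\overline\Phi(1)\ge e^{-2}/\sqrt{2\pi}$, and for $x\ge 1$ it follows from the Komatu/Gordon bound $\overline\Phi(x)\ge \tfrac{1}{\sqrt{2\pi}}\tfrac{x}{x^2+1}e^{-x^2/2}$ together with $\tfrac{x}{x^2+1}\ge e^{-(x+1)^2/2+x^2/2}=e^{-x-1/2}$ on $x\ge1$ (a one-variable inequality checked by monotonicity).

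The only genuine content is the observation that the posterior coordinates are independent Gaussians with the stated parameters, which is just the conjugate update recalled in Section~\ref{sec:algorithm}; everything else is the standard Mills-ratio sandwich. I do not expect a real obstacle here — the mild subtlety is simply choosing the form of the Gaussian lower tail bound that reproduces the exact constant $1/\sqrt{2\pi}$ and the shift by $\sigma_{n,i,j}$ in the exponent, rather than a messier $x/(x^2+1)$ prefactor, and making sure the cited Lemma~1 of \cite{qin2017ttei} is invoked in the regime ($x\ge 0$) where it applies.
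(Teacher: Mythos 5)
Your proposal is correct and follows essentially the same route as the paper, which proves nothing from scratch but simply observes that under the improper Gaussian prior the posterior difference $\theta_i-\theta_j$ is $\cN(\mu_{n,i}-\mu_{n,j},\sigma_{n,i,j}^2)$ and invokes the Gaussian tail sandwich of Lemma~1 of \cite{qin2017ttei}. One small caution: your ``alternative'' elementary check of the lower bound on $0\le x\le 1$ via $\overline\Phi(x)\ge\overline\Phi(1)\ge e^{-2}/\sqrt{2\pi}$ does not give the pointwise inequality near $x=0$ (where the right-hand side is about $0.24>\overline\Phi(1)$), so stick with the cited lemma or compare $\overline\Phi(x)$ with $\tfrac{1}{\sqrt{2\pi}}e^{-(x+1)^2/2}$ directly on that interval.
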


This lemma is crucial to control $a_{n,i}$ and $\psi_{n,i}$, the optimal action and selection probabilities. 

\subsection{Proof of Theorem~\ref{thm:pac_gaussian}} \label{subsec:proofPAC}

We upper bound the desired probability as follows
{\small
\begin{flalign*}
&\PP{\tau_{\delta} < \infty \wedge J_{\tau_{\delta}} \neq I^\star}  \leq  \sum_{i\neq I^\star} \PP{\exists n \in \NN : \alpha_{i,n} > c_{n,\delta}} \\
&\leq  \sum_{i\neq I^\star}\!\PP{\exists n \in \NN : \Pi_n(\theta_i \geq \theta_{I_\star}) > c_{n,\delta}, \mu_{n,I^\star} \!\leq \mu_{n,i}}\\
&\leq  \sum_{i\neq I^\star}\!\PP{\exists n \in \NN : 1-c_{n,\delta} > \Pi_n(\theta_{I^\star}\!\! > \theta_i), \mu_{n,I^\star} \!\leq \mu_{n,i}}.
\end{flalign*}
}%
The second step uses the fact that as $c_{n,\delta}\geq 1/2$, a necessary condition for $\Pi_n(\theta_i \geq \theta_{I_\star}) \geq c_{n,\delta}$ is that $\mu_{n,i} \geq \mu_{n,I_\star}$. Now using the lower bound \eqref{gaussian_lower}, if $ \mu_{n,I^\star}\leq \mu_{n,i}$, the inequality  $1-c_{n,\delta} > \Pi_n(\theta_{I^\star} > \theta_i)$  implies
\[
    \displaystyle \frac{(\mu_{n,i}-\mu_{n,I^\star}\!)^2}{2\sigma_{n,i,I^\star}^2} \! \geq \! \left(\sqrt{\ln{\frac{1}{\sqrt{2\pi}(1-c_{n,\delta})}}}\! - \frac{1}{\sqrt{2}}\! \right)^2 = d_{n,\delta},
\]
where the equality follows from the expression of $c_{n,\delta}$ as function of $d_{n,\delta}$. Hence to conclude the proof it remains to check that 
{\scriptsize
\begin{flalign}\label{ToProveDef}
   \PP{\exists n \! \in \!\NN: \!\mu_{n,i} \geq \mu_{n,I^\star}\!,\!\frac{(\mu_{n,i}\!-\!\mu_{n,I^\star}\!)^2}{2\sigma_{n,i,I^\star}^2} \! \geq \! d_{n,\delta}}\! \leq \!\frac{\delta}{K\!-\!1}.\!\!
\end{flalign}
}%
% {\small
% \begin{flalign}
%     &\PP{\exists n \in \NN: \mu_{n,i} \geq \mu_{n,I^\star},\frac{(\mu_{n,i}-\mu_{n,I^\star})^2}{2\sigma_{n,i,I^\star}^2}  \geq d_{n,\delta}}\notag\\
%     &\leq \frac{\delta}{K-1}.\label{ToProveDef}
% \end{flalign}
% }%
To prove this, we observe that for $\mu_{n,i} \geq \mu_{n,I^\star}$, 
%\begin{small}
%\begin{eqnarray*}\frac{(\mu_{n,i}-\mu_{n,I^\star}\!)^2}{2\sigma_{n,i,I^\star}^2}\!\!\!\! &=&\!\!\!\!
%    \inf_{\theta_i<\theta_{I^\star}} T_{n,i}d(\mu_{n,i};\theta_i) + T_{n,I^\star}d(\mu_{n,I^\star}\!;\theta_{I^\star}\!)\\
%    &\leq &
%    T_{n,i}d(\mu_{n,i};\mu_i) + T_{n,I^\star}d(\mu_{n,I^\star}\!;\mu_{I^\star}\!).
%\end{eqnarray*}
%\end{small}
{\small
\begin{align*}
\frac{(\mu_{n,i}-\mu_{n,I^\star}\!)^2}{2\sigma_{n,i,I^\star}^2} &= \inf_{\theta_i<\theta_{I^\star}} T_{n,i}d(\mu_{n,i};\theta_i) + T_{n,I^\star}d(\mu_{n,I^\star}\!;\theta_{I^\star}\!)\\
&\leq T_{n,i}d(\mu_{n,i};\mu_i) + T_{n,I^\star}d(\mu_{n,I^\star}\!;\mu_{I^\star}\!).
\end{align*}
}%

Corollary 10 of~\cite{kaufmann2018mixture} then allows us to upper bound the probability
{\small
\[
    \PP{\exists n \in \NN: T_{n,i}d(\mu_{n,i};\mu_i) + T_{n,I^\star}d(\mu_{n,I^\star},\mu_{I^\star}) \geq d_{n,\delta}} 
\]
}%
by $\delta/(K-1)$ for the choice of threshold given in \eqref{def:thresholdD},
which completes the proof that the stopping rule \eqref{eq:stopping} is $\delta$-correct. The fact that the Chernoff stopping rule with the above threshold $d_{n,\delta}$ given above is $\delta$-correct straightforwardly follows from \eqref{ToProveDef}.  

% One may notice that at the end, the above analysis brings us back to the Chernoff stopping rule by bounding the GLR term. It is hence easy to show that Theorem~\ref{thm:pac_gaussian} is also correct when we bind \TTTS or \TCC with the common Chernoff stopping rule.
% 
% Note that the present analysis actually endorses the general use of Chernoff stopping rule in practice. In fact, the advantage of stopping rule~(\ref{eq:stopping}) over Chernoff stopping rule is only up to a constant factor in theory. However, a huge computation burden is incurred while computing numerically the optimal action probability presented in our stopping rule.

\subsection{Sketch of the proof of Theorem~\ref{thm:sufficient_condition}}

We present a unified proof sketch of Theorem~\ref{thm:sufficient_condition} for \TTTS and \TCC. While the two analyses follow the same steps, some of the lemmas given below have different proofs for \TTTS and \TCC, which can be found in Appendix~\ref{app:confidence_ttts} and Appendix~\ref{app:confidence_t3c} respectively.

We first state two important concentration results, that hold under any sampling rule. 

\begin{restatable}{lemma}{restatewone}[Lemma 5 of~\citealt{qin2017ttei}]\label{lemma:means}
    There exists a random variable $W_1$, such that for all $i\in\cA$,
    \[
        \forall n\in\NN, \quad |\mu_{n,i} - \mu_{i}| \leq \sigma W_1 \sqrt{\frac{\log(e+T_{n,i})}{1+T_{n,i}}} \text{ a.s.},
    \]
    and $\EE{e^{\lambda W_1}} < \infty$ for all $\lambda > 0$.
\end{restatable}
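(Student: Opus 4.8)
The plan is to reduce the statement to a maximal inequality for a Gaussian random walk and then control the relevant supremum by a peeling (dyadic slicing) argument, in the spirit of the law of the iterated logarithm.

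\emph{Step 1 (reduction to i.i.d.\ partial sums and definition of $W_1$).} First I would replace the adaptively observed rewards by a fixed ``stack'': for each arm $i$ let $(X_{i,m})_{m\ge 1}$ be i.i.d.\ $\cN(\mu_i,\sigma^2)$, independent across arms, and couple the bandit process so that the $m$-th reward collected from arm $i$ equals $X_{i,m}$. This coupling is valid because $I_n$ is $\cF_{n-1}$-measurable while the reward generated at round $n$ is independent of $\cF_{n-1}$; hence the rewards of arm $i$, in order of collection, really are i.i.d.\ $\cN(\mu_i,\sigma^2)$. Writing $S_{i,m}\eqdef\sum_{\ell\le m}(X_{i,\ell}-\mu_i)$, after the initialization phase of Algorithm~\ref{alg:sampling_rule} (each arm pulled once) we have $\mu_{n,i}-\mu_i=S_{i,T_{n,i}}/T_{n,i}$ for all $n,i$. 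Now set $g(m)\eqdef\sqrt{m^2\log(e+m)/(1+m)}$, which is nondecreasing with $g(m)/m=\sqrt{\log(e+m)/(1+m)}$, and define
\[
    W_1\ \eqdef\ \max_{i\in\cA}\ \sup_{m\ge 1}\ \frac{|S_{i,m}|}{\sigma\,g(m)}.
\]
The announced inequality then holds by construction, since $|\mu_{n,i}-\mu_i|=|S_{i,T_{n,i}}|/T_{n,i}\le \sigma W_1\, g(T_{n,i})/T_{n,i}=\sigma W_1\sqrt{\log(e+T_{n,i})/(1+T_{n,i})}$.

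\emph{Step 2 (exponential moments via peeling).} It remains to prove $\EE{e^{\lambda W_1}}<\infty$ for every $\lambda>0$; as $\cA$ is finite it suffices to bound, for a fixed $i$, the upper tail of $Z\eqdef\sup_{m\ge1}|S_{i,m}|/(\sigma g(m))$. I would slice $\{m\ge1\}$ into dyadic blocks $B_k=\{2^k\le m<2^{k+1}\}$. Since $(e^{\lambda S_{i,m}})_m$ is a nonnegative submartingale, Doob's maximal inequality together with the Gaussian moment generating function gives $\PP{\max_{m<2^{k+1}}|S_{i,m}|\ge t}\le 2\exp(-t^2/(2\sigma^2 2^{k+1}))$; taking $t=\sigma g(2^k)x$ and using $g(m)\ge g(2^k)$ on $B_k$,
\[
    \PP{Z>x}\ \le\ \sum_{k\ge0}\PP{\max_{m\in B_k}|S_{i,m}|>\sigma g(2^k)x}\ \le\ 2\sum_{k\ge0}\exp\!\Big(-\frac{g(2^k)^2}{2^{k+2}}\,x^2\Big).
\]
A short computation (using $2^{2k}/(1+2^k)\ge 2^{k-1}$ and $\log(e+2^k)\ge k\ln 2$) shows $g(2^k)^2/2^{k+2}\ge \gamma(1+k)$ for an absolute constant $\gamma>0$ and all $k\ge 0$, so the sum is dominated by $2\sum_{k\ge0}e^{-\gamma(1+k)x^2}=2e^{-\gamma x^2}/(1-e^{-\gamma x^2})$. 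Hence $\PP{Z>x}\le C e^{-\gamma x^2}$ for $x\ge1$, so $Z$ is sub-Gaussian and $\EE{e^{\lambda Z}}=1+\lambda\int_0^\infty e^{\lambda x}\PP{Z>x}\,\mathrm dx<\infty$ for all $\lambda>0$; summing over the arms gives $\EE{e^{\lambda W_1}}<\infty$.

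\emph{Expected main obstacle.} Step 1 is routine bookkeeping (the coupling and the algebraic identity for $\mu_{n,i}-\mu_i$). The only place requiring care is the quantitative comparison in Step 2: one must verify that the $\sqrt{\log m}$ factor built into $g$ is precisely what forces the per-block exponents $g(2^k)^2/2^{k+2}$ to grow linearly in $k$, which is what upgrades ``finitely many exponential moments'' into exponential moments of \emph{every} order. Pinning down the constant $\gamma$ uniformly over all blocks, including the small-$m$ blocks where $g$ is merely bounded below by a fixed positive constant rather than growing, is the fiddly part.
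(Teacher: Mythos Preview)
Your argument is correct. The coupling in Step~1 is the standard ``stack of rewards'' reduction, and the peeling bound in Step~2 is sound: the key inequality $g(2^k)^2/2^{k+2}\ge \tfrac{1}{8}\log(e+2^k)\ge \gamma(1+k)$ holds for a small enough absolute $\gamma>0$ (the minimum of $\log(e+2^k)/(8(1+k))$ over $k\ge 0$ is strictly positive), which makes the geometric series converge and yields the sub-Gaussian tail for $Z$; a union bound over arms then transfers this to $W_1$.

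There is nothing to compare against in the paper itself: the authors do not prove this lemma but simply import it as Lemma~5 of \citet{qin2017ttei}. The closest point of comparison is their proof of the companion Lemma~\ref{lemma:link} for $W_2$, where instead of a dyadic peeling they invoke the self-normalized martingale inequality of \citet{abbasi-yadkori2012} (their Corollary~8) to obtain a time-uniform bound directly, and then derive the sub-Gaussian tail of $W_2$ from it. Your peeling route is more elementary and fully self-contained, at the price of tracking the constant $\gamma$ by hand; the self-normalized approach hides that bookkeeping inside a black-box lemma but requires citing external machinery. Either method would work for the present lemma as well.
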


\begin{restatable}{lemma}{restatewtwo}\label{lemma:link}
%\begin{enumerate}
%    \item For all $\delta \in (0,1]$,
%        \begin{align*}
%            &\PP{\forall n, \left|\frac{T_{n,i}-\Psi_{n,i}}{n}\right| \leq \sqrt{\frac{2\left(1 + \frac{1}{n}\right)}{n}\ln\left(\frac{\sqrt{1+n}}{\delta}\right)}} \\
%            &\geq 1 - \delta.
%        \end{align*}
There exists a random variable $W_2$, such that for all $i\in\cA$,
    \[
        \forall n\in\NN, |T_{n,i}-\Psi_{n,i}| \leq W_2\sqrt{(n+1)\log(e^2+n)} \text{ a.s.},
    \]
and $\EE{e^{\lambda W_2}} < \infty$ for any $\lambda > 0$.
%\end{enumerate}
\end{restatable}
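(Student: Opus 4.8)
The plan is to establish the almost-sure bound via a martingale concentration argument applied to the sequence of indicator differences, and then to upgrade it to the uniform-in-$n$ statement with an exponential-moment control on the resulting random variable. First I would fix an arm $i$ and consider the process $M_{n,i} \eqdef T_{n+1,i} - \Psi_{n,i} = \sum_{\ell=1}^{n} \left( \1\{I_\ell = i\} - \psi_{\ell,i} \right)$. Since $\psi_{\ell,i} = \PP{I_\ell = i \mid \cF_{\ell-1}}$ by the very definition of $\psi_{\ell,i}$, each increment $\1\{I_\ell = i\} - \psi_{\ell,i}$ is a bounded martingale difference sequence adapted to $(\cF_\ell)$, with increments in $[-1,1]$. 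Thus $(M_{n,i})_n$ is a martingale with bounded increments, and the discrepancy $|T_{n,i} - \Psi_{n,i}|$ is, up to the harmless shift between $T_{n,i}$ and $T_{n+1,i}$ (which costs at most $1$), exactly $|M_{n-1,i}|$.

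Next I would apply a time-uniform concentration inequality for bounded martingales — for instance a Freedman-type or Azuma-type maximal inequality with a union bound over $n$, or more cleanly the method of mixtures / law of the iterated logarithm bound — to obtain that there is a random variable $W_2^{(i)}$ with $|M_{n,i}| \leq W_2^{(i)} \sqrt{(n+1)\log(e^2 + n)}$ for all $n$ almost surely, and such that $W_2^{(i)}$ has finite exponential moments of all orders. The $\sqrt{(n+1)\log(e^2+n)}$ rate is precisely what comes out of a self-normalized / LIL-type bound: $|M_{n,i}|$ is $\tilde{O}(\sqrt{n})$, and the $\log(e^2+n)$ factor absorbs the union bound over all times so that the constant $W_2^{(i)}$ is genuinely time-independent. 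Concretely, one writes $\PP{\exists n : |M_{n,i}| > x\sqrt{(n+1)\log(e^2+n)}}$ and bounds it by something like $C e^{-c x^2}$ via a peeling argument over dyadic blocks $2^k \leq n < 2^{k+1}$ combined with Azuma on each block; this Gaussian tail in $x$ is exactly what gives $\EE{e^{\lambda (W_2^{(i)})}} < \infty$ for every $\lambda$ (a Gaussian tail has finite exponential moments of all orders, even finite $e^{\lambda w^2}$-moments for small $\lambda$, which is more than enough). Finally I would set $W_2 \eqdef \max_{i \in \cA} W_2^{(i)}$; since $\cA$ is finite, the maximum of finitely many random variables with finite exponential moments still has finite exponential moments (bound $e^{\lambda \max_i W_2^{(i)}} \leq \sum_i e^{\lambda W_2^{(i)}}$), and the uniform bound holds simultaneously for all $i$ almost surely.

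The main obstacle is getting the clean time-uniform statement with the stated $\sqrt{(n+1)\log(e^2+n)}$ normalization and the all-order exponential-moment property on $W_2$, rather than merely an $O(\sqrt{n \log n})$ bound that holds for each fixed $n$ or a bound with a blow-up constant. This is a standard but slightly delicate maximal-inequality/peeling argument; the key points are (i) that the increments are bounded, hence sub-Gaussian with a uniform variance proxy, so Azuma applies on each dyadic block, and (ii) that the dyadic peeling introduces only a $\log n$ overhead, which is absorbed by the $\log(e^2+n)$ factor in the normalization, leaving a true constant. One could alternatively cite an off-the-shelf result (e.g.\ a uniform Hoeffding/Azuma bound from the bandit literature, or Lemma-style LIL bounds as in \citealt{kaufmann2018mixture}) to discharge this step; since the statement is marked \texttt{restatable} the detailed verification would live in the appendix. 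Everything else — the identification of $\psi_{\ell,i}$ as the conditional expectation, the martingale property, and the reduction from the maximum over arms — is routine.
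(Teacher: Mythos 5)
Your proposal matches the paper's proof in all essentials: the paper also treats $T_{n,i}-\Psi_{n,i}$ as a martingale with bounded (hence $1$-sub-Gaussian) increments, invokes a time-uniform deviation inequality (Corollary 8 of Abbasi-Yadkori et al., with a footnote noting any such uniform bound would do, so your dyadic-peeling/Azuma alternative is fine), obtains a Gaussian tail $\PP{W_2 \geq x} \leq K e^{-x^2/2}$ after a union bound over the finitely many arms, and integrates the tail to get all exponential moments. No gaps.
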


Lemma~\ref{lemma:means} controls the concentration of the posterior means towards the true means and Lemma~\ref{lemma:link} establishes that $T_{n,i}$ and $\Psi_{n,i}$ are close. Both results rely on uniform deviation inequalities for martingales.
%\footnote{Lemma~\ref{lemma:link} is a modified version of Lemma 6 of~\citet{qin2017ttei}. Note that although the original lemma is true for any top-two sampling rules with improper Gaussian priors, it is not directly applicable in this paper since we do not have an explicit expression under \TTTS or \TCC for $\beta_{\min}$ that is defined as $\min\{\beta,1-\beta\}$.}

Our analysis uses the same principle as that of \TTEI: We establish that $T_\beta^\epsilon$ is upper bounded by some random variable $N$ which is a polynomial of the random variables $W_1$ and $W_2$ introduced in the above lemmas, denoted by $\text{Poly}(W_1,W_2) \eqdef \cO(W_1^{c_1}W_2^{c_2})$, where $c_1$ and $c_2$ are two constants (that may depend on arms' means and the constant hidden in the $\cO$). As all exponential moments of $W_1$ and $W_2$ are finite, $N$ has a finite expectation as well, which concludes the proof.

The first step to exhibit such an upper bound $N$ is to establish that every arm is pulled sufficiently often. 

\begin{restatable}{lemma}{restatesuffexploration}\label{lemma:sufficient_exploration}
Under \TTTS or \TCC, there exists $N_1 = \text{Poly}(W_1,W_2)$ s.t. $\forall n \geq N_1$, for all $i$, $\ T_{n,i} \geq \sqrt{{n}/{K}}$, almost surely.
\end{restatable}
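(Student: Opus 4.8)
The plan is to argue by contradiction: suppose that at some large time $n$ there is an arm that is severely under-pulled, i.e.\ $T_{n,i} < \sqrt{n/K}$ for some $i$. Since $\sum_i T_{n,i} = n$ (up to the initial $K$ pulls), there must then exist a ``well-pulled'' arm $j$ with $T_{n,j} \geq n/K$. The idea is to show that whenever an arm is this under-sampled, its selection probability $\psi_{n,i}$ must be bounded away from zero, so that over a long stretch of rounds it gets pulled at a linear rate, contradicting $T_{n,i}$ being small. Concretely, I would introduce the set of ``starved'' arms and track the first time after which no arm is starved; the claim is that this time is a polynomial in $W_1, W_2$.

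First I would use Lemma~\ref{lemma:means} to convert the pull counts $T_{n,i}$ into control on the posterior means: once $T_{n,i}$ exceeds a $\mathrm{Poly}(W_1)$ threshold, $\mu_{n,i}$ is within, say, $\Delta_{\min}/4$ of $\mu_i$, where $\Delta_{\min} = \min_{i\neq j}|\mu_i-\mu_j|$. Then for a well-pulled arm $j$ and a starved arm $i$, the transportation cost $W_n(\cdot,\cdot)$ between them (or the posterior overlap, via Lemma~\ref{lemma:gaussiantails}) is essentially governed by $T_{n,i}$, the small count, because $\sigma_{n,i,j}^2 = \sigma^2/T_{n,i} + \sigma^2/T_{n,j} \approx \sigma^2/T_{n,i}$ is large. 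Hence $a_{n,i} = \Pi_n(\Theta_i)$ cannot be exponentially small: either $i$ already has non-negligible posterior mass of being optimal, or — more to the point — for \emph{some} arm the optimal-action probability is bounded below in terms of $T_{n,i}$. The key quantitative step is a lower bound of the form: if $\min_k T_{n,k} = T_{n,i}$ is small, then $\psi_{n,i} \geq c(T_{n,i})$ for an explicit decreasing function $c$; for \TTTS this comes from $\psi_{n,i} \geq \beta a_{n,i}$ together with $a_{n,i} \geq \frac{1}{\sqrt{2\pi}}\exp(-\text{something}/T_{n,i})$ via \eqref{gaussian_lower}, and for \TCC from the explicit form of $\psi_{n,i}$ and the fact that a starved arm is a plausible challenger (its transportation cost is small), so it is picked by the $\argmin$ with non-vanishing probability. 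The two cases are exactly where the \TTTS and \TCC proofs diverge and are relegated to Appendices~\ref{app:confidence_ttts} and~\ref{app:confidence_t3c}.

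Given such a lower bound on $\psi_{n,i}$, I would close the argument with a standard ``forced-exploration'' counting step: by Lemma~\ref{lemma:link}, $T_{n,i}$ and $\Psi_{n,i} = \sum_{l\le n}\psi_{l,i}$ differ by at most $W_2\sqrt{(n+1)\log(e^2+n)}$, which is $o(n)$; and as long as $i$ stays starved on a block of rounds, $\psi_{l,i}$ stays bounded below, forcing $\Psi_{n,i}$ — hence $T_{n,i}$ — to grow like a constant times the block length, which eventually exceeds $\sqrt{n/K}$. Solving the resulting inequality for the crossing time yields $N_1 = \mathrm{Poly}(W_1,W_2)$. The main obstacle is the quantitative lower bound on $\psi_{n,i}$ for a starved arm: one must rule out the degenerate possibility that \emph{all} mass of "being optimal" concentrates on arms that are themselves reasonably pulled, so that the starved arm never gets chosen as either the top arm or the challenger; handling this requires carefully comparing the counts of the starved arm against a well-pulled arm and using that the posterior overlap — and thus the relevant $a_{n,\cdot}$ or $W_n(\cdot,\cdot)$ — is controlled by the \emph{smaller} of the two counts, so the starved arm cannot be simultaneously a poor candidate and a poor challenger.
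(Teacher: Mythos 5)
Your overall intuition (an under-sampled arm has a wide posterior, so it cannot be ignored by the top-two mechanism) is the right one, but the central quantitative claim in your plan does not hold in the form you need it, and the closing counting step would fail because of it. You propose to lower bound the selection probability $\psi_{n,i}$ of \emph{the starved arm $i$ itself}, e.g.\ via $\psi_{n,i}\geq \beta a_{n,i}$ for \TTTS (incidentally, the lower tail bound \eqref{gaussian_lower} gives $a_{n,i}\gtrsim e^{-c\,T_{n,i}}$ up to $W_1$ factors, not $e^{-c/T_{n,i}}$), and then to argue that while $i$ stays starved, $\psi_{l,i}$ ``stays bounded below'', so $\Psi_{n,i}$ grows linearly on the block. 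But with $T_{n,i}$ of order $\sqrt{n/K}$ this lower bound is only of order $e^{-c\sqrt{n}}$, which sums to $O(1)$ over any block, and no constant lower bound on $\psi_{n,i}$ for the \emph{specific} starved arm is available: if another arm $j$ is also under-sampled (say $T_{n,j}\approx n^{3/4}$) but has a better empirical mean, then $a_{n,j}\gg a_{n,i}$ and, for both \TTTS and \TCC, arm $j$ absorbs essentially all of the challenger probability (or is the $\argmin$ of $W_n(I_n^{(1)},\cdot)$), leaving $\psi_{n,i}$ exponentially small even though $i$ is the most starved arm. So the degenerate case you flag at the end is real, and comparing the starved arm with a \emph{well-pulled} arm does not resolve it — the competitor that steals the probability mass can itself be under-sampled.

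The paper closes exactly this gap with a two-tier construction that your plan is missing: it introduces $U_n^L=\{i:T_{n,i}<\sqrt{L}\}$ and the larger set $V_n^L=\{i:T_{n,i}<L^{3/4}\}$, and proves (Lemmas~\ref{lemma:link_ttps} and~\ref{lemma:link_t3c}) that whenever $U_n^L\neq\emptyset$, one of the two top-two candidates (by optimal action probability for \TTTS, by transportation cost for \TCC) must lie in $V_n^L$ — not that it must be the starved arm $i$. This yields (Lemmas~\ref{lemma:psi_min_ttts} and~\ref{lemma:psi_min_t3c}) that \emph{some} arm of $V_n^L$, possibly changing from round to round, receives selection probability at least $\psi_{\min}=\min(\beta,1-\beta)/K^2$, a genuine constant. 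The conclusion then requires a pigeonhole induction over $K$ blocks of length $L$ (Lemma~\ref{lemma:poorly_explored_ttts}), using Lemma~\ref{lemma:link} to convert $\Psi_{n,i}$ into $T_{n,i}$: in each block at least one arm accumulates $L^{3/4}$ pulls and leaves $V^L$, so after $K$ blocks $U_{\lfloor KL\rfloor}^L=\emptyset$; taking $L=n/K$ gives $T_{n,i}\geq\sqrt{n/K}$ for $n\geq N_1=\text{Poly}(W_1,W_2)$. Without this collective, block-wise argument (and the mismatch of exponents $\sqrt{L}$ versus $L^{3/4}$ that makes the exponential comparison of posterior masses work), the contradiction you aim for cannot be derived from per-arm selection probabilities alone.
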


Due to the randomized nature of \TTTS and \TCC, the proof of Lemma~\ref{lemma:sufficient_exploration} is significantly more involved than for a deterministic rule like \TTEI. Intuitively, the posterior of each arm would be well concentrated once the arm is sufficiently pulled. If the optimal arm is under-sampled, then it would be chosen as the first candidate with large probability. If a sub-optimal arm is under-sampled, then its posterior distribution would possess a relatively wide tail that overlaps with or cover the somehow narrow tails of other overly-sampled arms. The probability of that sub-optimal arm being chosen as the challenger would be large enough then.

%The intuition behind it is the following. We show that if, at one round $n$, there exists a non-empty set of under-sampled arms, then one of the two candidates must be part of this set. Consequently, there must exist some under-sampled arm $i$ whose measurement effort $\psi_{n,i}$ is lower bounded by a positive constant. And then by a sophisticated induction argument, we can show that for an $n$ large enough, the under-sampled arm set would be empty. 

Combining Lemma~\ref{lemma:sufficient_exploration} with Lemma~\ref{lemma:means} straightforwardly leads to the following result.

\begin{restatable}{lemma}{restatemeans}\label{lemma:tracking_means}
    Under \TTTS or \TCC, fix a constant $\epsilon > 0$, there exists $N_2 = \text{Poly}(1/\epsilon,W_1,W_2)$ s.t. $\forall n \geq N_2$,
    \[
        \forall i \in \cA, \quad |\mu_{n,i}-\mu_i| \leq \epsilon.
    \]
\end{restatable}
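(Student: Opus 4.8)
The plan is to substitute the lower bound on $T_{n,i}$ supplied by Lemma~\ref{lemma:sufficient_exploration} into the almost-sure deviation bound of Lemma~\ref{lemma:means}. First I would invoke Lemma~\ref{lemma:sufficient_exploration} to get $N_1 = \text{Poly}(W_1,W_2)$ such that $T_{n,i} \geq \sqrt{n/K}$ for every arm $i$ and every $n \geq N_1$. The map $x \mapsto \log(e+x)/(1+x)$ is non-increasing on $[0,\infty)$ (the numerator of its derivative equals $(1+x)/(e+x) - \log(e+x)$, which is negative), so Lemma~\ref{lemma:means} gives, for all $i$ and all $n \geq N_1$,
\[
    |\mu_{n,i} - \mu_i| \;\leq\; \sigma W_1 \sqrt{\frac{\log\bigl(e + \sqrt{n/K}\bigr)}{1 + \sqrt{n/K}}} \quad \text{a.s.}
\]

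Next I would turn the right-hand side into an explicit negative power of $n$. From the elementary bound $\log y \leq 2\sqrt y$ (valid for $y \geq 1$) together with $e + x \leq 2x$ for $x \geq e$, one gets $\log(e+x)/(1+x) \leq 2\sqrt 2 \, x^{-1/2}$ whenever $x \geq e$; applied with $x = \sqrt{n/K}$ this yields, for $n \geq K e^2$,
\[
    |\mu_{n,i} - \mu_i| \;\leq\; \sigma W_1 \,(2\sqrt 2)^{1/2}\, (n/K)^{-1/8}.
\]
Requiring the right-hand side to be at most $\epsilon$ is equivalent to $n \geq 64\,K\,\sigma^8 W_1^8 \epsilon^{-8}$, so the claim holds with
\[
    N_2 \eqdef \max\Bigl\{\, N_1,\; K e^2,\; 64\,K\,\sigma^8 W_1^8 \epsilon^{-8} \,\Bigr\},
\]
which is indeed $\text{Poly}(1/\epsilon, W_1, W_2)$ because $N_1 = \text{Poly}(W_1,W_2)$.

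There is no genuine obstacle in this argument — as the paper indicates, it is a direct combination of the two preceding lemmas. The only two points requiring a little care are the monotonicity remark that lets us replace $T_{n,i}$ by its lower bound inside the decreasing function $\log(e+\cdot)/(1+\cdot)$, and the fact that a deliberately crude polynomial rate is enough: any polynomial decay of $\log(e+x)/(1+x)$ at infinity would do, and the precise exponent ($1/8$ above) is immaterial for the conclusion.
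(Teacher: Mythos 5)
Your proposal is correct and follows essentially the same route as the paper: plug the lower bound $T_{n,i}\geq\sqrt{n/K}$ from Lemma~\ref{lemma:sufficient_exploration} into Lemma~\ref{lemma:means}, use the monotonicity of $x\mapsto \log(e+x)/(1+x)$, and then take $n$ polynomially large in $1/\epsilon$ and $W_1$ (the paper uses the intermediate bound $\log(e+\sqrt{n/K})\leq 2(n/K)^{1/4}$ where you use $\log(e+x)\leq 2\sqrt{2}\sqrt{x}$, a purely cosmetic difference). The explicit exponent and constants you compute are immaterial, as you note, and your $N_2$ is indeed $\text{Poly}(1/\epsilon,W_1,W_2)$.
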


We can then deduce a very nice property about the optimal action probability for sub-optimal arms from the previous two lemmas. Indeed, we can show that
\[
    \forall i\neq I^\star, \quad a_{n,i} \leq \expp{-\frac{\Delta_{\text{min}}^2}{16\sigma^2}\sqrt{\frac{n}{K}}}
\]
for $n$ larger than some $\text{Poly}(W_1,W_2)$. In the previous inequality, $\Delta_{\text{min}}$ is the smallest mean difference among all the arms.

Plugging this in the expression of $\psi_{n,i}$, one can easily quantify how fast $\psi_{n,I^\star}$ converges to $\beta$, which eventually yields the following result. 

\begin{restatable}{lemma}{restatetrackingbest}\label{lemma:tracking_best}
    Under \TTTS or \TCC, fix $\epsilon > 0$, then there exists $N_3 = \text{Poly}(1/\epsilon,W_1,W_2)$ s.t. $\forall n \geq N_3$,
    \[
        \left|\frac{T_{n,I^\star}}{n} - \beta\right| \leq \epsilon. 
    \]
\end{restatable}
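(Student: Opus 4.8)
The plan is to combine the sufficient-exploration guarantee (Lemma~\ref{lemma:sufficient_exploration}) and the convergence of the posterior means (Lemma~\ref{lemma:tracking_means}) to control the optimal action probabilities $a_{n,i}$ of the sub-optimal arms, then turn a bound on these probabilities into a bound on $\psi_{n,I^\star}$, and finally transfer the control from $\Psi_{n,I^\star}$ to $T_{n,I^\star}$ via Lemma~\ref{lemma:link}. Concretely, I would first fix $\epsilon>0$ and use Lemma~\ref{lemma:sufficient_exploration} and Lemma~\ref{lemma:tracking_means} to find $N = \mathrm{Poly}(1/\epsilon,W_1,W_2)$ past which $T_{n,i}\geq\sqrt{n/K}$ and $|\mu_{n,i}-\mu_i|\leq \Delta_{\min}/4$ for all $i$; in that regime the empirical gap $\mu_{n,I^\star}-\mu_{n,i}$ is at least $\Delta_{\min}/2$ for every $i\neq I^\star$, and the empirical best arm is $I^\star$.

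The key estimate is then $a_{n,i}\leq \Pi_n(\theta_i\geq\theta_{I^\star})\leq \tfrac12\exp\{-(\mu_{n,I^\star}-\mu_{n,i})^2/(2\sigma_{n,i,I^\star}^2)\}$, using the upper tail bound \eqref{gaussian_upper} from Lemma~\ref{lemma:gaussiantails}. Since $\sigma_{n,i,I^\star}^2 = \sigma^2/T_{n,i}+\sigma^2/T_{n,I^\star}\leq 2\sigma^2\sqrt{K/n}$ and the empirical gap is $\geq\Delta_{\min}/2$, this gives $a_{n,i}\leq \exp\{-\tfrac{\Delta_{\min}^2}{16\sigma^2}\sqrt{n/K}\}$ for $n$ larger than some $\mathrm{Poly}(W_1,W_2)$, exactly as announced in the text preceding the lemma. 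I would then plug this into the expression for $\psi_{n,I^\star}$: for \TTTS, $\psi_{n,I^\star} = \beta a_{n,I^\star} + (1-\beta)a_{n,I^\star}\sum_{j\neq I^\star} a_{n,j}/(1-a_{n,j})$, and writing $a_{n,I^\star} = 1-\sum_{j\neq I^\star}a_{n,j}$ shows $|\psi_{n,I^\star}-\beta|$ is bounded by a constant times $\sum_{j\neq I^\star}a_{n,j} \leq (K-1)\exp\{-\tfrac{\Delta_{\min}^2}{16\sigma^2}\sqrt{n/K}\}$; the same conclusion holds for \TCC by a similar (slightly different) manipulation of its $\psi_{n,i}$ formula, since in the well-concentrated regime the challenger weights $a_{n,j}$ for $j\neq I^\star$ are all tiny. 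Hence for $n\geq N_4 = \mathrm{Poly}(W_1,W_2)$ we get $|\psi_{n,I^\star}-\beta|\leq\epsilon/2$, and summing, $|\Psi_{n,I^\star}/n - \beta|\leq \epsilon/2 + \tfrac{1}{n}\sum_{l\leq N_4}|\psi_{l,I^\star}-\beta|\leq \epsilon/2 + N_4/n$, which is $\leq 3\epsilon/4$ once $n\geq 4N_4/\epsilon$.

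Finally I would invoke Lemma~\ref{lemma:link}: $|T_{n,I^\star}-\Psi_{n,I^\star}|\leq W_2\sqrt{(n+1)\log(e^2+n)}$, so $|T_{n,I^\star}/n - \Psi_{n,I^\star}/n|\to 0$ and is $\leq\epsilon/4$ for $n$ larger than some $\mathrm{Poly}(1/\epsilon,W_2)$; combining, $|T_{n,I^\star}/n-\beta|\leq\epsilon$ for all $n\geq N_3$ with $N_3$ the maximum of the finitely many $\mathrm{Poly}(1/\epsilon,W_1,W_2)$ thresholds accumulated along the way, which is itself $\mathrm{Poly}(1/\epsilon,W_1,W_2)$. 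The main obstacle I anticipate is the bookkeeping of the various polynomial thresholds and, more substantively, verifying that the $\psi_{n,I^\star}\to\beta$ step goes through for \TCC despite its non-smooth $\argmin$-based challenger rule — but since all $a_{n,j}$, $j\neq I^\star$, are exponentially small in $\sqrt{n}$, the indicator-weighted sum defining $\psi_{n,I^\star}$ under \TCC is trapped between $\beta a_{n,I^\star}$ and $\beta a_{n,I^\star} + (1-\beta)\sum_{j\neq I^\star}a_{n,j}$, which suffices.
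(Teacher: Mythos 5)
Your proposal is correct and follows essentially the same route as the paper: it uses Lemma~\ref{lemma:sufficient_exploration}, Lemma~\ref{lemma:tracking_means} and the upper Gaussian tail bound \eqref{gaussian_upper} to get $a_{n,i}\leq\exp\{-\frac{\Delta_{\min}^2}{16\sigma^2}\sqrt{n/K}\}$ for $i\neq I^\star$ (the paper's Lemma~\ref{lemma:empirical_best}), then deduces $\psi_{n,I^\star}\to\beta$ and hence $\Psi_{n,I^\star}/n\to\beta$ (the paper's Lemma~\ref{lemma:psi_best}, where the paper bounds the Ces\`aro average from above and below rather than bounding $|\psi_{n,I^\star}-\beta|$ termwise, an immaterial difference), and finally transfers to $T_{n,I^\star}/n$ via Lemma~\ref{lemma:link}. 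Your explicit treatment of the \TCC selection probabilities (trapping $\psi_{n,I^\star}$ between $\beta a_{n,I^\star}$ and $\beta a_{n,I^\star}+(1-\beta)\sum_{j\neq I^\star}a_{n,j}$) is a small welcome addition to what the paper leaves implicit.
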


The last, more involved, step is to establish that the fraction of measurement allocation to every sub-optimal arm $i$ is indeed similarly close to its optimal proportion $\omega_i^\beta$.

\begin{restatable}{lemma}{restatetrackingother}\label{lemma:tracking_other}
    Under \TTTS or \TCC, fix a constant $\epsilon > 0$, there exists $N_4 = \text{Poly}(1/\epsilon,W_1,W_2)$ s.t. $\forall n \geq N_4$,
    \[
        \forall i\neq I^\star, \quad \left|\frac{T_{n,i}}{n} - \omega_i^\beta\right| \leq \epsilon. 
    \]
\end{restatable}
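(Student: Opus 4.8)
The plan is to show that once $T_{n,I^\star}/n$ is $\epsilon$-close to $\beta$ (Lemma~\ref{lemma:tracking_best}) and all posterior means are $\epsilon$-close to the truth (Lemma~\ref{lemma:tracking_means}), the selection probabilities $\psi_{n,i}$ for sub-optimal arms $i$ are driven close to the ratios dictated by the $\beta$-optimal allocation, and then transfer this to the counts $T_{n,i}$ via the concentration $|T_{n,i}-\Psi_{n,i}| = o(n)$ of Lemma~\ref{lemma:link}. The key algebraic observation is the characterization in Proposition~\ref{prop:optim}: at the $\beta$-optimal allocation, the transportation costs are \emph{equalized}, $C_i(\beta,\omega_i^\beta) = C_j(\beta,\omega_j^\beta)$ for all $i,j\neq I^\star$. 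Using the Gaussian tail bounds of Lemma~\ref{lemma:gaussiantails}, on the good event one has $a_{n,i} \approx \exp(-W_n(I^\star,i))$ and $W_n(I^\star,i) \approx (T_{n,i}/n)\cdot n\cdot C_i$-type quantities evaluated at the current empirical proportions; hence whichever sub-optimal arm currently has the \emph{largest} $a_{n,i}$ (equivalently the smallest transportation cost, hence is ``under-allocated'' relative to $\omega^\beta$) receives, through the challenger mechanism, a disproportionately large share of the $(1-\beta)$ non-top pulls, which pushes its proportion up; the equalization property is the fixed point of this dynamics.

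Concretely, I would first fix $\epsilon$ and work on $\{n \ge N\}$ with $N = \text{Poly}(1/\epsilon,W_1,W_2)$ large enough that Lemmas~\ref{lemma:sufficient_exploration}--\ref{lemma:tracking_best} all hold with a suitably smaller tolerance $\epsilon'$. Then I would argue by a ``leader/laggard'' trapping argument on the overshoot $T_{n,i}/n - \omega_i^\beta$: define $i_n^+ \in \argmax_{i\neq I^\star}(T_{n,i}/n - \omega_i^\beta)$ and $i_n^- \in \argmin$, and show that if $T_{n,i_n^+}/n - \omega_{i_n^+}^\beta > \epsilon$ then the instantaneous allocation $\psi_{n,i_n^+}$ is strictly below $\omega_{i_n^+}^\beta$ by a gap depending on $\epsilon$ (because its transportation cost $W_n(I^\star,i_n^+)$ is then strictly larger than that of $i_n^-$, so its optimal action probability — and hence its challenger-selection probability — is smaller), and symmetrically $\psi_{n,i_n^-}$ exceeds $\omega_{i_n^-}^\beta$; summing/telescoping these drift inequalities over a window and using $\Psi_{n,i}\approx T_{n,i}$ forces $\max_i |T_{n,i}/n - \omega_i^\beta|$ to contract below $\epsilon$ within $\text{Poly}(1/\epsilon,W_1,W_2)$ steps and stay there. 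The pieces needed are: (i) the expression of $\psi_{n,i}$ for \TTTS and \TCC given in the ``Useful notation'' paragraph; (ii) the sub-optimal-arm bound $a_{n,i}\le \exp(-c\sqrt{n/K})$ already noted, which ensures $\sum_{j\neq I^\star}a_{n,j}\to 0$ so that $\psi_{n,I^\star}\to\beta$ and, more delicately, that the \emph{relative} weights $a_{n,j}/\sum_{k\neq I^\star}a_{n,k}$ (resp. the $\argmin W_n$ indicator mass for \TCC) are governed by the \emph{differences} of transportation costs, which are $\Theta(n)$ and thus sharply separate an over-allocated arm from an under-allocated one; (iii) continuity/strong-convexity of $\bomega\mapsto \min_i C_i(\beta,\omega_i)$ near $\bomega^\beta$, from Proposition~\ref{prop:optim}, to turn ``costs unequal by $\Omega(\epsilon)$'' into ``proportions must move by $\Omega(\epsilon)$''.

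The main obstacle I anticipate is handling \TCC, where the challenger is the $\argmin$ of $W_n(I_n^{(1)},\cdot)$ rather than a sample: this makes $\psi_{n,i}$ a sum of indicator terms, so a single badly-placed empirical fluctuation could, in principle, route \emph{all} the $(1-\beta)$ mass to one arm and starve another for a stretch of rounds. The fix is to use Lemma~\ref{lemma:tracking_means} to guarantee the $W_n(I^\star,\cdot)$ are within $\epsilon$-distortion of their population analogues $n\,C_i(T_{n,I^\star}/n,T_{n,i}/n)$, so that the $\argmin$ correctly identifies an arm whose proportion is below $\omega_i^\beta$ by at least $\Omega(\epsilon)$ whenever the max overshoot exceeds $\epsilon$ — i.e. the $\argmin$ cannot ``get stuck'' on an already-over-allocated arm — and to observe that pulling the true laggard strictly decreases the max overshoot. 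A secondary technical point is bookkeeping the polynomial degrees: each lemma invoked contributes its own $\text{Poly}(W_1,W_2)$ threshold and the contraction argument runs for $\text{Poly}(1/\epsilon)$ rounds, so the final $N_4$ is a product of these, still of the required form $\text{Poly}(1/\epsilon,W_1,W_2)$ with finite expectation by the exponential-moment bounds on $W_1,W_2$.
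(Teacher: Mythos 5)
Your first half is essentially the paper's key mechanism: for both \TTTS and \TCC the paper proves an ``over-allocation implies negligible selection probability'' lemma (Lemmas~\ref{lemma:over_allocation_finite_ttts} and~\ref{lemma:over_allocation_finite_t3c}), using the Gaussian tail bounds of Lemma~\ref{lemma:gaussiantails}, the concentration Lemmas~\ref{lemma:tracking_means}--\ref{lemma:tracking_best}, and the equalization/uniqueness property of Proposition~\ref{prop:optim} to show that $\Psi_{n,i}/n\geq \omega_i^\beta+\xi$ forces $\psi_{n,i}\leq e^{-c_\xi n}$ (for \TCC, via the pigeonhole observation that some other arm $j_0$ must satisfy $T_{n,j_0}/n\leq\omega_{j_0}^\beta$, so the over-allocated arm cannot be the $\argmin$ of $W_n(I^\star,\cdot)$). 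Note the paper's conclusion is \emph{exponential} smallness of $\psi_{n,i}$, not merely ``below $\omega_i^\beta$ by a gap''; this is what makes the subsequent sum $\sum_\ell \psi_{\ell,i}\1\{\cdot\}$ bounded by a constant in Lemma~\ref{lemma:psi_other_ttts}.

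Where your plan diverges, and where the gap lies, is the second half. The paper never proves anything about laggards: it only establishes the one-sided ceiling $\Psi_{n,i}/n\leq \omega_i^\beta+\xi$ for every arm (by splitting $\Psi_{n,i}=\sum_\ell\psi_{\ell,i}$ according to whether the running proportion is above or below the threshold), and then obtains the matching lower bound for free from the simplex constraint: since both $(\Psi_{n,i}/n)_i$ and $(\omega_i^\beta)_i$ sum to one, ceilings $\omega_j^\beta+\xi/(K-1)$ on all other arms immediately give $\Psi_{n,i}/n\geq\omega_i^\beta-\xi$, and Lemma~\ref{lemma:link} transfers this to $T_{n,i}/n$. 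Your two-sided ``leader/laggard'' drift needs the symmetric claim that $\psi_{n,i_n^-}$ exceeds $\omega_{i_n^-}^\beta$ whenever the max overshoot exceeds $\epsilon$, with $i_n^-$ the arm of smallest proportion offset. That claim is not justified as stated: the challenger mass (the $\argmin$ of $W_n(I^\star,\cdot)$ for \TCC, or essentially all of the softmax weight for \TTTS) concentrates on the arm of smallest \emph{transportation cost}, which, because the slopes of $\omega\mapsto C_i(\beta,\omega)$ differ across arms, need not be the proportion-laggard $i_n^-$; a given under-allocated arm can be starved for a long stretch while the mass goes to a different under-allocated arm. One can repair this with a Lyapunov or windowed argument (or by rediscovering the summing-to-one trick), but as written the telescoping step --- whose argmax/argmin identities also change over time --- is the missing piece, and it is precisely the part the paper's simpler route makes unnecessary.
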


The major step in the proof of Lemma~\ref{lemma:tracking_other} for each sampling rule, is to establish that if some arm is over-sampled, then its probability to be selected is exponentially small. Formally, we show that for $n$ larger than some $\text{Poly}(1/\epsilon,W_1,W_2)$,
\[
    \frac{\Psi_{n,i}}{n} \geq \omega_{i}^\beta + \xi \ \ \ \Rightarrow \ \ \ \psi_{n,i} \leq \expp{- f(n,\xi)},
\]
for some function $f(n,\xi)$ to be specified for each sampling rule, satisfying $f(n)\geq C_\xi\sqrt{n}$ (a.s.). This result leads to the concentration of $\Psi_{n,i}/n$, thus can be easily converted to the concentration of $T_{n,i}/n$ by Lemma~\ref{lemma:link}.

%\emilie{See if we want to give more details}

Finally, Lemma~\ref{lemma:tracking_best} and Lemma~\ref{lemma:tracking_other} show that $T_\beta^\epsilon$ is upper bounded by $N \eqdef \max(N_3,N_4)$, which yields $\mathbb{E}[{T_\beta^\epsilon}] \leq \max(\EE{N_3},\EE{N_4}) < \infty$.

\begin{figure*}[t!]
\centering
\includegraphics[clip, width= 0.24\textwidth]{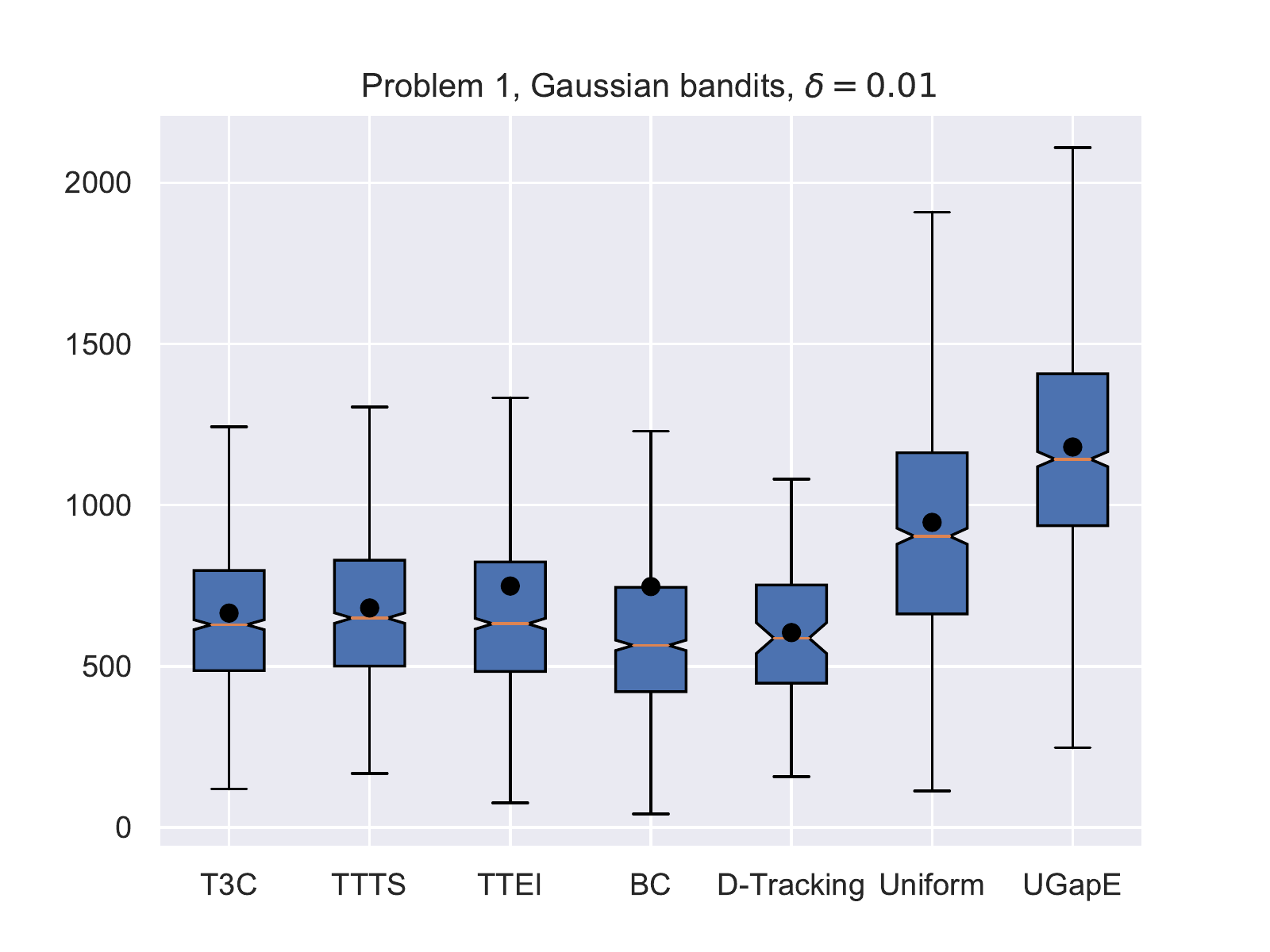}
\includegraphics[clip, width= 0.24\textwidth]{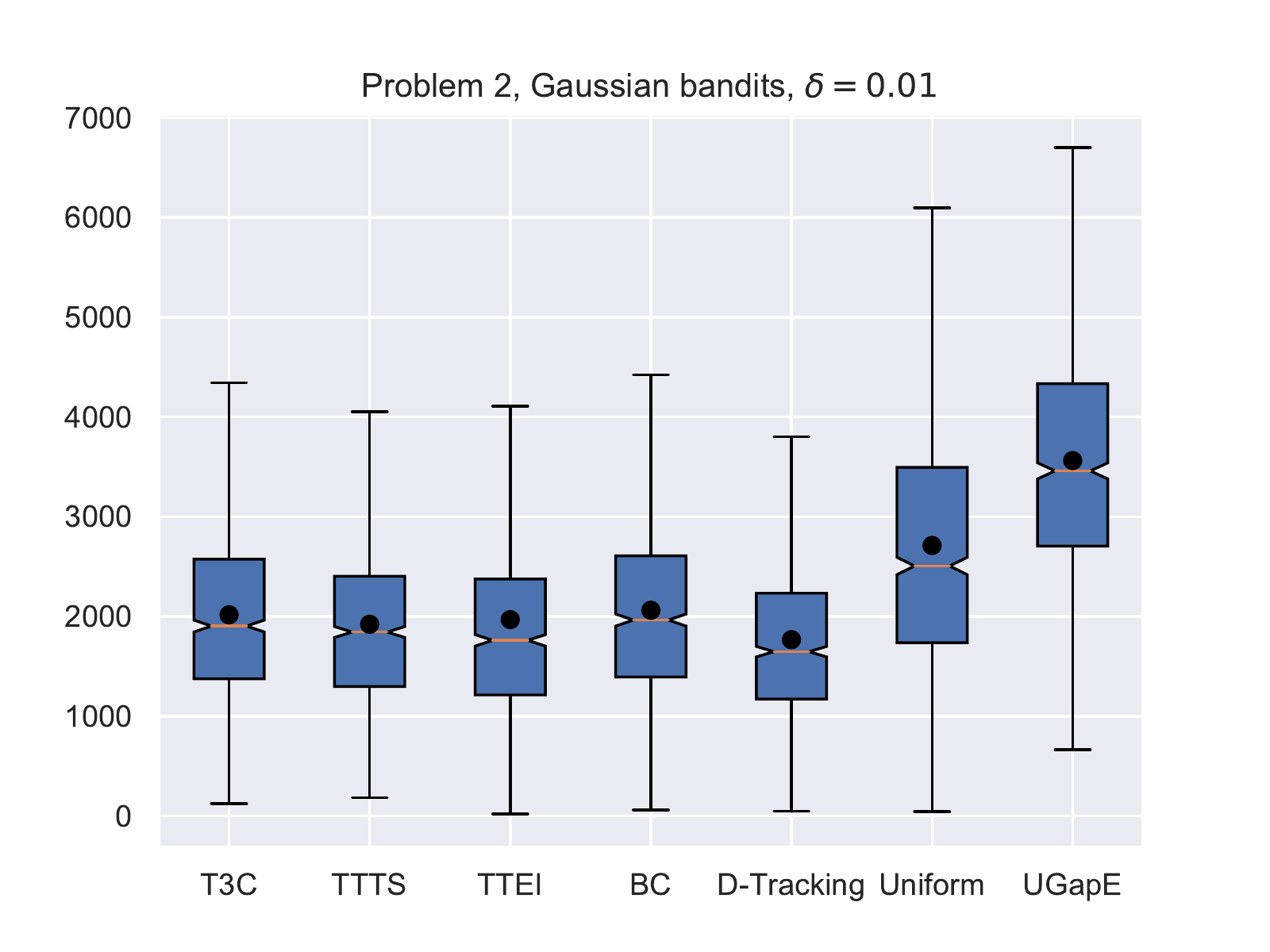}
\includegraphics[clip, width= 0.24\textwidth]{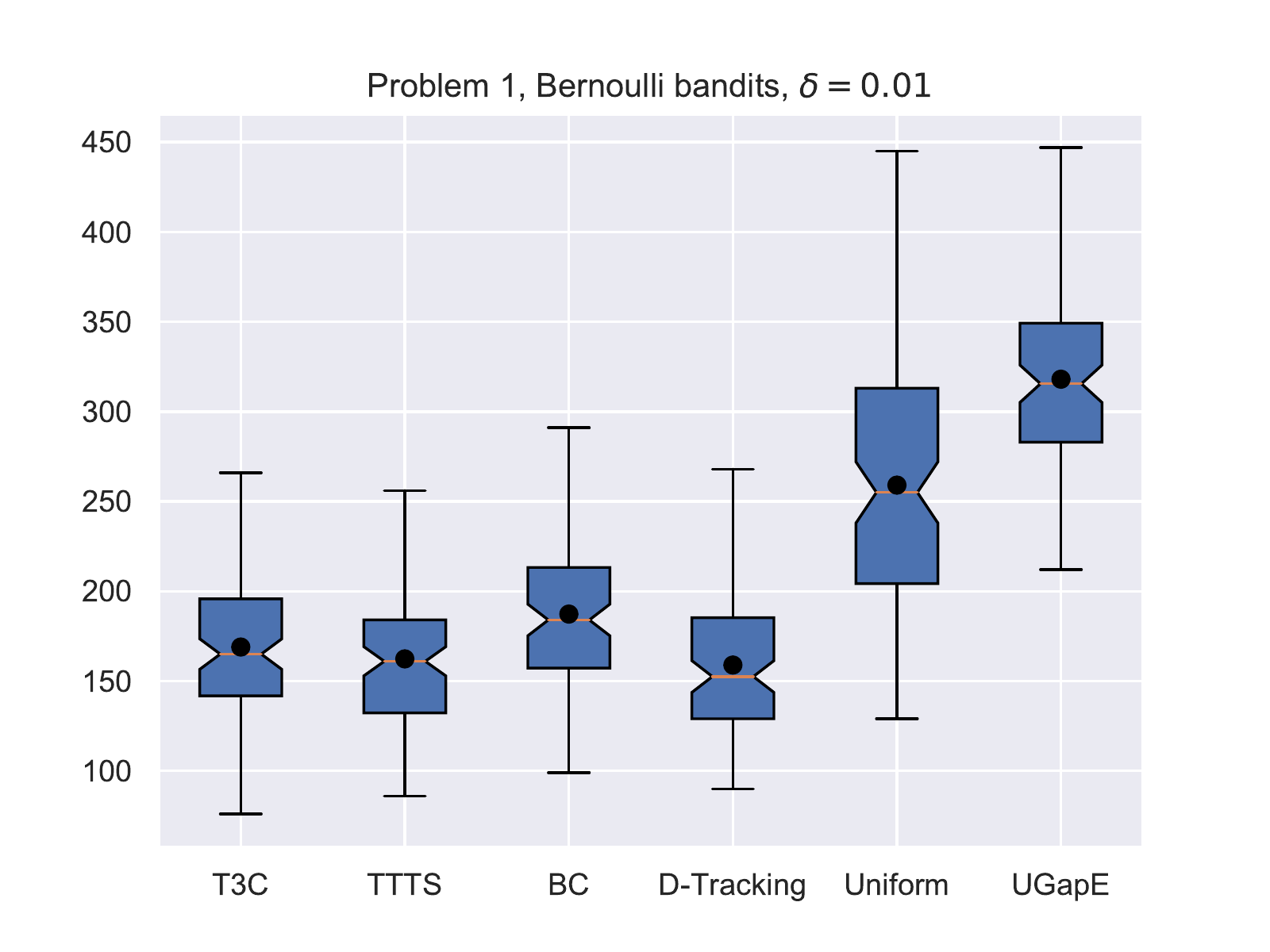}
\includegraphics[clip, width= 0.24\textwidth]{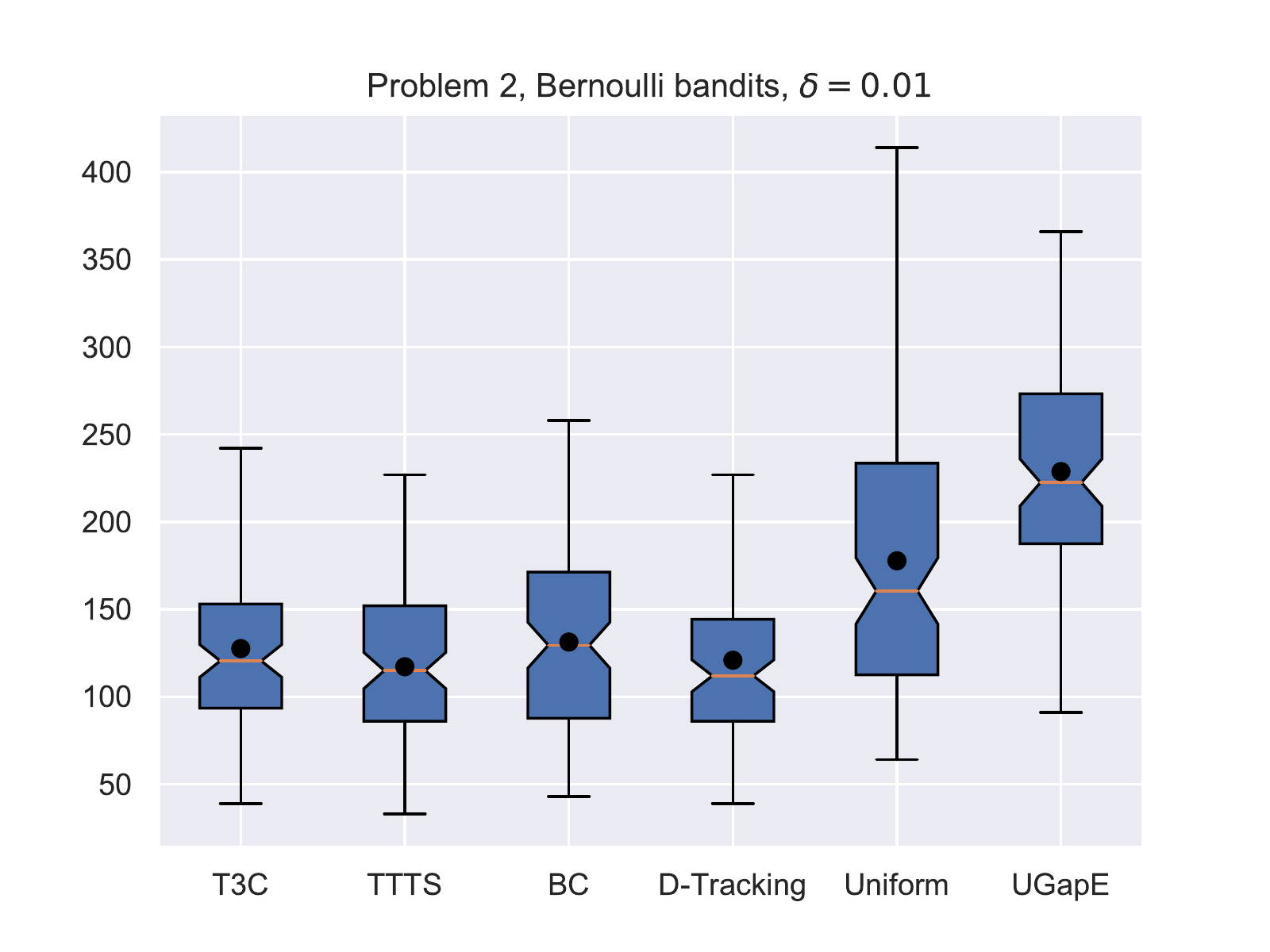}
\caption{black dots represent means and oranges lines represent medians.}
\label{fig:confidence}
\end{figure*}

\begin{table*}[t!]
\centering
\def\arraystretch{1.2}
\begin{tabular}{|c|c|c|c|c|c|c|c|}
 \hline
 \textbf{Sampling rule} & \TCC & \TTTS & \TTEI & \BC & \DT & \texttt{Uniform} & \UGapE \\
 \hline
 \textbf{Execution time (s)} & $1.6\times 10^{-5}$ & $2.3\times 10^{-4}$ & $1\times 10^{-5}$ & $1.4\times 10^{-5}$ & $1.3\times 10^{-3}$ & $6\times 10^{-6}$ & $5\times 10^{-6}$ \\
 \hline
\end{tabular}
\caption{average execution time in seconds for different sampling rules.}
\label{table:time}
\end{table*}

\section{Optimal Posterior Convergence
}\label{sec:bayesian}

Recall that $a_{n, I^\star}$ denotes the posterior mass assigned to the event that action $I^\star$ (i.e.\ the true optimal arm) is optimal at time $n$. As the number of observations tends to infinity, we desire that the posterior distribution converges to the truth. In this section we show equivalently that the posterior mass on the complementory event, $1 - a_{n, I^\star}$, the event that arm $I^\star$ is not optimal, converges to zero at an exponential rate, and that it does so at optimal rate $\Gamma_{\beta}^\star$. 

\citet{russo2016ttts} proves a similar theorem under three confining boundedness assumptions (cf.\,\citealt{russo2016ttts}, Asssumption 1) on the parameter space, the prior density and the (first derivative of the) log-normalizer of the exponential family. Hence, the theorems in \cite{russo2016ttts} do not apply to the two bandit models most used in practise, which we consider in this paper: the Gaussian and Bernoulli model. 

In the first case, the parameter space is unbounded, in the latter model, the derivative of the log-normalizer (which is $e^{\eta} / (1 + e^\eta)$) is unbounded. Here we provide two theorems, proving that under \TTTS, the optimal, exponential posterior convergence rates are obtained for the Gaussian model with uninformative (improper) Gaussian priors (proof given in Appendix~\ref{app:posterior_gaussian}), and the Bernoulli model with $\cB eta(1,1)$ priors (proof given in Appendix~\ref{app:posterior_beta}).

\begin{restatable}{theorem}{restateposteriorgaussian}\label{thm:posterior_gaussian}
    Under \TTTS, for Gaussian bandits with improper Gaussian priors, it holds almost surely that 
    \[
        \lim_{n\rightarrow{\infty}} -\frac{1}{n}\log(1-a_{n,I^\star}) = \Gamma_{\beta}^\star.
    \]
\end{restatable}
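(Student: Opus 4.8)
The plan is to reduce the statement to the asymptotics of the pairwise posterior overlaps $\Pi_n(\theta_j\ge\theta_{I^\star})$, $j\ne I^\star$, read those rates off from the Gaussian tail bounds of Lemma~\ref{lemma:gaussiantails}, and then feed in the convergence of the empirical allocation already established in Theorem~\ref{thm:sufficient_condition}. Since the posterior is absolutely continuous, ties have probability zero and
\[ \max_{j\ne I^\star}\Pi_n(\theta_j\ge\theta_{I^\star})\;\le\;1-a_{n,I^\star}\;=\;\Pi_n\!\left(\exists\,j\ne I^\star:\ \theta_j\ge\theta_{I^\star}\right)\;\le\;(K-1)\max_{j\ne I^\star}\Pi_n(\theta_j\ge\theta_{I^\star}), \]
so that $-\tfrac{1}{n}\log(1-a_{n,I^\star})=\min_{j\ne I^\star}\bigl(-\tfrac{1}{n}\log\Pi_n(\theta_j\ge\theta_{I^\star})\bigr)+O(\log K/n)$. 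It therefore suffices to show that for each fixed $j\ne I^\star$, almost surely $-\tfrac{1}{n}\log\Pi_n(\theta_j\ge\theta_{I^\star})\to C_j(\beta,\omega_j^\beta)$, and then to invoke Proposition~\ref{prop:optim}.

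By Theorem~\ref{thm:sufficient_condition}, $\EE{T_\beta^\epsilon}<\infty$ for every $\epsilon>0$, hence (intersecting over a sequence $\epsilon\downarrow 0$) almost surely $T_{n,i}/n\to\omega_i^\beta$ for all $i$, and in particular $\mu_{n,i}\to\mu_i$ by Lemma~\ref{lemma:tracking_means}, so that eventually $\mu_{n,j}<\mu_{n,I^\star}$ and Lemma~\ref{lemma:gaussiantails} applies to the pair $(j,I^\star)$ (note $\omega_i^\beta>0$ for all $i$, since $\Gamma_\beta^\star>0$). Writing $\sigma_{n,j,I^\star}^2=\sigma^2/T_{n,j}+\sigma^2/T_{n,I^\star}$,
\[ \frac{1}{n}\cdot\frac{(\mu_{n,I^\star}-\mu_{n,j})^2}{2\sigma_{n,j,I^\star}^2}=\frac{(\mu_{n,I^\star}-\mu_{n,j})^2}{2\sigma^2\bigl(n/T_{n,j}+n/T_{n,I^\star}\bigr)}\;\longrightarrow\;\frac{(\mu_{I^\star}-\mu_j)^2}{2\sigma^2\bigl(1/\omega_j^\beta+1/\beta\bigr)}=C_j(\beta,\omega_j^\beta). \]
The upper bound \eqref{gaussian_upper} then gives $-\tfrac{1}{n}\log\Pi_n(\theta_j\ge\theta_{I^\star})\ge C_j(\beta,\omega_j^\beta)-o(1)$, the prefactor $\tfrac12$ being absorbed into the $o(1)$ after dividing the logarithm by $n$. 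For the matching direction I would expand the exponent in the lower bound \eqref{gaussian_lower}: $(\mu_{n,I^\star}-\mu_{n,j}+\sigma_{n,j,I^\star})^2/(2\sigma_{n,j,I^\star}^2)$ equals $(\mu_{n,I^\star}-\mu_{n,j})^2/(2\sigma_{n,j,I^\star}^2)$ plus a cross term of order $(\mu_{n,I^\star}-\mu_{n,j})/\sigma_{n,j,I^\star}=\Theta(\sqrt n)$ plus the constant $\tfrac12$; after dividing by $n$ both corrections (and the prefactor $1/\sqrt{2\pi}$) vanish, giving $-\tfrac{1}{n}\log\Pi_n(\theta_j\ge\theta_{I^\star})\le C_j(\beta,\omega_j^\beta)+o(1)$. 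Hence $-\tfrac{1}{n}\log\Pi_n(\theta_j\ge\theta_{I^\star})\to C_j(\beta,\omega_j^\beta)$ almost surely.

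Finally, Proposition~\ref{prop:optim} says the values $C_j(\beta,\omega_j^\beta)$ coincide for all $j\ne I^\star$, and this common value is exactly $\Gamma_\beta^\star=\min_{j\ne I^\star}C_j(\beta,\omega_j^\beta)$. Substituting into the sandwich above and letting $n\to\infty$ yields $-\tfrac{1}{n}\log(1-a_{n,I^\star})\to\Gamma_\beta^\star$ almost surely. The main obstacle is not really in this section: essentially all the difficulty is packed into Theorem~\ref{thm:sufficient_condition} (the almost-sure convergence $T_{n,i}/n\to\omega_i^\beta$) and its underlying Lemmas~\ref{lemma:sufficient_exploration}--\ref{lemma:tracking_other}; granting that, the computation above is short, the only delicate point being that the lower and upper estimates on the rate must agree, which they do precisely because the $\beta$-optimal allocation equalizes the transportation costs $C_j(\beta,\omega_j^\beta)$. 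One could shorten the easy direction (upper bound on the rate) by quoting \citeauthor{russo2016ttts}'s lower bound $1-a_{n,I^\star}\ge e^{-n(\Gamma_\beta^\star+o(1))}$, which requires only $\Psi_{n,I^\star}/n\to\beta$; but since that result is stated under boundedness assumptions excluding the Gaussian model, re-deriving it directly from Lemma~\ref{lemma:gaussiantails} as above is cleaner.
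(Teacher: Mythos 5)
Your proposal is correct, but it takes a genuinely different route from the paper. The paper's proof of Theorem~\ref{thm:posterior_gaussian} also reduces the statement to the almost-sure convergence $T_{n,i}/n \rightarrow \omega_i^\beta$, but it then outsources the ``rate given the allocation'' step to Theorem~2 of \cite{qin2017ttei}, and, more importantly, it establishes the allocation convergence \emph{without} invoking Theorem~\ref{thm:sufficient_condition}: it re-runs the scheme of \cite{russo2016ttts} for the Gaussian model, via a consistency lemma (Lemma~\ref{lemma:consistency_gaussian}), an ``over-allocation implies negligible probability'' lemma (Lemma~\ref{lemma:over_allocation}), and Russo's sufficient condition for optimality (Lemma~\ref{lemma:sufficient_optimality}), together with Lemma~\ref{lemma:link}. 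You instead recycle the fixed-confidence machinery: Theorem~\ref{thm:sufficient_condition} gives $\EE{T_\beta^\epsilon}<\infty$, hence $T_{n,i}/n\to\omega_i^\beta$ almost surely, and you then read the exponential rate directly off the two Gaussian tail bounds of Lemma~\ref{lemma:gaussiantails}, using the equalization $C_i(\beta,\omega_i^\beta)=C_j(\beta,\omega_j^\beta)$ from Proposition~\ref{prop:optim} to match the upper and lower estimates. There is no circularity in doing so (the proofs in Appendices~\ref{app:confidence_ttts} and \ref{app:confidence_t3c} nowhere use posterior convergence), and your direct rate computation is sound --- it is essentially the same calculation as in the proof of Lemma~\ref{lemma:confidence} and in the paper's Lemma~\ref{lemma:over_allocation}, with the cross term in \eqref{gaussian_lower} of order $\sqrt{n}$ vanishing after division by $n$.

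What each approach buys: yours is shorter given the heavy lifting already done in the sample-complexity analysis, is self-contained about the rate (no appeal to \citeauthor{qin2017ttei}'s Theorem~2), and immediately covers \TCC as well. The price is generality: Theorem~\ref{thm:sufficient_condition} and the lemmas behind it (Lemmas~\ref{lemma:sufficient_exploration}--\ref{lemma:tracking_other}) rely on $\Delta_{\min}>0$, i.e.\ on all arm means being distinct --- the caveat that appears explicitly in Theorem~\ref{thm:confidence_main} --- whereas the paper's Russo-style argument only requires a unique best arm, and its structure transfers to the Bernoulli/Beta model in Appendix~\ref{app:posterior_beta}, where no fixed-confidence analysis is available. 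So your proof establishes the theorem under the slightly stronger distinct-means hypothesis; if you want the statement as written, you would either add that hypothesis or fall back on the paper's argument for the allocation convergence.
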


\begin{restatable}{theorem}{restateposteriorbernoulli}\label{thm:posterior_bernoulli}
	Under \TTTS, for Bernoulli bandits and uniform priors, it holds almost surely that
	\[
	\lim_{n\rightarrow{\infty}} -\frac{1}{n}\log(1-a_{n,I^\star}) = \Gamma_{\beta}^\star.
	\]
\end{restatable}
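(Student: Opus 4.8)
The plan is to mirror the argument for the Gaussian case (Theorem~\ref{thm:posterior_gaussian}), replacing every use of the Gaussian posterior-tail bound of Lemma~\ref{lemma:gaussiantails} by the Beta--posterior (binomial) tail estimate of Lemma~\ref{lemma:binomial_tail}. As in \citet{russo2016ttts}, the heart of the matter is to show that the measurement effort converges to the $\beta$-optimal allocation, $T_{n,i}/n \to \omega_i^\beta$ almost surely for every $i$; once that is in hand, the claimed rate for $1-a_{n,I^\star}$ follows from the tail bound together with continuity. Compared with the fixed-confidence analysis of Section~\ref{sec:confidence}, only almost-sure statements are needed rather than quantitative $\mathrm{Poly}(W_1,W_2)$ bounds, so this is in effect a ``limit version'' of that analysis specialized to the Bernoulli model.

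\textbf{Sufficient exploration and consistency.} First I would show that, almost surely, every arm is eventually pulled at a rate bounded below, e.g. $T_{n,i}\ge\sqrt{n/K}$ for large $n$, by the Bernoulli counterpart of Lemma~\ref{lemma:sufficient_exploration}: if $I^\star$ is starved, a Thompson draw ranks it first with probability bounded away from $0$; if a sub-optimal arm is starved, its Beta posterior keeps a heavy tail (controlled from below via Lemma~\ref{lemma:binomial_tail}) overlapping the concentrated tails of the well-sampled arms, so it is taken as challenger sufficiently often. Since each arm is then sampled at least $\sqrt{n/K}$ times, the strong law yields $\mu_{n,i}\to\mu_i$ a.s. From this, $W_n(I^\star,j)\to\infty$ for $j\neq I^\star$, so by Lemma~\ref{lemma:binomial_tail} $a_{n,j}\to 0$ and $a_{n,I^\star}\to1$ (hence $J_n=I^\star$ eventually). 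Substituting into the \TTTS\ expression $\psi_{n,I^\star}=\beta a_{n,I^\star}+(1-\beta)a_{n,I^\star}\sum_{j\neq I^\star}a_{n,j}/(1-a_{n,j})$ gives $\psi_{n,I^\star}\to\beta$, and Ces\`aro averaging plus Lemma~\ref{lemma:link} gives $T_{n,I^\star}/n\to\beta$.

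\textbf{Allocation to sub-optimal arms and the rate.} Next, as in Lemma~\ref{lemma:tracking_other}, I would use that for large $n$ the increment $\psi_{n,i}$ is, up to the normalizer $\sum_{k\neq I^\star}a_{n,k}$, proportional to $a_{n,i}$, and that Lemma~\ref{lemma:binomial_tail} gives the two-sided approximation $a_{n,i}=\exp(-(1+o(1))W_n(I^\star,i))$; then whenever $\Psi_{n,i}/n$ exceeds $\omega_i^\beta$ by a fixed margin, $\psi_{n,i}$ is exponentially small, which forces $\limsup_n T_{n,i}/n\le\omega_i^\beta$ through Lemma~\ref{lemma:link}. Combining this with $\sum_i T_{n,i}=n$, with $T_{n,I^\star}/n\to\beta$, and with the equalization of the costs $C_i(\beta,\omega_i^\beta)$ from Proposition~\ref{prop:optim} pins down $T_{n,i}/n\to\omega_i^\beta$ for every $i$. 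Finally, write $1-a_{n,I^\star}=\Pi_n(\exists j\neq I^\star:\theta_j>\theta_{I^\star})$ and sandwich it between $\max_{j\neq I^\star}\Pi_n(\theta_j>\theta_{I^\star})$ and $(K-1)$ times that maximum; the factor $K-1$ does not affect the exponential rate. By Lemma~\ref{lemma:binomial_tail}, $-\tfrac1n\log\Pi_n(\theta_j>\theta_{I^\star})=\tfrac1n W_n(I^\star,j)+o(1)$, and since $\tfrac1n W_n(I^\star,j)=\inf_{x\in[0,1]}\tfrac{T_{n,I^\star}}{n}d(\mu_{n,I^\star};x)+\tfrac{T_{n,j}}{n}d(\mu_{n,j};x)$, the convergences above and continuity give $\tfrac1n W_n(I^\star,j)\to C_j(\beta,\omega_j^\beta)$. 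Hence $-\tfrac1n\log(1-a_{n,I^\star})\to\min_{j\neq I^\star}C_j(\beta,\omega_j^\beta)=\Gamma_\beta^\star$, the last equality by Proposition~\ref{prop:optim} and the definition of $\Gamma_\beta^\star$.

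\textbf{Main obstacle.} The delicate parts are the sufficient-exploration step and the sub-optimal allocation step: the randomized challenger of \TTTS\ makes the self-correcting ``over-sampled $\Rightarrow$ rarely chosen'' mechanism subtle, and the Beta tails are less clean than the Gaussian ones, so Lemma~\ref{lemma:binomial_tail} must deliver two-sided control of $a_{n,j}$ up to sub-exponential factors, uniformly over the regime where $\mu_{n,i}$ is near $\mu_i$ and $T_{n,i}$ is large. Everything else is continuity of $(\omega,\omega',x)\mapsto\omega d(\mu_{I^\star};x)+\omega' d(\mu_j;x)$ on the relevant compact set and Ces\`aro-type bookkeeping.
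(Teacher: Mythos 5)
Your overall strategy is the same as the paper's: reduce the theorem to showing $T_{n,i}/n \to \omega_i^\beta$ almost surely under \TTTS, and then convert that allocation convergence into the rate $\Gamma_\beta^\star$ via the Beta/binomial tail estimates and continuity of the transportation costs — the paper packages this second part as a separate statement (Theorem~\ref{thm:bernoulli_lower_bound}), proved exactly by the sandwich $\max_{j\neq I^\star}\Pi_n(\theta_j\geq\theta_{I^\star}) \leq 1-a_{n,I^\star}\leq (K-1)\max_{j\neq I^\star}\Pi_n(\theta_j\geq\theta_{I^\star})$ and Lemma~\ref{lemma:binomial_tail}, as you propose. Where you diverge is in how you get the allocation convergence. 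For the ``every arm is explored'' step you port the quantitative fixed-confidence machinery (a Bernoulli analogue of Lemma~\ref{lemma:sufficient_exploration}, $T_{n,i}\geq\sqrt{n/K}$), whereas the paper uses a much softer argument (Lemma~\ref{lemma:consistency_bernoulli}): if an arm were sampled only finitely often its Beta posterior keeps strictly positive variance, so $\liminf_n a_{n,i}>0$, and since $\psi_{n,i}\geq \beta a_{n,i}$ this forces $\sum_n\psi_{n,i}=\infty$, contradicting finiteness of $T_{n,i}$ via Lemma~\ref{lemma:link}; this immediately gives $a_{n,I^\star}\to 1$ and $T_{n,I^\star}/n\to\beta$ with no tail lower bounds needed. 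For the sub-optimal arms, the paper proves an over-allocation lemma (Lemma~\ref{lemma:over_allocation_bernoulli}) and then invokes Russo's sufficient condition (Lemma~\ref{lemma:sufficient_optimality}) rather than your $\limsup$-plus-sum-to-one bookkeeping; the two mechanisms are essentially equivalent (and the cost-equalization from Proposition~\ref{prop:optim} you invoke is not actually needed there). The trade-off: your quantitative exploration route is considerably heavier and requires a \emph{lower} bound on Beta posterior tails for under-sampled arms — note that Lemma~\ref{lemma:binomial_tail} as stated is only an upper bound, so the two-sided control you lean on has to be extracted from the Beta–Binomial trick together with the Sanov lower bound from the preliminaries of Appendix~\ref{app:posterior_beta.pre}; if carried out it would yield more than the theorem asks (rates usable for a fixed-confidence analysis in the Bernoulli case), while the paper's soft consistency argument buys a much shorter proof of the purely asymptotic statement.
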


\section{Numerical Illustrations}\label{sec:experiments}
This section is aimed at illustrating our theoretical results and supporting the practical use of Bayesian sampling rules for fixed-confidence BAI.   %We also compare the different sampling rules.
%The main contribution of this paper is theoretical, thus the following experimental results are not for the purpose of showing the superiority of \TCC over existing methods, but rather supports a discussion on sampling rules.

We experiment with three different Bayesian sampling rules: \TCC, \TTTS and \TTEI, and we also include the Direct Tracking (\DT) rule of~\cite{garivier2016tracknstop} (which is adaptive to $\beta$), the \UGapE~\citep{gabillon2012ugape} algorithm, and a uniform baseline. In order to make a fair comparison, we use the Chernoff stopping rule~(\ref{eq:chernoffstoppingtime}) and associated recommendation rule for all of the sampling rules, including the uniform one, except for \UGapE which has its own stopping rule. Furthermore, we include a top-two variant of the Best Challenger (\BC) heuristic (see, e.g., \citealp{menard2019lma}). 

\BC selects the empirical best arm $\hat{I}_n$ with probability $\beta$ and the maximizer of $W_n(\hat{I}_n,j)$ with probability $1-\beta$, but also performs forced exploration (selecting any arm sampled less than $\sqrt{n}$ times at round $n$). \TCC can thus be viewed as a variant of \BC in which no forced exploration is needed to converge to $\bomega^\beta$, due to the noise added by replacing $\hat{I}_n$ with $I_n^{(1)}$.

We consider two simple instances with arms means given by $\bmu_1 = [0.5 \ 0.9 \ 0.4 \ 0.45 \ 0.44999]$, and $\bmu_2 = [1 \ 0.8 \ 0.75 \ 0.7]$ respectively. We run simulations for both Gaussian (with $\sigma=1$) and Bernoulli bandits with a risk parameter $\delta=0.01$. %In the Gaussian case, we set $\sigma=1$.
Figure~\ref{fig:confidence} reports the empirical distribution of $\tau_\delta$ under the different sampling rules, estimated over 1000 independent runs. 

These figures provide several insights: (1) \TCC is competitive with, and sometimes slightly better than \TTTS and \TTEI in terms of sample complexity. (2) The \UGapE algorithm has a larger sample complexity than the uniform sampling rule, which highlights the importance of the stopping rule in the fixed-confidence setting. (3) The fact that \DT performs best is not surprising, since it converges to $\bomega^{\beta^\star}$ and achieves minimal sample complexity. However, in terms of computation time, \DT is much worse than other sampling rules, as can be seen in Table~\ref{table:time}, which reports the average execution time of one step of each sampling rule for $\mu_1$ in the Gaussian case. (4) \TTTS also suffers from computational costs, whose origins are explained in Section~\ref{sec:algorithm}, unlike \TCC and \TTEI. 
Although \TTEI is already computationally more attractive than \TTTS, its practical benefits are limited to the Gaussian case, since the \emph{Expected Improvement} (EI) does not have a closed form beyond this case and its approximation would be costly. In contrast, \TCC can be applied for other distributions.

%!TEX root = ../main.tex
\section{Conclusion}
We have advocated the use of a Bayesian sampling rule for BAI. In particular, we proved that \TTTS and a computationally advantageous approach \TCC, are both $\beta$-optimal in the fixed-confidence setting, for Gaussian bandits. 
%Our analysis applies to Gaussian bandits, but could be extended to more distributions for which posterior tails bounds are available. 
We further extended the Bayesian optimality properties established by \cite{russo2016ttts} to more practical choices of models and prior distributions.

In order to be optimal, the sampling rules studied in this paper would need to use the oracle tuning $\beta^\star =\argmax_{\beta \in [0,1]} \Gamma_\beta^\star$, which is not feasible. In future work, we will investigate an efficient online tuning of~$\beta$ to circumvent this issue. We also plan to investigate the extension of \TCC to more general pure exploration problems, as an alternative to approaches recently proposed by~\citet{menard2019lma,degenne2019game}.

Finally, it is also important to study Bayesian sampling rules in the fixed-budget setting which is more plausible in many application scenarios such as applying BAI for automated machine learning~\citep{hoffman2014bayesgap,li2017hyperband,shang2019dttts}.
\vfil

% Acknowledgements should only appear in the accepted version.
%\section*{Acknowledgements}

\paragraph{Acknowledgement}
The research presented was supported by European CHIST-ERA project DELTA, French National Research Agency projects BADASS (ANR-16-CE40-0002) and BOLD (ANR-19-CE23-0026-04), the Inria-CWI associate team 6PAC and LUF travel grant number W19204-1-35.

\bibliography{Major}
\bibliographystyle{apalike}

\newpage
\onecolumn
\appendix

\section{Outline}\label{app:outline}

The appendix of this paper is organized as follows:
\begin{itemize}[label=$\square$]
    \item Appendix~\ref{app:confidence_ttts} provides the complete fixed-confidence analysis of \TTTS (Gaussian case).
    \item Appendix~\ref{app:confidence_t3c} provides the complete fixed-confidence analysis of \TCC (Gaussian case).
    \item Appendix~\ref{app:confidence} is dedicated to Lemma~\ref{lemma:confidence}.
    \item Appendix~\ref{app:lemmas} is dedicated to crucial technical lemmas.
    \item Appendix~\ref{app:posterior_gaussian} is the proof to the posterior convergence Theorem~\ref{thm:posterior_gaussian} (Gaussian case).
    \item Appendix~\ref{app:posterior_beta} is the proof to the posterior convergence Theorem~\ref{thm:posterior_bernoulli} (Beta-Bernoulli case).
\end{itemize}

\section{Useful Notation for the Appendices}\label{app:notation}

In this section, we provide a list of useful notation that is applied in appendices (including reminders of previous notation in the main text and some new ones).

\begin{itemize}
    \item Recall that $d(\mu_1;\mu_2)$ denotes the \texttt{KL}-divergence between two distributions parametrized by their means $\mu_1$ and $\mu_2$. For Gaussian distributions, we know that
    \[
        d(\mu_1;\mu_2) = \frac{(\mu_1-\mu_2)^2}{2\sigma^2}.
    \]
    When it comes to Bernoulli distributions, we denote this with $kl$, i.e.\
    \[
        kl(\mu_1;\mu_2) = \mu_1 \ln \left( \frac{\mu_1}{\mu_2} \right) + (1-\mu_1) \ln  \left( \frac{1-\mu_1}{1-\mu_2} \right).
    \]
    \item $\cB eta(\cdot,\cdot)$ denotes a Beta distribution.
    \item $\cB ern(\cdot)$ denotes a Bernoulli distribution.
    \item $\cB(\cdot)$ denotes a Binomial distribution.
    \item $\cN(\cdot,\cdot)$ denotes a normal distribution.
    \item $Y_{n,i}$ is the reward of arm $i$ at time $n$.
    \item $Y_{n,I_n}$ is the observation of the sampling rule at time $n$.
    \item $\cF_n \eqdef \sigma(I_1,Y_{1,I_1}, I_2, Y_{2,I_2}, \cdots, I_n, Y_{n,I_n})$ is the filtration generated by the first $n$ observations.
    \item $\psi_{n,i} \eqdef \PP{I_n = i | \cF_{n-1}}$.
    \item $\Psi_{n,i} \eqdef \sum_{l=1}^n \psi_{l,i}$.
    \item For the sake of simplicity, we further define $\bar{\psi}_{n,i} \eqdef \frac{\Psi_{n,i}}{n}$.
    \item $T_{n,i}$ is the number of pulls of arm $i$ before round $n$.
    \item $\bT_n$ denotes the vector of the number of arm selections. 
    \item $I_n^\star \eqdef \argmax_{i\in\cA} \mu_{n,i}$ denotes the empirical best arm at time $n$.
    \item For any $a, b > 0$, define a function $C_{a,b}$ s.t. $\forall y$,
    \[
        C_{a,b}(y) \eqdef (a+b-1) kl (\frac{a-1}{a+b-1}; y).
    \]
    \item We define the minimum and the maximum means gap as
    \[
        \Delta_{\text{min}} \eqdef \min_{i \neq j}|\mu_i-\mu_j|; \Delta_{\text{max}} \eqdef \max_{i \neq j}|\mu_i-\mu_j|.
    \]
    \item We introduce two indices
    \[
        J_n^{(1)} \eqdef \argmax_{j} a_{n,j}, J_n^{(2)} \eqdef \argmax_{j\neq J_n^{(1)}} a_{n,j}.
    \]
    Note that $J_n^{(1)}$ coincides with the Bayesian recommendation index $J_n$.
    \item Two real-valued sequences $(a_n)$ and $(b_n)$ are are said to be logarithmically equivalent if
    \[
        \lim_{n\rightarrow\infty}\frac{1}{n}\log\left(\frac{a_n}{b_n}\right) =0,
    \]
    and we denote this by $a_n \doteq b_n$.
    %\item Given $\beta$, we define two constants
    %\[
    %    \beta_{\min} \eqdef \min(\beta, 1-\beta); \beta_{\max} \eqdef \max(\beta, 1-\beta).
    %\]
\end{itemize}

\section{Fixed-Confidence Analysis for \TTTS}\label{app:confidence_ttts}

This section is entirely dedicated to \TTTS.

\subsection{Sufficient exploration of all arms, proof of Lemma~\ref{lemma:sufficient_exploration} under \TTTS}\label{app:confidence_ttts.exploration}

To prove this lemma, we introduce the two following sets of indices for a given $L>0$: $\forall n\in\NN$ we define
\[
    U_n^L \eqdef \{i: T_{n,i} < \sqrt{L}\},
\]
\[
    V_n^L \eqdef \{i: T_{n,i} < L^{3/4}\}.
\]
It is seemingly non trivial to manipulate directly \TTTS's candidate arms, we thus start by connecting \TTTS with \TTPS (top two probability sampling). \TTPS is another sampling rule presented by~\cite{russo2016ttts} for which the two candidate samples are defined as in Appendix~\ref{app:notation}, we recall them in the following.
\[
    J_n^{(1)} \eqdef \argmax_{j} a_{n,j}, J_n^{(2)} \eqdef \argmax_{j\neq J_n^{(1)}} a_{n,j}.
\]
Lemma~\ref{lemma:sufficient_exploration} is proved via the following sequence of lemmas.

\begin{lemma}\label{lemma:link_ttps}
    There exists $L_1 = \text{Poly}(W_1)$ s.t. if $L > L_1$, for all $n$, $U_n^L \neq \emptyset$ implies $J_n^{(1)} \in V_n^L$ or $J_n^{(2)} \in V_n^L$.
\end{lemma}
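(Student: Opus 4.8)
The plan is to establish the contrapositive at a fixed round $n$. Suppose both leaders are well sampled, $T_{n,J_n^{(1)}}\ge L^{3/4}$ and $T_{n,J_n^{(2)}}\ge L^{3/4}$ (i.e.\ $J_n^{(1)},J_n^{(2)}\notin V_n^L$), and suppose, for contradiction, that some arm $i$ has $T_{n,i}<\sqrt L$ (i.e.\ $i\in U_n^L$). Then $i\notin\{J_n^{(1)},J_n^{(2)}\}$, and since $J_n^{(1)}$ and $J_n^{(2)}$ carry the two largest optimal action probabilities, $a_{n,i}\le a_{n,J_n^{(2)}}\le a_{n,J_n^{(1)}}$. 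The whole argument consists in contradicting this chain once $L$ exceeds a threshold $L_1=\text{Poly}(W_1)$, by sandwiching an upper bound on $a_{n,J_n^{(2)}}$ that is exponentially small in $L^{3/4}$ against a lower bound on $a_{n,i}$ that decays only like $\exp(-\text{Poly}(W_1)\sqrt L)$.

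For the upper bound: since the arms' means are distinct, Lemma~\ref{lemma:means} together with the fact that $t\mapsto\log(e+t)/(1+t)$ is non-increasing on $t\ge1$ gives $|\mu_{n,J_n^{(m)}}-\mu_{J_n^{(m)}}|\le\sigma W_1\sqrt{\log(e+L^{3/4})/(1+L^{3/4})}$ for $m\in\{1,2\}$, hence $|\mu_{n,J_n^{(1)}}-\mu_{n,J_n^{(2)}}|\ge\Delta_{\text{min}}/2$ once $L\ge L_a$ for some $L_a=\text{Poly}(W_1)$. Applying the upper tail bound~\eqref{gaussian_upper} to whichever of the two leaders has the smaller empirical mean, and using $a_{n,J_n^{(2)}}\le a_{n,J_n^{(1)}}$ and $\sigma_{n,J_n^{(1)},J_n^{(2)}}^2=\sigma^2/T_{n,J_n^{(1)}}+\sigma^2/T_{n,J_n^{(2)}}\le 2\sigma^2L^{-3/4}$, yields $a_{n,J_n^{(2)}}\le\tfrac12\exp\!\big(-\Delta_{\text{min}}^2L^{3/4}/(16\sigma^2)\big)$.

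For the lower bound: the decisive feature is that $i$ is so under-sampled that its posterior is much wider than the others', $\sigma_{n,i}=\sigma/\sqrt{T_{n,i}}>\sigma L^{-1/4}$. Using that under the improper-prior Gaussian posterior the coordinates are independent with $\theta_k\sim\cN(\mu_{n,k},\sigma^2/T_{n,k})$, set $M'\eqdef\max_k\mu_{n,k}+\sqrt{2\ln(2K)}\max_k\sigma_{n,k}$; on $\{\theta_i\ge M'\}$ one has $(\theta_i-\mu_{n,k})/\sigma_{n,k}\ge\sqrt{2\ln(2K)}$ for every $k$, so the probability that $\theta_i$ is the largest sample is at least $\tfrac34\,\Pi_n(\theta_i\ge M')$, each of the $K-1$ competing coordinates contributing a factor $\ge1-\tfrac1{4K}$ and $(1-\tfrac1{4K})^{K-1}\ge\tfrac34$. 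By Lemma~\ref{lemma:means} with the crude bound $T_{n,k}\ge1$ one gets $\max_k\mu_{n,k}-\mu_{n,i}\le\Delta_{\text{max}}+\text{Poly}(W_1)$ and $\max_k\sigma_{n,k}\le\sigma$, hence $(M'-\mu_{n,i})/\sigma_{n,i}\le R'L^{1/4}/\sigma$ with $R'=\text{Poly}(W_1)$; a standard Gaussian tail lower bound then gives $a_{n,i}\ge c\,L^{-1/4}\exp(-R'^2L^{1/2}/(2\sigma^2))$ for a positive $c$ depending only on $W_1$ and $\sigma$. Comparing the two bounds and using $L^{1/2}=o(L^{3/4})$, the inequality $a_{n,i}\le a_{n,J_n^{(2)}}$ fails for all $L\ge L_b$ with $L_b=\text{Poly}(W_1)$; taking $L_1\eqdef\max(L_a,L_b)$ finishes the proof.

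The step I expect to be most delicate is keeping every threshold genuinely free of $n$: the tail inequalities of Lemma~\ref{lemma:gaussiantails} are uniform in $n$, but the consequences of Lemma~\ref{lemma:means} must be fed only with quantities that do not grow with $n$ — the worst-case counts $T_{n,k}\ge1$ and $T_{n,\cdot}\ge L^{3/4}$ for the relevant arms, together with the monotonicity of $\log(e+t)/(1+t)$ — so that $L_a$ and $L_b$ depend on $W_1$ and on the problem parameters $K,\sigma,\Delta_{\text{min}},\Delta_{\text{max}}$ but not on $n$. A case split on whether the empirical best arm is $J_n^{(1)}$, $J_n^{(2)}$ or neither appears naturally, but under the distinct-means assumption it is absorbed by always applying~\eqref{gaussian_upper} to the leader with the smaller empirical mean, so no separate appeal to an inequality like $a_{n,I_n^\star}\ge2^{-K}$ is needed.
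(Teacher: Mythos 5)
Your proof is correct, and it reaches the same quantitative sandwich as the paper --- an upper bound on the optimal action probability of a well-sampled arm that is exponentially small in $L^{3/4}$, against a lower bound for an under-sampled arm of order $\exp(-\mathrm{Poly}(W_1)\sqrt{L})$ --- but the bookkeeping is organized differently. The paper argues directly: assuming $J_n^{(1)}\notin V_n^L$, it first identifies $J_n^{(1)}$ with the empirically best well-sampled arm $\bar{J_n^\star}$ (via a contradiction using $a_{n,J_n^{(1)}}\geq 1/K$), then bounds $a_{n,j}\leq \exp(-L^{3/4}\Delta_{\min}^2/(16\sigma^2))$ for \emph{every} $j\in\bar{V_n^L}$ by comparison with $\bar{J_n^\star}$, and lower-bounds $a_{n,J_n^\star}$ for the empirically best under-sampled arm using the threshold $c_n=\max(\mu_{n,J_n^\star},\mu_{n,\bar{J_n^\star}})$ and per-coordinate factors $\geq 1/2$, so that $J_n^{(2)}$ is forced into $V_n^L$. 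You instead prove the contrapositive, which lets you skip the identification step entirely: with both leaders assumed well-sampled you play them against each other (applying \eqref{gaussian_upper} to the one with the smaller empirical mean and using $a_{n,J_n^{(2)}}\leq a_{n,J_n^{(1)}}$), and your lower bound uses the uniform threshold $M'=\max_k\mu_{n,k}+\sqrt{2\ln(2K)}\max_k\sigma_{n,k}$ with per-coordinate factors $1-1/(4K)$, which is an equally valid device (your care in keeping $R'$ free of $n$, via $T_{n,k}\geq 1$ and monotonicity of $\log(e+t)/(1+t)$, is exactly what is needed). Your route is somewhat shorter; the paper's route has the practical advantage that its intermediate facts (the identification $J_n^{(1)}=\bar{J_n^\star}$ and the bound \eqref{eq:upper_bound_anj_explo}) are reused verbatim in the corresponding \TCC lemma, where the challenger is defined through $W_n$ rather than through $a_{n,\cdot}$, so your comparison $a_{n,i}\leq a_{n,J_n^{(2)}}$ would not transfer there.
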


\begin{proof}
    If $J_n^{(1)} \in V_n^L$, then the proof is finished. Now we assume that $J_n^{(1)} \in \bar{V_n^L}$, and we prove that $J_n^{(2)} \in V_n^L$.
    \paragraph{Step 1} According to Lemma~\ref{lemma:means}, there exists $L_2 = \text{Poly}(W_1)$ s.t. $\forall L > L_2, \forall i \in \bar{U_n^L}$,
    \begin{align*}
        |\mu_{n,i} - \mu_{i}| &\leq \sigma W_1 \sqrt{\frac{\log(e+T_{n,i})}{1+T_{n,i}}}\\
                              &\leq \sigma W_1 \sqrt{\frac{\log(e+\sqrt{L})}{1+\sqrt{L}}}\\
                              &\leq \sigma W_1 \frac{\Delta_{\text{min}}}{4\sigma W_1} = \frac{\Delta_{\text{min}}}{4}.
    \end{align*}
    The second inequality holds since $x\mapsto \frac{\log(e+x)}{1+x}$ is a decreasing function. The third inequality holds for a large $L>L_2$ with $L_2 = \ldots$.
    
    \paragraph{Step 2} We now assume that $L > L_2$, and we define 
    \[
        \bar{J_n^\star} \eqdef \argmax_{j\in\bar{U_n^L}} \mu_{n,j} = \argmax_{j\in\bar{U_n^L}} \mu_j.
    \]
    The last equality holds since $\forall j \in \bar{U_n^L}$, $|\mu_{n,i}-\mu_i| \leq \Delta_{\text{min}}/4$. We show that there exists $L_3 = \text{Poly}(W_1)$ s.t. $\forall L > L_3$, 
    \[
        \bar{J_n^\star} = J_n^{(1)}.
    \] We proceed by contradiction, and suppose that $\bar{J_n^\star} \neq J_n^{(1)}$, then $\mu_{n,J_n^{(1)}} < \mu_{n,\bar{J_n^\star}}$, since $J_n^{(1)} \in \bar{V_n^L} \subset \bar{U_n^L}$. However, we have
    %\compi{Something is missing (below also): 
    %\[\PP{\theta_{J_n^{(1)}} > \max_{j\neq J_n^{(1)}}\theta_j|\cF_{n-1}}\,\]
    %note that the notations are inconsistent in h draft (sometimes $\Pi_{n}(\cdot)$)
    %}
    \begin{align*}
        a_{n,J_n^{(1)}} &= \Pi_{n}\left[\theta_{J_n^{(1)}} > \max_{j\neq J_n^{(1)}}\theta_j\right]\\
                             &\leq \Pi_{n}\left[\theta_{J_n^{(1)}} > \theta_{\bar{J_n^\star}}\right]\\
                             &\leq \frac{1}{2}\expp{-\frac{(\mu_{n,J_n^{(1)}}-\mu_{n,\bar{J_n^\star}})^2}{2\sigma^2(1/T_{n,J_n^{(1)}}+1/T_{n,\bar{J_n^\star}})}}.
    \end{align*}
    The last inequality uses the Gaussian tail inequality (\ref{gaussian_upper}) of Lemma~\ref{lemma:gaussiantails}. On the other hand,
    \begin{align*}
        |\mu_{n,J_n^{(1)}} - \mu_{n,\bar{J_n^\star}}| &= |\mu_{n,J_n^{(1)}} - \mu_{J_n^{(1)}} + \mu_{J_n^{(1)}} - \mu_{\bar{J_n^\star}} + \mu_{\bar{J_n^\star}} -\mu_{n,\bar{J_n^\star}}|\\
                                                      &\geq
        |\mu_{J_n^{(1)}} - \mu_{\bar{J_n^\star}}| - |\mu_{n,J_n^{(1)}} - \mu_{J_n^{(1)}} + \mu_{\bar{J_n^\star}} -\mu_{n,\bar{J_n^\star}}|\\
                                                      &\geq
        \Delta_{\text{min}} - (\frac{\Delta_{\text{min}}}{4} + \frac{\Delta_{\text{min}}}{4})\\
                                                      &=
        \frac{\Delta_{\text{min}}}{2},
    \end{align*}
    and
    \[
        \frac{1}{T_{n,J_n^{(1)}}}+\frac{1}{T_{n,\bar{J_n^\star}}} \leq \frac{2}{\sqrt{L}}.
    \]
    Thus, if we take $L_3$ s.t. 
    \[
        \expp{-\frac{\sqrt{L_3}\Delta_{\text{min}}^2}{16\sigma^2}} \leq \frac{1}{2K},
    \]
    then for any $L > L_3$, we have
    \[
        a_{n,J_n^{(1)}} \leq \frac{1}{2K} < \frac{1}{K},
    \]
    which contradicts the definition of $J_n^{(1)}$. We now assume that $L > L_3$, thus $J_n^{(1)}=\bar{J_n^\star}$.
    
    \paragraph{Step 3} We finally show that for $L$ large enough, $J_n^{(2)} \in V_n^L$. First note that $\forall j \in \bar{V_n^L}$, we have
    \begin{equation}
                a_{n,j} \leq \Pi_{n}\left[\theta_j \geq \theta_{\bar{J_n^{\star}}}\right] \leq \expp{-\frac{L^{3/4}\Delta_{\text{min}}^2}{16\sigma^2}}. \label{eq:upper_bound_anj_explo}
    \end{equation}

    This last inequality can be proved using the same argument as Step 2. Now we define another index $J_n^\star \eqdef \argmax_{j\in U_n^L}\mu_{n,j}$ and the quantity $c_n \eqdef \max(\mu_{n,J_n^\star},\mu_{n,\bar{J_n^\star}})$. We can lower bound $a_{n,J_n^\star}$ as follows:
    \begin{align*}
        a_{n,J_n^\star} &\geq \Pi_{n}\left[\theta_{J_n^\star}\geq c_n\right]\prod_{j\neq J_n^\star}\Pi_{n}\left[\theta_j\leq c_n\right]\\
                             &= \Pi_{n}\left[\theta_{J_n^\star}\geq c_n\right]\prod_{j\neq J_n^\star;j\in U_n^L}\Pi_{n}\left[\theta_j\leq c_n\right]\prod_{j\in \bar{U_n^L}}\Pi_{n}\left[\theta_j\leq c_n\right]\\
                             &\geq \Pi_{n}\left[\theta_{J_n^\star}\geq c_n\right] \frac{1}{2^{K-1}}.
    \end{align*}
    Now there are two cases:
    \begin{itemize}
        \item If $\mu_{n,J_n^\star} > \mu_{n,\bar{J_n^\star}}$, then we have
        \[
            \Pi_{n}\left[\theta_{J_n^\star}\geq c_n\right] = \Pi_{n}\left[\theta_{J_n^\star}\geq \mu_{n,J_n^\star}\right] \geq \frac{1}{2}.
        \]
        \item If $\mu_{n,J_n^\star} < \mu_{n,\bar{J_n^\star}}$, then we can apply the Gaussian tail bound (\ref{gaussian_lower}) of Lemma~\ref{lemma:gaussiantails}, and we obtain
        \begin{align*}
            \Pi_{n}\left[\theta_{J_n^\star}\geq c_n\right] &= \Pi_{n}\left[\theta_{J_n^\star}\geq \mu_{n,\bar{J_n^\star}}\right] =
            \Pi_{n}\left[\theta_{J_n^\star}\geq \mu_{n,J_n^\star} + (\mu_{n,\bar{J_n^\star}}-\mu_{n,J_n^\star})\right]\\
                                            &\geq
            \frac{1}{\sqrt{2\pi}} \expp{-\frac{1}{2}\left( 1-\frac{\sqrt{T_{n,J_n^\star}}}{\sigma}(\mu_{n,J_n^\star}-\mu_{n,\bar{J_n^\star}}) \right)^2}\\
                                            &=
            \frac{1}{\sqrt{2\pi}} \expp{-\frac{1}{2}\left( 1+\frac{\sqrt{T_{n,J_n^\star}}}{\sigma}(\mu_{n,\bar{J_n^\star}}-\mu_{n,J_n^\star}) \right)^2}.
        \end{align*}
        On the other hand, by Lemma~\ref{lemma:means}, we know that
        \begin{align*}
            |\mu_{n,J_n^\star} - \mu_{n,\bar{J_n^\star}}| &= |\mu_{n,J_n^\star} - \mu_{J_n^\star} + \mu_{J_n^\star} - \mu_{\bar{J_n^\star}} + \mu_{\bar{J_n^\star}} - \mu_{n,\bar{J_n^\star}}|\\
                                                          &\leq
            |\mu_{J_n^\star} - \mu_{\bar{J_n^\star}}| + \sigma W_1 \sqrt{\frac{\log(e+T_{n,J_n^\star})}{1+T_{n,J_n^\star}}} + \sigma W_1 \sqrt{\frac{\log(e+T_{n,\bar{J_n^\star}})}{1+T_{n,\bar{J_n^\star}}}}\\
                                                          &\leq
            |\mu_{J_n^\star} - \mu_{\bar{J_n^\star}}| + 2\sigma W_1 \sqrt{\frac{\log(e+T_{n,J_n^\star})}{1+T_{n,J_n^\star}}}\\
                                                          &\leq
            \Delta_{\max} + 2\sigma W_1 \sqrt{\frac{\log(e+T_{n,J_n^\star})}{1+T_{n,J_n^\star}}}.
        \end{align*}
        Therefore,
        \begin{align*}
            \Pi_{n}\left[\theta_{J_n^\star}\geq c_n\right] &\geq \frac{1}{\sqrt{2\pi}} \expp{-\frac{1}{2}\left( 1+\frac{\sqrt{T_{n,J_n^\star}}}{\sigma}\left(\Delta_{\max} + 2\sigma W_1 \sqrt{\frac{\log(e+T_{n,J_n^\star})}{1+T_{n,J_n^\star}}}\right) \right)^2}\\
                                            &\geq
            \frac{1}{\sqrt{2\pi}} \expp{-\frac{1}{2}\left( 1+\frac{\sqrt{\sqrt{L}}}{\sigma}\left(\Delta_{\max} + 2\sigma W_1 \sqrt{\frac{\log(e+\sqrt{L})}{1+\sqrt{L}}}\right) \right)^2}\\
                                            &\geq
            \frac{1}{\sqrt{2\pi}} \expp{-\frac{1}{2}\left( 1+\frac{L^{1/4}\Delta_{\max}}{\sigma} + 2 W_1 \sqrt{\log(e+\sqrt{L})} \right)^2}.
        \end{align*}
    \end{itemize}
    Now we have
    \[
        a_{n,J_n^\star} \geq \max\left( \left(\frac{1}{2}\right)^K, \left(\frac{1}{2}\right)^{K-1}\frac{1}{\sqrt{2\pi}}\expp{-\frac{1}{2}\left( 1+\frac{L^{1/4}\Delta_{\max}}{\sigma} + 2 W_1 \sqrt{\log(e+\sqrt{L})} \right)^2} \right),
    \]
    and we have $\forall j\in \bar{V_n^L}$, $a_{n,j}\leq \expp{-L^{3/4}\Delta_{\text{min}}^2/(16\sigma^2)}$, thus there exists $L_4 = \text{Poly}(W_1)$ s.t. $\forall L > L_4$, $\forall j \in \bar{V_n^L}$,
    \[
        a_{n,j} \leq \frac{a_{n,J_n^\star}}{2},
    \]
    and by consequence, $J_n^{(2)}\in V_n^L$.
    
    Finally, taking $L_1 = \max(L_2, L_3, L_4)$, we have $\forall L > L_1$, either $J_n^{(1)} \in V_n^L$ or $J_n^{(2)} \in V_n^L$.
\end{proof}

Next we show that there exists at least one arm in $V_n^L$ for whom the probability of being pulled is large enough. More precisely, we prove the following lemma.

\begin{lemma}\label{lemma:psi_min_ttts}
    There exists $L_1 = \text{Poly}(W_1)$ s.t. for $L > L_1$ and for all $n$ s.t. $U_n^L \neq \emptyset$, then there exists $J_n \in V_n^L$ s.t.
    \[
        \psi_{n,J_n} \geq \frac{\min(\beta,1-\beta)}{K^2} \eqdef \psi_{\min}.
    \]
\end{lemma}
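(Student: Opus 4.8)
The plan is to deduce Lemma~\ref{lemma:psi_min_ttts} directly from Lemma~\ref{lemma:link_ttps}, combined with two elementary lower bounds on the \TTTS selection probabilities. I would fix $L > L_1$, where $L_1 = \text{Poly}(W_1)$ is the constant furnished by Lemma~\ref{lemma:link_ttps}, and fix an $n$ with $U_n^L \neq \emptyset$. Lemma~\ref{lemma:link_ttps} then gives $J_n^{(1)} \in V_n^L$ or $J_n^{(2)} \in V_n^L$, and I would treat the two cases separately, in each case exhibiting an arm $J_n \in V_n^L$ with $\psi_{n,J_n} \geq \psi_{\min}$. Note that $U_n^L \neq \emptyset$ is used only to invoke Lemma~\ref{lemma:link_ttps}.

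Two observations, valid for every $n$, do all the work. First, since $\Pi_n$ has a density, the sets $(\Theta_i)_{i \in \cA}$ partition $\Theta$ up to a Lebesgue-null set, so $\sum_{i \in \cA} a_{n,i} = 1$ and hence $a_{n,J_n^{(1)}} = \max_{i} a_{n,i} \geq 1/K$. Second, because $J_n^{(2)}$ maximises $a_{n,j}$ over $j \neq J_n^{(1)}$, it dominates the average of the at most $K-1$ remaining optimal action probabilities, so by \eqref{CondDist}
\[
\Pi_n\left(I_n^{(2)} = J_n^{(2)} \mid I_n^{(1)} = J_n^{(1)}\right) = \frac{a_{n,J_n^{(2)}}}{\sum_{k \neq J_n^{(1)}} a_{n,k}} \geq \frac{1}{K-1}.
\]
Now, if $J_n^{(1)} \in V_n^L$, I would take $J_n \eqdef J_n^{(1)}$ and retain only the first-candidate contribution: $\psi_{n,J_n} \geq \beta\, a_{n,J_n^{(1)}} \geq \beta/K \geq \min(\beta,1-\beta)/K^2 = \psi_{\min}$. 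If instead $J_n^{(2)} \in V_n^L$, I would take $J_n \eqdef J_n^{(2)}$ and retain only the challenger contribution in which the first candidate equals $J_n^{(1)}$ and the challenger equals $J_n^{(2)}$: by the two observations, $\psi_{n,J_n} \geq (1-\beta)\, a_{n,J_n^{(1)}} \cdot \frac{1}{K-1} \geq (1-\beta)/(K(K-1)) \geq \psi_{\min}$. In both cases $J_n \in V_n^L$ and $\psi_{n,J_n} \geq \psi_{\min}$, with the same $L_1$ as in Lemma~\ref{lemma:link_ttps}, which is the claim.

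There is no genuine obstacle here once Lemma~\ref{lemma:link_ttps} is available; the argument is essentially bookkeeping. The only two points that warrant care are that the two ``routes'' into the distinguished arm --- being drawn as the first candidate (which carries weight $\beta$) and being drawn as the challenger of $J_n^{(1)}$ (weight $1-\beta$) --- must each be controlled, which is why the resulting constant $\psi_{\min}$ features $\min(\beta,1-\beta)$; and that the conditional probability of drawing $J_n^{(2)}$ as challenger given $I_n^{(1)} = J_n^{(1)}$ is bounded below by $1/(K-1)$ uniformly in $n$, which holds precisely because $J_n^{(2)}$ is the second-largest optimal action probability and so sits above the mean of the remaining $K-1$ of them.
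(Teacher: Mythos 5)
Your proposal is correct and follows essentially the same route as the paper: invoke Lemma~\ref{lemma:link_ttps}, then lower-bound $\psi_{n,J_n^{(1)}}$ by the first-candidate term $\beta a_{n,J_n^{(1)}} \geq \beta/K$, and $\psi_{n,J_n^{(2)}}$ by the single challenger term $(1-\beta)\,a_{n,J_n^{(1)}}\,a_{n,J_n^{(2)}}/(1-a_{n,J_n^{(1)}})$, using that $J_n^{(1)}$ has the largest and $J_n^{(2)}$ the second-largest optimal action probability. The only cosmetic difference is that you bound the conditional challenger probability by $1/(K-1)$ where the paper uses $1/K$; both yield $\psi_{\min}$.
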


\begin{proof}
    Using Lemma~\ref{lemma:link_ttps}, we know that $J_n^{(1)}$ or $J_n^{(2)} \in V_n^L$. On the other hand, we know that
    \[
        \forall i\in\cA, \psi_{n,i} = a_{n,i} \left(\beta + (1-\beta) \sum_{j\neq i} \frac{a_{n,j}}{1-a_{n,j}}\right).
    \]
    Therefore we have
    \[
        \psi_{n,J_n^{(1)}} \geq \beta a_{n,J_n^{(1)}} \geq \frac{\beta}{K},
    \]
    since $\sum_{i\in\cA} a_{n,i} = 1$, and
    \begin{align*}
        \psi_{n,J_n^{(2)}} &\geq (1-\beta) a_{n,J_n^{(2)}} \frac{a_{n,J_n^{(1)}}}{1-a_{n,J_n^{(1)}}}\\
                           &= (1-\beta) a_{n,J_n^{(1)}} \frac{a_{n,J_n^{(2)}}}{1-a_{n,J_n^{(1)}}}\\
                           &\geq \frac{1-\beta}{K^2},
    \end{align*}
    since $a_{n,J_n^{(1)}} \geq 1/K$ and $\sum_{i\neq J_n^{(1)}} a_{n,i}/(1-a_{n,J_n^{(1)}}) = 1 $, thus $a_{n,J_n^{(2)}}/(1-a_{n,J_n^{(1)}}) \geq 1/K$.
\end{proof}

The rest of this subsection is quite similar to that of~\cite{qin2017ttei}. Indeed, with the above lemma, we can show that the set of poorly explored arms $U_n^L$ is empty when $n$ is large enough.

\begin{lemma}\label{lemma:poorly_explored_ttts}
    Under \TTTS, there exists $L_0 = \text{Poly}(W_1,W_2)$ s.t. $\forall L > L_0$, $U_{\floor{KL}}^L = \emptyset$.
\end{lemma}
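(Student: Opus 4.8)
The plan is to argue by contradiction, following the scheme of \cite{qin2017ttei}. Suppose $U_{\floor{KL}}^L \neq \emptyset$. Since $n\mapsto T_{n,i}$ is non-decreasing, any arm in $U_{\floor{KL}}^L$ lies in $U_n^L$ for every $n\leq \floor{KL}$, so $U_n^L\neq\emptyset$ throughout. Assuming $L>L_1$, Lemma~\ref{lemma:psi_min_ttts} then applies at each round $n\leq\floor{KL}$ and produces an arm $J_n\in V_n^L$ with $\psi_{n,J_n}\geq\psi_{\min}$, where $\psi_{\min}=\min(\beta,1-\beta)/K^2$ is a constant. Summing over rounds,
\[
    \floor{KL}\,\psi_{\min} \ \leq\ \sum_{n=1}^{\floor{KL}} \psi_{n,J_n} \ =\ \sum_{i\in\cA}\ \sum_{\substack{n\leq \floor{KL}\\ J_n=i}} \psi_{n,i}.
\]

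Next I would bound the inner sum for a fixed arm $i$. Whenever $J_n=i$ one has $i\in V_n^L$, i.e.\ $T_{n,i}<L^{3/4}$; by monotonicity of $T_{n,i}$ the set of such rounds is a prefix, so $\sum_{n\leq\floor{KL},\,J_n=i}\psi_{n,i}\leq \Psi_{m,i}$ for $m\eqdef\min(m_i,\floor{KL})$ with $m_i\eqdef\sup\{n:T_{n,i}<L^{3/4}\}$, and at that round $T_{m,i}<L^{3/4}$. Lemma~\ref{lemma:link} then gives
\[
    \Psi_{m,i} \ \leq\ T_{m,i} + W_2\sqrt{(m+1)\log(e^2+m)} \ \leq\ L^{3/4} + W_2\sqrt{(\floor{KL}+1)\log(e^2+\floor{KL})}.
\]
Summing the $K$ arms yields
\[
    \floor{KL}\,\psi_{\min} \ \leq\ K L^{3/4} + K W_2\sqrt{(\floor{KL}+1)\log(e^2+\floor{KL})}.
\]

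The left-hand side grows linearly in $L$, while the right-hand side is $O(L^{3/4}) + O\!\big(W_2\sqrt{L\log L}\big)$, hence sublinear; therefore there is a threshold $L_0$ — polynomial in $1/\psi_{\min}$ and $K$ (constants depending only on $\beta,K$), polynomial in $W_2$, and larger than $L_1=\text{Poly}(W_1)$ — beyond which the displayed inequality is violated. This contradiction establishes $U_{\floor{KL}}^L=\emptyset$ for all $L>L_0=\text{Poly}(W_1,W_2)$.

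The main obstacle is not the final asymptotics (linear beats $L^{3/4}$ and $\sqrt{L\log L}$), but cleanly converting the per-round lower bounds $\psi_{n,J_n}\geq\psi_{\min}$, with $J_n$ varying over time, into a bound on cumulative effort: this needs the monotonicity of $T_{n,i}$ to recognise "rounds where arm $i$ is still poorly explored" as a prefix, and then Lemma~\ref{lemma:link} to pass from $\Psi_{n,i}$ back to $T_{n,i}$. The preceding Lemmas~\ref{lemma:link_ttps} and~\ref{lemma:psi_min_ttts} carry the genuinely \TTTS-specific work, so this step is essentially bookkeeping on top of them.
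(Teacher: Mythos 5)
Your proof is correct, and it takes a genuinely different route from the paper's. The paper argues by induction over $K$ successive phases of length roughly $L$: assuming $U^L$ stays non-empty, in each phase the arms still in $V_{\floor{kL}}^L$ jointly receive selection probability at least $\psi_{\min}\floor{L}$ (via Lemma~\ref{lemma:psi_min_ttts} and the inclusion $V_\ell^L\subset V_{\floor{kL}}^L$), which Lemma~\ref{lemma:link} converts into at least $KL^{3/4}$ actual pulls within the phase, so by the pigeonhole principle at least one more arm exceeds $L^{3/4}$ pulls and exits $V^L$; after $K$ phases $V^L$, hence $U^L$, is empty (this also needs a deterministic threshold ensuring $\floor{L}\geq KL^{3/4}$ and an initial pigeonhole at time $\floor{L}$). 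You instead run a single global counting argument: sum the per-round guarantee $\psi_{n,J_n}\geq\psi_{\min}$ over all $\floor{KL}$ rounds, partition by which arm receives it, use the monotonicity of $T_{n,i}$ to see that each arm can only receive this mass on the prefix of rounds where it is still below $L^{3/4}$ pulls, and bound its cumulative $\Psi$ there by $L^{3/4}+W_2\sqrt{(\floor{KL}+1)\log(e^2+\floor{KL})}$ via Lemma~\ref{lemma:link}; linear-versus-sublinear growth then gives the contradiction for $L$ beyond a $\text{Poly}(W_1,W_2)$ threshold. Both arguments rest on exactly the same two ingredients (Lemmas~\ref{lemma:psi_min_ttts} and~\ref{lemma:link}); yours dispenses with the induction, the phase decomposition and the pigeonhole, at the cost of losing the paper's (unneeded) intermediate information that $|V_{\floor{kL}}^L|\leq K-k$ after $k$ phases. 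Your handling of the prefix structure and of $T_{m,i}<L^{3/4}$ at the capped time $m$ is sound, so the step is indeed, as you say, bookkeeping on top of the two preceding lemmas.
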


\begin{proof}
    We proceed by contradiction, and we assume that $U_{\floor{KL}}^L$ is not empty. Then for any $1 \leq \ell \leq \floor{KL}$, $U_{\ell}^L$ and $V_{\ell}^L$ are non empty as well.
    
    There exists a deterministic $L_5$ s.t. $\forall L > L_5$,
    \[
        \floor{L} \geq KL^{3/4}.
    \]
    Using the pigeonhole principle, there exists some $i \in \cA$ s.t. $T_{\floor{L},i} \geq L^{3/4}$. Thus, we have $|V_{\floor{L}}^L| \leq K-1$.
    
    Next, we prove $|V_{\floor{2L}}^L| \leq K-2$. Otherwise, since $U_{\ell}^L$ is non-empty for any $\floor{L}+1 \leq \ell \leq \floor{2L}$, thus by Lemma~\ref{lemma:psi_min_ttts}, there exists $J_{\ell}\in V_\ell^L$ s.t. $\psi_{\ell,J_{\ell}} \geq \psi_{\min}$. Therefore,
    \[
        \sum_{i\in V_{\ell}^L} \psi_{\ell,i} \geq \psi_{\min},
    \]
    and
    \[
        \sum_{i\in V_{\floor{L}}^L} \psi_{\ell,i} \geq \psi_{\min}
    \]
    since $V_{\ell}^L \subset V_{\floor{L}}^L$. Hence, we have
    \[
        \sum_{i\in V_{\floor{L}}^L} (\Psi_{\floor{2L},i} - \Psi_{\floor{L},i}) = \sum_{\ell = \floor{L}+1}^{\floor{2L}} \sum_{i\in V_{\floor{L}}^L} \psi_{\ell,i} \geq \psi_{\min}\floor{L}.
    \]
    Then, using Lemma~\ref{lemma:link}, there exists $L_6 = \text{Poly}(W_2)$ s.t. $\forall L > L_6$, we have
    \begin{align*}
        \sum_{i\in V_{\floor{L}}^L} (T_{\floor{2L},i} - T_{\floor{L},i}) &\geq \sum_{i\in V_{\floor{L}}^L} (\Psi_{\floor{2L},i} - \Psi_{\floor{L},i} - 2W_2\sqrt{\floor{2L}\log(e^2+\floor{2L})})\\
                          &\geq 
        \sum_{i\in V_{\floor{L}}^L} (\Psi_{\floor{2L},i} - \Psi_{\floor{L},i}) - 2KW_2\sqrt{\floor{2L}\log(e^2+\floor{2L})}\\
                          &\geq \psi_{\min}\floor{L} - 2KW_2C_2\floor{L}^{3/4}\\
                          &\geq KL^{3/4},
    \end{align*}
    where $C_2$ is some absolute constant. Thus, we have one arm in $V_{\floor{L}}^L$ that is pulled at least $L^{3/4}$ times between $\floor{L}+1$ and $\floor{2L}$, thus $|V_{\floor{2L}}^L| \leq K-2$.
    
    By induction, for any $1 \leq k \leq K$, we have $|V_{\floor{kL}}^L| \leq K-k$, and finally if we take $L_0=\max(L_1,L_5,L_6)$, then $\forall L > L_0$, $U_{\floor{KL}}^L = \emptyset$.
\end{proof}

We can finally conclude the proof of Lemma~\ref{lemma:sufficient_exploration} for \TTTS.

\paragraph{Proof of Lemma~\ref{lemma:sufficient_exploration}}
Let $N_1 = KL_0$ where $L_0 = \text{Poly}(W_1,W_2)$ is chosen according to Lemma~\ref{lemma:poorly_explored_ttts}. For all $n > N_1$, we let $L=n/K$, then by Lemma~\ref{lemma:poorly_explored_ttts}, we have $U_{\floor{KL}}^L = U_n^{n/K}$ is empty, which concludes the proof.

\hfill\BlackBox\\[2mm]

\subsection{Concentration of the empirical means, proof of Lemma~\ref{lemma:tracking_means} under \TTTS}\label{app:confidence_ttts.means}

As a corollary of the previous section, we can show the concentration of $\mu_{n,i}$ to $\mu_i$ for \TTTS\footnote{this proof is the same as Proposition 3 of~\cite{qin2017ttei}}.

By Lemma~\ref{lemma:means}, we know that $\forall i\in\cA$ and $n\in\NN$,
\[
    |\mu_{n,i}-\mu_i| \leq \sigma W_1 \sqrt{\frac{\log(e+T_{n,i})}{T_{n,i}+1}}.
\]
According to the previous section, there exists $N_1 = \text{Poly}(W_1,W_2)$ s.t. $\forall n \geq N_1$ and $\forall i\in\cA$, $T_{n,i} \geq \sqrt{n/K}$. Therefore,
\[
    |\mu_{n,i}-\mu_i| \leq \sqrt{\frac{\log(e+\sqrt{n/K})}{\sqrt{n/K}+1}},
\]
since $x \mapsto \log(e+x)/(x+1)$ is a decreasing function. There exists $N_2' = \text{Poly}(\epsilon,W_1)$ s.t. $\forall n \geq N_2'$,
\[
    \sqrt{\frac{\log(e+\sqrt{n/K})}{\sqrt{n/K}+1}} \leq \sqrt{\frac{2(n/K)^{1/4}}{\sqrt{n/K}+1}} \leq \frac{\epsilon}{\sigma W_1}.
\]
Therefore, $\forall n \geq N_2 \eqdef \max\{N_1,N_2'\}$, we have
\[
    |\mu_{n,i}-\mu_i| \leq \sigma W_1 \frac{\epsilon}{\sigma W_1}.
\]

\subsection{Measurement effort concentration of the optimal arm, proof of Lemma~\ref{lemma:tracking_best} under \TTTS}\label{app:confidence_ttts.best_arm}

In this section we show that the empirical arm draws proportion of the true best arm for \TTTS concentrates to $\beta$ when the total number of arm draws is sufficiently large.

The proof is established upon the following lemmas. First, we prove that the empirical best arm coincides with the true best arm when the total number of arm draws goes sufficiently large.

\begin{lemma}\label{lemma:empirical_best}
    Under \TTTS, there exists $M_1 = \text{Poly}(W_1,W_2)$ s.t. $\forall n > M_1$, we have $I_n^\star = I^\star = J_n^{(1)}$ and $\forall i \neq I^\star$,
    \[
        a_{n,i} \leq \expp{-\frac{\Delta_{\text{min}}^2}{16\sigma^2}\sqrt{\frac{n}{K}}}.
    \]
\end{lemma}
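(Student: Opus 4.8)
The plan is to chain together the results established so far. By Lemma~\ref{lemma:sufficient_exploration}, there is $N_1 = \text{Poly}(W_1,W_2)$ such that for all $n \geq N_1$ and all arms $i$, $T_{n,i} \geq \sqrt{n/K}$. By Lemma~\ref{lemma:tracking_means} (applied with $\epsilon = \Delta_{\text{min}}/4$), there is $N_2 = \text{Poly}(W_1,W_2)$ such that for all $n \geq N_2$ and all $i$, $|\mu_{n,i} - \mu_i| \leq \Delta_{\text{min}}/4$. On this event the ordering of the empirical means matches the ordering of the true means; in particular $I_n^\star = \argmax_i \mu_{n,i} = I^\star$ for $n \geq N_2$.

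Next I would show $I^\star = J_n^{(1)}$. Since $\mu_{n,I^\star} > \mu_{n,i} + \Delta_{\text{min}}/2$ for every $i \neq I^\star$ (using $|\mu_{n,j}-\mu_j| \leq \Delta_{\text{min}}/4$ twice), the upper tail bound \eqref{gaussian_upper} of Lemma~\ref{lemma:gaussiantails} gives, for each $i \neq I^\star$,
\[
    a_{n,i} \leq \Pi_n\!\left[\theta_i \geq \theta_{I^\star}\right] \leq \frac12 \expp{-\frac{(\mu_{n,I^\star}-\mu_{n,i})^2}{2\sigma^2(1/T_{n,i}+1/T_{n,I^\star})}} \leq \frac12 \expp{-\frac{(\Delta_{\text{min}}/2)^2}{2\sigma^2 \cdot 2/\sqrt{n/K}}} = \frac12 \expp{-\frac{\Delta_{\text{min}}^2}{16\sigma^2}\sqrt{\frac{n}{K}}},
\]
where I used $1/T_{n,i} + 1/T_{n,I^\star} \leq 2/\sqrt{n/K}$ from Lemma~\ref{lemma:sufficient_exploration}. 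This already establishes the displayed bound on $a_{n,i}$ for $n \geq \max(N_1,N_2)$. Summing over $i \neq I^\star$, for $n$ larger than some further $\text{Poly}(W_1,W_2)$ (chosen so that $\frac{K-1}{2}\expp{-\frac{\Delta_{\text{min}}^2}{16\sigma^2}\sqrt{n/K}} < \frac12$), we get $\sum_{i\neq I^\star} a_{n,i} < 1/2$, hence $a_{n,I^\star} > 1/2 > a_{n,i}$ for all $i \neq I^\star$, so $J_n^{(1)} = \argmax_i a_{n,i} = I^\star$.

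Finally I would set $M_1$ to be the maximum of $N_1$, $N_2$ and the additional threshold from the last step, which is itself a polynomial in $W_1, W_2$, so $M_1 = \text{Poly}(W_1,W_2)$ as required. I do not expect any real obstacle here: the lemma is essentially a corollary packaging of Lemmas~\ref{lemma:sufficient_exploration} and~\ref{lemma:tracking_means} with the Gaussian tail bound; the only mild care needed is to track that all the thresholds introduced remain polynomial in $W_1$ and $W_2$ (the constants depending only on $K$, $\sigma$, and the gaps $\Delta_{\text{min}}$, $\Delta_{\text{max}}$), which is immediate since each is obtained by inverting an expression of the form $c_1\sqrt{n} \geq c_2 + c_3\log n$ or $c_1 \expp{-c_2\sqrt{n}} \leq c_3$.
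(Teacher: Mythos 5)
Your proposal is correct and follows essentially the same route as the paper: apply Lemma~\ref{lemma:tracking_means} with $\epsilon=\Delta_{\text{min}}/4$ to identify $I_n^\star = I^\star$ and lower-bound the empirical gaps by $\Delta_{\text{min}}/2$, use the upper Gaussian tail bound \eqref{gaussian_upper} together with $1/T_{n,i}+1/T_{n,I^\star}\leq 2/\sqrt{n/K}$ from Lemma~\ref{lemma:sufficient_exploration} to get the exponential bound on $a_{n,i}$, and then sum over $i\neq I^\star$ to force $a_{n,I^\star}>1/2$ so that $J_n^{(1)}=I^\star$, with all thresholds polynomial in $W_1,W_2$. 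No gaps.
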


\begin{proof}
    Using Lemma~\ref{lemma:tracking_means} with $\epsilon = \Delta_{\min}/4$, there exists $N_1' = \text{Poly}(4/\Delta_{\min},W_1,W_2)$ s.t. $\forall n > N_1'$,
    \[
        \forall i\in\cA, |\mu_{n,i} - \mu_i| \leq \frac{\Delta_{\min}}{4}, 
    \]
    which implies that starting from a known moment, $\mu_{n,I^\star} > \mu_{n,i}$ for all $i\neq I^\star$, hence $I_n^\star = I^\star$. Thus, $\forall i \neq I^\star$,
    \begin{align*}
        a_{n,i} &= \Pi_{n}\left[\theta_i > \max_{j\neq i} \theta_j\right]\\
                             &\leq \Pi_{n}\left[\theta_i > \theta_{I^\star}\right]\\
                             &\leq \frac{1}{2}\expp{-\frac{(\mu_{n,i}-\mu_{n,I^\star})^2}{2\sigma^2(1/T_{n,i}+1/T_{n,I^\star})}}.
    \end{align*}
    The last inequality uses the Gaussian tail inequality of (\ref{gaussian_upper}) Lemma~\ref{lemma:gaussiantails}. Furthermore,
    \begin{align*}
        (\mu_{n,i} - \mu_{n,I^\star})^2 &= (|\mu_{n,i} - \mu_{n,I^\star}|)^2\\
                                        &= (|\mu_{n,i} - \mu_i + \mu_i - \mu_{I^\star} + \mu_{I^\star} -\mu_{n,I^\star}|)^2\\
                                        &\geq (|\mu_i - \mu_{I^\star}| - |\mu_{n,i} - \mu_i + \mu_{I^\star} -\mu_{n,I^\star}|)^2\\
                                        &\geq \left(\Delta_{\text{min}} - \left(\frac{\Delta_{\text{min}}}{4} + \frac{\Delta_{\text{min}}}{4}\right)\right)^2 = \frac{\Delta_{\text{min}}^2}{4},
    \end{align*}
    and according to Lemma~\ref{lemma:sufficient_exploration}, we know that there exists $M_2 = \text{Poly}(W_1,W_2)$ s.t. $\forall n > M_2$,
    \[
        \frac{1}{T_{n,i}}+\frac{1}{T_{n,I^\star}} \leq \frac{2}{\sqrt{n/K}}.
    \]
    Thus, $\forall n > \max\{N_1',M_2\}$, we have
    \[
        \forall i\neq I^\star, a_{n,i} \leq \expp{-\frac{\Delta_{\text{min}}^2}{16\sigma^2}\sqrt{\frac{n}{K}}}.
    \]
    Then, we have
    \[
        a_{n,I^\star} = 1 - \sum_{i\neq I^\star} a_{n,i} \geq 1-(K-1)\expp{-\frac{\Delta_{\text{min}}^2}{16\sigma^2}\sqrt{\frac{n}{K}}}.
    \]
    There exists $M_2'$ s.t. $\forall n > M_2'$, $a_{n,I^\star}>1/2$, and by consequence $I^\star = J_n^{(1)}$. Finally taking $M_1 \eqdef \max\{N_1', M_2, M_2'\}$ concludes the proof.
\end{proof}

Before we prove Lemma~\ref{lemma:tracking_best}, we first show that $\Psi_{n,I^\star}/n$ concentrates to $\beta$.

\begin{lemma}\label{lemma:psi_best}
    Under \TTTS, fix a constant $\epsilon>0$, there exists $M_3 = \text{Poly}(\epsilon,W_1,W_2)$ s.t. $\forall n > M_3$, we have
    \[
        \left| \frac{\Psi_{n,I^\star}}{n}-\beta \right| \leq \epsilon.
    \]
\end{lemma}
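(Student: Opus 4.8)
\textbf{Proof proposal for Lemma~\ref{lemma:psi_best}.}

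The plan is to show that each term $\psi_{l,I^\star}$ is exponentially close to $\beta$ once $l$ is past the $\text{Poly}(W_1,W_2)$ threshold $M_1$ of Lemma~\ref{lemma:empirical_best}, and then to average. Recall that under \TTTS,
\[
\psi_{l,I^\star} = \beta a_{l,I^\star} + (1-\beta)\, a_{l,I^\star} \sum_{j\neq I^\star} \frac{a_{l,j}}{1-a_{l,j}}.
\]
First I would invoke Lemma~\ref{lemma:empirical_best}: for $l > M_1$ we have $a_{l,i} \leq \expp{-\frac{\Delta_{\text{min}}^2}{16\sigma^2}\sqrt{l/K}}$ for every $i\neq I^\star$, and hence $a_{l,I^\star}\geq 1-(K-1)\expp{-\frac{\Delta_{\text{min}}^2}{16\sigma^2}\sqrt{l/K}}$. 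Writing $g(l)\eqdef \expp{-\frac{\Delta_{\text{min}}^2}{16\sigma^2}\sqrt{l/K}}$, the first term satisfies $|\beta a_{l,I^\star}-\beta|\le \beta(K-1) g(l)$, while in the second term $a_{l,I^\star}\le 1$ and, enlarging $M_1$ so that $a_{l,j}\le 1/2$ there, $\sum_{j\neq I^\star} \frac{a_{l,j}}{1-a_{l,j}} \le 2(K-1) g(l)$. Combining, there is a constant $C$ depending only on $K$ (not on $W_1,W_2$) with $|\psi_{l,I^\star}-\beta|\le C\, g(l)$ for all $l>M_1$.

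Next I would bound the Cesàro average directly:
\[
\left|\frac{\Psi_{n,I^\star}}{n}-\beta\right| \;\le\; \frac{1}{n}\sum_{l=1}^{n} |\psi_{l,I^\star}-\beta| \;\le\; \frac{M_1}{n} \;+\; \frac{C}{n}\sum_{l=M_1+1}^{n} g(l),
\]
using the crude bound $|\psi_{l,I^\star}-\beta|\le 1$ for $l\le M_1$. The series $\sum_{l\ge 1} g(l)$ converges, with value bounded by a constant $S$ depending only on $\Delta_{\text{min}},\sigma,K$ (one may compare with $\int_0^\infty \expp{-\frac{\Delta_{\text{min}}^2}{16\sigma^2}\sqrt{x/K}}\,dx$, which is finite). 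Therefore
\[
\left|\frac{\Psi_{n,I^\star}}{n}-\beta\right| \;\le\; \frac{M_1 + CS}{n},
\]
and choosing $M_3 \eqdef (M_1 + CS)/\epsilon$ makes the right-hand side at most $\epsilon$ for every $n>M_3$. Since $M_1 = \text{Poly}(W_1,W_2)$ and $CS$ is a constant, $M_3 = \text{Poly}(1/\epsilon,W_1,W_2)$, as required.

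There is no single hard step here; the analysis is a routine averaging argument. The only points requiring care are (i) enlarging $M_1$ once more so that $a_{l,j}\le 1/2$ for $l>M_1$, which keeps the bound $\frac{a_{l,j}}{1-a_{l,j}}\le 2a_{l,j}$ valid, and (ii) checking that the constant $S$ bounding $\sum_l g(l)$, and hence $M_3$, depends only on the problem parameters and not on the random variables $W_1,W_2$, so that the claimed polynomial form of $M_3$ is preserved. This lemma then feeds into the proof of Lemma~\ref{lemma:tracking_best} via Lemma~\ref{lemma:link}, which transfers the concentration of $\Psi_{n,I^\star}/n$ to that of $T_{n,I^\star}/n$.
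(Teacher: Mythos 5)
Your proposal is correct and follows essentially the same route as the paper: it invokes Lemma~\ref{lemma:empirical_best} to get the exponential bound on $a_{l,i}$ for $i\neq I^\star$, uses the decomposition of $\psi_{l,I^\star}$, and averages. The only difference is bookkeeping: you obtain a single two-sided per-term bound $|\psi_{l,I^\star}-\beta|\leq C\,g(l)$ and exploit summability of $\sum_l g(l)$ to get an explicit $O(1/n)$ rate, whereas the paper handles the lower and upper bounds in two separate steps with auxiliary thresholds; both are valid and yield $M_3=\text{Poly}(1/\epsilon,W_1,W_2)$.
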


\begin{proof}
    By Lemma~\ref{lemma:empirical_best}, we know that there exists $M_1' = \text{Poly}(W_1,W_2)$ s.t. $\forall n > M_1'$, we have $I_n^\star = I^\star = J_n^{(1)}$ and $\forall i \neq I^\star$,
    \[
        a_{n,i} \leq \expp{-\frac{\Delta_{\text{min}}^2}{16\sigma^2}\sqrt{\frac{n}{K}}}.
    \]
    Note also that $\forall n\in\NN$, we have
    \[
        \psi_{n,I^\star} = a_{n,I^\star} \left(\beta + (1-\beta) \sum_{j\neq I^\star} \frac{a_{n,j}}{1-a_{n,j}}\right).
    \]
    We proceed the proof with the following two steps.
    
    \paragraph{Step 1} We first lower bound $\Psi_{n,I^\star}$ for a given $\epsilon$. Take $M_4 > M_1'$ that we decide later, we have $\forall n > M_4$,
    \begin{align*}
        \frac{\Psi_{n,I^\star}}{n} &= \frac{1}{n}\sum_{l=1}^{n}\psi_{l,I^\star} = \frac{1}{n}\sum_{l=I^\star}^{M_4}\psi_{l,I^\star} + \frac{1}{n}\sum_{l=M_4+1}^{n}\psi_{l,I^\star}\\
                             &\geq \frac{1}{n}\sum_{l=M_4+1}^{n}\psi_{l,I^\star} \geq \frac{1}{n}\sum_{l=M_4+1}^{n} a_{l,I^\star}\beta\\
                             &= \frac{\beta}{n}\sum_{l=M_4+1}^{n} \left(1-\sum_{j\neq I^\star}a_{l,j}\right)\\
                             &\geq \frac{\beta}{n}\sum_{l=M_4+1}^{n} \left(1-(K-1)\expp{-\frac{\Delta_{\text{min}}^2}{16\sigma^2}\sqrt{\frac{l}{K}}}\right)\\
                             &= \beta - \frac{M_4}{n}\beta - \frac{\beta}{n}\sum_{l=M_4+1}^{n} (K-1)\expp{-\frac{\Delta_{\text{min}}^2}{16\sigma^2}\sqrt{\frac{l}{K}}}\\
                             &\geq \beta - \frac{M_4}{n}\beta - \frac{(n-M_4)}{n}\beta(K-1)\expp{-\frac{\Delta_{\text{min}}^2}{16\sigma^2}\sqrt{\frac{M_4}{K}}}\\
                             &\geq \beta - \frac{M_4}{n}\beta - \beta(K-1)\expp{-\frac{\Delta_{\text{min}}^2}{16\sigma^2}\sqrt{\frac{M_4}{K}}}.
    \end{align*}
    For a given constant $\epsilon>0$, there exists $M_5$ s.t. $\forall n > M_5$,
    \[
        \beta(K-1)\expp{-\frac{\Delta_{\text{min}}^2}{16\sigma^2}\sqrt{\frac{n}{K}}} < \frac{\epsilon}{2}.
    \]
    Furthermore, there exists $M_6 = \text{Poly}(\epsilon/2,M_5)$ s.t. $\forall n > M_6$,
    \[
        \frac{M_5}{n}\beta < \frac{\epsilon}{2}.
    \]
    Therefore, if we take $M_4 \eqdef \max\{M_1', M_5, M_6\}$, we have $\forall n > M_4$,
    \[
        \frac{\Psi_{n,I^\star}}{n} \geq \beta - \epsilon.
    \]
    
    \paragraph{Step 2} On the other hand, we can also upper bound $\Psi_{n,I^\star}$. We have $\forall n > M_3$,
    \begin{align*}
        \frac{\Psi_{n,I^\star}}{n} &= \frac{1}{n}\sum_{l=1}^{n}\psi_{l,I^\star}\\
                             &= \frac{1}{n}\sum_{l=1}^{n}a_{l,I^\star}\left( \beta + (1-\beta)\sum_{j\neq I^\star}\frac{a_{l,j}}{1-a_{l,j}} \right)\\
                             &\leq \frac{1}{n}\sum_{l=1}^{n}a_{l,I^\star}\beta + \frac{1}{n}\sum_{l=1}^{n}a_{l,I^\star}(1-\beta)\sum_{j\neq I^\star}\frac{a_{l,j}}{1-a_{l,j}}\\
                             &\leq \beta + \frac{1}{n}\sum_{l=1}^{n}(1-\beta)\sum_{j\neq I^\star}\frac{a_{l,j}}{1-a_{l,j}}\\
                             &\leq \beta + \frac{1}{n}\sum_{l=1}^{n}(1-\beta)\sum_{j\neq I^\star}\frac{\expp{-\frac{\Delta_{\text{min}}^2}{16\sigma^2}\sqrt{\frac{l}{K}}}}{1-\expp{-\frac{\Delta_{\text{min}}^2}{16\sigma^2}\sqrt{\frac{l}{K}}}}.
    \end{align*}
    Since, for a given $\epsilon > 0$, there exists $M_8$ s.t. $\forall n > M_8$,
    \[
        \expp{-\frac{\Delta_{\text{min}}^2}{16\sigma^2}\sqrt{\frac{n}{K}}} < \frac{1}{2},
    \]
    and there exists $M_9$ s.t. $\forall n > M_9$,
    \[
        (1-\beta)(K-1)\expp{-\frac{\Delta_{\text{min}}^2}{16\sigma^2}\sqrt{\frac{n}{K}}} < \frac{\epsilon}{4}.
    \]
    Thus, $\forall n > M_{10} \eqdef \max\{M_8,M_9\}$,
    \begin{align*}
        \frac{\Psi_{n,I^\star}}{n} &\leq \beta + \frac{1-\beta}{n}\left(\sum_{l=1}^{M_{10}}\sum_{j\neq I^\star}\frac{\expp{-\frac{\Delta_{\text{min}}^2}{16\sigma^2}\sqrt{\frac{l}{K}}}}{1-\expp{-\frac{\Delta_{\text{min}}^2}{16\sigma^2}\sqrt{\frac{l}{K}}}} + \sum_{l=M_{10}+1}^{n}\sum_{j\neq I^\star}\frac{\expp{-\frac{\Delta_{\text{min}}^2}{16\sigma^2}\sqrt{\frac{l}{K}}}}{1-\expp{-\frac{\Delta_{\text{min}}^2}{16\sigma^2}\sqrt{\frac{l}{K}}}}\right)\\
                             &\leq \beta + \frac{1-\beta}{n}\sum_{l=1}^{M_{10}}\sum_{j\neq I^\star}\frac{\expp{-\frac{\Delta_{\text{min}}^2}{16\sigma^2}\sqrt{\frac{l}{K}}}}{1-\expp{-\frac{\Delta_{\text{min}}^2}{16\sigma^2}\sqrt{\frac{l}{K}}}} + 2(1-\beta)(K-1)\expp{-\frac{\Delta_{\text{min}}^2}{16\sigma^2}\sqrt{\frac{M_{10}}{K}}}\\
                             &\leq \beta + \frac{1-\beta}{n}\sum_{l=1}^{M_{10}}\sum_{j\neq I^\star}\frac{\expp{-\frac{\Delta_{\text{min}}^2}{16\sigma^2}\sqrt{\frac{l}{K}}}}{1-\expp{-\frac{\Delta_{\text{min}}^2}{16\sigma^2}\sqrt{\frac{l}{K}}}} + \frac{\epsilon}{2}.
    \end{align*}
    There exists $M_{11} = \text{Poly}(\epsilon/2,M_{10})$ s.t. $\forall n > M_{11}$,
    \[
        \frac{1-\beta}{n}\sum_{l=1}^{M_{10}}\sum_{j\neq I^\star}\frac{\expp{-\frac{\Delta_{\text{min}}^2}{16\sigma^2}\sqrt{\frac{l}{K}}}}{1-\expp{-\frac{\Delta_{\text{min}}^2}{16\sigma^2}\sqrt{\frac{l}{K}}}} < \frac{\epsilon}{2}.
    \]
    Therefore, $\forall n > M_7 \eqdef \max\{M_3,M_{11}\}$, we have
    \[
        \frac{\Psi_{n,I^\star}}{n} \leq \beta + \epsilon.
    \]
    
    \paragraph{Conclusion} Finally, combining the two steps and define $M_3 \eqdef \max\{M_4,M_7\}$, we have $\forall n > M_3$,
    \[
        \left|\frac{\Psi_{n,I^\star}}{n}-\beta\right| \leq \epsilon.
    \]
\end{proof}

With the help of the previous lemma and Lemma~\ref{lemma:link}, we can finally prove Lemma~\ref{lemma:tracking_best}.

\paragraph{Proof of Lemma~\ref{lemma:tracking_best}}
Fix an $\epsilon > 0$. Using Lemma~\ref{lemma:link}, we have $\forall n\in\NN$,
\[
    \left|\frac{T_{n,I^\star}}{n}-\frac{\Psi_{n,I^\star}}{n}\right| \leq \frac{W_2\sqrt{(n+1)\log(e^2+n)}}{n}.
\]
Thus there exists $M_{12}$ s.t. $\forall n > M_{12}$,
\[
    \left|\frac{T_{n,I^\star}}{n}-\frac{\Psi_{n,I^\star}}{n}\right| \leq \frac{\epsilon}{2}.
\]
And using Lemma~\ref{lemma:psi_best}, there exists $M_3' = \text{Poly}(\epsilon/2,W_1,W_2)$ s.t. $\forall n > M_3'$,
\[
    \left|\frac{\Psi_{n,I^\star}}{n}-\beta\right| \leq \frac{\epsilon}{2}.
\]
Again, according to Lemma~\ref{lemma:psi_min_ttts}, there exists $M_3'$ s.t. $\forall n > M_3'$,
\[
    \frac{\Psi_{n,I^\star}}{n} \leq \beta+\frac{\epsilon}{2}.
\]
Thus, if we take $N_3 \eqdef \max\{M_3',M_{12}\}$, then $\forall n > N_3$, we have
\[
    \left| \frac{T_{n,I^\star}}{n}-\beta \right| \leq \epsilon.
\]

\hfill\BlackBox\\[2mm]

\subsection{Measurement effort concentration of other arms, proof of Lemma~\ref{lemma:tracking_other} under \TTTS}\label{app:confidence_ttts.other_arms}

In this section, we show that, for \TTTS, the empirical measurement effort concentration also holds for other arms than the true best arm.
%We first establish the following lemma that is a \emph{non-asymptotic} version of Lemma~\ref{lemma:over_allocation}. 
We first show that if some arm is overly sampled at time $n$, then its probability of being picked is reduced exponentially.

\begin{lemma}\label{lemma:over_allocation_finite_ttts} 
    Under \TTTS, for every $\xi \in (0,1)$, there exists $S_1 = \text{Poly}(1/\xi,W_1,W_2)$ such that for all $n > S_1$, for all $i\neq I^\star$, 
    \[
        \frac{\Psi_{n,i}}{n} \geq \omega_{i}^\beta + \xi  \ \ \Rightarrow \ \ \psi_{n,i} \leq \expp{-\epsilon_0(\xi) n}\,,
    \]
    where $\epsilon_0$ is defined in~\eqref{eq:def_epsilon0} below.
\end{lemma}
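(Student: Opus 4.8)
The plan is to work, for $n$ larger than some $\text{Poly}(1/\xi,W_1,W_2)$, on the event where Lemmas~\ref{lemma:empirical_best}, \ref{lemma:tracking_means}, \ref{lemma:psi_best}, \ref{lemma:tracking_best} and \ref{lemma:link} all hold for a tolerance $\epsilon=\epsilon(\xi)\le\xi/4$ fixed later: there $I^\star=I_n^\star=J_n^{(1)}$, $a_{n,I^\star}\ge 1/2$, $a_{n,k}\le\expp{-\frac{\Delta_{\text{min}}^2}{16\sigma^2}\sqrt{n/K}}\le 1/2$ for $k\ne I^\star$, $|\mu_{n,k}-\mu_k|\le\epsilon$, $|T_{n,k}/n-\Psi_{n,k}/n|\le\epsilon$, $|\Psi_{n,I^\star}/n-\beta|\le\epsilon$ and $|T_{n,I^\star}/n-\beta|\le\epsilon$. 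The first move is a deterministic reduction: plugging into the \TTTS expression $\psi_{n,i}=\beta a_{n,i}+(1-\beta)a_{n,i}\sum_{j\ne i}\tfrac{a_{n,j}}{1-a_{n,j}}$ and splitting off the $j=I^\star$ term, the remaining terms obey $\sum_{j\notin\{i,I^\star\}}\tfrac{a_{n,j}}{1-a_{n,j}}\le 2\sum_{j\notin\{i,I^\star\}}a_{n,j}\le 2$, whence $\psi_{n,i}\le\tfrac{4a_{n,i}}{1-a_{n,I^\star}}$. So everything reduces to showing that $a_{n,i}$ is exponentially (in $n$) smaller than $1-a_{n,I^\star}=\sum_{k\ne I^\star}a_{n,k}$.

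Next I would extract an under-sampled challenger from the hypothesis. Since $\sum_k\Psi_{n,k}/n=1=\sum_k\omega_k^\beta$ with $\omega_{I^\star}^\beta=\beta$, the hypothesis $\Psi_{n,i}/n\ge\omega_i^\beta+\xi$ together with $|\Psi_{n,I^\star}/n-\beta|\le\xi/4$ gives $\sum_{k\notin\{i,I^\star\}}(\Psi_{n,k}/n-\omega_k^\beta)\le-\tfrac34\xi$, so some $j^\star=j^\star(n)\notin\{i,I^\star\}$ satisfies $\Psi_{n,j^\star}/n\le\omega_{j^\star}^\beta-\tfrac{3\xi}{4(K-2)}$, hence by Lemma~\ref{lemma:link} $T_{n,j^\star}/n\le\omega_{j^\star}^\beta-\xi''$ with $\xi''\eqdef\tfrac{\xi}{2(K-2)}$ for $n$ large; since $T_{n,j^\star}>0$ this forces $\omega_{j^\star}^\beta>\xi''$. (The same bookkeeping shows that for $K=2$ the hypothesis $\Psi_{n,i}/n\ge\omega_i^\beta+\xi$ is impossible for $n$ large, so the statement is then vacuous.) Likewise $T_{n,i}/n\ge\omega_i^\beta+\xi/2$ for $n$ large.

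Then I would control the two probabilities with the Gaussian tail bounds of Lemma~\ref{lemma:gaussiantails}. Because $\mu_{n,i}<\mu_{n,I^\star}$, inequality~\eqref{gaussian_upper} gives $a_{n,i}\le\Pi_n[\theta_i\ge\theta_{I^\star}]\le\tfrac12\expp{-W_n(I^\star,i)}$, where $W_n(I^\star,i)=\tfrac{(\mu_{n,I^\star}-\mu_{n,i})^2}{2\sigma^2(1/T_{n,I^\star}+1/T_{n,i})}$. For the denominator, on $\{\theta_{j^\star}\ge\theta_{I^\star}\}$ the arm $I^\star$ is not the argmax, so $1-a_{n,I^\star}\ge\Pi_n[\theta_{j^\star}\ge\theta_{I^\star}]$; since $\mu_{n,j^\star}<\mu_{n,I^\star}$, \eqref{gaussian_lower} applies, and expanding the square in that bound together with the identity $\tfrac{\mu_{n,I^\star}-\mu_{n,j^\star}}{\sigma_{n,j^\star,I^\star}}=\sqrt{2W_n(I^\star,j^\star)}$ yields $1-a_{n,I^\star}\ge\tfrac{1}{\sqrt{2\pi}}\expp{-W_n(I^\star,j^\star)-\sqrt{2W_n(I^\star,j^\star)}-\tfrac12}$. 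Combining with the first paragraph, $\psi_{n,i}\le 2\sqrt{2\pi}\,e^{1/2}\expp{-\big(W_n(I^\star,i)-W_n(I^\star,j^\star)\big)+\sqrt{2W_n(I^\star,j^\star)}}$.

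The last step is to lower bound $W_n(I^\star,i)-W_n(I^\star,j^\star)$ by a positive multiple of $n$. Using $|\mu_{n,k}-\mu_k|\le\epsilon$, $T_{n,I^\star}/n\in[\beta-\epsilon,\beta+\epsilon]$, $T_{n,i}/n\ge\omega_i^\beta+\xi/2$, $T_{n,j^\star}/n\le\omega_{j^\star}^\beta-\xi''$ and the monotonicity of $\omega\mapsto(1/a+1/\omega)^{-1}$ in the Gaussian form of $W_n$, one gets $W_n(I^\star,i)\ge n\big(C_i(\beta,\omega_i^\beta+\tfrac\xi2)-\rho(\epsilon)\big)$ and $W_n(I^\star,j^\star)\le n\big(C_{j^\star}(\beta,\omega_{j^\star}^\beta-\xi'')+\rho(\epsilon)\big)$ with $\rho(\epsilon)\to0$ as $\epsilon\to0$. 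By Proposition~\ref{prop:optim} the costs $C_k(\beta,\omega_k^\beta)$ are all equal, and since $C_i(\beta,\cdot)$ is strictly increasing, $C_i(\beta,\omega_i^\beta+\tfrac\xi2)-C_{j^\star}(\beta,\omega_{j^\star}^\beta-\xi'')>0$; choosing $\epsilon=\epsilon(\xi)$ small enough (and $\le\xi/4$) that $2\rho(\epsilon)$ is at most half of this gap and setting
\begin{equation}\label{eq:def_epsilon0}
\epsilon_0(\xi)\eqdef\frac{1}{3}\min_{\substack{i\ne I^\star\\ j\notin\{i,I^\star\},\ \omega_j^\beta>\xi/(2(K-2))}}\Big(C_i\big(\beta,\omega_i^\beta+\tfrac\xi2\big)-C_j\big(\beta,\omega_j^\beta-\tfrac{\xi}{2(K-2)}\big)\Big),
\end{equation}
we obtain $W_n(I^\star,i)-W_n(I^\star,j^\star)\ge 3\epsilon_0(\xi)\,n$. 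Since also $W_n(I^\star,j^\star)\le\tfrac{(\Delta_{\text{max}}+2\epsilon)^2}{8\sigma^2}\,n$, the term $\sqrt{2W_n(I^\star,j^\star)}=O(\sqrt n)$ is absorbed, and for $n$ larger than a suitable $S_1=\text{Poly}(1/\xi,W_1,W_2)$ the displayed bound becomes $\psi_{n,i}\le\expp{-\epsilon_0(\xi)n}$. I expect the main obstacle to be precisely the denominator estimate: the seemingly more natural bound $1-a_{n,I^\star}\ge\Pi_n[\theta_{j^\star}\ge\mu_{n,I^\star}]$ replaces $W_n(I^\star,j^\star)$ by $W_n(I^\star,j^\star)\,(1+T_{n,j^\star}/T_{n,I^\star})$, a genuine constant-factor inflation which — through Proposition~\ref{prop:optim} — can entirely erase the margin $C_i(\beta,\omega_i^\beta+\xi/2)-C_{j^\star}(\beta,\omega_{j^\star}^\beta-\xi'')$; comparing $\theta_{j^\star}$ against the random $\theta_{I^\star}$ (which concentrates because $T_{n,I^\star}=\Theta(n)$) rather than against the constant $\mu_{n,I^\star}$ is what keeps the overshoot only $O(\sqrt n)$ and makes the argument close.
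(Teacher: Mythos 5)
Your proof is correct, and it follows the paper's overall skeleton — reduce $\psi_{n,i}$ to a constant times $a_{n,i}/(1-a_{n,I^\star})$, bound the numerator with the Gaussian upper tail \eqref{gaussian_upper} and the denominator with the lower tail \eqref{gaussian_lower}, and show the resulting exponent gap grows linearly in $n$ while the $O(\sqrt{n})$ overshoot from the lower tail is absorbed — but the crucial denominator estimate is handled by a genuinely different device. The paper observes that the exponent in the denominator, $\min_{j\neq I^\star} W_n(I^\star,j)/n$, is by definition at most the optimal value $\Gamma^\star_{T_{n,I^\star}/n}(\bmu_n)$ (a supremum over allocations with the same first coordinate), and then invokes continuity of $(\beta,\bmu)\mapsto\Gamma^\star_\beta(\bmu)$ together with Lemmas~\ref{lemma:tracking_means} and \ref{lemma:tracking_best} to land on the margin $C_i(\beta,\omega_i^\beta+\xi/2)-\Gamma_\beta^\star>0$. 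You instead pigeonhole: over-allocation of $i$ plus $\Psi_{n,I^\star}/n\approx\beta$ and the normalization force some challenger $j^\star\notin\{i,I^\star\}$ to be under-sampled, so $W_n(I^\star,j^\star)/n\lesssim C_{j^\star}(\beta,\omega_{j^\star}^\beta-\xi'')$, and Proposition~\ref{prop:optim} (equality of the costs at the optimum) plus strict monotonicity of $C_k(\beta,\cdot)$ gives a margin $C_i(\beta,\omega_i^\beta+\xi/2)-C_{j^\star}(\beta,\omega_{j^\star}^\beta-\xi'')$ that is in fact larger than the paper's. What your route buys is an entirely explicit argument using only the closed-form Gaussian costs, with no appeal to continuity of the optimum-value function; what it costs is the extra bookkeeping (the $K=2$ vacuity remark, the constraint $\omega_{j^\star}^\beta>\xi''$, and a min over pairs in your \eqref{eq:def_epsilon0}, which should be given a convention if the admissible set of $j$ is empty — harmless, since the hypothesis is then contradictory for large $n$). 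Your resulting constant $\epsilon_0(\xi)$ is uniform over $i$, whereas the paper's \eqref{eq:def_epsilon0} depends on $i$; both are consistent with the lemma as stated.
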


\begin{proof}
First, by Lemma~\ref{lemma:empirical_best}, there exists $M_1'' = \text{Poly}(W_1,W_2)$ s.t. $\forall n > M_1''$, 
\[
    I^\star = I_n^\star = J_n^{(1)}.
\]
Then, following the similar argument as in Lemma~\ref{lemma:sufficient_optimality}, one can show that for all $i\neq I^\star$ and for all $n > M_1''$,
\begin{align*}
	\psi_{n,i} &=a_{n,i} \left( \beta  + (1-\beta) \sum_{j \neq i} \frac{a_{n,j}}{1-a_{n,j}} \right)\\
	           &\leq a_{n,i} \beta  + a_{n,i} (1-\beta) \frac{\sum_{j \neq i} a_{n,j}}{1-a_{n,J_n^{(1)}}}\\
	           &= a_{n,i} \beta  + a_{n,i} (1-\beta) \frac{\sum_{j \neq i} a_{n,j}}{1-a_{n,I^\star}}\\
	           &\leq a_{n,i}\beta  + a_{n,i} (1-\beta) \frac{1}{1-a_{n,I^\star}}\\
	           &\leq \frac{a_{n,i}}{1-a_{n,I^\star}}\\
	           &\leq \frac{\Pi_{n}\left[ \theta_i\geq\theta_{I^\star} \right]}{\Pi_{n}\left[ \cup_{j\neq I^\star}\theta_j\geq\theta_{I^\star} \right]}\\
	           &\leq \frac{\Pi_{n}\left[ \theta_i\geq\theta_{I^\star} \right]}{\max_{j\neq I^\star} \Pi_{n}\left[ \theta_j\geq\theta_{I^\star} \right]}.
\end{align*}

Using the upper and lower Gaussian tail bounds from Lemma~\ref{lemma:gaussiantails}, we have
\begin{align*}
    \psi_{n,i} &\leq \ddfrac{\expp{- \frac{(\mu_{n,I^\star} - \mu_{n,i})^2}{2\sigma^2\left(1/T_{n,I^\star} + 1/T_{n,i}\right)}}}{\expp{- \min_{j\neq I^\star} \frac{1}{2}\left(\frac{(\mu_{n,I^\star} - \mu_{n,j})}{\sigma\sqrt{\left(1/T_{n,I^\star} + 1/T_{n,j}\right)}} -1\right)^2}}\\ 
               &=  \ddfrac{\expp{- n\frac{(\mu_{n,I^\star} - \mu_{n,i})^2}{2\sigma^2\left(n/T_{n,I^\star} + n/T_{n,i}\right)}}}{\expp{- {n}\left(\min_{j\neq I^\star} \frac{(\mu_{n,I^\star} - \mu_{n,j})}{\sqrt{2\sigma^2\left(n/T_{n,I^\star} + n/T_{n,j}\right)}} -\frac{1}{\sqrt{2n}}\right)^2}},
\end{align*}

where we assume that $n > S_2 = \text{Poly}(W_1,W_2)$ for which 
\[
    \frac{(\mu_{n,I^\star} - \mu_{n,i})^2}{\sigma^2\left(1/T_{n,I^\star} + 1/T_{n,i}\right)} \geq 1
\]
according to Lemma~\ref{lemma:sufficient_exploration}. From there we take a supremum over the possible allocations to lower bound the denominator and write  
\begin{align*}
    \psi_{n,i} &\leq \ddfrac{\expp{-n\frac{(\mu_{n,I^\star} - \mu_{n,i})^2}{2\sigma^2\left(n/T_{n,I^\star} + n/T_{n,i}\right)}}}{\expp{-n\left(\underset{\bomega : \omega_{I^\star} = T_{n,I^\star}/n}{\sup} \min_{j\neq I^\star}\frac{(\mu_{n,I^\star} - \mu_{n,i})}{\sqrt{2\sigma^2\left(1/\omega_{I^\star} + 1/\omega_{j}\right)}} -\frac{1}{\sqrt{2n}}\right)^2}} \\
               &= \ddfrac{\expp{- n\frac{(\mu_{n,I^\star} - \mu_{n,i})^2}{2\sigma^2\left(n/T_{n,I^\star} + n/T_{n,i}\right)}}}{\expp{-n\left( \sqrt{\Gamma^\star_{T_{n,I^\star}/n}\left(\bmu_n\right)} -\frac{1}{\sqrt{2n}}\right)^2}},
\end{align*}
where $\bmu_n \eqdef (\mu_{n,1},\cdots,\mu_{n,K})$, and $(\beta,\bmu)\mapsto \Gamma_{\beta}^\star(\bmu)$ represents a function that maps $\beta$ and $\bmu$ to the parameterized optimal error decay that any allocation rule can reach given parameter $\beta$ and a set of arms with means $\bmu$. Note that this function is continuous with respect to $\beta$ and $\bmu$ respectively.

Now, assuming $\Psi_{n,i}/n \geq \omega_{i}^\beta + \xi$ yields that there exists $S_2'\eqdef\text{Poly}(2/\xi,W_2)$ s.t. for all $n>S_2'$, $T_{n,i}/n \geq \omega_{i}^\beta + \xi/2$, and by consequence,
\[
    \psi_{n,i} \leq \expp{-n\underbrace{\left(\frac{(\mu_{n,I^\star} - \mu_{n,i})^2}{2\sigma^2\left(n/T_{n,I^\star} + 1/(\omega_i^\beta + \xi/2)\right)} -\Gamma^\star_{T_{n,I^\star}/n}\left(\bmu_{n}\right) - \frac{1}{2n} + \sqrt{\frac{2\Gamma^\star_{T_{n,I^\star}/n}\left(\bmu_{n}\right)}{n}}\right)}_{\epsilon_n(\xi)}}.
\]
Using Lemma~\ref{lemma:tracking_best}, we know that for any $\epsilon$, there exists $S_3 = \text{Poly}(1/\epsilon,W_1,W_2)$ s.t. $\forall n > S_3$, $|T_{n,I^\star}/n - \beta| \leq \epsilon$, and $\forall j\in\cA, |\mu_{n,j}-\mu_j| \leq \epsilon$. Furthermore, $(\beta,\bmu)\mapsto \Gamma_{\beta}^\star(\bmu)$ is continuous with respect to $\beta$ and $\bmu$, thus for a given $\epsilon_0$, there exists $S_3' = \text{Poly}(1/\epsilon_0,W_1,W_2)$ s.t. $\forall n > S_3'$, we have
\[
    \left|\epsilon_n(\xi) - \left(\frac{(\mu_{I^\star} - \mu_{i})^2}{2\sigma^2\left(1/\beta + 1/(\omega_i^\beta + \xi/2)\right)} - \Gamma_{\beta}^\star\right)\right| \leq \epsilon_0.
\]

Finally, define $S_1 \eqdef \max\{S_2,S_2',S_3'\}$, we have $\forall n > S_1$,
\[
    \psi_{n,i} \leq \expp{-\epsilon_0(\xi) n},
\]
where
\begin{equation}
    \label{eq:def_epsilon0}
    \epsilon_0(\xi) = \frac{(\mu_{I^\star} - \mu_{i})^2}{2\sigma^2\left(1/\beta + 1/(\omega_i^\beta + \xi/2)\right)} - \Gamma_{\beta}^\star + \epsilon_0\,.
\end{equation}

\end{proof}

Next, starting from some known moment, no arm is overly allocated. More precisely, we show the following lemma.

\begin{lemma}\label{lemma:psi_other_ttts}
    Under \TTTS, for every $\xi$, there exists $S_4 = \text{Poly}(1/\xi,W_1,W_2)$ s.t. $\forall n > S_4$,
    \[
        \forall i \in \cA, \ \ \frac{\Psi_{n,i}}{n} \leq \omega_{i}^\beta + \xi.
    \]
\end{lemma}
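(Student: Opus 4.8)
The plan is to derive the no-over-allocation bound from Lemma~\ref{lemma:over_allocation_finite_ttts} by a first-crossing argument, handling the optimal arm separately. For $i = I^\star$ nothing new is needed: Lemma~\ref{lemma:psi_best} already gives $\Psi_{n,I^\star}/n \leq \beta + \xi = \omega_{I^\star}^\beta + \xi$ for all $n$ beyond some $\text{Poly}(1/\xi,W_1,W_2)$. So fix a sub-optimal arm $i$, and assume $\omega_i^\beta + \xi < 1$ (otherwise $\Psi_{n,i}/n \leq 1 < \omega_i^\beta + \xi$ is trivial). Applying Lemma~\ref{lemma:over_allocation_finite_ttts} with $\xi/2$ in place of $\xi$, there is $S_1 = \text{Poly}(1/\xi,W_1,W_2)$ and a constant $\epsilon_0 \eqdef \epsilon_0(\xi/2) > 0$ such that, for every $n > S_1$, $\Psi_{n,i}/n \geq \omega_i^\beta + \xi/2$ implies $\psi_{n,i} \leq e^{-\epsilon_0 n}$; after enlarging $S_1$ (keeping it $\text{Poly}(1/\xi,W_1,W_2)$) we may also assume $e^{-\epsilon_0 n} < \xi/2$ for all $n > S_1$.

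First I would show that $\bar{\psi}_{\cdot,i}$ falls below $\omega_i^\beta + \xi/2$ within a polynomial number of steps: if $\bar{\psi}_{n,i} \geq \omega_i^\beta + \xi/2$ held for every $n \in \{S_1+1,\ldots,S_1+m\}$, then $\psi_{n,i} \leq e^{-\epsilon_0 S_1}$ over that range, hence $\Psi_{S_1+m,i} \leq S_1 + m\,e^{-\epsilon_0 S_1}$ and $\bar{\psi}_{S_1+m,i} \leq (S_1 + m\,e^{-\epsilon_0 S_1})/(S_1+m)$; since $e^{-\epsilon_0 S_1} < \xi/2$, this bound drops strictly below $\omega_i^\beta + \xi/2$ once $m$ exceeds an explicit $m^\star = \text{Poly}(S_1,1/\xi) = \text{Poly}(1/\xi,W_1,W_2)$, contradicting the assumption. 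Hence there is $n_1 \leq S_1 + m^\star = \text{Poly}(1/\xi,W_1,W_2)$ with $\bar{\psi}_{n_1,i} < \omega_i^\beta + \xi/2$. Next I would show $\bar{\psi}_{n,i} \leq \omega_i^\beta + \xi$ for all $n \geq n_1$ by contradiction: let $n^\star > n_1$ be the smallest violating index, so $\bar{\psi}_{n^\star-1,i} \leq \omega_i^\beta + \xi$ (by the choice of $n_1$ if $n^\star-1=n_1$, by minimality of $n^\star$ otherwise); then
\[
n^\star \bar{\psi}_{n^\star,i} = (n^\star-1)\,\bar{\psi}_{n^\star-1,i} + \psi_{n^\star,i} \leq (n^\star-1)(\omega_i^\beta+\xi) + \psi_{n^\star,i},
\]
while $n^\star\bar{\psi}_{n^\star,i} > n^\star(\omega_i^\beta+\xi)$, forcing $\psi_{n^\star,i} > \omega_i^\beta + \xi \geq \xi$. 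On the other hand $\bar{\psi}_{n^\star,i} > \omega_i^\beta+\xi \geq \omega_i^\beta + \xi/2$ and $n^\star > S_1$, so Lemma~\ref{lemma:over_allocation_finite_ttts} gives $\psi_{n^\star,i} \leq e^{-\epsilon_0 n^\star} < \xi/2$ --- a contradiction. Taking $S_4$ to be the maximum of $S_1 + m^\star$ over the $K-1$ sub-optimal arms and of the threshold for $I^\star$ from Lemma~\ref{lemma:psi_best} then finishes the proof, with $S_4 = \text{Poly}(1/\xi,W_1,W_2)$.

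The step I expect to be the main obstacle is the polynomial bookkeeping in the first part: one must check that $m^\star$ and the inflation of $S_1$ needed to guarantee $e^{-\epsilon_0 n} < \xi/2$ remain polynomial in $1/\xi$ (and in $W_1,W_2$). This reduces to verifying that $\epsilon_0(\xi/2)$ is bounded below by a quantity only polynomially small in $\xi$, which is apparent from \eqref{eq:def_epsilon0}: the leading term there, $(\mu_{I^\star}-\mu_i)^2/(2\sigma^2(1/\beta + 1/(\omega_i^\beta+\xi/4))) - \Gamma_\beta^\star$, is strictly positive for $\xi>0$ (it vanishes at $\xi=0$ by the equalizing property of $\bomega^\beta$ in Proposition~\ref{prop:optim}) and varies essentially linearly in $\xi$ near $0$. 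Everything else is the routine ``$\text{Poly}$'' calculus already used repeatedly in this appendix.
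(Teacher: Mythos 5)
Your argument is correct and rests on exactly the same two pillars as the paper's proof (the over-allocation Lemma~\ref{lemma:over_allocation_finite_ttts} applied at level $\xi/2$, and Lemma~\ref{lemma:psi_best} for $i=I^\star$), but the bookkeeping mechanism differs. The paper proves the claim in one shot: it splits $\Psi_{n,i}=\sum_{\ell\le n}\psi_{\ell,i}$ into the prefix up to $S_1'$, the over-allocated rounds (whose selection probabilities are exponentially small and sum to the constant $(1-e^{-\epsilon_0(\xi/2)})^{-1}$), and the rounds up to the last index $\ell_n(\xi)$ at which $\Psi_{\ell,i}/n\le\omega_i^\beta+\xi/2$, which contributes at most $n(\omega_i^\beta+\xi/2)$; dividing by $n$ and taking $n$ large enough that the $O(1)/n$ remainder is below $\xi/2$ gives the result directly. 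You instead run a two-phase first-crossing argument: first, if $\Psi_{n,i}/n$ stayed above $\omega_i^\beta+\xi/2$ the exponentially small increments would drag the average down within $m^\star=\text{Poly}(1/\xi,W_1,W_2)$ steps; second, a minimal-counterexample computation shows that re-crossing the level $\omega_i^\beta+\xi$ would force a single increment $\psi_{n^\star,i}>\xi$, contradicting the lemma's bound $\psi_{n^\star,i}<\xi/2$ (note $n^\star>n_1>S_1$ so the lemma indeed applies, and the premise $\Psi_{n^\star,i}/n^\star>\omega_i^\beta+\xi/2$ holds). Both routes implicitly need $\epsilon_0(\xi/2)$ to be only polynomially small in $\xi$ — the paper needs $(1-e^{-\epsilon_0(\xi/2)})^{-1}/n\le\xi/2$ for $n$ polynomial, you need $S_1\gtrsim\ln(2/\xi)/\epsilon_0(\xi/2)$ and $m^\star$ polynomial — and your observation that this follows from the equalization property in Proposition~\ref{prop:optim} (the leading term vanishes at $\xi=0$ and grows essentially linearly) is the right justification for both. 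What the paper's version buys is brevity (no induction over crossings, no separate "must drop below" phase); what yours buys is the slightly stronger intermediate statement that once $\Psi_{n,i}/n$ falls below $\omega_i^\beta+\xi/2$ it never again exceeds $\omega_i^\beta+\xi$, and a more explicit account of how the final threshold depends on $S_1$ and $\xi$.
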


\begin{proof}
    From Lemma~\ref{lemma:over_allocation_finite_ttts}, there exists $S_1' = \text{Poly}(2/\xi,W_1,W_2)$ such that for all $n > S_1'$ and for all $i\neq I^\star$, 
    \[
        \frac{\Psi_{n,i}}{n} \geq \omega_{i}^\beta + \frac{\xi}{2}  \ \ \Rightarrow \ \ \psi_{n,i} \leq \expp{-\epsilon_0(\xi/2) n}.
    \] 
    Thus, for all $i \neq I^\star$,
    \begin{align*}
        \frac{\Psi_{n,i}}{n} 
        &\leq \frac{S_1'}{n} + \ddfrac{\sum_{\ell=S_1'+1}^n \psi_{\ell,i}\1{\left(\frac{\Psi_{\ell,i}}{n} \geq \omega_{i}^\beta+\frac{\xi}{2}\right)}}{n} + \ddfrac{\sum_{\ell=S_1'+1}^n \psi_{\ell,i}\1{\left(\frac{\Psi_{\ell,i}}{n} \leq \omega_{i}^\beta + \frac{\xi}{2}\right)}}{n} \\
        &\leq \frac{S_1'}{n} + \ddfrac{\sum_{\ell=1}^n \expp{-\epsilon_0(\xi/2) n}}{n} + \ddfrac{\sum_{\ell=S_1'+1}^{\ell_n(\xi)}\ \psi_{\ell,i}\1{\left(\frac{\Psi_{\ell,i}}{n} \leq \omega_{i}^\beta + \frac{\xi}{2}\right)}}{n},
    \end{align*}
    where we let $\ell_n(\xi) = \max\left\{ \ell \leq n : \Psi_{\ell,i}/n \leq \omega_{i}^\beta + \xi/2\right\}$. Then
    \begin{align*}
        \frac{\Psi_{n,i}}{n} 
        &\leq \frac{S_1'}{n} + \ddfrac{\sum_{\ell=1}^n \expp{-\epsilon_0(\xi/2) n}}{n} + \Psi_{\ell_n(\xi),i}\\
        &\leq \frac{S_1' + (1 - \exp(-\epsilon_0(\xi/2))^{-1}}{n}+ \omega_{i}^\beta + \frac{\xi}{2}
    \end{align*}
    Then, there exists $S_5$ such that for all $n \geq S_5$,
    \[
        \frac{S_1' + (1 - \exp(-\epsilon_0(\xi/2))^{-1}}{n} \leq \frac{\xi}{2}.
    \]
    Therefore, for any $n > S_4 \eqdef \max\{S_1',S_5\}$, $\Psi_{n,i} \leq \omega_i^\beta + \xi$ holds for all $i\neq I^\star$. For $i = I^\star$, it is already proved for the optimal arm.
\end{proof}

We now prove Lemma~\ref{lemma:tracking_other} under \TTTS.

\paragraph{Proof of Lemma~\ref{lemma:tracking_other}} 

From Lemma~\ref{lemma:psi_other_ttts}, there exists $S_4' = \text{Poly}((K-1)/\xi,W_1,W_2)$ such that for all $n > S_4'$,
\[
    \forall i\in\cA, \frac{\Psi_{n,i}}{n} \leq \omega_i^\beta + \frac{\xi}{K-1}.
\]
Using the fact that $\Psi_{n,i}/n$ and $\omega_{i}^\beta$ all sum to 1, we have $\forall i \in \cA$,
\begin{align*}
    \frac{\Psi_{n,i}}{n} &= 1 - \sum_{j\neq i} \frac{\Psi_{n,j}}{n}\\
                         &\geq 1 - \sum_{j\neq i} \left(\omega_j^\beta + \frac{\xi}{K-1}\right)\\
                         &= \omega_i^\beta - \xi.
\end{align*}

Thus, for all $n > S_4'$, we have
\[
    \forall i\in\cA, \left| \frac{\Psi_{n,i}}{n} - \omega_i^\beta \right| \leq \xi.
\]
And finally we use the same reasoning as the proof of Lemma~\ref{lemma:tracking_best} to link $T_{n,i}$ and $\Psi_{n,i}$. Fix an $\epsilon > 0$. Using Lemma~\ref{lemma:link}, we have $\forall n\in\NN$,
\[
    \forall i\in\cA, \left|\frac{T_{n,i}}{n}-\frac{\Psi_{n,i}}{n}\right| \leq \frac{W_2\sqrt{(n+1)\log(e^2+n)}}{n}.
\]
Thus there exists $S_5$ s.t. $\forall n > S_5$,
\[
    \left|\frac{T_{n,I^\star}}{n}-\frac{\Psi_{n,I^\star}}{n}\right| \leq \frac{\epsilon}{2}.
\]
And using the above result, there exists $S_4'' = \text{Poly}(2/\epsilon,W_1,W_2)$ s.t. $\forall n > S_4''$,
\[
    \left|\frac{\Psi_{n,i}}{n} - \omega_i^\beta\right| \leq \frac{\epsilon}{2}.
\]
Thus, if we take $N_4 \eqdef \max\{S_4'',S_5\}$, then $\forall n > N_4$, we have
\[
    \forall i\in\cA, \left|\frac{T_{n,i}}{n} - \omega_i^\beta\right| \leq \epsilon.
\]

\hfill\BlackBox\\[2mm]

\section{Fixed-Confidence Analysis for \TCC}\label{app:confidence_t3c}

This section is entirely dedicated to \TCC. Note that the analysis to follow share the same proof line with that of \TTTS, and some parts even completely coincide with those of \TTTS. For the sake of simplicity and clearness, we shall only focus on the parts that differ and skip some redundant proofs. 

\subsection{Sufficient exploration of all arms, proof of Lemma~\ref{lemma:sufficient_exploration} under \TCC}\label{app:confidence_t3c.exploration}

To prove this lemma, we still need the two sets of indices for under-sampled arms like in Appendix~\ref{app:confidence_ttts.exploration}. We recall that for a given $L>0$: $\forall n\in\NN$ we define
\[
    U_n^L \eqdef \{i: T_{n,i} < \sqrt{L}\},
\]
\[
    V_n^L \eqdef \{i: T_{n,i} < L^{3/4}\}.
\]
For \TCC however, we investigate the following two indices,
\[
    J_n^{(1)} \eqdef \argmax_{j} a_{n,j}, \tilde{J_n^{(2)}} \eqdef \argmin_{j\neq J_n^{(1)}} W_n(J_n^{(1)},j).
\]
Lemma~\ref{lemma:sufficient_exploration} is proved via the following sequence of lemmas.

\begin{lemma}\label{lemma:link_t3c}
    There exists $L_1 = \text{Poly}(W_1)$ s.t. if $L > L_1$, for all $n$, $U_n^L \neq \emptyset$ implies $J_n^{(1)} \in V_n^L$ or $\tilde{J_n^{(2)}} \in V_n^L$.
\end{lemma}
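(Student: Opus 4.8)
The plan is to reuse the three-step structure of the proof of Lemma~\ref{lemma:link_ttps}. Steps~1 and~2 there involve only $J_n^{(1)}=\argmax_j a_{n,j}$, the sets $U_n^L,V_n^L$, and the Gaussian posterior tails, all of which are identical for \TCC, so they carry over unchanged; only Step~3, which identifies the second candidate, has to be redone with $\tilde{J_n^{(2)}}=\argmin_{j\neq J_n^{(1)}}W_n(J_n^{(1)},j)$ in place of the second-largest optimal-action probability. Concretely, if $J_n^{(1)}\in V_n^L$ we are done, so I would assume $J_n^{(1)}\in\bar{V_n^L}\subseteq\bar{U_n^L}$. Steps~1--2 of Lemma~\ref{lemma:link_ttps} then provide $L_2,L_3=\text{Poly}(W_1)$ such that, for $L>\max(L_2,L_3)$: (i) $|\mu_{n,i}-\mu_i|\le\Delta_{\text{min}}/4$ for all $i\in\bar{U_n^L}$ (from Lemma~\ref{lemma:means} and monotonicity of $x\mapsto\log(e+x)/(1+x)$); and (ii) $J_n^{(1)}=\bar{J_n^\star}\eqdef\argmax_{j\in\bar{U_n^L}}\mu_{n,j}=\argmax_{j\in\bar{U_n^L}}\mu_j$ (the contradiction argument uses only $a_{n,J_n^{(1)}}=\max_i a_{n,i}\ge1/K$ and the Gaussian upper tail~\eqref{gaussian_upper}).

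For the new Step~3, I would fix an arm $i_0\in U_n^L$ (nonempty by hypothesis); since $i_0\in U_n^L\subseteq V_n^L$ while $J_n^{(1)}\in\bar{V_n^L}$, we have $i_0\neq J_n^{(1)}$, so $i_0$ is an admissible challenger. It suffices to show $W_n(J_n^{(1)},i_0)<W_n(J_n^{(1)},j)$ for every $j\in\bar{V_n^L}\setminus\{J_n^{(1)}\}$, because then the minimiser $\tilde{J_n^{(2)}}$ cannot be well explored, hence lies in $V_n^L$ (strictness also disposes of tie-breaking). For $j\in\bar{V_n^L}\setminus\{J_n^{(1)}\}$: by (ii) and distinctness of the means $\mu_{J_n^{(1)}}-\mu_j\ge\Delta_{\text{min}}$, so (i) gives $\mu_{n,J_n^{(1)}}-\mu_{n,j}\ge\Delta_{\text{min}}/2>0$, while $1/T_{n,J_n^{(1)}}+1/T_{n,j}\le 2L^{-3/4}$, whence the Gaussian form of $W_n$ gives $W_n(J_n^{(1)},j)\ge\Delta_{\text{min}}^2L^{3/4}/(16\sigma^2)$. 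For $i_0$: using $1/T_{n,J_n^{(1)}}+1/T_{n,i_0}\ge 1/T_{n,i_0}$ and $T_{n,i_0}<\sqrt L$ one gets $W_n(J_n^{(1)},i_0)\le T_{n,i_0}(\mu_{n,J_n^{(1)}}-\mu_{n,i_0})^2/(2\sigma^2)<\sqrt L\,(\mu_{n,J_n^{(1)}}-\mu_{n,i_0})^2/(2\sigma^2)$, and Lemma~\ref{lemma:means} applied to both arms (using $\log(e+x)/(1+x)\le 1$) bounds $(\mu_{n,J_n^{(1)}}-\mu_{n,i_0})^2$ by $B(W_1)\eqdef(\Delta_{\text{max}}+2\sigma W_1)^2=\text{Poly}(W_1)$. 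The two estimates are compatible as soon as $8B(W_1)<\Delta_{\text{min}}^2L^{1/4}$, i.e.\ for $L>L_4\eqdef(8B(W_1)/\Delta_{\text{min}}^2)^4=\text{Poly}(W_1)$. Setting $L_1\eqdef\max(L_2,L_3,L_4)$ finishes the argument.

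The main obstacle is the only genuinely new point of Step~3: bounding $W_n(J_n^{(1)},i_0)$ for a challenger $i_0$ that may have been pulled only a handful of times, so that its empirical mean is barely concentrated and $(\mu_{n,J_n^{(1)}}-\mu_{n,i_0})^2$ admits only a crude $\text{Poly}(W_1)$ bound. What saves the argument is that the transportation cost carries the compensating prefactor $T_{n,i_0}<\sqrt L$: the cost of an under-explored challenger grows only like $\sqrt L$, whereas that of any well-explored challenger is $\gtrsim L^{3/4}$, and this gap in exponents closes the comparison once $L$ is polynomially large in $W_1$. The remaining work — checking that each threshold $L_2,L_3,L_4$ is polynomial in $W_1$, so that it propagates correctly into the $\text{Poly}(W_1,W_2)$ accounting of Lemma~\ref{lemma:sufficient_exploration} — is routine.
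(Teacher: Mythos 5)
Your proposal is correct and follows essentially the same route as the paper: reuse Steps 1--2 of Lemma~\ref{lemma:link_ttps} to get $J_n^{(1)}=\bar{J_n^\star}$, then compare transportation costs, showing any well-explored challenger has cost $\gtrsim \Delta_{\min}^2 L^{3/4}/(16\sigma^2)$ while an under-sampled arm (the paper takes $J_n^\star=\argmax_{j\in U_n^L}\mu_{n,j}$, you take an arbitrary $i_0\in U_n^L$, an immaterial difference) has cost $O(\sqrt{L}\,\mathrm{Poly}(W_1))$, so the minimiser lies in $V_n^L$ once $L$ exceeds a $\mathrm{Poly}(W_1)$ threshold.
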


\begin{proof}
    If $J_n^{(1)} \in V_n^L$, then the proof is finished. Now we assume that $J_n^{(1)} \in \bar{V_n^L} \subset \bar{U_n^L}$, and we prove that $J_n^{(2)} \in V_n^L$.
    \paragraph{Step 1} Following the same reasoning as Step 1 and Step 2 of the proof of Lemma~\ref{lemma:link_ttps}, we know that there exists $L_2 = \text{Poly}(W_1)$ s.t. if $L>L_2$, then
    \[
        \bar{J_n^\star} \eqdef \argmax_{j\in\bar{U_n^L}} \mu_{n,j} = \argmax_{j\in\bar{U_n^L}} \mu_j = J_n^{(1)}.
    \]
    
    \paragraph{Step 2} Now assuming that $L>L_2$, and we show that for $L$ large enough, $\tilde{J_n^{(2)}} \in V_n^L$. In the same way that we proved~\eqref{eq:upper_bound_anj_explo} one can show that for all $\forall j \in \bar{V_n^L}$,
    % Using the same arguments we used to prove Note as we proved in step 3 of th that we have $\forall j \in \bar{V_n^L}$,
    % \[
    %     a_{n,j} \leq \Pi_{n}\left[\theta_j \geq \theta_{\bar{J_n^{\star}}}\right] \leq \expp{-\frac{L^{3/4}\Delta_{\text{min}}^2}{16\sigma^2}}.
    % \]
    %And we can thus deduce that
    \[
        W_n(J_n^{(1)},j) = \ddfrac{(\mu_{n,I^\star}-\mu_{n,j})^2}{2\sigma^2\left(\frac{1}{T_{n,I^\star}}+\frac{1}{T_{n,j}}\right)} \geq \frac{L^{3/4}\Delta_{\text{min}}^2}{16\sigma^2}.
    \]
    
    Again, denote $J_n^\star \eqdef \argmax_{j\in U_n^L} \mu_{n,j}$, we obtain
    \begin{equation*}
        W_n(J_n^{(1)},J_n^\star) = \begin{cases}
        0 &\text{if } \mu_{n,J_n^\star}\geq\mu_{n,J_n^{(1)}}, \\
        \ddfrac{(\mu_{n,J_n^{(1)}}-\mu_{n,J_n^\star})^2}{2\sigma^2\left(\frac{1}{T_{n,J_n^{(1)}}}+\frac{1}{T_{n,J_n^\star}}\right)} &\text{else}.
        \end{cases}
    \end{equation*}
    In the second case, as already shown in Step 3 of Lemma~\ref{lemma:link_ttps} we have that
    \begin{align*}
        |\mu_{n,J_n^\star} - \mu_{n,\bar{J_n^\star}}| 
        &\leq \Delta_{\max} + 2\sigma W_1 \sqrt{\frac{\log(e+T_{n,J_n^\star})}{1+T_{n,J_n^\star}}}\\
        &\leq \Delta_{\max} + 2\sigma W_1 \sqrt{\frac{\log(e+\sqrt{L})}{1+\sqrt{L}}},
    \end{align*}
    since $J_n^\star \in U_n^L$. We also know that
    \[
        2\sigma^2\left(\frac{1}{T_{n,J_n^{(1)}}}+\frac{1}{T_{n,J_n^\star}}\right) \geq \frac{2\sigma^2}{T_{n,J_n^\star}} \geq \frac{2\sigma^2}{\sqrt{L}}.
    \]
    Therefore, we get
    \[
        W_n(J_n^{(1)},J_n^\star) \leq \frac{\sqrt{L}}{2\sigma^2}\left(\Delta_{\max} + 2\sigma W_1 \sqrt{\frac{\log(e+\sqrt{L})}{1+\sqrt{L}}}\right)^2.
    \]
    On the other hand, we know that for all $j\in\bar{V_n^L}$,
    \[
        W_n(J_n^{(1)},j) \geq \frac{L^{3/4}\Delta_{\text{min}}^2}{16\sigma^2}.
    \]
    Thus, there exists $L_3$ s.t. if $L>L_3$, then
    \[
        \forall j\in \bar{V_n^L},\, W_n(J_n^{(1)},j) \geq 2W_n(J_n^{(1)},J_n^\star).
    \]
    
    That means $\tilde{J_n^{(2)}}\notin \bar{V_n^L}$ and by consequence, $\tilde{J_n^{(2)}}\in V_n^L$.
    
    Finally, taking $L_1 = \max(L_2, L_3)$, we have $\forall L > L_1$, either $J_n^{(1)} \in V_n^L$ or $\tilde{J_n^{(2)}} \in V_n^L$.
\end{proof}

Next we show that there exists at least one arm in $V_n^L$ for whom the probability of being pulled is large enough. More precisely, we prove the following lemma.

\begin{lemma}\label{lemma:psi_min_t3c}
    There exists $L_1 = \text{Poly}(W_1)$ s.t. for $L > L_1$ and for all $n$ s.t. $U_n^L \neq \emptyset$, then there exists $J_n \in V_n^L$ s.t.
    \[
        \psi_{n,J_n} \geq \frac{\min(\beta,1-\beta)}{K^2} \eqdef \psi_{\min}.
    \]
\end{lemma}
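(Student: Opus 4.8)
The plan is to follow the proof of Lemma~\ref{lemma:psi_min_ttts} almost verbatim, substituting Lemma~\ref{lemma:link_t3c} for Lemma~\ref{lemma:link_ttps} and the \TCC selection-probability formula for the \TTTS one. First I would apply Lemma~\ref{lemma:link_t3c}: for $L > L_1 = \text{Poly}(W_1)$ and any $n$ with $U_n^L\neq\emptyset$, either $J_n^{(1)}\in V_n^L$ or $\tilde{J_n^{(2)}}\in V_n^L$. I then define $J_n$ to be whichever of these two indices belongs to $V_n^L$ (taking $J_n=J_n^{(1)}$ if both do), so that it remains only to lower bound $\psi_{n,J_n}$ in each of the two cases.

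Recall that under \TCC,
\[
    \psi_{n,i} = \beta a_{n,i} + (1-\beta)\sum_{j\neq i} a_{n,j}\,\frac{\1\{W_n(j,i)=\min_{k\neq j}W_n(j,k)\}}{\#\left|\argmin_{k\neq j}W_n(j,k)\right|},
\]
and that $a_{n,J_n^{(1)}}=\max_{i\in\cA}a_{n,i}\geq 1/K$ since the $a_{n,i}$ are nonnegative and sum to one. If $J_n=J_n^{(1)}$, keeping only the first term gives $\psi_{n,J_n^{(1)}}\geq\beta a_{n,J_n^{(1)}}\geq\beta/K\geq\psi_{\min}$. If $J_n=\tilde{J_n^{(2)}}$, I would retain only the summand $j=J_n^{(1)}$: by definition $\tilde{J_n^{(2)}}=\argmin_{j\neq J_n^{(1)}}W_n(J_n^{(1)},j)$ attains $\min_{k\neq J_n^{(1)}}W_n(J_n^{(1)},k)$, so the indicator equals $1$, while $\#|\argmin_{k\neq J_n^{(1)}}W_n(J_n^{(1)},k)|\leq K-1$. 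Hence
\[
    \psi_{n,\tilde{J_n^{(2)}}} \geq (1-\beta)\,a_{n,J_n^{(1)}}\,\frac{1}{K-1} \geq \frac{1-\beta}{K(K-1)} \geq \frac{1-\beta}{K^2} \geq \psi_{\min},
\]
and combining the two cases gives $\psi_{n,J_n}\geq\psi_{\min}$ for the same $L_1$ as in Lemma~\ref{lemma:link_t3c}.

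The argument is essentially routine and needs no concentration tool beyond Lemma~\ref{lemma:link_t3c}. The only place calling for a bit of care is the handling of ties in $\argmin_{k\neq J_n^{(1)}}W_n(J_n^{(1)},k)$, which may well be a large set (for instance whenever several arms have empirical mean at least $\mu_{n,J_n^{(1)}}$, so that their transportation cost to $J_n^{(1)}$ vanishes): one must lower bound $1/\#|\argmin_{k\neq J_n^{(1)}}W_n(J_n^{(1)},k)|$ by $1/(K-1)$ rather than treat it as $1$, and discard the remaining (nonnegative) summands instead of all of them.
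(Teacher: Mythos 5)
Your proposal is correct and follows essentially the same route as the paper: invoke Lemma~\ref{lemma:link_t3c} to place $J_n^{(1)}$ or $\tilde{J_n^{(2)}}$ in $V_n^L$, then lower bound $\psi_{n,J_n^{(1)}}$ by $\beta a_{n,J_n^{(1)}}\geq \beta/K$ and $\psi_{n,\tilde{J_n^{(2)}}}$ by the single summand $j=J_n^{(1)}$ whose indicator equals one. The only cosmetic difference is that you bound the cardinality of the argmin set by $K-1$ where the paper uses $K$; both give the claimed $\psi_{\min}=\min(\beta,1-\beta)/K^2$.
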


\begin{proof}
    Using Lemma~\ref{lemma:link_t3c}, we know that $J_n^{(1)}$ or $\tilde{J_n^{(2)}} \in V_n^L$. We also know that under \TCC, for any arm $i$, $\psi_{n,i}$ can be written as
    \[
        \psi_{n,i} = \beta a_{n,i} + (1-\beta) \sum_{j\neq i} a_{n,j}\frac{\1\{W_n(j,i)=\min_{k\neq j} W_n(j,k)\}}{\big|\argmin_{k\neq j } W_n(j,k)\big|}.
    \]
    Note that $(\psi_{n,i})_i$ sums to 1,
    \begin{align*}
        \sum_i  \psi_{n,i} &= \beta +(1-\beta) \sum_j a_{n,j} \sum_{i\neq j } \frac{\1\{W_n(j,i)=\min_{k\neq j} W_n(j,k)\}}{\big|\argmin_{k\neq j } W_n(j,k)\big|}\\
        &= \beta +(1-\beta) \sum_j a_{n,j} =1\,.
    \end{align*}
    Therefore, we have
    \[
        \psi_{n,J_n^{(1)}} \geq \beta a_{n,J_n^{(1)}} \geq \frac{\beta}{K}
    \]
    on one hand, since $\sum_{i\in\cA} a_{n,i} = 1$. On the other hand, we have
    \begin{align*}
        \psi_{n,\tilde{J_n^{(2)}}} &\geq (1-\beta) \frac{a_{n,J_n^{(1)}}}{K}\\
                           &\geq \frac{1-\beta}{K^2},
    \end{align*}
    which concludes the proof.
\end{proof}

The rest of this subsection is exactly the same to that of \TTTS. Indeed, with the above lemma, we can show that the set of poorly explored arms $U_n^L$ is empty when $n$ is large enough.

\begin{lemma}\label{lemma:poorly_explored_t3c}
    Under \TCC, there exists $L_0 = \text{Poly}(W_1,W_2)$ s.t. $\forall L > L_0$, $U_{\floor{KL}}^L = \emptyset$.
\end{lemma}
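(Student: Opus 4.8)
The plan is to mirror, essentially verbatim, the proof of Lemma~\ref{lemma:poorly_explored_ttts}, since the only \TCC-specific ingredients have already been secured above: Lemma~\ref{lemma:link_t3c} reduces the analysis of under-sampled arms to the two indices $J_n^{(1)}$ and $\tilde{J_n^{(2)}}$, and Lemma~\ref{lemma:psi_min_t3c} guarantees that whenever $U_n^L \neq \emptyset$ there is an arm $J_n \in V_n^L$ with $\psi_{n,J_n} \geq \psi_{\min} = \min(\beta,1-\beta)/K^2$. Downstream of these two facts nothing depends on how \TCC selects its challenger, so what remains is a purely deterministic counting scheme that transfers without change.

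Concretely, I would argue by contradiction: suppose $U_{\floor{KL}}^L \neq \emptyset$, so that $U_\ell^L$ — and hence $V_\ell^L$ — is nonempty for every $1 \leq \ell \leq \floor{KL}$. Fix a deterministic $L_5$ with $\floor{L} \geq K L^{3/4}$ for all $L > L_5$; by the pigeonhole principle some arm has been pulled at least $L^{3/4}$ times by round $\floor{L}$, giving $|V_{\floor{L}}^L| \leq K-1$. The induction step is to show $|V_{\floor{(k+1)L}}^L| \leq |V_{\floor{kL}}^L| - 1$: over the window $\ell \in \{\floor{kL}+1,\dots,\floor{(k+1)L}\}$ the set $U_\ell^L$ stays nonempty, so Lemma~\ref{lemma:psi_min_t3c} yields, for each such $\ell$, an arm of $V_\ell^L \subseteq V_{\floor{kL}}^L$ receiving selection probability at least $\psi_{\min}$; summing, $\sum_{i \in V_{\floor{kL}}^L}(\Psi_{\floor{(k+1)L},i} - \Psi_{\floor{kL},i}) \geq \psi_{\min}\floor{L}$. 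Lemma~\ref{lemma:link} converts this $\Psi$-increment into a $T$-increment at the cost of an additive term of order $K W_2 \sqrt{L \log L} = \cO(W_2 \floor{L}^{3/4})$, which for $L$ beyond some $L_6 = \text{Poly}(W_2)$ is dominated, leaving $\sum_{i \in V_{\floor{kL}}^L}(T_{\floor{(k+1)L},i} - T_{\floor{kL},i}) \geq K L^{3/4}$; hence at least one arm of $V_{\floor{kL}}^L$ crosses the $L^{3/4}$ threshold inside the window, so the set strictly shrinks.

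Iterating $k = 1,\dots,K$ forces $|V_{\floor{KL}}^L| = 0$, contradicting the assumption that $V_{\floor{KL}}^L$ (which contains $U_{\floor{KL}}^L$) is nonempty; setting $L_0 = \max(L_1, L_5, L_6)$, with $L_1$ the threshold of Lemma~\ref{lemma:psi_min_t3c}, gives a bound of the required form $\text{Poly}(W_1,W_2)$. I do not expect a genuine obstacle here: the randomness of \TCC has been entirely absorbed into Lemmas~\ref{lemma:link_t3c} and~\ref{lemma:psi_min_t3c}, so only bookkeeping on window lengths remains. The one place to be slightly careful is verifying that the $\floor{L}^{3/4}$-order slack coming from Lemma~\ref{lemma:link} is really dominated by $\psi_{\min}\floor{L} - K L^{3/4}$ once $L$ exceeds a $\text{Poly}(W_2)$ threshold, which is immediate because $\psi_{\min} > 0$ is a fixed constant independent of $L$.
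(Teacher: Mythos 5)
Your proposal is correct and follows essentially the same route as the paper: the paper's proof of this lemma is simply a pointer back to the proof of Lemma~\ref{lemma:poorly_explored_ttts}, which is exactly the contradiction-plus-pigeonhole induction you describe (windows of length $\floor{L}$, the $\psi_{\min}$ lower bound from Lemma~\ref{lemma:psi_min_t3c}, and Lemma~\ref{lemma:link} to convert $\Psi$-increments into $T$-increments up to an $\cO(W_2\floor{L}^{3/4})$ slack), ending with $L_0=\max(L_1,L_5,L_6)$. No gaps.
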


\begin{proof}
    See proof of Lemma~\ref{lemma:poorly_explored_ttts} in Appendix~\ref{app:confidence_ttts.exploration}.
\end{proof}

We can finally conclude the proof of Lemma~\ref{lemma:sufficient_exploration} for \TCC in the same way as for \TTTS in Appendix~\ref{app:confidence_ttts.exploration}.
\hfill\BlackBox\\[2mm]

\subsection{Concentration of the empirical means, proof of Lemma~\ref{lemma:tracking_means} under \TCC}\label{app:confidence_t3c.means}

As a corollary of the previous section, we can show the concentration of $\mu_{n,i}$ to $\mu_i$, and the proof remains the same as that of \TTTS in Appendix~\ref{app:confidence_ttts.means}.

\subsection{Measurement effort concentration of the optimal arm, proof of Lemma~\ref{lemma:tracking_best} under \TCC}\label{app:confidence_t3c.best_arm}

Next, we show that the empirical arm draws proportion of the true best arm for \TCC concentrates to $\beta$ when the total number of arm draws is sufficiently large. This proof also remains the same as that of \TTTS in Appendix~\ref{app:confidence_ttts.best_arm}.

\subsection{Measurement effort concentration of other arms, proof of Lemma~\ref{lemma:tracking_other} under \TCC}\label{app:confidence_t3c.other_arms}

In this section, we show that, for \TCC, the empirical measurement effort concentration also holds for other arms than the true best arm. Note that this part differs from that of \TTTS.

We again establish first an over-allocation implies negligible probability result as follow.

\begin{lemma}\label{lemma:over_allocation_finite_t3c} 
    Under \TCC, for every $\xi \leq \epsilon_0$ with $\epsilon_0$ problem dependent, there exists $S_1 = \text{Poly}(1/\xi,W_1,W_2)$ such that for all $n > S_1$, for all $i\neq I^\star$, 
    \[
        \frac{\Psi_{n,i}}{n} \geq \omega_{i}^\beta + 2\xi  \ \ \Rightarrow \ \ \psi_{n,i} \leq (K-1)\expp{-\frac{\Delta_{\text{min}}^2}{16\sigma^2}\sqrt{\frac{n}{K}}}\,.
    \]
\end{lemma}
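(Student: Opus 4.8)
# Proof Proposal for Lemma~\ref{lemma:over_allocation_finite_t3c}

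The plan is to mirror the structure of the proof of Lemma~\ref{lemma:over_allocation_finite_ttts} (the \TTTS\ version), but to exploit the deterministic, transportation-cost--based challenger rule of \TCC\ to obtain an even cleaner bound. First I would invoke Lemma~\ref{lemma:empirical_best}: there exists $M_1'' = \text{Poly}(W_1,W_2)$ such that for all $n > M_1''$ one has $I^\star = I_n^\star = J_n^{(1)}$, and moreover $a_{n,j} \leq \expp{-\frac{\Delta_{\text{min}}^2}{16\sigma^2}\sqrt{n/K}}$ for every $j \neq I^\star$. The starting point is then the explicit formula for $\psi_{n,i}$ under \TCC,
\[
    \psi_{n,i} = \beta a_{n,i} + (1-\beta) \sum_{j\neq i} a_{n,j}\frac{\1\{W_n(j,i)=\min_{k\neq j} W_n(j,k)\}}{\big|\argmin_{k\neq j } W_n(j,k)\big|}.
\]
The first term $\beta a_{n,i}$ is already exponentially small for $i \neq I^\star$ once $n > M_1''$, by Lemma~\ref{lemma:empirical_best}. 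So the work is to control the second (challenger) term.

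For the challenger term, split the sum over $j$ into $j = I^\star$ and $j \neq I^\star$. For $j \neq I^\star$ the coefficient $a_{n,j}$ is already bounded by $\expp{-\frac{\Delta_{\text{min}}^2}{16\sigma^2}\sqrt{n/K}}$, so those contributions are harmless. The only potentially large coefficient is $a_{n,I^\star}$, multiplied by the indicator $\1\{W_n(I^\star,i)=\min_{k\neq I^\star} W_n(I^\star,k)\}$: the contribution of $i$ as a challenger of $I^\star$. The key step is therefore to show that when arm $i$ is over-allocated, i.e.\ $\Psi_{n,i}/n \geq \omega_i^\beta + 2\xi$, it is \emph{not} the minimizer of $j \mapsto W_n(I^\star,j)$ — so that indicator vanishes. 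To see this, first convert the over-allocation of $\Psi_{n,i}$ into over-allocation of $T_{n,i}$ via Lemma~\ref{lemma:link} (there is an $S_2' = \text{Poly}(1/\xi, W_2)$ after which $T_{n,i}/n \geq \omega_i^\beta + \xi$), and use Lemma~\ref{lemma:tracking_best} to get $T_{n,I^\star}/n$ close to $\beta$ and the posterior means close to the true means. Then, since in the Gaussian case
\[
    W_n(I^\star,i) = \frac{(\mu_{n,I^\star}-\mu_{n,i})^2}{2\sigma^2(1/T_{n,I^\star} + 1/T_{n,i})},
\]
an increase of $T_{n,i}$ beyond $\omega_i^\beta n$ strictly increases $W_n(I^\star,i)$, while by continuity and the characterization in Proposition~\ref{prop:optim} (equality of the $C_j(\beta,\omega_j^\beta)$), for $n$ large all the $W_n(I^\star,j)$, $j\neq I^\star$, are within $o(1)$ of the common value $\Gamma_\beta^\star \cdot n$; choosing $\xi \leq \epsilon_0$ small enough (problem dependent) guarantees that an $i$ which is $2\xi$-over-allocated has $W_n(I^\star,i)$ strictly larger than $W_n(I^\star,j)$ for some under-allocated $j$, hence $i \notin \argmin_{k\neq I^\star} W_n(I^\star,k)$. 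Consequently the $a_{n,I^\star}$ term drops out of $\psi_{n,i}$, leaving only the terms bounded by $\expp{-\frac{\Delta_{\text{min}}^2}{16\sigma^2}\sqrt{n/K}}$, and summing over at most $K-1$ surviving terms (the $\beta a_{n,i}$ term plus the $j \neq I^\star$ challenger terms) yields the claimed bound $\psi_{n,i} \leq (K-1)\expp{-\frac{\Delta_{\text{min}}^2}{16\sigma^2}\sqrt{n/K}}$. Taking $S_1 \eqdef \max\{M_1'', S_2', S_3\}$ where $S_3$ is the $\text{Poly}(1/\epsilon_0, W_1, W_2)$ threshold from the continuity/Lemma~\ref{lemma:tracking_best} step finishes the proof.

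The main obstacle I anticipate is the last quantitative step: making precise, with explicit polynomial-in-$(1/\xi, W_1, W_2)$ thresholds, the claim that an over-allocated sub-optimal arm cannot be the transportation-cost minimizer. This requires (i) a uniform continuity/stability estimate for the map $(\beta,\bmu) \mapsto \Gamma^\star_\beta(\bmu)$ and for the optimal weights $\omega^\beta$ around the true instance — using that $\bmu_n \to \bmu$ and $T_{n,I^\star}/n \to \beta$ at rates controlled by $W_1, W_2$ — and (ii) checking that there is always at least one \emph{under}-allocated competitor $j$ (which holds because the $T_{n,j}/n$ sum to $1$, so if $i$ is over-allocated by $2\xi$ some other arm must be under-allocated). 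Both ingredients are morally present in the \TTTS\ analysis (Lemma~\ref{lemma:over_allocation_finite_ttts} and its $\epsilon_n(\xi)$ bookkeeping), so the proof should largely reduce to adapting that computation; the genuinely new point is simply that \TCC's deterministic $\argmin$ rule lets us kill the dominant $a_{n,I^\star}$ term outright rather than bounding it by an $\exp(-\epsilon_0(\xi) n)$ factor.
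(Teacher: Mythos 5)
Your proposal follows essentially the same route as the paper's proof: bound $\psi_{n,i}$ by $\sum_{j\neq I^\star} a_{n,j}$ plus the indicator that the over-allocated arm $i$ minimizes $W_n(I^\star,\cdot)$, kill that indicator by exhibiting (via the pigeonhole/sum-to-one argument) an under-allocated competitor $j_0\neq I^\star,i$ whose cost stays strictly below that of $i$ for $\xi\leq\epsilon_0$ (using Lemma~\ref{lemma:link}, Lemma~\ref{lemma:tracking_best}, the monotonicity of the cost in $T_{n,i}/n$, and the equalizing property of Proposition~\ref{prop:optim}), and then invoke Lemma~\ref{lemma:empirical_best} to obtain the $(K-1)e^{-\Delta_{\min}^2\sqrt{n/K}/(16\sigma^2)}$ bound. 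One caveat: your intermediate remark that \emph{all} $W_n(I^\star,j)$, $j\neq I^\star$, lie within $o(1)$ of $\Gamma_\beta^\star n$ is not available at this stage (it would presuppose Lemma~\ref{lemma:tracking_other}); the paper, like your own fallback argument, only compares $W_n(I^\star,i)$ with $W_n(I^\star,j_0)$ for the specific under-allocated $j_0$, via the expansion of the gap $W_{i,j_0}^\xi=\Theta(\xi)$, which is all that is needed.
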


\begin{proof}
    Fix $i\neq I^\star$ s.t. $\Psi_{n,i}/n\geq \omega_i^\beta+2\xi$, then using Lemma~\ref{lemma:link}, there exists $S_2=\text{Poly}(1/\xi,W_2)$ such that for any $n>S_2$, we have
    \[
        \frac{T_{n,i}}{n} \geq \omega_i^\beta + \xi.
    \]
    Then,
    \begin{align*}
        \psi_{n,i} &\leq \beta a_{n,i} + (1-\beta) \sum_{j\neq i} a_{n,j}\1\{W_n(j,i)=\min_{k\neq j} W_n(j,k)\}\\
                   &\leq \beta a_{n,i} + (1-\beta) \left(\sum_{j\neq i,I^\star} a_{n,j} + a_{n,I^\star}\1\{W_n(I^\star,i)=\min_{k\neq I^\star} W_n(I^\star,k)\}\right)\\
                   &\leq \sum_{j\neq I^\star} a_{n,j} + \1\{W_n(I^\star,i)=\min_{k\neq I^\star} W_n(I^\star,k)\}.
    \end{align*}
    Next we show that the indicator function term in the previous inequality equals to 0.
    
    Using Lemma~\ref{lemma:means} and Lemma~\ref{lemma:tracking_best} for \TCC, there exists $S_3 = \text{Poly}(1/\xi,W_1,W_2)$ such that for any $n>S_3$,
    \[
        \left|\frac{T_{n,I^\star}}{n} - \beta\right| \leq \xi^2 \text{ and } \forall j \in\cA, |\mu_{n,j}-\mu_j|\leq\xi^2.
    \]
    
    Now if $\forall j \neq I^\star,i$, we have $T_{n,j}/n>\omega_j^\beta$, then
    \begin{align*}
        \frac{n-1}{n} &= \sum_{j\in\cA}\frac{T_{n,j}}{n}\\
                      &= \frac{T_{n,I^\star}}{n} + \frac{T_{n,i}}{n} + \sum_{j\neq I^\star,i}\frac{T_{n,j}}{n}\\
                      &> \beta - \epsilon^2 + \omega_i^\beta + \epsilon + \sum_{j\neq I^\star,i} \omega_j^\beta \geq 1,
    \end{align*}
    which is a contradiction.
    
    Thus there exists at least one $j_0\neq I^\star,i$, such that $T_{n,j_0}/n \leq \omega_j^\beta$. Assuming $n>\max(S_2,S_3)$, we have
    \begin{align*}
        W_n(I^\star,i)-W_n(I^\star,j_0) 
        &= \ddfrac{(\mu_{n,I^\star}-\mu_{n,i})^2}{2\sigma^2\left(\frac{1}{T_{n,I^\star}}+\frac{1}{T_{n,i}}\right)} - \ddfrac{(\mu_{n,I^\star}-\mu_{n,j_0})^2}{2\sigma^2\left(\frac{1}{T_{n,I^\star}}+\frac{1}{T_{n,j_0}}\right)}\\
        &\geq \underbrace{\ddfrac{(\mu_{I^\star}-\mu_{i}-2\xi^2)^2}{2\sigma^2\left(\frac{1}{\beta-\xi^2}+\frac{1}{\omega_i^\beta+\xi}\right)} - \ddfrac{(\mu_{I^\star}-\mu_{j_0}+2\xi^2)^2}{2\sigma^2\left(\frac{1}{\beta+\xi^2}+\frac{1}{\omega_{j_0}^\beta}\right)}}_{W_{i,j_0}^\xi}.
    \end{align*}
    According to Proposition~\ref{prop:optim}, $W_{i,j_0}^\xi$ converges to 0 when $\xi$ goes to 0, more precisely we have 
    \[W_{i,j_0}^\xi = \frac{(\mu_{I^\star}-\mu_{i})^2}{2\sigma^2} \left(\frac{\beta}{\beta +\omega_i^\beta}\right)^2 \xi + O(\xi^2)\,,
    \]
    thus there exists a $\epsilon_0$ such that for all $\xi<\epsilon_0$ it holds for all $i,j_0\neq I^\star$, $W_{i, j_0}^\xi>0$. It follows then
    \[
        W_n(I^\star,i)-\min_{k\neq I^\star}W_n(I^\star,k) \geq W_n(I^\star,i)-W_n(I^\star,j_0) > 0,
    \]
    and $\1\{W_n(I^\star,i)=\min_{k\neq I^\star} W_n(I^\star,k)\}=0$.
    
    Knowing that Lemma~\ref{lemma:empirical_best} is also valid for \TCC, thus there exists $M_1 = \text{Poly}(4/\Delta_{\min},W_1,W_2)$ such that for all $n>M_1$,
    \[
        \forall j \neq I^\star, a_{n,j} \leq \expp{-\frac{\Delta_{\text{min}}^2}{16\sigma^2}\sqrt{\frac{n}{K}}},
    \]
    which then concludes the proof by taking $S_1\eqdef\max(M_1,S_2,S_3)$.
\end{proof}

The rest of this subsection almost coincides with that of \TTTS. We first show that, starting from some known moment, no arm is overly allocated. More precisely, we show the following lemma.

\begin{lemma}\label{lemma:psi_other_t3c}
    Under \TCC, for every $\xi$, there exists $S_4 = \text{Poly}(1/\xi,W_1,W_2)$ s.t. $\forall n > S_4$,
    \[
        \forall i \in \cA, \ \ \frac{\Psi_{n,i}}{n} \leq \omega_{i}^\beta + 2\xi.
    \]
\end{lemma}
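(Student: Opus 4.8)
The plan is to follow the proof of Lemma~\ref{lemma:psi_other_ttts} almost verbatim, replacing the \TTTS over-allocation estimate with its \TCC analogue, Lemma~\ref{lemma:over_allocation_finite_t3c}; that lemma already contains the one genuinely new idea (comparing the transportation costs $W_n(I^\star,\cdot)$ near the $\beta$-optimal allocation), so what remains is essentially a summation argument. We may assume $\xi$ is small enough that $\xi/4 \leq \epsilon_0$ and $2\xi < 1$, since proving the bound for a smaller $\xi$ implies it for the original one, and the claim is trivial once $\omega_i^\beta + 2\xi \geq 1 \geq \Psi_{n,i}/n$.

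Fix $i \neq I^\star$ and set $c \eqdef \Delta_{\text{min}}^2/(16\sigma^2)$. Applying Lemma~\ref{lemma:over_allocation_finite_t3c} with parameter $\xi/4$, there is $S_1' = \text{Poly}(1/\xi,W_1,W_2)$ such that for all $\ell > S_1'$, $\Psi_{\ell,i}/\ell \geq \omega_i^\beta + \xi/2$ implies $\psi_{\ell,i} \leq (K-1)\expp{-c\sqrt{\ell/K}}$. The one delicate point is that $\ell \mapsto \Psi_{\ell,i}/\ell$ need not be monotone, so I would phrase the threshold-crossing index with respect to the final horizon $n$: since $\ell \mapsto \Psi_{\ell,i}$ is non-decreasing, for fixed $n$ the map $\ell \mapsto \Psi_{\ell,i}/n$ is non-decreasing, and I set $\ell_n \eqdef \max\{\ell \leq n : \Psi_{\ell,i}/n \leq \omega_i^\beta + \xi/2\}$, which is well defined for $n$ large. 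For every $\ell$ with $\ell_n < \ell \leq n$ one then has $\Psi_{\ell,i}/\ell \geq \Psi_{\ell,i}/n > \omega_i^\beta + \xi/2$, so the over-allocation bound applies as soon as moreover $\ell > S_1'$.

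I would then decompose $\Psi_{n,i} = \Psi_{\ell_n,i} + \sum_{\ell = \ell_n+1}^{n} \psi_{\ell,i}$ and split the sum into the indices $\ell \leq S_1'$, which contribute at most $S_1'$, and the indices $\ell > S_1'$, which contribute at most the finite instance-dependent constant $C \eqdef (K-1)\sum_{\ell \geq 1}\expp{-c\sqrt{\ell/K}}$. By the definition of $\ell_n$, $\Psi_{\ell_n,i}/n \leq \omega_i^\beta + \xi/2$, whence
\[
    \frac{\Psi_{n,i}}{n} \leq \omega_i^\beta + \frac{\xi}{2} + \frac{S_1' + C}{n},
\]
and choosing $S_5 = \text{Poly}(1/\xi,W_1,W_2)$ so that $(S_1'+C)/n \leq 3\xi/2$ for $n > S_5$ gives $\Psi_{n,i}/n \leq \omega_i^\beta + 2\xi$. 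For $i = I^\star$ the upper bound is a direct consequence of Lemma~\ref{lemma:psi_best} with $\epsilon = 2\xi$ (which holds under \TCC, cf.\ Appendix~\ref{app:confidence_t3c.best_arm}), valid past some $\text{Poly}(1/\xi,W_1,W_2)$. Taking $S_4$ to be the maximum of these finitely many thresholds over $i \in \cA$ finishes the proof; no step here is as involved as Lemma~\ref{lemma:over_allocation_finite_t3c} itself, which already did the heavy lifting.
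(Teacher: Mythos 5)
Your proof is correct and follows essentially the same route the paper intends: it is the decomposition argument of Lemma~\ref{lemma:psi_other_ttts} (threshold index $\ell_n$, split of the tail sum at $S_1'$, plus the already-proved bound for $i=I^\star$), with the \TTTS over-allocation bound replaced by Lemma~\ref{lemma:over_allocation_finite_t3c} and the geometric series replaced by the summable series $\sum_\ell e^{-c\sqrt{\ell/K}}$ --- precisely the ``matter of constants'' adjustment the paper alludes to when it skips this proof. Your explicit bridging step $\Psi_{\ell,i}/\ell \geq \Psi_{\ell,i}/n$ and the reduction to $\xi/4 \leq \epsilon_0$ are welcome points of care that the paper leaves implicit.
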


\begin{proof}
    See proof of Lemma~\ref{lemma:psi_other_ttts} in Appendix~\ref{app:confidence_ttts.other_arms}. Note that the previous step does not match exactly that of \TTTS, so the proof would be slightly different. However, the difference is only a matter of constant, we thus still choose to skip this proof.
\end{proof}

It remains to prove Lemma~\ref{lemma:tracking_other} for \TCC, which stays the same as that of \TTTS.

\paragraph{Proof of Lemma~\ref{lemma:tracking_other} for \TCC} 

See proof of Lemma~\ref{lemma:tracking_other} for \TTTS in Appendix~\ref{app:confidence_ttts.other_arms}.

\hfill\BlackBox\\[2mm]

\section{Proof of Lemma~\ref{lemma:confidence}}\label{app:confidence}

Finally, it remains to prove Lemma~\ref{lemma:confidence} under the Gaussian case before we can conclude for Theorem~\ref{thm:confidence_main} for \TTTS or \TCC.

\restatefixedconfidence*

For the clarity, we recall the definition of generalized likelihood ratio. For any pair of arms $i, j$, We first define a weighted average of their empirical means,
\[
    \hat{\mu}_{n,i,j} \eqdef \frac{T_{n,i}}{T_{n,i}+T_{n,j}} \hat{\mu}_{n,i} + \frac{T_{n,j}}{T_{n,i}+T_{n,j}} \hat{\mu}_{n,j}.
\]
And if $\hat{\mu}_{n,i}\geq\hat{\mu}_{n,j}$, then the generalized likelihood ratio $Z_{n,i,j}$ for Gaussian noise distributions has the following analytic expression,
\[
    Z_{n,i,j} \eqdef T_{n,i}d(\hat{\mu}_{n,i};\hat{\mu}_{n,i,j}) + T_{n,j}d(\hat{\mu}_{n,j};\hat{\mu}_{n,i,j}).
\]
We further define a statistic $Z_n$ as
\[
    Z_n \eqdef \max_{i\in\cA}\min_{j\in\cA\backslash\left\{i\right\}} Z_{n,i,j}.
\]

The following lemma stated by~\citet{qin2017ttei} is needed in our proof.

\begin{lemma}\label{lemma:ttei}
    For any $\zeta > 0$, there exists $\epsilon$ s.t. $\forall n\geq T_{\beta}^\epsilon$, $Z_n\geq (\Gamma_{\beta}^\star-\zeta)n$.
\end{lemma}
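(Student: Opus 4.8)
The plan is to lower bound the double optimization defining $Z_n = \max_{i}\min_{j\neq i} Z_{n,i,j}$ by restricting the outer maximum to the single choice $i = I^\star$, and to show each resulting term is close to $\Gamma_\beta^\star n$. First I would argue that for $n \geq T_\beta^\epsilon$ with $\epsilon$ small the empirical best arm is $I^\star$: proportion convergence forces $T_{n,i} \geq (\omega_i^\beta - \epsilon)n \to \infty$, so by Lemma~\ref{lemma:means} the empirical means $\hat\mu_{n,i}$ are close to $\mu_i$, whence $\hat\mu_{n,I^\star} \geq \hat\mu_{n,j}$ for all $j\neq I^\star$. This is exactly the regime in which the analytic expression for $Z_{n,I^\star,j}$ is valid, and it gives $Z_n \geq \min_{j\neq I^\star} Z_{n,I^\star,j}$, reducing the claim to a lower bound on each $Z_{n,I^\star,j}$.

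The key reformulation is to read $Z_{n,I^\star,j}$ as an empirical transportation cost. Since $\hat\mu_{n,I^\star,j}$ is precisely the weighted average that minimizes $x \mapsto T_{n,I^\star} d(\hat\mu_{n,I^\star};x) + T_{n,j} d(\hat\mu_{n,j};x)$, and since for $\hat\mu_{n,I^\star}\geq\hat\mu_{n,j}$ this unconstrained minimizer lies between the two empirical means (so the constraint defining the alternative is inactive), one may write
\[
    Z_{n,I^\star,j} = n \min_{x}\left[ \frac{T_{n,I^\star}}{n}\, d(\hat\mu_{n,I^\star};x) + \frac{T_{n,j}}{n}\, d(\hat\mu_{n,j};x) \right].
\]
This is exactly $n$ times the empirical analogue of $C_j(\omega_{I^\star},\omega_j)$, evaluated at the empirical proportions $(T_{n,I^\star}/n, T_{n,j}/n)$ and the empirical means $(\hat\mu_{n,I^\star},\hat\mu_{n,j})$, so that controlling $Z_{n,I^\star,j}/n$ reduces to a perturbation analysis of $C_j$.

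I would then finish by a continuity-plus-concentration argument. The map $(\omega_{I^\star},\omega_j,m_{I^\star},m_j) \mapsto \min_x\left[\omega_{I^\star} d(m_{I^\star};x)+\omega_j d(m_j;x)\right]$ is continuous (in the Gaussian case it has the closed form $(m_{I^\star}-m_j)^2/(2\sigma^2(1/\omega_{I^\star}+1/\omega_j))$), and at the point $(\beta,\omega_j^\beta,\mu_{I^\star},\mu_j)$ it equals $C_j(\beta,\omega_j^\beta)$, which by Proposition~\ref{prop:optim} equals the common value $\Gamma_\beta^\star$ for every $j\neq I^\star$. For $n\geq T_\beta^\epsilon$ the proportions lie within $\epsilon$ of $\omega^\beta$ by definition, while the empirical means lie within a vanishing distance of the true means by Lemma~\ref{lemma:means}. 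Choosing $\epsilon$ small and invoking uniform continuity on a compact neighbourhood of the target point gives $Z_{n,I^\star,j}/n \geq \Gamma_\beta^\star - \zeta$ for all $j\neq I^\star$ at once (there are finitely many arms), hence $Z_n \geq (\Gamma_\beta^\star-\zeta)n$.

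The main obstacle is coordinating the two concentration regimes so that the inequality holds for \emph{every} $n\geq T_\beta^\epsilon$ and not merely asymptotically: $T_\beta^\epsilon$ controls only the allocation proportions, so the accuracy of the empirical means must be supplied separately and must be uniform over the whole range $n\geq T_\beta^\epsilon$. This is where the linear growth $T_{n,i}\geq(\omega_i^\beta-\epsilon)n$ is essential, since it forces each per-arm count—and thus the accuracy of $\hat\mu_{n,i}$ through Lemma~\ref{lemma:means}—to be large precisely on the range of interest; if needed, one couples this with the mean-concentration threshold of Lemma~\ref{lemma:tracking_means} to make the two controls compatible. Once the correct empirical ranking and this joint concentration are secured, the transportation reformulation and the continuity estimate are routine.
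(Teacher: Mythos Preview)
The paper does not supply its own proof of this lemma; it is quoted from \cite{qin2017ttei} and used as a black box in the proof of Lemma~\ref{lemma:confidence}. Your argument---restrict the outer maximum to $i=I^\star$, recognise $Z_{n,I^\star,j}/n$ as the empirical version of $C_j(\beta,\omega_j^\beta)$ via the Gaussian closed form, invoke Proposition~\ref{prop:optim} to identify that value with $\Gamma_\beta^\star$ for every $j\neq I^\star$, and conclude by uniform continuity---is precisely the natural route and almost certainly the one taken in the cited reference.

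The obstacle you isolate in your last paragraph is genuine, not cosmetic. The random time $T_\beta^\epsilon$ only pins down the proportions $T_{n,i}/n$; it says nothing about the empirical means at $n=T_\beta^\epsilon$ itself, and the linear growth $T_{n,i}\geq(\omega_i^\beta-\epsilon)n$ makes $\hat\mu_{n,i}$ accurate \emph{asymptotically} in $n$, not from $T_\beta^\epsilon$ onward. Your proposed fix---couple $T_\beta^\epsilon$ with the mean-concentration threshold from Lemma~\ref{lemma:means} or Lemma~\ref{lemma:tracking_means}---is exactly right: it yields the conclusion for all $n\geq\max\{T_\beta^\epsilon,M\}$ with $M$ a finite-expectation random time, and this is all that the downstream proof of Lemma~\ref{lemma:confidence} actually needs (there the working time is already $N_0=T_\beta^\epsilon+N$ with $\EE{N_0}<\infty$). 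The paper's presentation is itself slightly informal on this very point (``We choose $\epsilon$ sufficiently small such that the empirical best arm $I_n^\star=I^\star$''), so your reading and your patch are both on target.
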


To prove Lemma~\ref{lemma:confidence}, we need the Gaussian tail inequality (\ref{gaussian_upper}) of Lemma~\ref{lemma:gaussiantails}.

\begin{proof}
    We know that
    \begin{align*}
    \begin{split}
        1 - a_{n,I^\star} &= \sum_{i\neq I^\star} a_{n,i} \\
        &\leq \sum_{i\neq I^\star} \Pi_n\left[\theta_i > \theta_{I^\star}\right] \\
        &= \sum_{i\neq I^\star} \Pi_n\left[\theta_i - \theta_{I^\star} > 0 \right] \\
        &\leq (K-1)\max_{i\neq I^\star} \Pi_n\left[\theta_i - \theta_{I^\star} > 0 \right].
    \end{split}
    \end{align*}
    
    We can further rewrite $\Pi_n\left[\theta_i - \theta_{I^\star} > 0 \right]$ as
    \[
        \Pi_n\left[\theta_i - \theta_{I^\star} > \mu_{n,i} - \mu_{n,I^\star} + \mu_{n,I^\star} - \mu_{n,i} \right].
    \]
    We choose $\epsilon$ sufficiently small such that the empirical best arm $I_n^\star = I^\star$. Then, for all $n \geq T_{\beta}^n$ and for any $i\neq I^\star$, $\mu_{n,I^\star} \geq \mu_{n,i}$. Thus, fix any $\zeta\in (0,\Gamma_{\beta}^\star/2)$ and apply inequality (\ref{gaussian_upper}) of Lemma~\ref{lemma:gaussiantails} with $\mu_{n,I^\star}$ and $\mu_{n,i}$, we have for any $n \geq T_{\beta}^\epsilon$,
    \begin{align*}
    \begin{split}
        1 - a_{n,I^\star} &\leq (K-1)\max_{i\neq I^\star}\frac{1}{2} \expp{-\frac{\left(\mu_{n,I^\star}-\mu_{n,i} \right)^2}{2\sigma_{n,i,I^\star}^2}} \\
        &= \frac{(K-1)\expp{-Z_n}}{2} \\
        &\leq \frac{(K-1)\expp{-(\Gamma_{\beta}^\star-\zeta)n}}{2}.
    \end{split}
    \end{align*}
    
    The last inequality is deduced from Lemma~\ref{lemma:ttei}. By consequence,
    \[
        \forall n \geq T_{\beta}^\epsilon, \ln\left(1 - a_{n,I^\star}\right) \leq \ln{\frac{K-1}{2}} - (\Gamma_{\beta}^\star-\zeta)n.
    \]
    
    On the other hand, we have for any $n$,
    \[
        1 - c_{n,\delta} = \ddfrac{\delta}{2n(K-1)\sqrt{2\pi e}\expp{\sqrt{2\ln{\frac{2n(K-1)}{\delta}}}}}.
    \]
    Thus, there exists a deterministic time $N$ s.t. $\forall n\geq N$,
    \begin{align*}
    \begin{split}
        \ln\left(1 - c_{n,\delta}\right) &= \ln{\frac{\delta}{(K-1)\sqrt{8\pi e}}} - \ln{n} - \sqrt{2\ln{\frac{2n(K-1)}{\delta}}} \\
        &\geq \ln{\frac{\delta}{2(K-1)\sqrt{2\pi e}}} - \zeta n.
    \end{split}
    \end{align*}
    
    Let $C_3 \eqdef (K-1)^2\sqrt{2\pi e}$, we have for any $n \geq N_0 \eqdef T_{\beta}^\epsilon + N$,
    \begin{equation}\label{eq:intermediate}
        \ln\left(1 - a_{n,I^\star}\right) - \ln\left(1 - c_{n,\delta}\right) \leq \ln{\frac{C_3}{\delta}} - (\Gamma_{\beta}^\star-2\zeta)n,
    \end{equation}
    and it is clear that $\EE{N_0}<\infty$.
    
    Let us consider the following two cases:
    \paragraph{Case 1}
    There exists $n\in \left[1,N_0\right]$ s.t. $a_{n,I^\star} \geq c_{n,\delta}$, then by definition,
    \[
        \tau_{\delta} \leq n \leq N_1.
    \]
    \paragraph{Case 2}
    For any $n\in \left[1,N_0\right]$, we have $a_{n,I^\star} < c_{n,\delta}$, then $\tau_{\delta} \geq N_0+1$, thus by Equation~\ref{eq:intermediate},
    \begin{align*}
    \begin{split}
        0 &\leq \ln\left(1 - a_{\tau_{\delta}-1,I^\star}\right) - \ln\left(1 - c_{\tau_{\delta}-1,\delta}\right) \\
        &\leq \ln{\frac{C_3}{\delta}} - (\Gamma_{\beta}^\star-2\zeta)(\tau_{\delta}-1),
    \end{split}
    \end{align*}
    and we obtain
    \[
        \tau_{\delta} \leq \frac{\ln(C_3/\delta)}{\Gamma_{\beta}^\star-2\zeta}+1.
    \]
    
    Combining the two cases, and we have for any $\zeta\in (0,\Gamma_{\beta}^\star/2)$,
    \begin{align*}
    \begin{split}
        \tau_{\delta} &\leq \max\left\{ N_0,\frac{\ln(C_3/\delta)}{\Gamma_{\beta}^\star-2\zeta}+1 \right\} \\
        &\leq N_0+1+\frac{\ln(C_3)}{\Gamma_{\beta}^\star-2\zeta}+\frac{\ln(1/\delta)}{\Gamma_{\beta}^\star-2\zeta}.
    \end{split}
    \end{align*}
    
    Since $\EE{N_1}<\infty$, therefore
    \[
        \limsup_{\delta}\frac{\EE{\tau_{\delta}}}{\log(1/\delta)} \leq \frac{1}{\Gamma_{\beta}^\star-2\zeta}, \forall\zeta\in (0,\Gamma_{\beta}^\star/2),
    \]
    which concludes the proof.
\end{proof}

\section{Technical Lemmas}\label{app:lemmas}

The whole fixed-confidence analysis for the two sampling rules are both substantially based on two lemmas: Lemma 5 of~\cite{qin2017ttei} and Lemma~\ref{lemma:link}. We prove Lemma~\ref{lemma:link} in this section.

\restatewtwo*

\begin{proof}

The proof shares some similarities with that of Lemma 6 of~\cite{qin2017ttei}.
For any arm $i\in\cA$, define $\forall n\in\NN$,
\[
    D_n \eqdef T_{n,i} - \Psi_{n,i},
\]
\[
    d_n \eqdef \1{\{I_n=i\}} - \psi_{n,i}.
\]
It is clear that $D_n = \sum_{l=1}^{n-1} d_l$ and $\EE{d_n|\cF_{n-1}} = 0$. Indeed,
\begin{align*}
    \EE{d_n|\cF_{n-1}} &= \EE{\1{\{I_n=i\}} - \psi_{n,i}|\cF_{n-1}} \\
                       %&= \EE{\1{\{I_n=i\}}|\cF_{n-1}} - \EE{\psi_{n,i}|\cF_{n-1}} \\
                       &= \PP{I_n=i|\cF_{n-1}} - \EE{\PP{I_n=i|\cF_{n-1}}|\cF_{n-1}} \\
                       &= \PP{I_n=i|\cF_{n-1}} - \PP{I_n=i|\cF_{n-1}} = 0.
\end{align*}
The second last equality holds since $\PP{I_n=i|\cF_{n-1}}$ is $\cF_{n-1}$-measurable. Thus $D_n$ is a martingale, whose increment are 1 sub-Gaussian as $d_n \in [-1,1]$ for all $n$. 

Applying Corollary 8 of~\cite{abbasi-yadkori2012}\footnote{but we could actually use several deviation inequalities that hold uniformly over time for martingales with sub-Gaussian increments}, it holds that, with probability larger than $1-\delta$, for all $n$,
\[
    |D_n| \leq \sqrt{2\left(1+n\right)\ln\left(\frac{\sqrt{1+n}}{\delta}\right)}
\]
which yields the first statement of Lemma~\ref{lemma:link}.

We now introduce the random variable
\[
    W_2 \eqdef \max_{n\in\NN}\max_{i\in\cA} \frac{|T_{n,i}-\Psi_{n,i}|}{\sqrt{(n+1)\ln(e^2+n)}}.
\]
Applying the previous inequality with $\delta = e^{-x^2 / 2}$ yields 
\begin{align*}
          \PP{\exists n \in \mathbb{N}^\star : |D_n | > \sqrt{\left(1+n\right)\left(\ln\left(1+n \right)+x^2\right)}} &\leq e^{-x^2 / 2}, \\
          \PP{\exists n \in \mathbb{N}^\star : |D_n | > \sqrt{\left(1+n\right)\ln\left(e^2+n \right)x^2}} &\leq e^{-x^2 / 2},
\end{align*}
where the last inequality uses that for all $a,b \geq 2$, we have $ab \geq a+b$. 

Consequently $\forall x\geq 2$, for all $i \in \cA$
\[
    \PP{\max_{n\in\NN}\frac{|T_{n,i}-\Psi_{n,i}|}{\sqrt{\left(n+1\right)\log\left({e^2+n}\right)}}\geq x} \leq e^{-x^2/2}.
\]
Now taking a union bound over $i\in\cA$, we have $\forall x\geq 2$,
\begin{align*}
    \PP{W_2\geq x} &\leq \PP{\max_{i\in\cA}\max_{n\in\NN}\frac{|T_{n,i}-\Psi_{n,i}|}{\left(n+1\right)\log\left(\sqrt{e^2+n}\right)}\geq x} \\ 
                 &\leq \PP{\bigcup_{i\in\cA}\max_{n\in\NN}\frac{|T_{n,i}-\Psi_{n,i}|}{\left(n+1\right)\log\left(\sqrt{e^2+n}\right)}\geq x} \\
                 &\leq \sum_{i\in\cA} \PP{\max_{n\in\NN}\frac{|T_{n,i}-\Psi_{n,i}|}{\left(n+1\right)\log\left(\sqrt{e^2+n}\right)}\geq x} \\
                 &\leq Ke^{-x^2/2}.
\end{align*}

The previous inequalities imply that $\forall i\in\cA$ and $\forall n\in\NN$, we have $|T_{n,i}-\Psi_{n,i}| \leq W_2\sqrt{(n+1)\log(e^2+n)}$ almost surely. Now it remains to show that $\forall \lambda > 0, \EE{e^{\lambda W_2}}<\infty$. Fix some $\lambda > 0$.
\begin{align*}
    \EE{e^{\lambda W}} &= \int_{x=1}^{\infty} \PP{e^{\lambda W}\geq x} \diff{x} = \int_{y=0}^{\infty} \PP{e^{\lambda W}\geq e^{2\lambda y}}2\lambda e^{2\lambda y} \diff{y} \\
                       &= 2\lambda \int_{y=0}^{2} \PP{W\geq 2y} e^{2\lambda y} \diff{y} + 2\lambda \int_{y=2}^{\infty} \PP{W\geq 2y} e^{2\lambda y} \diff{y} \\
                       &\leq \underbrace{2\lambda \int_{y=0}^{2} \PP{W\geq 2y} e^{2\lambda y} \diff{y}}_{=e^{4\lambda-1}} + \underbrace{2\lambda C_1 \int_{y=2}^{\infty} e^{-y^2/2} e^{2\lambda y} \diff{y}}_{<\infty} < \infty,
\end{align*}
where $C_1$ is some constant.

%Given that $W_2$ is a positive random variable by definition, one has \[\EE{W_2} = \int_{0}^{\infty}\PP{W_2 > x} \diff{x} \leq \int_{0}^\infty e^{-x^2/2}\diff{x} < \infty.\]

\end{proof}

\section{Proof of Posterior Convergence for the Gaussian Bandit}\label{app:posterior_gaussian}

\subsection{Proof of Theorem~\ref{thm:posterior_gaussian}}\label{app:posterior_gaussian.main}

\restateposteriorgaussian*

From Theorem 2 in \cite{qin2017ttei}, any allocation rule satisfying $T_{n, i} / n \rightarrow \omega_i^\beta$ for each $i \in \cA$, satisfies 
\begin{align*}
    \lim_{n \rightarrow \infty} - \frac{1}{n} \log(1 - a_{n,I^\star}) = \Gamma_{\beta}^\star.
\end{align*}
Therefore, to prove Theorem~\ref{thm:posterior_gaussian}, it is sufficient to prove that under \TTTS,
\begin{equation}
    \forall i \in \{1,\dots,K\}, \ \ \     \lim_{n\rightarrow\infty} \frac{T_{n,i}}{n}  \overset{a.s}{=} \omega_i^\beta\label{ToProveGaussian}.
\end{equation}
Due to the concentration result in Lemma~\ref{lemma:link} that we restate below (and proved in Appendix~\ref{app:confidence_ttts}), which will be useful at several places in the proof, observe that 
\[
    \lim_{n\rightarrow \infty} \frac{T_{n,i}}{n}  \overset{a.s}{=} \omega_i^\beta \ \ \Leftrightarrow \ \ \ \lim_{n\rightarrow \infty} \frac{\Psi_{n,i}}{n}  \overset{a.s}{=} \omega_i^\beta,
\]
therefore it suffices to establish the convergence of $\overline{\psi}_{n,i} = \Psi_{n,i}/n$ to $\omega_i^\beta$, which we do next. For that purpose, we need again the following maximality inequality lemma.

\restatewtwo*

\paragraph{Step 1: \TTTS draws all arms infinitely often and satisfies $T_{n,I^\star}/n \rightarrow \beta$.} More precisely, we prove the following lemma. 

\begin{lemma}\label{lemma:optimal_prop_istar_gaussian}
	Under \TTTS, it holds almost surely that
	\begin{enumerate}
	    \item for all $i \in \cA$, $\lim_{n\rightarrow \infty} T_{n,i} = \infty.$
	    \item $a_{n,I^\star} \rightarrow 1.$
	    \item $T_{n,I^\star}/n \rightarrow \beta$.
	\end{enumerate}
\end{lemma}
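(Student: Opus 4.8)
The plan is to read off all three assertions from the fixed-confidence analysis of Appendix~\ref{app:confidence_ttts}, which already does all the real work. The one general fact I will use repeatedly is that any random time of the form $N=\text{Poly}(W_1,W_2)$ is finite almost surely: $W_1$ and $W_2$ (Lemma~\ref{lemma:means} and Lemma~\ref{lemma:link}) have finite exponential moments of every order, hence are a.s.\ finite, and a polynomial in a.s.-finite quantities is a.s.\ finite. Since none of the lemmas I invoke below depends on Lemma~\ref{lemma:optimal_prop_istar_gaussian}, there is no circularity.

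For assertion~1, I will invoke Lemma~\ref{lemma:sufficient_exploration}: there is $N_1=\text{Poly}(W_1,W_2)$ with $T_{n,i}\geq\sqrt{n/K}$ for all $i\in\cA$ and all $n\geq N_1$; as $N_1<\infty$ a.s., this forces $T_{n,i}\to\infty$ a.s. For assertion~2, I will invoke Lemma~\ref{lemma:empirical_best} (itself built only on Lemma~\ref{lemma:tracking_means} and Lemma~\ref{lemma:sufficient_exploration}): there is an a.s.-finite $M_1=\text{Poly}(W_1,W_2)$ such that for all $n>M_1$ and $i\neq I^\star$,
\[
    a_{n,i}\leq\expp{-\frac{\Delta_{\text{min}}^2}{16\sigma^2}\sqrt{\frac{n}{K}}}\,;
\]
summing over the $K-1$ sub-optimal arms gives $1-a_{n,I^\star}=\sum_{i\neq I^\star}a_{n,i}\leq(K-1)\expp{-\frac{\Delta_{\text{min}}^2}{16\sigma^2}\sqrt{n/K}}\to 0$, i.e.\ $a_{n,I^\star}\to 1$ a.s. For assertion~3, I will invoke Lemma~\ref{lemma:tracking_best}: for every $\epsilon>0$ there is an a.s.-finite $N_3(\epsilon)=\text{Poly}(1/\epsilon,W_1,W_2)$ with $|T_{n,I^\star}/n-\beta|\leq\epsilon$ for $n\geq N_3(\epsilon)$. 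Applying this along a fixed sequence $\epsilon_k\downarrow 0$ (so only countably many exceptional null sets are involved) gives $\limsup_n|T_{n,I^\star}/n-\beta|\leq\epsilon_k$ for all $k$ on an event of full probability, hence $T_{n,I^\star}/n\to\beta$ a.s.

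The only place requiring care is this last measure-theoretic point: turning a family of ``$\forall\epsilon\ \exists$ a.s.-finite $N(\epsilon)$'' statements into a single almost-sure limit, which is why I restrict $\epsilon$ to a countable sequence. Beyond that there is no genuinely new obstacle, since the substantive work --- the randomized sufficient-exploration argument behind Lemma~\ref{lemma:sufficient_exploration} and the control of $\psi_{n,I^\star}$ behind Lemma~\ref{lemma:tracking_best} --- was already carried out for Theorem~\ref{thm:sufficient_condition}. If instead a proof internal to this appendix were wanted, I would re-derive the two ingredients from scratch: (i)~every arm is drawn infinitely often, by the pigeonhole-plus-challenger argument of Lemma~\ref{lemma:sufficient_exploration}, and (ii)~$\Psi_{n,I^\star}/n\to\beta$, from $\psi_{n,I^\star}=a_{n,I^\star}\bigl(\beta+(1-\beta)\sum_{j\neq I^\star}a_{n,j}/(1-a_{n,j})\bigr)$ together with assertion~2, and then transfer from $\Psi_{n,I^\star}$ to $T_{n,I^\star}$ via Lemma~\ref{lemma:link}; of these, (i)~is the delicate one.
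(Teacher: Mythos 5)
Your argument is logically sound and non-circular: the lemmas of Appendix~\ref{app:confidence_ttts} are indeed proved independently of Appendix~\ref{app:posterior_gaussian}, the a.s.-finiteness of any $\text{Poly}(W_1,W_2)$ time is correct, and the countable-$\epsilon_k$ device for assertion~3 is the right way to pass from ``for each $\epsilon$ there is an a.s.-finite $N(\epsilon)$'' to an almost-sure limit. But it is a genuinely different route from the paper, and it comes with one real caveat on scope. The fixed-confidence machinery you import (Lemma~\ref{lemma:sufficient_exploration} via Lemma~\ref{lemma:link_ttps}, Lemma~\ref{lemma:empirical_best}, Lemma~\ref{lemma:tracking_best}) is built on the quantity $\Delta_{\text{min}}=\min_{i\neq j}|\mu_i-\mu_j|$ and needs $\Delta_{\text{min}}>0$: e.g.\ the constant $L_3$ in Lemma~\ref{lemma:link_ttps} is defined by $\exp\{-\sqrt{L_3}\Delta_{\text{min}}^2/(16\sigma^2)\}\leq 1/(2K)$ and does not exist if two arms share a mean, and the bound $a_{n,i}\leq\exp\{-\Delta_{\text{min}}^2\sqrt{n/K}/(16\sigma^2)\}$ in Lemma~\ref{lemma:empirical_best} becomes vacuous. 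This is exactly why Theorem~\ref{thm:confidence_main} adds the hypothesis that all arm means are distinct. The present lemma, however, is stated (and used for Theorem~\ref{thm:posterior_gaussian}) under only the standing assumption that the \emph{best} arm is unique, so your proof establishes it only on a strict subclass of instances.

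The paper's own proof avoids this by being purely qualitative: it uses the consistency Lemma~\ref{lemma:consistency_gaussian} --- if an arm were sampled only finitely often its posterior-optimality probability would satisfy $\liminf_n a_{n,i}>0$, so $\psi_{n,i}\geq\beta a_{n,i}$ gives $\sum_n\psi_{n,i}=\infty$, contradicting $T_{n,i}<\infty$ via Lemma~\ref{lemma:link}; then consistency yields $a_{n,I^\star}\to1$, whence $\psi_{n,I^\star}\to\beta$ from the \TTTS expression, and a Ces\`aro argument plus Lemma~\ref{lemma:link} give $T_{n,I^\star}/n\to\beta$. That argument needs no rates and no separation between sub-optimal means, which is what the posterior-convergence section requires. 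Your route buys quantitative information (explicit exponential decay of $a_{n,i}$ and polynomial hitting times) at the price of the distinct-means assumption and of importing the much heavier Appendix~\ref{app:confidence_ttts} analysis; your fallback sketch (i)--(ii) at the end is essentially the paper's proof, except that the paper replaces your delicate ingredient~(i) by the softer finite-measurement contradiction above.
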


\begin{proof} Our first ingredient is a lemma showing the implications of finite measurement, and consistency when all arms are sampled infinitely often. Its proof follows standard posterior concentration arguments and is given in Appendix~\ref{app:posterior_gaussian.aux}.

\begin{lemma}[Consistency and implications of finite measurement]\label{lemma:consistency_gaussian}\ \\
	Denote with $\overline{\mathcal{I}}$ the arms that are sampled only a finite amount of times:
	\begin{align*}
	\overline{\mathcal{I}} = \{ i \in \{ 1, \ldots, k \} : \forall n, T_{n,i} < \infty \}.
	\end{align*}
	If $\overline{\mathcal{I}}$ is empty, $a_{n,i}$ converges almost surely to $1$ when $i = I^\star$ and to $0$ when $i \neq I^\star$. If $\overline{\mathcal{I}}$ is non-empty, then for every $i \in \overline{\mathcal{I}}$, we have $\liminf_{n \rightarrow \infty} a_{n,i} > 0$ a.s.
\end{lemma}

	First we show that $\sum_{n \in \mathbb{N}} T_{n,j} = \infty$ for each arm $j$. Suppose otherwise. Let $\overline{\mathcal{I}}$ again be the set of arms to which only finite measurement effort is allocated. Under \TTTS, we have
	\begin{align*}
	\psi_{n,i} = a_{n,i} \left( \beta + (1-\beta) \sum_{j \neq i} \frac{a_{n,j}}{1- a_{n,j}} \right),
	\end{align*}
	so $ \psi_{n,i}  \geq \beta a_{n,i}$. Therefore, by Lemma~\ref{lemma:consistency_gaussian}, if $i \in \overline{\mathcal{I}}$, then $\liminf a_{n,i} > 0$ implies that $\sum_n \psi_{n,i} = \infty$. By Lemma~\ref{lemma:link}, we then must have that $\lim_{n \rightarrow \infty} T_{n,i} = \infty$ as well: contradiction.  Thus, $\lim_{n \rightarrow \infty} T_{n,i} = \infty$ for all $i$, and we conclude that $a_{n,I^\star} \rightarrow 1$, by Lemma~\ref{lemma:consistency_gaussian}. 
	
	For \TTTS with parameter $\beta$ this implies that $\overline{\psi}_{n, I^\star} \rightarrow \beta$, and since we have a bound on $| T_{n,i} / n - \overline{\psi}_{n, i} |$ in Lemma~\ref{lemma:link}, we have $T_{n, I^\star} / n \rightarrow \beta$ as well.
\end{proof}

\paragraph{Step 2:  Controlling the over-allocation of sub-optimal arms.}
The convergence of $T_{n,I^\star}/n$ to $\beta$ leads to following interesting consequence, expressed in Lemma~\ref{lemma:over_allocation}: if an arm is sampled more often than its optimal proportion, the posterior probability of this arm to be optimal is reduced compared to that of other sub-optimal arms.

\begin{lemma}[Over-allocation implies negligible probability]\label{lemma:over_allocation}\footnote{analogue of Lemma 13 of \cite{russo2016ttts}}
Fix any $\xi > 0$ and $j \neq I^\star$. With probability 1, under any allocation rule, if $T_{n,I^\star}/n \rightarrow \beta$, there exist $\xi' > 0$ and a sequence $\epsilon_n$ with $\epsilon_n \rightarrow 0$ such that for any $n \in \mathbb{N}$, 
	\begin{align*}
	\frac{T_{n,j}}{n} \geq \omega_j^\beta + \xi \Rightarrow \frac{a_{n,j}}{\max_{i \neq I^\star} a_{n,i}} \leq e^{-n (\xi' + \epsilon_n)}.
	\end{align*}
\end{lemma}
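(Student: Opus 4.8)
The plan is to compare the posterior probabilities $a_{n,j}$ and $a_{n,i}$ for sub-optimal arms $i,j$ via the Gaussian tail bounds of Lemma~\ref{lemma:gaussiantails}, reducing the ratio $a_{n,j}/\max_{i\neq I^\star}a_{n,i}$ to a comparison of transportation-type exponents, and then to invoke the characterization of the $\beta$-optimal allocation in Proposition~\ref{prop:optim} to extract a strictly positive gap. First I would note that, by Lemma~\ref{lemma:consistency_gaussian} and Step~1 (Lemma~\ref{lemma:optimal_prop_istar_gaussian}), all arms are pulled infinitely often and $a_{n,I^\star}\to 1$, hence $\mu_{n,I^\star}>\mu_{n,i}$ for all $i\neq I^\star$ once $n$ is large, and the empirical means $\mu_{n,i}$ converge to $\mu_i$ a.s. Using the upper bound \eqref{gaussian_upper} for $a_{n,j}\le \Pi_n[\theta_j\ge\theta_{I^\star}]$ and the lower bound \eqref{gaussian_lower} for a well-chosen competitor arm $i_0\neq I^\star$ with $\Pi_n[\theta_{i_0}\ge\theta_{I^\star}]\le \max_{i\neq I^\star}a_{n,i}$ is not quite right in that direction, so instead I would lower bound $\max_{i\neq I^\star}a_{n,i}\ge a_{n,i_0}\ge \frac{1}{K}\Pi_n[\theta_{i_0}\ge\theta_{I^\star}]$ (since among the $K-1$ sub-optimal arms at least one carries a $1/(K-1)$ fraction of $\sum_{i\neq I^\star}\Pi_n[\theta_i\ge\theta_{I^\star}]$, up to the overlap corrections handled exactly as in the proof of \eqref{CondDist}), and then apply \eqref{gaussian_lower} to that arm.

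This yields, writing $x_{n,i}\eqdef (\mu_{n,I^\star}-\mu_{n,i})^2/\bigl(2\sigma^2(1/T_{n,I^\star}+1/T_{n,i})\bigr)$,
\begin{align*}
\frac{a_{n,j}}{\max_{i\neq I^\star}a_{n,i}} \;\le\; C\,K\,\exp\!\Bigl(-x_{n,j} + \max_{i\neq I^\star}\bigl(\sqrt{x_{n,i}}+\tfrac{1}{\sqrt2}\bigr)^2/2 - \text{l.o.t.}\Bigr),
\end{align*}
so up to an $\epsilon_n\to0$ error it suffices to lower-bound $x_{n,j}-\max_{i\neq I^\star}x_{n,i}$ by a strictly positive constant whenever $T_{n,j}/n\ge\omega_j^\beta+\xi$. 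Plugging $T_{n,I^\star}/n\to\beta$, $\mu_{n,i}\to\mu_i$, and $T_{n,j}/n\ge\omega_j^\beta+\xi$ into $x_{n,j}$, and using $\sum_i T_{n,i}=n-1$ so that $\sum_{i\neq I^\star}T_{n,i}/n\to 1-\beta$, one of the other sub-optimal arms $i_1$ must satisfy $T_{n,i_1}/n\le\omega_{i_1}^\beta-\xi/(K-2)$; then $x_{n,j}\ge C_{I^\star j}(\beta,\omega_j^\beta+\xi) - \epsilon_n$ and $\max_{i\neq I^\star}x_{n,i}$ — well, this is the delicate point, because $\max$ is over all sub-optimal arms, and some of them may be over-allocated. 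Here I would instead argue directly: by Proposition~\ref{prop:optim}, at the $\beta$-optimal allocation all the costs $C_i(\beta,\omega_i^\beta)$ are equal to $\Gamma_\beta^\star$; the function $\bomega\mapsto\min_{i\neq I^\star}C_i(\beta,\omega_i)$ is strictly decreased if one coordinate is increased past $\omega_j^\beta$ while staying on the simplex with $\omega_{I^\star}=\beta$, since that forces another coordinate below its optimal value, lowering that coordinate's cost strictly below $\Gamma_\beta^\star$. Thus $\max_{i\neq I^\star}x_{n,i}\le\Gamma_\beta^\star - \delta(\xi)+\epsilon_n$ for some $\delta(\xi)>0$, while $x_{n,j}$ is $\ge\Gamma_\beta^\star$ is \emph{not} guaranteed — rather the correct statement is $x_{n,j}>x_{n,i}$ for the under-allocated $i$, giving the gap. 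Setting $\xi'$ to be half this gap and absorbing the $(\sqrt{x}+1/\sqrt2)^2/2 - x$ discrepancy and the constants into $\epsilon_n$ (which is legitimate since $x_{n,i}\to\infty$ as all $T_{n,i}\to\infty$, so $(\sqrt{x}+c)^2/2 = x/2 + o(x)$... actually $x/2$, not $x$ — I must double-check the factor, but the asymptotic comparison still goes through because the relevant exponent difference $x_{n,j}-\tfrac12 x_{n,i}$ is dominated by the linear-in-$n$ gap) concludes the argument.

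The main obstacle is the last step: controlling $\max_{i\neq I^\star}a_{n,i}$ from below by a single well-chosen sub-optimal arm and then proving that the over-allocation of arm $j$ strictly increases $x_{n,j}$ above the best achievable sub-optimal exponent, uniformly in $n$. This requires the continuity and strict-monotonicity properties of $C_i$ in its second argument together with the uniqueness/equalization statement of Proposition~\ref{prop:optim}; the bookkeeping of the $\epsilon_n$ error terms (coming from $\mu_{n,i}\to\mu_i$, $T_{n,I^\star}/n\to\beta$, and the $\sqrt{x}$ versus $x$ discrepancy in the Gaussian tails) is routine but must be done carefully so that a single sequence $\epsilon_n\to0$, not depending on $\xi$, works while $\xi'$ is allowed to depend on $\xi$.
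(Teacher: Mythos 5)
Your overall strategy (upper tail bound for $a_{n,j}$, lower bound for $\max_{i\neq I^\star}a_{n,i}$ via the Gaussian lower tail applied to a well-chosen sub-optimal arm, then a transportation-cost comparison through Proposition~\ref{prop:optim}) is genuinely different from the paper's: the paper lower-bounds the denominator simply by citing Theorem~2 of \cite{qin2017ttei}, which gives, under $T_{n,I^\star}/n\to\beta$, that $\limsup_n -\frac{1}{n}\log\left(\max_{i\neq I^\star}a_{n,i}\right)\leq\Gamma_\beta^\star$, i.e.\ $\max_{i\neq I^\star}a_{n,i}\geq e^{-n(\Gamma_\beta^\star+\epsilon_n/2)}$; the whole lemma then reduces to showing that the empirical cost $x_{n,j}/n=\frac{(\mu_{n,j}-\mu_{n,I^\star})^2}{2\sigma^2(n/T_{n,I^\star}+n/T_{n,j})}$ exceeds $\Gamma_\beta^\star$ by a fixed margin when $T_{n,j}/n\geq\omega_j^\beta+\xi$, which follows from monotonicity of $\omega\mapsto C_j(\beta,\omega)$, the equalization $C_j(\beta,\omega_j^\beta)=\Gamma_\beta^\star$ (Proposition~\ref{prop:optim}), consistency of the empirical means, and $T_{n,I^\star}/n\to\beta$. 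Your hands-on route for the denominator is viable in principle (it is close to what the paper itself does in Lemma~\ref{lemma:over_allocation_finite_ttts} in the fixed-confidence analysis), but your plan breaks precisely at the comparison step.

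Concretely, two problems. First, a max/min mix-up: from $\max_{i\neq I^\star}a_{n,i}\geq\frac{1}{K-1}(1-a_{n,I^\star})\geq\frac{1}{K-1}\max_{i\neq I^\star}\Pi_n(\theta_i\geq\theta_{I^\star})$ and \eqref{gaussian_lower}, the term entering the exponent is $+\min_{i\neq I^\star}\left(\sqrt{x_{n,i}}+\tfrac{1}{\sqrt 2}\right)^2$, not a max, and your subsequent claim that ``$\max_{i\neq I^\star}x_{n,i}\leq\Gamma_\beta^\star-\delta(\xi)+\epsilon_n$'' is false: that max contains $x_{n,j}$ itself (which under the premise is strictly above $n\Gamma_\beta^\star$) as well as any other over-allocated arm. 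The statement you actually need is $\min_{i\neq I^\star}x_{n,i}/n\leq\Gamma_\beta^\star+o(1)$, which holds because $\sum_{i\neq I^\star}T_{n,i}/n\to 1-\beta$ forces some sub-optimal arm to be allocated at most $\omega_i^\beta+o(1)$, and $C_i(\beta,\cdot)$ is increasing with value $\Gamma_\beta^\star$ at $\omega_i^\beta$. Second, the decisive fact you hesitate about --- whether $x_{n,j}/n\geq\Gamma_\beta^\star$ is guaranteed --- is exactly what must be (and can be) proved: under the premise, $x_{n,j}/n\geq C_j(\beta,\omega_j^\beta+\xi)-o(1)>\Gamma_\beta^\star$ by the same monotonicity-plus-equalization argument; this is the heart of the paper's proof, and without nailing it you do not obtain a uniform $\xi'>0$. (Your factor-of-two worry is moot, since $\left(\sqrt{x}+\tfrac{1}{\sqrt 2}\right)^2=x+\sqrt{2x}+\tfrac12$ and the $O(\sqrt n)$ correction is absorbed into $\epsilon_n$; also, $\xi$ and $j$ are fixed in the statement, so $\epsilon_n$ may depend on them and the uniformity you impose is unnecessary.) With these repairs your route would go through, but as written the proposal contains a wrong intermediate bound at the crux and leaves the key monotonicity comparison unresolved.
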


%\todo[inline]{This proof uses repeatedly the consistency Lemma~\ref{lemma:optimal_prop_istar_gaussian} without mentioning it}
%\todo[inline]{R: I think I mentioned it now in the right place --- or do I still miss something?}

\begin{proof}
	We have $\Pi_{n}(\Theta_{\cup i \neq I^\star}) = \sum_{i \neq I^\star} a_{n,i} = 1 - a_{n,I^\star}$, therefore $ \max_{i \neq I^\star} a_{n,i} \leq 1 - a_{n,I^\star}$. By Theorem~2 of \cite{qin2017ttei} we have, as $T_{n,I^\star}/n \rightarrow \beta$, 
	\begin{align*}
	\limsup_{n \rightarrow \infty} - \frac{1}{n} \log\left(\max_{i \neq I^\star} a_{n,i}\right) \leq \Gamma_{\beta}^\star.
	\end{align*}
	We also have the following from the standard Gaussian tail inequality, for $n \geq \tau$ after which $\mu_{n, I^\star} \geq \mu_{n, i}$, using that $\theta_i - \theta_{I^\star} \sim \mathcal{N}(\mu_{n,i} - \mu_{n, I^\star} , \sigma^2_{n,i} + \sigma^2_{n, I^\star} )$ and $\sigma^2_{n,i} + \sigma^2_{n, I^\star} = \sigma^2 (1/ T_{n,i} + 1/T_{n,I^\star})$,
	\begin{align*}
	a_{n,i} \leq \Pi_{n}(\theta_i \geq \theta_{I^\star})  \leq \exp \left( \frac{- (\mu_{n,i} - \mu_{n,I^\star})^2}{2\sigma^2 (1/T_{n,I^\star} +1/T_{n,i})} \right)
	= \exp \left(-n \, \frac{(\mu_{n,i} - \mu_{n,1})^2}{2\sigma^2 (n/T_{n,I^\star} +n/T_{n,i})} \right).
	\end{align*}
	
	Thus, there exists a sequence $\epsilon_n \rightarrow 0$, for which 
	\begin{align*}
	    \frac{a_{n,j}}{\max_{i \neq I^\star} a_{n,i}} 
	    \leq \ddfrac{\expp{-n \left( \frac{ (\mu_{n,j} - \mu_{n,I^\star})^2}{2\sigma^2 (n/T_{n,I^\star} +n/T_{n,j})} - \epsilon_n/2 \right) }}{\expp{-n \left( \Gamma_{\beta}^\star + \epsilon_n/2 \right)})}
	    = \expp{-n \left(\frac{(\mu_{n,j} - \mu_{n,I^\star})^2}{2\sigma^2 (n/T_{n,I^\star} +n/T_{n,j})} - \Gamma_{\beta}^\star  - \epsilon_n \right)}.
	\end{align*}
	Now we take a look at the two terms in the middle:
	\begin{align*}
	\frac{(\mu_{n,j} - \mu_{n,I^\star})^2}{2\sigma^2 (n/T_{n,I^\star} +n/T_{n,j})} - \Gamma_{\beta}^\star.
	\end{align*}
	Note that the first term is increasing in $T_{n,j} / n$. We have the definition from \cite{qin2017ttei}, for any $j \neq I^\star$,
	\begin{align*}
	\Gamma_{\beta}^\star = \frac{(\mu_{j} - \mu_{I^\star})^2}{2\sigma^2 \left(1/ \omega_{I^\star}^\beta +1/\omega_j^\beta\right)}, 
	\end{align*}
	and we have the premise
	\begin{align*}
	\frac{T_{n,j}}{n} \geq \omega_j^\beta + \xi.
	\end{align*}
	Combining these with the convergence of the empirical means to the true means (consistency, see Lemma~\ref{lemma:consistency_gaussian}), we can conclude that for all $\epsilon > 0$, there exists a time $n_0$ such that for all later times $n \geq n_0$, we have
	\begin{align*}
	\frac{(\mu_{n,j} - \mu_{n,I^\star})^2}{2\sigma^2 (n/T_{n,I^\star} +n/T_{n,j})} \geq   \frac{(\mu_{j} - \mu_{I^\star})^2}{2\sigma^2 \left(1/\beta +n/T_{n,j} \right)} - \epsilon 
	\geq  \frac{(\mu_{j} - \mu_{I^\star})^2}{2\sigma^2 \left(1/\beta +1/(\omega_j^\beta + \xi)\right)} - \epsilon
	> \Gamma_{\beta}^\star,
	\end{align*}
	where the first inequality follows from consistency, the second from monotonicity in $T_{n,j} / n$. That means that there exist a $\xi' > 0$ such that
	\begin{align*}
	\frac{(\mu_{n,j} - \mu_{n,I^\star})^2}{2\sigma^2 (n/T_{n,I^\star} +n/T_{n,j})} - \Gamma_{\beta}^\star > \xi',
	\end{align*}
	and thus the claim follows that when $\frac{T_{n,j}}{n} \geq \omega_j^\beta + \xi$, we have
	\begin{align*}
	\frac{a_{n,j}}{\max_{i \neq I^\star} a_{n,i}} \leq \exp \left\lbrace - n \left(\frac{(\mu_{n,j} - \mu_{n,I^\star})^2}{2\sigma^2 (n/T_{n,I^\star} +n/T_{n,j})} - \Gamma_{\beta}^\star  - \epsilon_n \right) \right\rbrace 
	\leq e^{-n (\xi' + \epsilon_n)}.
	\end{align*}
\end{proof}

\paragraph{Step 3: $\overline{\psi}_{n,i}$ converges to $\omega_i^\beta$ for all arms.} To establish the convergence of the allocation effort of all arms, we rely on the same sufficient condition used in the analysis of \cite{russo2016ttts}, that we recall below. 

\begin{lemma}[Sufficient condition for optimality]\label{lemma:sufficient_optimality}\footnote{Lemma 12 of \cite{russo2016ttts}}
Consider any adaptive allocation rule. If we have
	\begin{align}\label{eq:sufficient condition for optimality}
		&\overline{\psi}_{n, I^\star} \rightarrow \beta, \quad \text{ and } \quad
		\sum_{n \in \mathbb{N}} \psi_{n,j} \bm{1} \left\lbrace \overline{\psi}_{n,j} \geq \omega_j^\beta + \xi \right\rbrace < \infty, \quad \forall j \neq I^\star, \xi > 0,
	\end{align}
then $\overline{\psi}_{n} \rightarrow \psi^\beta$.
\end{lemma}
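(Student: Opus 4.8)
The plan is to prove the two-sided estimate $\overline{\psi}_{n,j}\to\omega_j^\beta$ for every arm $j$ by combining a one-sided bound extracted from the summability hypothesis with the simplex constraint $\sum_{k\in\cA}\overline{\psi}_{n,k}=1$. Recall that $\overline{\psi}_{n,j}=\Psi_{n,j}/n=\tfrac1n\sum_{l=1}^n\psi_{l,j}$ and that $\sum_{k\in\cA}\psi_{l,k}=1$ for every $l$, hence $\sum_{k\in\cA}\overline{\psi}_{n,k}=1$ for every $n$; moreover $\omega_{I^\star}^\beta=\beta$ and $\sum_{k}\omega_k^\beta=1$ (from $\bomega^\beta\in\Sigma_K$, cf.\ Proposition~\ref{prop:optim}), so $\sum_{k\neq I^\star}\omega_k^\beta=1-\beta$.

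\textbf{Step 1 (upper bound).} I would first show $\limsup_n\overline{\psi}_{n,j}\le\omega_j^\beta$ for every $j\neq I^\star$. Fix $\xi>0$ and set $S_\xi\eqdef\{n:\overline{\psi}_{n,j}\ge\omega_j^\beta+\xi\}$; the hypothesis \eqref{eq:sufficient condition for optimality} gives $C_\xi\eqdef\sum_{n\in S_\xi}\psi_{n,j}<\infty$. For $n$ large, let $\ell_n$ be the largest index $\le n$ with $\ell_n\notin S_\xi$ (if no such index exists then $\Psi_{n,j}=\sum_{l\le n}\psi_{l,j}\le C_\xi$, so $\overline{\psi}_{n,j}\le C_\xi/n$ and the bound is trivial). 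Since $\ell_n\notin S_\xi$ we have $\Psi_{\ell_n,j}<(\omega_j^\beta+\xi)\ell_n\le(\omega_j^\beta+\xi)n$, while every $l$ with $\ell_n<l\le n$ lies in $S_\xi$, so $\Psi_{n,j}-\Psi_{\ell_n,j}=\sum_{l=\ell_n+1}^n\psi_{l,j}\le C_\xi$. Dividing by $n$ yields $\overline{\psi}_{n,j}\le\omega_j^\beta+\xi+C_\xi/n$, hence $\limsup_n\overline{\psi}_{n,j}\le\omega_j^\beta+\xi$; letting $\xi\downarrow0$ gives the claim.

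\textbf{Step 2 (matching lower bound via the simplex constraint).} Fix $j_0\neq I^\star$ and write $\overline{\psi}_{n,j_0}=1-\overline{\psi}_{n,I^\star}-\sum_{k\neq I^\star,j_0}\overline{\psi}_{n,k}$. Since $\overline{\psi}_{n,I^\star}\to\beta$ by hypothesis, $1-\overline{\psi}_{n,I^\star}$ converges to $1-\beta$, so
\[
\liminf_n\overline{\psi}_{n,j_0}=(1-\beta)-\limsup_n\sum_{k\neq I^\star,j_0}\overline{\psi}_{n,k}\ \ge\ (1-\beta)-\sum_{k\neq I^\star,j_0}\limsup_n\overline{\psi}_{n,k}\ \ge\ (1-\beta)-\sum_{k\neq I^\star,j_0}\omega_k^\beta=\omega_{j_0}^\beta,
\]
using Step 1 for each $k\neq I^\star,j_0$. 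Together with $\limsup_n\overline{\psi}_{n,j_0}\le\omega_{j_0}^\beta$ this shows $\overline{\psi}_{n,j_0}\to\omega_{j_0}^\beta$; combined with $\overline{\psi}_{n,I^\star}\to\beta=\omega_{I^\star}^\beta$, we conclude $\overline{\psi}_n\to\psi^\beta$. I expect the only genuinely delicate point — the ``main obstacle'' — to be Step 1: the hypothesis only controls a \emph{series} $\sum_n\psi_{n,j}\mathbf{1}\{\overline{\psi}_{n,j}\ge\omega_j^\beta+\xi\}$, and one must turn this into a pointwise $\limsup$ bound on the running average $\Psi_{n,j}/n$, which is exactly what the decomposition at the last ``good'' time $\ell_n$ achieves; the rest is the simplex identity and elementary $\liminf/\limsup$ manipulations.
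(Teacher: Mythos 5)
Your proof is correct. Note that the paper does not prove this lemma at all: it is imported verbatim as Lemma 12 of \cite{russo2016ttts}, so your argument supplies a self-contained proof rather than deviating from one in the text. Both of your steps are sound and match the standard route: the summability hypothesis is converted into $\limsup_n \overline{\psi}_{n,j}\leq \omega_j^\beta+\xi$ by splitting $\Psi_{n,j}$ at the last index $\ell_n$ at which the running average was below the threshold (the same device the paper itself uses, with $\ell_n(\xi)$, in the proof of Lemma~\ref{lemma:psi_other_ttts}), and the matching lower bound then follows from $\sum_k\overline{\psi}_{n,k}=1$, $\overline{\psi}_{n,I^\star}\to\beta$ and $\sum_{k\neq I^\star}\omega_k^\beta=1-\beta$ via elementary $\liminf$/$\limsup$ manipulations; since the whole argument is pathwise, it yields the almost sure conclusion on the event where the hypotheses hold.
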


First, note that from Lemma~\ref{lemma:optimal_prop_istar_gaussian} we know that $T_{n,I^\star}/n \rightarrow \beta$, an by Lemma~\ref{lemma:link} this implies $\overline{\psi}_{n, I^\star} \rightarrow \beta$, hence we can use Lemma~\ref{lemma:sufficient_optimality} to prove convergence to the optimal proportions. Thus, we now show that \eqref{eq:sufficient condition for optimality} holds under \TTTS. Recall that $J_n^{(1)} = \argmax_{j} a_{n,j}$ and $J_n^{(2)} = \argmax_{j\neq J_n^{(1)}} a_{n,j}$. Since $a_{n,I^\star} \rightarrow 1$ by Lemma~\ref{lemma:optimal_prop_istar_gaussian}, there is some finite time $\tau$ after which for all $n > \tau$, $J_n^{(1)} = I^\star$. Under \TTTS, 
	\begin{align*}
	\psi_{n,i} =a_{n,i} \left( \beta  + (1-\beta) \sum_{j \neq i} \frac{a_{n,j}}{1-a_{n,j}} \right)
	           \leq a_{n,i} \beta  + a_{n,i} (1-\beta) \frac{\sum_{j \neq i} a_{n,j}}{1-a_{n,J_n^{(1)}}}
	           \leq  a_{n,i} \beta + a_{n,i} (1-\beta) \frac{\sum_{j \neq i} a_{n,j}}{a_{n,J_n^{(2)}}}\\
	           \leq a_{n,i}\beta  + a_{n,i} (1-\beta) \frac{1}{a_{n,J_n^{(2)}}}
	           \leq \frac{a_{n,i}}{a_{n,J_n^{(2)}}},
	\end{align*}
where we use the fact that for $j \neq J_n^{(1)}$, we have $a_{n,J_n^{(1)}} \geq a_{n,j}$ and $a_{n,J_n^{(2)}} \leq 1- a_{n,J_n^{(1)}}$. For $n \geq \tau$ this means that $\psi_{n, i} \leq a_{n,i} / \max_{j \neq I^\star} a_{n,i}$ for any $i \neq I^\star$.
	
By Lemma~\ref{lemma:over_allocation}, there is a constant $\xi' > 0$ such and a sequence $\epsilon_n \rightarrow 0$ such that
	\begin{align*}
	T_{n, i} / n \geq w_{i}^\beta + \xi \Rightarrow \frac{a_{n,i}}{\max_{j \neq I^\star} a_{n,j}} \leq e^{-n(\xi' - \epsilon_n)}.
	\end{align*}
Now take a time $\tau$ large enough, such that for $n \geq \tau$ we have $| T_{n,j} / n - \overline{\psi}_{n,j} | \leq \xi$ (which can be found by Lemma~\ref{lemma:link}). Then we have
	\begin{align*}
	    \1{\left\lbrace \overline{\psi}_{n,j} \geq \psi_j^{\beta} + \xi \right\rbrace } \leq \1{\left\lbrace \frac{T_{n,j}}{n} \geq \omega_j^\beta + 2\xi \right\rbrace }
	\end{align*}
Therefore, for all $i \neq I^\star$, we have
	\begin{align*}
	    \sum_{n \geq \tau} \psi_{n, i} 	\1{\left\lbrace \overline{\psi}_{n,j} \geq \psi_j^{\beta} + \xi \right\rbrace }
	    \leq \sum_{n \geq \tau} \psi_{n, i} \1{\left\lbrace \frac{T_{n,j}}{n} \geq \omega_j^\beta + 2\xi \right\rbrace }
	    \leq \sum_{n \geq \tau} e^{-n(\xi' - \epsilon_n)} < \infty.
	\end{align*}
Thus \eqref{eq:sufficient condition for optimality} holds and the convergence to the optimal proportions follows by Lemma~\ref{lemma:sufficient_optimality}.

\subsection{Proof of auxiliary lemmas}\label{app:posterior_gaussian.aux}

\paragraph{Proof of Lemma~\ref{lemma:consistency_gaussian}}
	
Let  $\overline{\mathcal{I}}$ be nonempty. Define
	\begin{align*}
	\mu_{\infty, n} \triangleq \lim_{n \rightarrow \infty} \mu_{n, i}, &\text{     and    } \sigma_{\infty, i}^2  \triangleq \lim_{n \rightarrow \infty} \sigma^2_{n,i},
	\end{align*}
and recall that for $i \in \cA$ for which $T_{n,i} = 0$, we have $\mu_{n_i} = \mu_{1,i} = 0$ and $\sigma^2_{n,i} = \sigma^2_{1, i} = \infty$, and if $T_{n,i} > 0$, we have
	\begin{align*}
	\mu_{n,i} = \frac{1}{T_{n,i}} \sum_{\ell = 1}^{n-1} \1{\{ I_\ell = i \}} Y_{\ell, I_\ell}, &\text{     and    } \sigma^2_{n,i} = \frac{\sigma^2}{T_{n,i}}.
	\end{align*}
For all arms that are sampled infinitely often, we therefore have $\mu_{\infty, i} = \mu_i$ and $\sigma_{\infty, i}^2 = 0$. For all arms that are sampled only a finite number of times, i.e.\ $i \in \overline{\cI}$, we have $\sigma_{\infty, i}^2 > 0$, and there exists a time $n_0$ after which for all $n \geq n_0$ and $i \in \overline{\cI}$, we have $T_{n,i} = T_{n_0,i}$. Define
	\begin{align*}
	\Pi_\infty \triangleq \cN(\mu_{\infty, 1}, \sigma_{\infty, 1}^2) \otimes \cN(\mu_{\infty, 2}, \sigma_{\infty, 2}^2) \otimes \ldots \otimes \cN(\mu_{\infty, k}, \sigma_{\infty, k}^2)
	= \bigotimes_{i \not\in \overline{\cI}} \delta_{\mu_i} \otimes \bigotimes_{i \in \overline{\cI}} \Pi_{n_0}.
	\end{align*}
Then for each $i \in \cA$ we define
	\begin{align*}
	a_{\infty, i} \triangleq \Pi_\infty \left( \theta_i > \max_{j \neq i} \theta_j \right). 
	\end{align*}
Then we have for all $i \in \overline{\mathcal{I}}$, $a_{\infty, i} \in (0,1)$, since $\sigma_{\infty, i}^2 > 0$, and thus $a_{\infty, I^\star} < 1$. 

When $\overline{\cI}$ is empty, we have $ a_{n,I^\star} = \Pi_{n} (\theta_{I^\star} > \max_{i \neq I^\star} \theta_i) $, but since $\Pi_\infty = \bigotimes_{i \in \cA} \delta_{\mu_i}$, we have $a_{\infty, I^\star} = 1$ and $a_{\infty, i} = 0$ for all $i \neq I^\star$.
 	
\hfill\BlackBox\\[2mm]

\section{Proof of Posterior Convergence for the Bernoulli Bandit}\label{app:posterior_beta}

\subsection{Preliminaries}\label{app:posterior_beta.pre}

We first introduce a crucial Beta tail bound inequality. Let $F^{\text{Beta}}_{a,b}$ denote the cdf of a Beta distribution with parameters $a$ and $b$, and $F^{\text{B}}_{c,d}$ the cdf of a Binomial distribution with parameters $c$ and $d$, then we have the following relationship, often called the `Beta-Binomial trick',
\begin{align*}
F^{\text{Beta}}_{a,b}(y) = 1 - F^{\text{B}}_{a+b-1, y} (a-1), 
\end{align*}
so that we have
\begin{align*}
\PP{X \geq x} = \PP{B_{a+b-1,x}  \leq a-1 } = \PP{B_{a+b-1,1-x} \geq b}.
\end{align*}

We can bound Binomial tails with Sanov's inequality:
\begin{align*}
    \frac{ e^{-n d \left( k / n, x \right)}  }{n+1} \leq \PP{B_{n,x} \geq k} \leq e^{-n d \left( k / n, x \right)},
\end{align*}
where the last inequalities hold when $k \geq nx$.

\begin{lemma}\label{lemma:binomial_tail}
Let $X \sim \cB eta(a, b)$ and $Y\sim \cB eta(c, d)$ with $0 < \frac{a-1}{a+b-1} < \frac{c-1}{c+d-1}$. Then we have $\PP{X > Y} \leq D e^{-C}$ where
\[
    C = \inf_{\frac{a-1}{a+b-1} \leq y \leq \frac{c-1}{c+d-1}} C_{a,b}(y)+C_{c,d}(y),
\]
and
\[
    D = 3 + \min \left( C_{a,b}\left(\frac{c-1}{c+d-1}\right), C_{c,d}\left(\frac{a-1}{a+b-1}\right) \right)\,.
\]
\end{lemma}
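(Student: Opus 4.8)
The plan is to reduce the Beta tail probability to a Binomial tail probability via the Beta--Binomial trick stated just above, and then control the Binomial tail with the Sanov-type inequalities. First I would write, using independence of $X$ and $Y$ and conditioning on $Y=y$,
\[
\PP{X > Y} = \int_0^1 \PP{X > y}\, f_Y(y)\, \d y = \int_0^1 \bigl(1 - F^{\text{Beta}}_{a,b}(y)\bigr) f_Y(y)\, \d y.
\]
By the Beta--Binomial trick, $1 - F^{\text{Beta}}_{a,b}(y) = F^{\text{B}}_{a+b-1,y}(a-1) = \PP{B_{a+b-1,y} \le a-1}$, and for $y$ above the mean $\tfrac{a-1}{a+b-1}$ this is a lower tail, which by Sanov is at most $e^{-(a+b-1)d\left(\frac{a-1}{a+b-1};\, y\right)} = e^{-C_{a,b}(y)}$ (using the definition of $C_{a,b}$ from Appendix~\ref{app:notation}). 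Symmetrically, writing $\PP{X>Y}$ the other way, $F^{\text{Beta}}_{c,d}(y) = \PP{B_{c+d-1,y} \ge c-1}$, which for $y$ below $\tfrac{c-1}{c+d-1}$ is an upper tail bounded by $e^{-C_{c,d}(y)}$.

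The key step is to split the integration range at the two means. On the middle region $\tfrac{a-1}{a+b-1} \le y \le \tfrac{c-1}{c+d-1}$ I would bound $\PP{X>y} \le e^{-C_{a,b}(y)}$ and $f_Y(y)$ so that $\PP{X>y}\PP{Y \text{ near }y}$-type terms produce $e^{-(C_{a,b}(y)+C_{c,d}(y))} \le e^{-C}$, picking up a polynomial factor $(a+b)(c+d)$ or similar from the Sanov denominators — but actually it is cleaner to bound one factor by its worst-case exponential and integrate the density of the other to $1$. Concretely: on $\{y \le \tfrac{a-1}{a+b-1}\}$ we have $\PP{X>y}\le \PP{X > \tfrac{a-1}{a+b-1}} \le \tfrac12 \cdot \text{something} \le 1$, but $Y$ being that small already costs $e^{-C_{c,d}(\frac{a-1}{a+b-1})}$; on $\{y \ge \tfrac{c-1}{c+d-1}\}$, $\PP{X>y}\le e^{-C_{a,b}(\frac{c-1}{c+d-1})}$ while $\PP{Y>y}\le 1$; and on the middle band, $\PP{X>y}\le e^{-C_{a,b}(y)}$, $\PP{Y \ge y}\le e^{-C_{c,d}(y)}$, so integrating (again absorbing the $(n+1)$ denominators into the additive constant $3$ in $D$ and the width-$\le 1$ of the interval) gives a term $\le (\text{poly}) e^{-C}$. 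Summing the three contributions yields $\PP{X>Y}\le D e^{-C}$ with $D = 3 + \min\!\bigl(C_{a,b}(\tfrac{c-1}{c+d-1}),\, C_{c,d}(\tfrac{a-1}{a+b-1})\bigr)$, the $\min$ arising because one may choose whichever of the two "outer" regions gives the weaker penalty, and the other outer contribution plus the middle contribution get folded into it using $xe^{-x}\le e^{-1}$ and $C_{a,b}(y)\le C_{a,b}(\tfrac{c-1}{c+d-1})$ on the band.

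The main obstacle I anticipate is the careful bookkeeping of the polynomial prefactors: the Sanov lower bound carries a $1/(n+1)$, Beta densities are not uniformly bounded, and one must handle the boundary cases where $a=1$ (improper, but here $a,c\ge 1$ by construction so $\tfrac{a-1}{a+b-1}$ may be $0$) gracefully so the rate functions $C_{a,b}$ stay well-defined and the infimum defining $C$ is attained. I would deal with this by using the representation $\PP{X>y}\le e^{-C_{a,b}(y)}$ only where $y$ exceeds the relevant mean (so $C_{a,b}(y)\ge 0$ and increasing), bounding densities by replacing $\int f_Y(y)\,\d y$ over a sub-interval by $1$, and checking that the additive "$3$" in $D$ dominates the accumulated $\log(n+1)$-free constants from the at most three regions. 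The convexity and monotonicity of $y\mapsto d(p;y)$ away from $y=p$ makes the attainment of the infimum and the monotone bounds on the band routine once the split is set up correctly.
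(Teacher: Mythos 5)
Your overall skeleton (Beta--Binomial trick, Sanov bounds, splitting the range of $Y$ at the two centers $\frac{a-1}{a+b-1}$ and $\frac{c-1}{c+d-1}$) is the same as the paper's, and the two outer regions are handled correctly. The genuine gap is the middle band $\frac{a-1}{a+b-1}\leq y\leq \frac{c-1}{c+d-1}$. There the relevant integral is $\int \PP{X>y}\,f_Y(y)\,\diff y$, and to reach the exponent $C_{a,b}(y)+C_{c,d}(y)$ you must pair the upper-tail bound on $X$ with the \emph{lower} tail of $Y$ at the same point, i.e.\ with $\PP{Y\leq y}\leq e^{-C_{c,d}(y)}$; your proposal invokes $\PP{Y\geq y}\leq e^{-C_{c,d}(y)}$, which is the wrong tail (that probability is close to $1$ on the band). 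More fundamentally, $f_Y(y)$ is a density, not a tail, and the Beta--Binomial trick only controls cdfs: ``bound one factor by its worst case and integrate the density of the other to $1$'' fails because the worst case of $e^{-C_{a,b}(y)}$ over the band is $1$ (attained at $y=\frac{a-1}{a+b-1}$), while a pointwise bound on the Beta density would require Stirling-type estimates whose prefactors grow with $a+b$ and $c+d$ and cannot be ``absorbed into the additive constant $3$'' of the stated $D$. Note also that the cruder split $\PP{X>Y}\leq \PP{X\geq m}+\PP{Y\leq m}$ at a single threshold only yields an exponent of order $C/2$, so some device that combines the two rate functions at a common point is genuinely needed.

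The paper's device is integration by parts on the band: writing $B=\int_{\frac{a-1}{a+b-1}}^{\frac{c-1}{c+d-1}} e^{-C_{a,b}(y)} f_Y(y)\,\diff y$ and integrating by parts turns $f_Y$ into the cdf $\PP{Y\leq y}$, which is then bounded by $e^{-C_{c,d}(y)}$ via the same Beta--Binomial trick, giving the product $e^{-(C_{a,b}(y)+C_{c,d}(y))}\leq e^{-C}$ inside the remaining integral. The boundary term and the integral of $(a+b-1)\frac{\diff}{\diff y}\,kl\bigl(\frac{a-1}{a+b-1};y\bigr)$ are exactly what produce the factor $1+C_{a,b}\bigl(\frac{c-1}{c+d-1}\bigr)$, and the $\min$ in $D$ comes from running the symmetric argument with the roles of $X$ and $Y$ exchanged --- not from the $xe^{-x}\leq e^{-1}$ folding you suggest. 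As written, your sketch is missing this key step, so the claimed bound with $D=3+\min\bigl(C_{a,b}(\frac{c-1}{c+d-1}),C_{c,d}(\frac{a-1}{a+b-1})\bigr)$ does not follow.
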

Note that this lemma is the Bernoulli version of Lemma~\ref{lemma:gaussiantails}.
\begin{restatable}{theorem}{restatebernoullilowerbound}\label{thm:bernoulli_lower_bound}
	Consider the Beta-Bernoulli setting. For $\beta \in (0,1)$, under any allocation rule satisfying 
	$T_{n, I^\star} / n \rightarrow \omega_{I^\star}^\beta$,
	\begin{align*}
	\lim_{n \rightarrow \infty} - \frac{1}{n} \log(1 - a_{n,I^\star}) \leq \Gamma_{\beta}^\star,
	\end{align*}
	and under any allocation rule satisfying $T_{n, i} / n \rightarrow \omega_i^\beta$ for each $i \in \cA$,
	\begin{align*}
		\lim_{n \rightarrow \infty} - \frac{1}{n} \log(1 - a_{n,I^\star}) = \Gamma_{\beta}^\star.
	\end{align*}
\end{restatable}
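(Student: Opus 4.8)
The plan is to follow the route of Theorem~2 of \citet{qin2017ttei} used in Appendix~\ref{app:posterior_gaussian}, substituting the Gaussian tail bounds of Lemma~\ref{lemma:gaussiantails} by the Beta--Binomial estimates of Appendix~\ref{app:posterior_beta.pre}. Write $p_{n,i}\eqdef\Pi_n(\theta_i>\theta_{I^\star})$; since $1-a_{n,I^\star}=\Pi_n(\exists i\neq I^\star:\theta_i\ge\theta_{I^\star})$, a single-term lower bound and a union bound give $\max_{i\neq I^\star}p_{n,i}\le 1-a_{n,I^\star}\le (K-1)\max_{i\neq I^\star}p_{n,i}$, hence $-\tfrac1n\log(1-a_{n,I^\star})=\min_{i\neq I^\star}\bigl(-\tfrac1n\log p_{n,i}\bigr)+O(1/n)$ and it suffices to estimate each $-\tfrac1n\log p_{n,i}$. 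Recall $\theta_j\sim\cB eta(\alpha_{n,j},\beta_{n,j})$ with $\alpha_{n,j}+\beta_{n,j}-1=T_{n,j}+1$; set $\tilde\mu_{n,j}\eqdef(\alpha_{n,j}-1)/(\alpha_{n,j}+\beta_{n,j}-1)$, so that $C_{\alpha_{n,j},\beta_{n,j}}(y)=(T_{n,j}+1)\,kl(\tilde\mu_{n,j};y)$, and assume (as is implicit in the statement) $0<\mu_i<\mu_{I^\star}<1$ for all $i\neq I^\star$. Under either hypothesis $T_{n,I^\star}/n\to\beta>0$, so $T_{n,I^\star}\to\infty$; an SLLN for adaptively sampled Bernoulli rewards (Hoeffding plus Borel--Cantelli, exactly as for Lemma~\ref{lemma:means}) gives $\tilde\mu_{n,j}\to\mu_j$ a.s.\ for every arm with $T_{n,j}\to\infty$, in particular $\tilde\mu_{n,I^\star}\to\mu_{I^\star}$, and for all $j$ under the hypothesis of the second statement (where $\omega_j^\beta>0$).

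For the first (upper) claim, which only assumes $T_{n,I^\star}/n\to\beta$, I would use a \emph{lower} bound on $p_{n,i}$. For an arm $i$ with $T_{n,i}\to\infty$ we have $0<\tilde\mu_{n,i}<\tilde\mu_{n,I^\star}$ eventually; choosing $y^\star_n$ to minimise $C_{\alpha_{n,i},\beta_{n,i}}(y)+C_{\alpha_{n,I^\star},\beta_{n,I^\star}}(y)$ over $[\tilde\mu_{n,i},\tilde\mu_{n,I^\star}]$ and writing $p_{n,i}\ge\PP{\theta_i>y^\star_n}\,\PP{\theta_{I^\star}<y^\star_n}$ (independence), the Beta--Binomial trick together with the \emph{lower} half of Sanov's inequality bounds each factor below by $\mathrm{poly}(n)^{-1}e^{-C_{\alpha_{n,i},\beta_{n,i}}(y^\star_n)}$ and $\mathrm{poly}(n)^{-1}e^{-C_{\alpha_{n,I^\star},\beta_{n,I^\star}}(y^\star_n)}$ respectively, so $p_{n,i}\ge\mathrm{poly}(n)^{-1}e^{-C_n(i)}$ with $C_n(i)\eqdef\inf_y\bigl[(T_{n,i}{+}1)kl(\tilde\mu_{n,i};y)+(T_{n,I^\star}{+}1)kl(\tilde\mu_{n,I^\star};y)\bigr]$. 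Dividing by $n$ and using $\tilde\mu_{n,j}\to\mu_j$ and uniform continuity of $kl$ on a fixed compact subinterval of $(0,1)$ that traps all relevant minimisers (they lie between $\mu_i$ and $\mu_{I^\star}$) yields $C_n(i)/n=C_i\!\bigl(T_{n,I^\star}/n,\,T_{n,i}/n\bigr)+o(1)$. Hence $-\tfrac1n\log(1-a_{n,I^\star})\le\min_{i\neq I^\star}C_i\!\bigl(T_{n,I^\star}/n,\,T_{n,i}/n\bigr)+o(1)$; since $C_i$ is coordinatewise nondecreasing and $\bT_n/n$, renormalised to lie in $\Sigma_K$, is a feasible allocation whose $I^\star$-coordinate tends to $\beta$, this is at most $\Gamma^\star_{T_{n,I^\star}/n}+o(1)\to\Gamma_\beta^\star$ by continuity of $\beta\mapsto\Gamma_\beta^\star$. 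Arms sampled only finitely often are harmless: for such $i$, $p_{n,i}$ stays bounded away from $0$ (as $\mu_{I^\star}<1$ and the posterior of $\theta_i$ remains nondegenerate), so the associated rate is $0\le\Gamma_\beta^\star$. This gives $\limsup_n -\tfrac1n\log(1-a_{n,I^\star})\le\Gamma_\beta^\star$.

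For the second claim it remains to establish the matching rate lower bound $\liminf_n-\tfrac1n\log(1-a_{n,I^\star})\ge\Gamma_\beta^\star$, and here I would use the full hypothesis $T_{n,j}/n\to\omega_j^\beta$ for every $j$. For each $i\neq I^\star$, eventually $0<\tilde\mu_{n,i}<\tilde\mu_{n,I^\star}$, so Lemma~\ref{lemma:binomial_tail} gives $p_{n,i}\le D_n(i)\,e^{-C_n(i)}$ with $C_n(i)$ as above and $D_n(i)=3+\min(\cdot,\cdot)=O(n)$; therefore $-\tfrac1n\log p_{n,i}\ge C_n(i)/n-o(1)\to C_i(\omega_{I^\star}^\beta,\omega_i^\beta)$, the convergence of $C_n(i)/n$ being the same interchange-of-$\inf$-and-limit computation. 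By Proposition~\ref{prop:optim}, $C_i(\beta,\omega_i^\beta)$ takes the common value $\Gamma_\beta^\star$ for all $i\neq I^\star$, so $-\tfrac1n\log(1-a_{n,I^\star})=\min_{i\neq I^\star}(-\tfrac1n\log p_{n,i})+O(1/n)\to\Gamma_\beta^\star$; combined with the previous paragraph, the limit exists and equals $\Gamma_\beta^\star$. (Feeding this into the convergence $T_{n,i}/n\to\omega_i^\beta$ for \TTTS, obtained along the lines of Appendix~\ref{app:posterior_gaussian}, then yields Theorem~\ref{thm:posterior_bernoulli}.)

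The step I expect to be the main obstacle is the ``interchange of $\inf$ and limit'', i.e.\ the rigorous proof that $C_n(i)/n\to C_i(\cdot,\cdot)$: this requires taming the singularity of $kl$ at $\{0,1\}$ --- handled by confining all relevant points to a fixed compact subinterval of $(0,1)$, which is exactly why one wants $0<\mu_i<\mu_{I^\star}<1$ --- and, for the first claim, the accompanying bookkeeping that converts $\min_{i\neq I^\star}C_i(\bT_n/n)$ into $\Gamma_\beta^\star$ via coordinatewise monotonicity of $C_i$, feasibility of the renormalised empirical allocation, and continuity of $\beta\mapsto\Gamma_\beta^\star$. A secondary, purely mechanical point is that the \emph{lower} Beta-tail estimate used in the first claim is not packaged as a lemma in the text and must be assembled from the Beta--Binomial trick, the product lower bound $p_{n,i}\ge\PP{\theta_i>y^\star}\PP{\theta_{I^\star}<y^\star}$, and the lower half of Sanov's inequality.
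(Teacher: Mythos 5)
Your proposal is correct and follows essentially the same route as the paper's proof: the sandwich $\max_{i\neq I^\star}\Pi_n(\theta_i\geq\theta_{I^\star})\leq 1-a_{n,I^\star}\leq (K-1)\max_{i\neq I^\star}\Pi_n(\theta_i\geq\theta_{I^\star})$, the Beta--Binomial/Sanov estimates of Lemma~\ref{lemma:binomial_tail}, uniform continuity of $kl$ on a compact subinterval to pass from empirical to true means, feasibility of the (renormalised) empirical allocation for the upper rate bound, and a separate treatment of finitely-sampled arms yielding rate zero. The only difference is cosmetic: you assemble the lower Beta-tail estimate explicitly (product bound by independence plus the lower half of Sanov's inequality), whereas the paper simply asserts the logarithmic equivalence $\Pi_n(\theta_i\geq\theta_{I^\star})\doteq\exp\bigl(-\inf_y C_{a,b}(y)+C_{c,d}(y)\bigr)$ after checking that the prefactor $D$ is subexponential.
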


\begin{proof}
	Denote again with $\overline{\cI}$ again the set of arms sampled only finitely many times. For $\overline{\cI}$ empty, we thus have $\mu_{\infty, i} \triangleq \lim_{n \rightarrow \infty} \mu_{n,i} = \mu_i$. The posterior variance is
	\begin{align*}
	\sigma_{n,i}^2 &= \frac{\alpha_{n,i}\beta_{n,i}}{(\alpha_{n,i}+ \beta_{n,i})^2 (\alpha_{n,i} + \beta_{n,i} + 1)}
	= \frac{(1 + \sum_{\ell=1}^{n-1} \1{\{ I_{\ell} = i \}} Y_{\ell,I_{\ell}}) (1 + T_{n,i} - \sum_{\ell=1}^{n-1} \1{\{ I_{\ell} = i \}} Y_{\ell,I_{\ell}})  }{ (2 + T_{n,i})^2 (2 + T_{n,i} +1)}.
	\end{align*}
	We see that when $\overline{\cI}$ is empty, we have $\sigma_{\infty, i}^2 \triangleq \lim_{n \rightarrow \infty} \sigma_{n,i}^2 = 0$, i.e., the posterior is concentrated. 
	
\paragraph*{Step 1: A lower bound when some arms are sampled only finitely often.}
First, note that when $T_{n,i} = 0$ for some $i \in \cA$, the empirical mean for that arm equals the prior mean $\mu_{n,i} = \alpha_{0,i} / (\alpha_{0,i} + \beta_{0,i})$, and the variance is strictly positive: $\sigma^2_{n,i} = (\alpha_{0,i}\beta_{0,i}) / \left( (\alpha_{0,i}+ \beta_{0,i})^2 (\alpha_{0,i} + \beta_{0,i} + 1)\right) > 0$. When $\overline{\cI}$ is not empty, then for every $i \in \overline{\cI}$ we have $\sigma_{\infty, i}^2 > 0$, and $a_{\infty, i} \in (0,1)$, implying $a_{\infty, I^\star} < 1$, and thus
\begin{align*}
\lim_{n \rightarrow \infty} - \frac{1}{n} \log \left( 1- a_{n,I^\star} \right) = - \frac{1}{n} \log \left( 1- a_{\infty,I^\star} \right) = 0.
\end{align*}

\paragraph*{Step 2: A lower bound when every arm is sampled infinitely often.}
	Suppose now that $\overline{\cI}$ is empty, then we have
	\begin{align*}
	\max_{i \neq I^\star} \Pi_{n}  (\theta_i \geq \theta_{I^\star} ) \leq 1 - a_{n,I^\star} \leq \sum_{i \neq I^\star} \Pi_{n}(\theta_i \geq \theta_{I^\star}) \leq (k-1) \max_{i \neq I^\star} \Pi_{n}(\theta_i \geq \theta_{I^\star}).
	\end{align*}
	Thus, we have $1 - a_{n,I^\star} \leq (k-1) \max_{i \neq I^\star} \Pi_{n}(\theta_i \geq \theta_{I^\star})$ and also $1 - a_{n,I^\star} \doteq \max_{i \neq I^\star} \Pi_{n}(\theta_i \geq \theta_{I^\star})$.	We have 
	\begin{align*}
	\Gamma^\star &= \max_{w \in W} \min_{i \neq I^\star} C_i(\omega_{I^\star}, \omega_i),\\
	\Gamma_{\beta}^\star &= \max_{w \in W;\omega_{I^\star} = \beta} \min_{i \neq I^\star} C_i(\beta, \omega_i), \text{   with}\\
	C_i(\omega_{I^\star}, \omega_i) &= \min_{x \in \mathbb{R}} \omega_{I^\star} d(\theta_{I^\star} ; x) + \omega_i d(\theta_{i} ; x)
	= \omega_{I^\star} d(\theta_{I^\star} ; \overline{\theta} ) + \omega_i d(\theta_{i} ; \overline{\theta} ),
	\end{align*}
	where $\overline{\theta} \in [ \theta_i, \theta_{I^\star}]$ is the solution to 
	\begin{align*}
	A'(\overline{\theta}) = \frac{\omega_{I^\star} A'(\theta_{I^\star}) + \omega_i A'(\theta_i)}{\omega_{I^\star} + \omega_i}.
	\end{align*}
Since every arm is sampled infinitely often, when $n$ is large, we have $\mu_{n,I^\star} > \mu_{n,i}$. Define $S_{n,i} \triangleq \sum_{\ell=1}^{n-1} \1{\{ I_{\ell} = i \}} Y_{\ell,I_{\ell}}$. Recall that the posterior is a Beta distribution with parameters $a_{n,i} = S_{n,i} +1$ and $\beta_{n,i} = T_{n,i} - S_{n,i} + 1$. Let $\tau \in \mathbb{N}$ be such that for every $n \geq \tau$, we have $S_{n, i} / (T_{n,i} + 1) < S_{n, I^\star} / (T_{n,I^\star} + 1)$. For the sake of simplicity, we define for any $i\in\cA$ the interval
\[
    I_{i,I^\star}\eqdef \left[ \frac{S_{n,i}}{T_{n,i} + 1},  \frac{S_{n, I^\star}}{T_{n,I^\star} + 1} \right].
\] 
Then using Lemma~\ref{lemma:binomial_tail} with $a=S_{n,i}+1, b=T_{n,i}-S_{n,i}+1, c=S_{n,I^\star}+1, d=T_{n,I^\star}-S_{n,I^\star}+1$, we have
	\begin{align*}
	\Pi_{n}(\theta_i - \theta_{I^\star} \geq 0) &\leq D \expp{- \inf_{y \in I_{i,I^\star} } C_{S_{n,i}+1,T_{n,i}-S_{n,i}+1}(y)+C_{S_{n,I^\star}+1,T_{n,I^\star}-S_{n,I^\star}+1}(y)}.
	\end{align*}
This implies
	\begin{align*}
	\frac{1}{n} \log \left( \frac{\Pi_{n}(\theta_i \geq \theta_{I^\star})}{\expp{ - \inf_{y \in I_{i,I^\star} } C_{S_{n,i}+1,T_{n,i}-S_{n,i}+1}(y)+C_{S_{n,I^\star}+1,T_{n,I^\star}-S_{n,I^\star}+1}(y) }} \right) \leq \frac{1}{n} \log(D),
	\end{align*}
% \todo[inline]{TODO bound $D$ in terms of $n$, we also have TODO CHECK}
	which goes to zero as $n$ goes to infinity. Indeed replacing $a,b,c,d$ by their values in the definition of $D$ we get
	\begin{align*}
	    D&\leq 3 + (T_{n,i}-1) kl \left( \frac{S_{n,i}}{T_{n,i}+1};   \frac{S_{n,I^\star}}{T_{n,I^\star}+1}\right)\\
	    &\leq 3 + (n+1) kl\left(0; \frac{n}{n+1} \right) = (n+1)\log(n+1)\,. 
	\end{align*}
% It holds that
% \begin{align*}
% \lim_{n \rightarrow \infty} \frac{1}{n} \left( \inf_{y \in I_{i,I^\star} } C_{S_{n,i}+1,T_{n,i}-S_{n,i}+1}(y)+C_{S_{n,I^\star}+1,T_{n,I^\star}-S_{n,I^\star}+1}(y) \right) < \infty,
% \end{align*}
% and thus
% \begin{align*}
% \lim_{n \rightarrow \infty} \frac{1}{n} \log \left( \inf_{y \in I_{i,I^\star} }  C_{S_{n,i}+1,T_{n,i}-S_{n,i}+1}(y)+C_{S_{n,I^\star}+1,T_{n,I^\star}-S_{n,I^\star}+1}(y) \right) = 0.
% \end{align*}
Hence,
\[
\Pi_{n}(\theta_i \geq \theta_{I^\star}) \doteq \expp{ - \inf_{y \in I_{i,I^\star}}  C_{S_{n,i}+1,T_{n,i}-S_{n,i}+1}(y)+C_{S_{n,I^\star}+1,T_{n,I^\star}-S_{n,I^\star}+1}(y) }.
\]
We thus have for any $i$,
\begin{align*}
1 - a_{n,i} &\doteq \max_{j \neq I^\star} \Pi_{n}\left[\theta_j \geq \theta_{I^\star}\right] \\
&\doteq \max_{j \neq I^\star} \expp{ - \inf_{y \in I_{j,I^\star} } C_{S_{n,j}+1,T_{n,j}-S_{n,j}+1}(y)+C_{S_{n,I^\star}+1,T_{n,I^\star}-S_{n,I^\star}+1}(y)}\\
&\doteq  \expp{ - n \min_{j \neq I^\star} \inf_{y \in I_{j,I^\star} } \frac{T_{n,j} +1}{n} kl\left(\frac{S_{n,j}}{T_{n,j}+1}; y\right) + \frac{T_{n,I^\star} +1}{n} kl\left(\frac{S_{n,I^\star}}{T_{n,I^\star}+1} ; y\right) }\\
&\geq \expp{ - n \max_{\bomega} \min_{j \neq I^\star} \inf_{y \in I_{j,I^\star} } \omega_i kl\left (\frac{S_{n,j}}{T_{n,j}+1}; y\right) + \omega_{I^\star} kl \left(\frac{S_{n,I^\star}}{T_{n,j}+1} ; y\right) }.
\end{align*}
Fix some $\epsilon > 0$, then there exists some $n_0(\epsilon)$ such that for all $n \geq n_0(\epsilon)$, we have for any $j$, 
\begin{align*}
I_{j,I^\star} = \left[\frac{S_{n,j}}{T_{n,j} + 1}, \frac{S_{n,I^\star}}{T_{n,I^\star} + 1}, \right] \subset \left[ \mu_j + \epsilon, \mu_{I^\star} - \epsilon \right] \triangleq I^\star_{j,\epsilon},  
\end{align*}
and because \texttt{KL}-divergence is uniformly continuous on the compact interval $I^\star_{j,\epsilon}$, there exists an $n_1$ such that for every $n \geq n_1$ we have 
\begin{align*}
kl\left(\frac{S_{n,j}}{T_{n,j} + 1}; y  \right) \geq (1-\epsilon) kl \left( \mu_j ; y \right),
\end{align*}
for any $y$ and for all $j \in \cA$. Therefore, we have
\begin{align*}
1 - a_{n,i} &\doteq \expp{ - n \max_{\bomega} \min_{j \neq I^\star} \inf_{y \in I_{j,I^\star} } \omega_j kl\left(\frac{S_{n,j}}{T_{n,j}+1}; y\right) + \omega_{I^\star} kl\left(\frac{S_{n,I^\star}}{T_{n,I^\star}+1} ; y\right) }\\
&\geq \expp{ - n \max_{\bomega} \min_{i \neq I^\star} \inf_{y \in I^\star_{j,\epsilon}} \omega_i kl (\mu_j; y) + \omega_{I^\star} kl (\mu_{I^\star} ; y) }.
\end{align*}
Therefore, we have
\begin{align*}
\limsup_{n \rightarrow \infty} - \frac{1}{n} \log (1 - a_{n,i}) \leq \Gamma^\star.
\end{align*}
If $T_{n,i} / n \rightarrow \omega_i^\star$ for each $i \in \cA$, we have
\begin{align*}
    &\lim_{n \rightarrow \infty} \inf_{y \in I_{i,I^\star} } \frac{T_{n,i} + 1}{n} kl\left(\frac{S_{n,i}}{T_{n,i}+1}; y\right)+ \frac{T_{n,I^\star}  + 1}{n} kl\left(\frac{S_{n,I^\star}}{T_{n,i}+1} ; y\right)\\
    &= \inf_{y \in \left[ \mu_i, \, \mu_{I^\star} \right] } \omega_i^\star kl (\mu_{i}; y) + \omega_{I^\star}^\star kl (\mu_{I^\star} ; y) \\
    &= \Gamma^\star,
\end{align*}
and thus
\begin{align*}
1 - a_{n,i} &\doteq \expp{ - n \max_{\bomega} \min_{j \neq I^\star} \inf_{y \in I^\star_\epsilon} \omega_i kl (\mu_j; y) + \omega_{I^\star} kl (\mu_{I^\star} ; y) } \\
&\doteq \expp{ - n \Gamma^\star },
\end{align*}
implying
\begin{align*}
\lim_{n \rightarrow \infty} - \frac{1}{n} \log \left( 1 - a_{n,i} \right) = \Gamma^\star.
\end{align*}

Everything goes similarly when $\omega_{I^\star} = \beta \in (0,1)$, so under any sampling rule satisfying $T_{n,I^\star} / n \rightarrow \beta$ we have
\begin{align*}
\limsup_{n \rightarrow \infty} - \frac{1}{n} \log (1 - a_{n,i}) \leq \Gamma_{\beta}^\star
\end{align*}
and under any sampling rule satisfying $T_{n,i} / n \rightarrow \omega_i^\beta$ for each $i \in \cA$, we have
\begin{align*}
\lim_{n \rightarrow \infty} - \frac{1}{n} \log (1 - a_{n,i}) = \Gamma_{\beta}^\star.
\end{align*}

\end{proof}

\subsection{Proof of Theorem~\ref{thm:posterior_bernoulli}}\label{app:posterior_beta.main}

\restateposteriorbernoulli*

From Theorem~\ref{thm:bernoulli_lower_bound} we know that under any allocation rule satisfying $T_{n,i} / n \rightarrow \omega_i^\beta$ for every $i \in \cA$, we have
	\begin{align*}
		\lim_{n \rightarrow \infty} - \frac{1}{n} \log \left( 1 - a_{n,I^\star} \right) = \Gamma_{\beta}^\star.
	\end{align*}
Thus, we only need to prove that under \TTTS, for all $i \in \cA$, we have
	\begin{align*}
		\lim_{n \rightarrow \infty} \frac{T_{n,i}}{n} \overset{a.s}{=} \omega_i^\beta.
	\end{align*}
Just as for the proof of the Gaussian case, we can use Lemma~\ref{lemma:link} (proof in Appendix~\ref{app:posterior_gaussian.aux}), which implies 
\[
    \lim_{n\rightarrow \infty} \frac{T_{n,i}}{n}  \overset{a.s}{=} \omega_i^\beta \ \ \Leftrightarrow \ \ \ \lim_{n\rightarrow \infty} \frac{\Psi_{n,i}}{n}  \overset{a.s}{=} \omega_i^\beta.
\]
Therefore, it suffices to show convergence for $\overline{\psi}_{n,i} = \Psi_{n,i}/n$ to $\omega_i^\beta$, which we will do next, following the same steps as in the proof for the Gaussian case. 
	
\paragraph{Step 1: \TTTS draws all arms infinitely often and satisfies $T_{n,I^\star}/n \rightarrow \beta$.} We prove the following lemma. 
	
\begin{lemma}\label{lemma:optimal_prop_istar_bernoulli}
	Under \TTTS, it holds almost surely that
	\begin{enumerate}
		\item for all $i \in \cA$, $\lim_{n\rightarrow \infty} T_{n,i} = \infty.$
		\item $a_{n,I^\star} \rightarrow 1.$
		\item $\frac{T_{n,I^\star}}{n} \rightarrow \beta$.
	\end{enumerate}
\end{lemma}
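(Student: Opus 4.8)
The plan is to mirror the structure of the Gaussian case (Lemma~\ref{lemma:optimal_prop_istar_gaussian}), replacing the Gaussian tail estimates of Lemma~\ref{lemma:gaussiantails} with the Beta--Binomial tail bound of Lemma~\ref{lemma:binomial_tail}. First I would state and prove (in an auxiliary subsection) a Bernoulli analogue of Lemma~\ref{lemma:consistency_gaussian}: let $\overline{\cI}$ be the set of arms receiving only finite measurement effort; if $\overline{\cI}$ is empty then $a_{n,i}\to 1$ for $i=I^\star$ and $a_{n,i}\to 0$ for $i\neq I^\star$ (by the law of large numbers the posteriors concentrate on $\delta_{\mu_i}$, and $\mu_{I^\star}$ is strictly largest), while if $\overline{\cI}$ is non-empty then for every $i\in\overline{\cI}$ the posterior $\Pi_n$ converges to a product $\Pi_\infty$ in which coordinate $i$ is a fixed $\cB eta(\alpha,\beta)$ with $\alpha,\beta$ finite, whence $a_{\infty,i}=\Pi_\infty(\theta_i>\max_{j\neq i}\theta_j)\in(0,1)$ and so $\liminf_n a_{n,i}>0$ a.s. This is the same argument already sketched in Step~1 of the proof of Theorem~\ref{thm:bernoulli_lower_bound}, just packaged as a lemma.

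Next, for claim~1, I would argue by contradiction exactly as in the Gaussian proof: suppose $\overline{\cI}\neq\emptyset$ and pick $i\in\overline{\cI}$. Under \TTTS, $\psi_{n,i}=a_{n,i}\bigl(\beta+(1-\beta)\sum_{j\neq i}\tfrac{a_{n,j}}{1-a_{n,j}}\bigr)\geq\beta a_{n,i}$, and the Bernoulli consistency lemma gives $\liminf_n a_{n,i}>0$, so $\sum_n\psi_{n,i}=\infty$, i.e.\ $\Psi_{n,i}\to\infty$. By Lemma~\ref{lemma:link}, $|T_{n,i}-\Psi_{n,i}|=o(n)$, hence $T_{n,i}\to\infty$, contradicting $i\in\overline{\cI}$. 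Therefore $\overline{\cI}=\emptyset$, every arm is pulled infinitely often, and then the consistency lemma yields claim~2, $a_{n,I^\star}\to1$. For claim~3, $a_{n,I^\star}\to1$ forces $\sum_{j\neq I^\star}\tfrac{a_{n,j}}{1-a_{n,j}}\to0$, so $\psi_{n,I^\star}=a_{n,I^\star}\bigl(\beta+(1-\beta)\sum_{j\neq I^\star}\tfrac{a_{n,j}}{1-a_{n,j}}\bigr)\to\beta$, hence $\overline{\psi}_{n,I^\star}=\Psi_{n,I^\star}/n\to\beta$ by Cesàro, and finally $T_{n,I^\star}/n\to\beta$ by Lemma~\ref{lemma:link} again.

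The only genuinely new ingredient compared with the Gaussian case is the consistency lemma in the Beta-Bernoulli model, and there the one point requiring care — the main obstacle — is the behaviour of the posterior on arms in $\overline{\cI}$: one must verify that the (random) limiting shape parameters $(\alpha_{\infty,i},\beta_{\infty,i})$ are almost surely finite and that the limiting joint posterior $\Pi_\infty$ is a genuine product measure with $a_{\infty,i}\in(0,1)$ strictly, so that $\liminf_n a_{n,i}>0$ and $a_{\infty,I^\star}<1$. This follows because on $\overline{\cI}$ the counts $T_{n,i}$ are eventually constant (so $\alpha_{n,i},\beta_{n,i}$ stabilize at finite values), while on $\cA\setminus\overline{\cI}$ the posteriors degenerate to point masses at the true means; a $\cB eta(\alpha,\beta)$ with finite parameters assigns positive mass to every open subinterval of $(0,1)$, giving the strict bounds. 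Everything else is a verbatim transcription of the Gaussian argument with Lemma~\ref{lemma:link} doing the $T_{n,i}\leftrightarrow\Psi_{n,i}$ bookkeeping.
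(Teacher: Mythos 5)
Your proposal is correct and follows essentially the same route as the paper: it packages the Beta--Bernoulli consistency/finite-measurement statement as a lemma (the paper's Lemma~\ref{lemma:consistency_bernoulli}, proved via the posterior mean/variance computations from Theorem~\ref{thm:bernoulli_lower_bound}), then repeats the Gaussian argument verbatim --- $\psi_{n,i}\geq\beta a_{n,i}$ plus $\liminf_n a_{n,i}>0$ forces $\sum_n\psi_{n,i}=\infty$, Lemma~\ref{lemma:link} converts this to $T_{n,i}\to\infty$ (contradiction), consistency then gives $a_{n,I^\star}\to 1$, and $\overline{\psi}_{n,I^\star}\to\beta$ together with Lemma~\ref{lemma:link} gives $T_{n,I^\star}/n\to\beta$. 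Your explicit Ces\`aro step and the check that the limiting Beta parameters on $\overline{\cI}$ are finite are just slightly more detailed renditions of what the paper leaves implicit.
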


\begin{proof}
First, we give a lemma showing the implications of finite measurement, and consistency when all arms are sampled infinitely often, which provides a proof for $2.$ The proof of this lemma follows from the proof of Theorem~\ref{thm:bernoulli_lower_bound}, and is given in Appendix~\ref{app:posterior_beta.aux}. 

\begin{lemma}[Consistency and implications of finite measurement]\label{lemma:consistency_bernoulli}\ \\
	Denote with $\overline{\mathcal{I}}$ the arms that are sampled only a finite amount of times:
	\begin{align*}
	\overline{\mathcal{I}} = \{ i \in \{ 1, \ldots, k \} : \forall n, T_{n,i} < \infty \}.
	\end{align*}
	If $\overline{\mathcal{I}}$ is empty, $a_{n,i}$ converges almost surely to $1$ when $i = I^\star$ and to $0$ when $i \neq I^\star$. If $\overline{\mathcal{I}}$ is non-empty, then for every $i \in \overline{\mathcal{I}}$, we have $\liminf_{n \rightarrow \infty} a_{n,i} > 0$ a.s.
\end{lemma}

Now we can show $1.$ of Lemma~\ref{lemma:optimal_prop_istar_bernoulli}: we show that under \TTTS, for each $j \in A$, we have $\sum_{n \in \NN} T_{n,j} = \infty$. The proof is exactly equal to the proof for Gaussian arms. 

Under \TTTS, we have
\begin{align*}
\psi_{n,i} = a_{n,i} \left( \beta + (1-\beta) \sum_{j \neq i} \frac{a_{n,j}}{1- a_{n,j}} \right),
\end{align*}
so $ \psi_{n,i}  \geq \beta a_{n,i}$, therefore, by Lemma~\ref{lemma:consistency_gaussian}, if $i \in \overline{\mathcal{I}}$, then $\liminf a_{n,i} > 0$ implies that $\sum_n \psi_{n,i} = \infty$. By Lemma~\ref{lemma:link}, we then must have that $\lim_{n \rightarrow \infty} T_{n,i} = \infty$ as well: contradiction.  Thus, $\lim_{n \rightarrow \infty} T_{n,i} = \infty$ for all $i$, and we conclude that $a_{n,I^\star} \rightarrow 1$, by Lemma~\ref{lemma:consistency_gaussian}. 

Lastly we prove point $3.$ of Lemma~\ref{lemma:optimal_prop_istar_bernoulli}. For \TTTS with parameter $\beta$, the above implies that $\overline{\psi}_{n, I^\star} \rightarrow \beta$, and since we have a bound on $| T_{n,i} / n - \overline{\psi}_{n, i} |$ in Lemma~\ref{lemma:link}, we have $T_{n, I^\star} / n \rightarrow \beta$ as well.

\end{proof}

\paragraph{Step 2: Controlling the over-allocation of sub-optimal arms.}
Following the proof for the Gaussian case again, we can establish a consequence of the convergence of $T_{n,I^\star} / n$ to $\beta$ : if an arm is sampled more often than its optimal proportion, the posterior probability of this arm to be optimal is reduced compared to that of other sub-optimal arms. We can prove this by using ingredients from the proof of the lower bound in Theorem~\ref{thm:bernoulli_lower_bound}.

\begin{lemma}[Over-allocation implies negligible probability]\label{lemma:over_allocation_bernoulli}\footnote{analogue of Lemma 13 of \cite{russo2016ttts}}\ \\
	Fix any $\xi > 0$ and $j \neq I^\star$. With probability 1, under any allocation rule, if $T_{n,I^\star}/n \rightarrow \beta$, there exist $\xi' > 0$ and a sequence $\epsilon_n$ with $\epsilon_n \rightarrow 0$ such that for any $n \in \mathbb{N}$, 
	\begin{align*}
	\frac{T_{n,j}}{n} \geq \omega_j^\beta + \xi \implies \frac{a_{n,j}}{\max_{i \neq I^\star} a_{n,i}} \leq e^{-n (\xi' + \epsilon_n)}.
	\end{align*}
\end{lemma}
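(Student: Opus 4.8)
The plan is to transcribe, almost verbatim, the proof of the Gaussian analogue Lemma~\ref{lemma:over_allocation}, with two substitutions: the Gaussian tail inequality of Lemma~\ref{lemma:gaussiantails} is replaced by the Beta--Binomial tail bound of Lemma~\ref{lemma:binomial_tail}, and the posterior-rate input of~\cite{qin2017ttei} is replaced by our Theorem~\ref{thm:bernoulli_lower_bound}. Fix $j \neq I^\star$ and $\xi > 0$. Since $j \neq I^\star$, the ratio $a_{n,j}/\max_{i \neq I^\star} a_{n,i}$ is always at most $1$, so it is enough to establish the bound for all $n$ beyond some (sample-path dependent) threshold $n_0$; for $n < n_0$ one takes $\epsilon_n \leq -\xi'$ so that $e^{-n(\xi'+\epsilon_n)} \geq 1$.

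First I would lower-bound the denominator. From $\max_{i \neq I^\star} a_{n,i} \leq \sum_{i\neq I^\star} a_{n,i} = 1 - a_{n,I^\star}$ and the hypothesis $T_{n,I^\star}/n \to \beta = \omega_{I^\star}^\beta$, the first conclusion of Theorem~\ref{thm:bernoulli_lower_bound} gives $\limsup_n -\tfrac1n\log(1-a_{n,I^\star}) \leq \Gamma_{\beta}^\star$, hence $\max_{i\neq I^\star} a_{n,i} \geq \expp{-n(\Gamma_{\beta}^\star + \epsilon_n^{(1)})}$ for some $\epsilon_n^{(1)} \to 0$. Next I would upper-bound the numerator. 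On the event that the premise $T_{n,j}/n \geq \omega_j^\beta + \xi$ holds, we have $T_{n,j} \geq (\omega_j^\beta+\xi)n \to \infty$, and $T_{n,I^\star} \to \infty$ because $T_{n,I^\star}/n \to \beta > 0$; so by consistency (Lemma~\ref{lemma:consistency_bernoulli}) the empirical means converge to $\mu_j < \mu_{I^\star}$ along those indices, and in particular $a_{n,j} \leq \Pi_n(\theta_j \geq \theta_{I^\star})$. Applying Lemma~\ref{lemma:binomial_tail} to the Beta posteriors of $j$ and $I^\star$ exactly as in the proof of Theorem~\ref{thm:bernoulli_lower_bound} (with $S_{n,i} = \sum_{\ell=1}^{n-1}\1\{I_\ell = i\}Y_{\ell,I_\ell}$), one gets
\[
a_{n,j} \leq D\,\expp{-n\inf_{y \in I_{j,I^\star}}\left[\tfrac{T_{n,j}+1}{n}\,kl\!\left(\tfrac{S_{n,j}}{T_{n,j}+1};y\right) + \tfrac{T_{n,I^\star}+1}{n}\,kl\!\left(\tfrac{S_{n,I^\star}}{T_{n,I^\star}+1};y\right)\right]},
\]
with $D \leq (n+1)\log(n+1)$ so that $\tfrac1n\log D \to 0$. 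Since $I_{j,I^\star} \to [\mu_j,\mu_{I^\star}]$, $\tfrac{T_{n,j}+1}{n} \geq \omega_j^\beta + \xi - o(1)$, $\tfrac{T_{n,I^\star}+1}{n} \to \beta$, and the empirical means converge, the bracketed infimum is eventually at least $C_j(\beta, \omega_j^\beta+\xi) - \epsilon_n^{(2)}$ with $\epsilon_n^{(2)}\to 0$, where $C_j$ is the transportation cost of the Definition preceding Proposition~\ref{prop:optim} (recall $d = kl$ here, and that the minimizer of $\beta\,d(\mu_{I^\star};x)+\omega'\,d(\mu_j;x)$ lies in $[\mu_j,\mu_{I^\star}]$, so restricting the infimum to $I_{j,I^\star}$ costs nothing in the limit).

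It then remains to compare $C_j(\beta,\omega_j^\beta+\xi)$ with $\Gamma_{\beta}^\star$. By Proposition~\ref{prop:optim} the values $C_i(\beta,\omega_i^\beta)$ coincide across all $i\neq I^\star$, and since $\bomega^\beta$ attains $\Gamma_{\beta}^\star = \max_{\omega:\omega_{I^\star}=\beta}\min_{i\neq I^\star}C_i(\beta,\omega_i)$, their common value is exactly $\Gamma_{\beta}^\star$. Because $\omega' \mapsto C_j(\beta,\omega')$ is strictly increasing (its minimizer lies strictly between $\mu_j$ and $\mu_{I^\star}$, so the envelope derivative $d(\mu_j;x^\star(\omega')) > 0$), we obtain $C_j(\beta,\omega_j^\beta+\xi) = \Gamma_{\beta}^\star + 2\xi'$ for some $\xi' = \xi'(\xi,j) > 0$. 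Dividing the numerator bound by the denominator bound then gives $a_{n,j}/\max_{i\neq I^\star}a_{n,i} \leq (n+1)\log(n+1)\expp{-n(2\xi' - \epsilon_n^{(1)}-\epsilon_n^{(2)})} \leq \expp{-n(\xi'+\epsilon_n)}$ for a sequence $\epsilon_n \to 0$ that swallows $\tfrac1n\log D$, $\epsilon_n^{(1)}$ and $\epsilon_n^{(2)}$, which is the claim. I expect the main obstacle to be purely bookkeeping rather than a new idea: the statement is a pointwise implication, so the consistency estimates and the convergence $I_{j,I^\star}\to[\mu_j,\mu_{I^\star}]$ may only be invoked at indices $n$ where the premise itself forces $T_{n,j}$ (hence $T_{n,I^\star}$) to be large — at all other indices the premise fails and there is nothing to prove; the secondary nuisance is tracking the polynomial prefactor $D$ from Lemma~\ref{lemma:binomial_tail} and verifying that it is absorbed into $\epsilon_n$. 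Both points are handled exactly as in the Gaussian proof of Lemma~\ref{lemma:over_allocation}.
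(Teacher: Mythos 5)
Your proposal is correct and follows essentially the same route as the paper: lower-bound the denominator via Theorem~\ref{thm:bernoulli_lower_bound} under $T_{n,I^\star}/n\to\beta$, upper-bound $a_{n,j}$ through the Beta--Binomial tail bound of Lemma~\ref{lemma:binomial_tail} with the polynomial prefactor absorbed into $\epsilon_n$, and conclude from $C_j(\beta,\omega_j^\beta)=\Gamma_\beta^\star$ together with strict monotonicity of $C_j(\beta,\cdot)$. You actually justify the last two facts (via Proposition~\ref{prop:optim} and the envelope argument) and the "premise forces $T_{n,j}$ large" point more explicitly than the paper does, which simply asserts them.
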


\begin{proof}
By Theorem~\ref{thm:bernoulli_lower_bound}, we have, as $T_{n, I^\star} / n \rightarrow \beta$, 
\begin{align*}
\limsup_{n \rightarrow \infty} - \frac{1}{n} \log \left(\max_{i \neq I^\star} a_{n,i} \right) \leq \Gamma_{\beta}^\star,
\end{align*}
since $\max_{i \neq I^\star} a_{n,i} \leq 1 - a_{n,I^\star}$. We also have from Lemma~\ref{lemma:binomial_tail} a deviation inequality, so that we can establish the following logarithmic equivalence:
\begin{align*}
a_{n,j} \leq \Pi_{n}(\theta_j \geq \theta_{I^\star} ) \doteq \exp \left\lbrace - n C_{j} \left(w_{n, I^\star}, \omega_{n,j} \right) \right\rbrace \doteq \exp \left\lbrace - n C_{j} \left(\beta, \omega_{n,j} \right) \right\rbrace,
\end{align*}
where we denote $\omega_{n,j} \triangleq \frac{T_{n,j}}{n}$.
We can combine these results, which implies that there exists a non-negative sequence $\epsilon_n \rightarrow 0$ such that
\begin{align*}
\frac{a_{n,j}}{\max_{i \neq I^\star} a_{n,i}} \leq \frac{ \exp \left\lbrace -n C_{j} \left(\beta, \omega_{n,j} \right) - \epsilon_n / 2 \right\rbrace}{\exp \left \lbrace - n ( \Gamma_{\beta}^\star + \epsilon / 2 ) \right\rbrace} 
= \exp \left\lbrace -n \left( C_{j} \left(\beta, \omega_{n,j} \right) - \Gamma_{\beta}^\star \right) - \epsilon_n \right\rbrace.
\end{align*}
We know that $C_{j} \left(\beta, \omega_j^\beta \right)$ is strictly increasing in $\omega^\beta_j$, and $C_{j} \left(\beta, \omega_j^\beta \right) = \Gamma_{\beta}^\star$, thus, there exists some $\xi' > 0$ such that 
\begin{align*}
\omega_{n,j} \geq \omega_j^\beta + \xi \implies C_{j} \left(\beta, \omega_{n,j} \right) - \Gamma_{\beta}^\star > \xi'.
\end{align*}
\end{proof}

\paragraph{Step 3: $\overline{\psi}_{n,i}$ converges to $\omega_i^\beta$ for all arms.} To establish the convergence of the allocation effort of all arms, we rely on the same sufficient condition used in the analysis of~\cite{russo2016ttts}, restated above in Lemma~\ref{lemma:sufficient_optimality}, and we will restate it here again for convenience.
	
\begin{lemma}[Sufficient condition for optimality]\ \\
	Consider any adaptive allocation rule. If 
		\begin{align}
		&\overline{\psi}_{n, I^\star} \rightarrow \beta, \,\,\, \text{ and } \,\,\,
		\sum_{n \in \mathbb{N}} \psi_{n,j} \bm{1} \left\lbrace \overline{\psi}_{n,j} \geq \omega_j^\beta + \xi \right\rbrace < \infty, \,\, \forall j \neq I^\star, \xi > 0,
		\end{align}
	then $\overline{\psi}_{n} \rightarrow \psi^\beta$.
\end{lemma}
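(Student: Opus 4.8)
The plan is to prove the \textbf{sufficient condition for optimality} by a conservation-of-mass argument, as in Lemma~12 of \cite{russo2016ttts}; the statement is model-free, so nothing Beta--Bernoulli specific enters here. The two facts to keep in mind are that $\Psi_{n,i}=\sum_{l=1}^n\psi_{l,i}$ with $\sum_{i\in\cA}\psi_{l,i}=1$ for every $l$, so that $\sum_{i\in\cA}\overline{\psi}_{n,i}=1$ for all $n$, and that the hypothesis already gives $\overline{\psi}_{n,I^\star}\to\beta=\omega_{I^\star}^\beta$. Since $\sum_{i}\omega_i^\beta=1$, once I have shown $\limsup_n\overline{\psi}_{n,j}\le\omega_j^\beta$ for every $j\neq I^\star$, the conservation identity yields $\liminf_n\overline{\psi}_{n,j}\ge 1-\beta-\sum_{i\neq I^\star,j}\limsup_n\overline{\psi}_{n,i}\ge 1-\beta-\sum_{i\neq I^\star,j}\omega_i^\beta=\omega_j^\beta$, hence $\overline{\psi}_{n,j}\to\omega_j^\beta$ for all $j$, i.e.\ $\overline{\psi}_{n}\to\psi^\beta$.

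So the core step is the upper bound $\limsup_n\overline{\psi}_{n,j}\le\omega_j^\beta$. Fix $j\neq I^\star$ and $\xi>0$ (WLOG $\omega_j^\beta+\xi\le 1$); I will show $\overline{\psi}_{n,j}<\omega_j^\beta+\xi$ for all $n$ large, which suffices after letting $\xi\downarrow 0$. Apply the summability hypothesis with threshold $\xi/2$ and pick $n_0$ so that $\sum_{n>n_0}\psi_{n,j}\,\mathbf{1}\{\overline{\psi}_{n,j}\ge\omega_j^\beta+\xi/2\}\le\epsilon$ for a small fixed $\epsilon<\xi/4$. Suppose $n>n_0$ is large with $\overline{\psi}_{n,j}\ge\omega_j^\beta+\xi$, and let $m$ be the largest index in $[n_0,n]$ with $\overline{\psi}_{m,j}<\omega_j^\beta+\xi/2$. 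If no such $m$ exists, then $\Psi_{n,j}-\Psi_{n_0,j}=\sum_{l=n_0+1}^n\psi_{l,j}$ runs over indices all satisfying $\overline{\psi}_{l,j}\ge\omega_j^\beta+\xi/2$, hence is $\le\epsilon$, contradicting $\Psi_{n,j}\ge(\omega_j^\beta+\xi)n$ together with $\Psi_{n_0,j}\le n_0$ for $n$ large. Otherwise $\Psi_{m,j}<(\omega_j^\beta+\xi/2)m$ and every $l\in(m,n]$ satisfies $\overline{\psi}_{l,j}\ge\omega_j^\beta+\xi/2$, so $\Psi_{n,j}-\Psi_{m,j}\le\sum_{l>n_0}\psi_{l,j}\,\mathbf{1}\{\overline{\psi}_{l,j}\ge\omega_j^\beta+\xi/2\}\le\epsilon$; combining, $(\omega_j^\beta+\xi)n\le\Psi_{n,j}<(\omega_j^\beta+\xi/2)n+\epsilon$, i.e.\ $\xi n/2<\epsilon$, impossible for $n$ large. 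This contradiction proves the claim.

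I do not expect a genuine obstacle inside the lemma itself: it is an elementary accounting argument, the only care needed being around the crossing index $m$ (the single increment $\psi_{m+1,j}\le 1$ is negligible against $n$) and the remark that $\sum_i\psi_{n,i}=1$ makes $\overline{\psi}_n$ a probability vector for all $n$. The substantive work — and where effort actually goes — is verifying the two hypotheses of this lemma under \TTTS in the Beta--Bernoulli model, i.e.\ checking \eqref{eq:sufficient condition for optimality}: the convergence $\overline{\psi}_{n,I^\star}\to\beta$ is exactly Lemma~\ref{lemma:optimal_prop_istar_bernoulli} combined with Lemma~\ref{lemma:link}, while for the summable-over-shoot condition one bounds $\psi_{n,i}\le a_{n,i}/\max_{k\neq I^\star}a_{n,k}$ for $n$ past the (a.s.\ finite) time when $J_n^{(1)}=I^\star$, translates $\overline{\psi}_{n,i}\ge\omega_i^\beta+\xi$ into $T_{n,i}/n\ge\omega_i^\beta+\xi/2$ via Lemma~\ref{lemma:link}, and then invokes Lemma~\ref{lemma:over_allocation_bernoulli} to get $\psi_{n,i}\le e^{-n(\xi'-\epsilon_n)}$ on that event, whose contribution to $\sum_n$ is finite — mirroring Step~3 of the Gaussian proof line for line.
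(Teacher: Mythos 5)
Your argument is correct, and it is worth noting that the paper itself never proves this lemma: it is imported verbatim as Lemma~12 of \cite{russo2016ttts} (see the footnote on Lemma~\ref{lemma:sufficient_optimality}) and used as a black box in Step~3 of both the Gaussian and Bernoulli posterior-convergence proofs. What you have done differently is to supply a self-contained, elementary proof of the imported statement. The two halves of your argument are both sound: the crossing-index accounting (pick $n_0$ so that the overshoot sum at level $\omega_j^\beta+\xi/2$ beyond $n_0$ is at most $\epsilon<\xi/4$, then split on whether $\overline{\psi}_{\cdot,j}$ ever dips below $\omega_j^\beta+\xi/2$ on $[n_0,n]$) correctly yields $\limsup_n\overline{\psi}_{n,j}\le\omega_j^\beta$ for every $j\neq I^\star$, and the conservation step $\sum_i\overline{\psi}_{n,i}=1$ together with $\overline{\psi}_{n,I^\star}\to\beta=\omega_{I^\star}^\beta$ (Proposition~\ref{prop:optim}) and $\sum_i\omega_i^\beta=1$ upgrades these $\limsup$ bounds to convergence of every coordinate. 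This is essentially the same style of accounting argument as in Russo's original Lemma~12, so the mathematics does not diverge from the cited source; what your version buys is that the lemma no longer needs to be taken on faith from the reference, at the cost of a page of bookkeeping the paper deliberately avoids. Your closing remarks about verifying the two hypotheses under \TTTS in the Beta--Bernoulli model (via Lemmas~\ref{lemma:optimal_prop_istar_bernoulli}, \ref{lemma:link} and \ref{lemma:over_allocation_bernoulli}) concern the application of the lemma rather than the lemma itself, and they match the paper's Step~3 as you say.
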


First, note that from Lemma~\ref{lemma:optimal_prop_istar_bernoulli} we know that $\frac{T_{n,I^\star}}{n} \rightarrow \beta$, and by Lemma~\ref{lemma:link} this implies $\overline{\psi}_{n, I^\star} \rightarrow \beta$, hence we can use the lemma above to prove convergence to the optimal proportions. This proof is already given in Step~3 of the proof for the Gaussian case, and since it does not depend on the specifics of the Gaussian case, except for invoking Lemma~\ref{lemma:consistency_gaussian} (consistency), which for the Bernoulli case we replace by Lemma~\ref{lemma:consistency_bernoulli}, it gives a proof for the Bernoulli case as well. We conclude that \eqref{eq:sufficient condition for optimality} holds, and the convergence to the optimal proportions follows by Lemma~\ref{lemma:sufficient_optimality}.

\subsection{Proof of auxiliary lemmas}\label{app:posterior_beta.aux}

\paragraph{Proof of Lemma~\ref{lemma:binomial_tail}}

	\begin{align*}
		\PP{X > Y} = \EE{ \PP{X > Y | Y}}
		&\leq \EE{\1{\{ Y < \frac{a-1}{a+b-1}  \} } + \1{\{ Y \geq \frac{a-1}{a+b-1} \} } \PP{X > Y | Y}  } \\
		&\leq \expp{- (c+d-1) kl\left(\frac{c-1}{c+d-1}; \frac{a-1}{a+b-1}\right) } \\
		&+ \EE{ \expp{-(a + b -1) kl\left(\frac{a-1}{a+b-1}; Y\right)}\1{\{ Y \geq \frac{a-1}{a+b-1} \} } },
	\end{align*}
	Using the Beta-Binomial trick in the second inequality.	Then we have (call the second half A)
	\begin{align*}
	A &\leq \EE{\1{\{ \frac{a-1}{a+b-1} \leq Y \leq \frac{c-1}{c+d-1}\}}}  \expp{-(a + b -1) kl\left(\frac{a-1}{a+b-1}; Y\right)}\\
	&+ \expp{-(a+b-1) kl\left( \frac{a-1}{a+b-1} ; \frac{c-1}{c+d-1} \right)}\\
	\end{align*}
	(call the first half B). Denote with $f$ the density of $Y$, then
	\begin{align*}
	B &= \int_{\frac{a-1}{a+b-1}}^{\frac{c-1}{c+d-1}} \expp{-(a+b-1) kl\left( \frac{a-1}{a+b-1}; y\right)} f(y) \diff{y}.
	\end{align*}
	Via integration by parts we obtain
	\begin{align*}
	B &= \left[ \expp{-(a+b-1) kl\left( \frac{a-1}{a+b-1}; y\right)} \PP{Y \leq y} \right]_{\frac{a-1}{a+b-1}}^{\frac{c-1}{c+d-1}}\\
	&+ \int_{\frac{a-1}{a+b-1}}^{\frac{c-1}{c+d-1}} (a+b-1) \frac{\d{}}{\d{y}} kl\left(\frac{a-1}{a+b-1} ; y\right) \expp{-C_{a,b}(y) } P(Y \leq y) \diff{y}\\
	&\leq \int_{\frac{a-1}{a+b-1}}^{\frac{c-1}{c+d-1}} (a+b-1) \frac{\d{}}{\d{y}} kl\left(\frac{a-1}{a+b-1} ; y\right)
	\expp{-(C_{a,b}(y)+C_{c,d}(y)) } \diff{y}\\
	&+ \expp{-(a+b-1)kl\left(\frac{a-1}{a+b-1}; \frac{c-1}{c+d-1}\right)},
	\end{align*}
	where the first inequality uses the Binomial trick again. Let
	\begin{align*}
	C &= \inf_{\frac{a-1}{a+b-1} \leq y \leq \frac{c-1}{c+d-1}} (a+b-1) kl\left(\frac{a-1}{a+b-1}; y\right) + (c+d-1) kl\left(\frac{c-1}{c+d-1} ; y\right)  = \inf_{\frac{a-1}{a+b-1} \leq y \leq \frac{c-1}{c+d-1}} C_{a,b}(y)+C_{c,d}(y),
	\end{align*}
	then note that in particular we have
	\begin{align*}
	C &\leq \min \left( (a+b-1) kl\left(\frac{a-1}{a+b-1}; \frac{c-1}{c+d-1}\right), (c+d-1) kl \left(\frac{c-1}{c+d-1} ; \frac{a-1}{a+b-1}\right) \right)\\
	  &= \min \left(C_{a,b}\left(\frac{c-1}{c+d-1}\right), C_{c,d}\left(\frac{a-1}{a+b-1}\right)\right).
	\end{align*}
	Then
	\begin{align*}
	B &\leq e^{-C} \int_{\frac{a-1}{a+b-1}}^{\frac{c-1}{c+d-1}} (a+b-1) \frac{\diff{}}{\diff{y}} kl (\frac{a-1}{a+b-1}; y) \diff{y}  + e^{-C}
	= \left[ (a+b-1) kl\left(\frac{a-1}{a+b-1}; \frac{c-1}{c+d-1}\right)  + 1 \right] e^{-C}.
	\end{align*}
	Thus we have
	\begin{align*}
	\PP{X > Y} \leq \left( 3 + (a+b-1) kl\left(\frac{a-1}{a+b-1}; \frac{c-1}{c+d-1}\right)  \right) e^{-C}.
	\end{align*}
	By symmetry, we have
	\begin{align*}
	\PP{X > Y} \leq \left(3 + \min \left(C_{a,b}\left(\frac{c-1}{c+d-1}\right), C_{c,d}\left(\frac{a-1}{a+b-1}\right)\right)\right) e^{-C},
	\end{align*}
	where
	\[
	    C = \inf_{\frac{a-1}{a+b-1} \leq y \leq \frac{c-1}{c+d-1}} (a+b-1) kl\left(\frac{a-1}{a+b-1}; y\right) + (c+d-1) kl\left(\frac{c-1}{c+d-1} ; y\right).
	\]

\hfill\BlackBox\\[2mm]

\paragraph{Proof of Lemma~\ref{lemma:consistency_bernoulli}} 

Let $\overline{\cI}$ be empty, then we have $\mu_{\infty, i} \triangleq \lim_{n \rightarrow \infty} \mu_{n,i} = \mu_i$. The posterior variance is
\begin{align*}
\sigma_{n,i}^2 &= \frac{\alpha_{n,i}\beta_{n,i}}{(\alpha_{n,i}+ \beta_{n,i})^2 (\alpha_{n,i} + \beta_{n,i} + 1)} 
= \frac{(1 + \sum_{\ell=1}^{n-1} \1{\{ I_{\ell} = i \}} Y_{\ell,I_{\ell}}) (1 + T_{n,i} - \sum_{\ell=1}^{n-1} \1{\{ I_{\ell} = i \}} Y_{\ell,I_{\ell}})  }{ (2 + T_{n,i})^2 (2 + T_{n,i} +1 )   },
\end{align*}
We see that when $\overline{\cI}$ is empty, we have $\sigma_{\infty, i}^2 \triangleq \lim_{n \rightarrow \infty} \sigma_{n,i}^2 = 0$, i.e., the posterior is concentrated. \\

When $T_{n,i} = 0$ for some $i \in \cA$, the empirical mean for that arm equals to the prior mean $\mu_{n,i} = \alpha_{1,i} / (\alpha_{1,i} + \beta_{1,i})$, and the variance is strictly positive: $\sigma^2_{n,i} = (\alpha_{n,i}\beta_{n,i}) / \left((\alpha_{1,i}+ \beta_{1,i})^2 (\alpha_{1,i} + \beta_{1,i} + 1) \right) > 0$. When $\overline{\cI}$ is not empty, then for every $i \in \overline{\cI}$ we have $\sigma_{\infty, i}^2 > 0$, and $\alpha_{\infty, i} \in (0,1)$, implying $\alpha_{\infty, I^\star} < 1$, hence the posterior is not concentrated. 

\hfill\BlackBox\\[2mm]

\end{document}